\documentclass{article}
\usepackage{amsthm}

\usepackage{graphicx}
\usepackage[utf8]{inputenc} % allow utf-8 input
\usepackage[T1]{fontenc}    % use 8-bit T1 fonts
\usepackage{hyperref}       % hyperlinks
\usepackage{url}            % simple URL typesetting
\usepackage{booktabs} 
\usepackage{float}  % professional-quality tables
\usepackage{amsfonts}       % blackboard math symbols
\usepackage{amsmath}
\usepackage{amssymb}
\usepackage{nicefrac}       % compact symbols for 1/2, etc.
\usepackage{microtype}      % microtypography
\usepackage{xcolor} 
\usepackage{bbm}

\newtheorem{thm}{Theorem}[section]
\newtheorem{cor}[thm]{Corollary}
\newtheorem{lem}[thm]{Lemma}
\newtheorem{prop}[thm]{Proposition}

\newtheorem{defn}{Definition}[section]
\newtheorem{asmp}{Assumption}[section]
\newtheorem{rmk}[defn]{Remark}

\newcommand{\eps}{\epsilon}

\newcommand{\dist}{{\sf dist}}

\newcommand{\Law}{\mathrm{Law}}

\newcommand{\TV}{\mathrm{TV}}
\newcommand{\op}{\mathrm{op}}

\newcommand{\E}{\mathbb E}

\DeclareMathOperator*{\argmax}{arg\,max}

\def\lb{\mathopen{}\mathclose\bgroup\left}
\def\rb{\aftergroup\egroup\right}

\numberwithin{equation}{section}

\def\to{\rightarrow}

\usepackage{footnote}

\def\eps{\varepsilon}

\def\E{{\bf E}}

\def\cA{\mathcal{A}}

\def\cF{\mathcal{F}}

\def\cX{\mathcal{X}}

\def\d{{\mathrm{d}}}

\def\E{\mathbb{E}}

\def\sE{{\mathbb{E}}}
\def\sF{{\mathbb{F}}}

\def\sP{\mathbb{P}}
\def\sQ{{\mathbb{Q}}}
\def\sR{{\mathbb R}}

\usepackage{subcaption}

\title{Expressivity and Statistical Trade-offs in Diffusion Policy Learning
}

\author{%
  Viet Vu \thanks{Stanford University, 
  Emails: \url{vietvu01@stanford.edu} and \url{renyuanxu@stanford.edu}}
  \and
  Renyuan Xu $^*$
  \and
   Jiacheng Zhang \thanks{Chinese University of Hong Kong, Email: \url{jiachengzhang@cuhk.edu.hk}} 
  \and
   Yufei Zhang \thanks{Imperial College London, Email: \url{yufei.zhang@imperial.ac.uk}} 
}
\date{\today}

\providecommand{\DIFdel}[1]{}

\providecommand{\DIFdelFL}[1]{}

\begin{document}
\maketitle

\begin{abstract}
Diffusion-based policies have recently emerged as powerful policy parameterizations for reinforcement learning, representing state-conditioned action distributions as terminal laws of diffusion processes with parameterized drifts. This terminal-law representation has shown substantial expressive flexibility in practice, enabling diffusion policies to model complex, multimodal, and highly non-Gaussian action distributions; however, it remains unclear what mathematically drives this expressivity and how to fully exploit it when the policy is learned from finite data. 
In this paper, we identify the drift Lipschitz budget \(K\) as a central quantity governing the expressivity and statistical behavior of diffusion policies. We quantify expressivity through approximation: diffusion policies with \(K\)-Lipschitz drifts can concentrate near optimal deterministic policies and achieve value approximation error of order \(1/K\); moreover, we prove a matching lower bound under nondegenerate diffusion noise. This increased expressivity comes with a statistical cost. When the drift is parameterized by neural networks, increasing \(K\) improves approximation but increases statistical complexity. Balancing these two terms yields a finite-sample performance gap of order \(\tilde{\mathcal O}(n^{-2/(m+6)})\) for generic neural-network drifts, and a sharper rate \(\tilde{\mathcal O}(n^{-2/(m+4)})\) for one-sided dissipative drift classes, where $n$ is the sample size and $m$ is the dimension of the state space. Numerical experiments provide empirical evidence for the sample-dependent trade-off in $K$, supporting both theoretical regimes. Our framework also suggests a practical implementation principle: choose the diffusion budget \(K\) according to the available sample size, and then select a neural-network architecture with the corresponding fixed Lipschitz coefficient.

%Diffusion-based policies have recently emerged as powerful policy parameterizations for reinforcement learning, but their theoretical properties remain largely unexplored. In this paper, we study the expressivity and finite-sample behavior of diffusion policies generated by state-conditioned diffusion processes with parameterized drifts. We show that diffusion policies with $K$-Lipschitz drifts approximate optimal policies with error of order $1/K$, where $K$ measures the complexity of the policy class, and establish a matching lower bound showing that this rate is optimal. This expressivity gain comes at a statistical cost: a larger Lipschitz budget improves approximation but makes learning more challenging, leading to a finite-sample trade-off. When the drift is parameterized by neural networks, we decompose the policy error into three components: diffusion approximation error, neural network realization error, and statistical estimation error. Balancing these terms yields a performance gap of order \(\widetilde{\mathcal O}(n^{-2/(m+6)})\) for a generic neural drift class. We also identify a sharper regime for one-sided dissipative neural drift classes: the improved stability removes the extra \(K\)-dependence in the statistical term and leads to the faster rate \(\widetilde{\mathcal O}(n^{-2/(m+4)})\). Numerical experiments illustrate both regimes and support the resulting sample-aware choice of the diffusion budget \(K\).

\end{abstract}

\section{Introduction}
\label{sec:introduction}

Many reinforcement learning (RL) problems concern learning a stochastic policy that optimizes an expected objective through repeated interaction with the environment \cite{sutton1998reinforcement}. Policy-gradient   and its variants   are among the most widely used approaches for this task~\cite{haarnoja2018soft, kakade2001natural, 
schulman2015trust, schulman2017proximal,sutton1999policy}. Their performance, however,  depends critically on the choice of policy parameterization. An effective policy class
must satisfy three requirements:
(i) it should be expressive enough to approximate good, or even optimal,  policies;
(ii) it should allow efficient sampling of actions
during interaction with the environment; and
(iii)  it should support tractable
likelihood evaluation for constructing policy-gradient estimators
(e.g.,~\cite{sutton1999policy}).

Designing such policy classes becomes especially challenging beyond finite action spaces. In continuous-action RL, classical methods often rely on
tractable stochastic policy families, such as Gaussian, Beta,
Gaussian-mixture, or exponential-family policies, because these distributions
are easy to sample from and have tractable likelihoods
\cite{haarnoja2018soft,hansen2023idql,schulman2015trust,schulman2017proximal}. This computational convenience, however, comes with limited expressivity.
Gaussian policies may be poorly matched to bounded or constrained action spaces, log-concave families cannot represent many multimodal action distributions, and mixture models can introduce numerical and optimization difficulties. As a result, these simple policy classes can struggle to represent the complex, multimodal, or strongly non-Gaussian action distributions that arise in modern
RL tasks, such as multi-stage manipulation
\cite{chi2025diffusion,ke20243d,scheikl2024movement} and locomotion
\cite{huang2024diffuseloco}.

Diffusion-based policies provide a more expressive alternative by representing state-conditioned action distributions as terminal laws of diffusion processes with parameterized drifts. The key idea is to transform a simple reference distribution into a complex action law through stochastic dynamics.  This approach has shown
strong expressive power in generative modeling, including image and video generation \cite{ho2020denoising,sohl2015deep,song2019generative}. Building on these advances, recent works have used diffusion models as policy parameterizations
in control and RL
\cite{balim2025model,chi2025diffusion,ding2024diffusion,ke20243d,ma2025efficient,psenka2024learning,
ren2025diffusion,scheikl2024movement,wang2024diffusion}. These methods represent the action distribution at each state through a parameterized diffusion process. This allows the policy to
capture rich and potentially multimodal action distributions that are difficult to express with standard policy families, while still allowing actions to be sampled by simulating the diffusion process.

Despite this empirical success, the theoretical understanding of diffusion-based policies in RL remains limited. Existing theory for diffusion models mainly focuses on generative modeling and does not directly explain what drives the expressivity of diffusion policies for decision-making objectives. In particular, it remains unclear which structural quantity controls their ability to approximate optimal policies, and how to fully exploit this expressive power when the policy is learned from finite data. This leads to two fundamental questions:
\begin{enumerate}
    \item {\bf Expressivity of diffusion policies}: What mathematically governs the ability of diffusion policies to approximate optimal policies for a given RL objective?
    \item 
{\bf Impact on learning and statistical guarantees:} 
How does the increased expressivity of diffusion policies affect the finite-sample
statistical error of policy learning?
\end{enumerate}
Answering these questions is challenging because the expressivity of diffusion policies is not merely a qualitative property of the representation. It depends on the Markov decision process (MDP) structure, the regularity of the optimal policy, and the quantitative complexity of the diffusion dynamics. Moreover, increasing expressivity may improve approximation while enlarging the statistical complexity of the learned policy class. This calls for a finite-sample theory that identifies the structural quantity driving expressivity and quantifies its role in the resulting approximation--estimation trade-off.

\subsection{Our contributions}
In this paper, we identify the drift Lipschitz budget \(K\) as a central quantity governing the expressivity and statistical behavior of diffusion policies. We consider diffusion policies defined by state-conditioned stochastic differential equations with parameterized drifts and given volatility,  and measure the complexity of the policy class through the drift Lipschitz constant $K$. Our main contributions are four-fold.

First, we establish the expressivity of diffusion policies through a sharp
\(K\)-dependent approximation theory. We prove that diffusion policies with
\(K\)-Lipschitz drifts can approximate deterministic optimal policies as
\(K\to\infty\), with mean-squared localization error of order \( \mathcal{O}(1/K)\)
(Lemma~\ref{lem:mean_rev}). We then show that this rate is optimal and cannot
be improved in general under nondegenerate diffusion noise
(Proposition~\ref{prop:lb_diffusion}). We further translate these localization
results into upper and lower bounds for RL value-function approximation,
namely Theorem~\ref{thm:ub_soft} and
Theorem~\ref{thm:lb_diffusion_initial}. 

Second, we show that this increased expressivity comes with a statistical cost when the drift is parameterized by ReLU neural networks. For a generic \(K\)-Lipschitz
neural drift class, the learned policy error is bounded by three terms:
diffusion approximation  \(1/K\), neural-network realization error  \(s^{-2/m}\),
and statistical estimation error  \(\sqrt{sK/n}\), where \(s\) is the network
class size, \(n\) is the sample size, and \(m\) is the state dimension. Balancing these terms yields a value gap of order
\(\widetilde{\mathcal O}(n^{-2/(m+6)})\)
(Theorem~\ref{thm:lipschitz-selector-complete-tradeoff}). We further show that,
if the neural drift class satisfies a one-sided dissipativity condition, the
stability estimate improves and the statistical term becomes \(\sqrt{s/n}\), up
to logarithmic factors. This gives the sharper rate
\(\widetilde{\mathcal O}(n^{-2/(m+4)})\); see
Remark~\ref{rem:dissipative-comparison-section4}. In both regimes, these rates are achieved by choosing the diffusion budget \(K\) depend properly on the sample size \(n\).

Third, we derive a policy-gradient formula for continuous-time diffusion
policies. Since the policy is defined implicitly as the
terminal law of a diffusion process, its log-density function (also known as the score function)   is generally not analytically tractable.  We show that
the score function can be substituted with a pathwise quantity derived from the diffusion process via Girsanov's theorem. This yields a tractable policy-gradient expression in terms of the drift sensitivity 
(Proposition~\ref{prop:policygradient}), and provides the training identity used in our numerical experiments.

Fourth, we validate the two statistical regimes through numerical experiments.
The first experiment trains a generic neural drift 
using our policy-gradient estimator in a nonlinear multi-step MDP setting. It illustrates the generic
sample-dependent scaling \(K\asymp n^{2/(m+6)}\). The second experiment uses a
mean-reverting, hence dissipative, diffusion-policy class in a controlled
contextual bandit setting. It isolates the approximation and estimation terms and
illustrates the sharper scaling \(K\asymp (n/\log n)^{2/(m+4)}\). Together,
the experiments provide empirical evidence for the sample-dependent role of \(K\), and suggest a practical implementation principle: choose the
diffusion budget \(K\) according to the available sample size and then a    $K$-Lipschitz  neural drift class.

\subsection{Related literature}

Our work is connected to several strands of literature on diffusion models, RL, and continuous-time stochastic analysis. The most closely related line studies diffusion models as flexible policy parameterizations for continuous-action RL. We also discuss diffusion-based planning and imitation learning, theory of score-based generative models, and the approximation, statistical learning, and PDE/SDE tools used in our analysis.

\iffalse
\paragraph{Diffusion policy in RL.} One line of work directly represents the policy by a conditional diffusion
model and trains or optimizes the parameters of the diffusion model \cite{
 ding2024diffusion, fang2025diffusion,
ma2025efficient, psenka2024learning,ren2025diffusion,wang2024diffusion,wang2023diffusion, yang2023policy}. A second line uses diffusion models primarily as samplers, generating actions from a given intermediate policy while training a classical RL algorithm \cite{chen2024score,chen2023offline,chen2024diffusion,   hansen2023idql, lu2023contrastive,
mao2024diffusion}. Both lines of work are largely empirical, but they consistently demonstrate the strong expressive power of diffusion policies on large-scale RL tasks. In contrast, we provide theoretical foundations for this empirical observation by studying how the diffusion Lipschitz budget controls both expressivity and finite-sample statistical error in diffusion policy learning. To the best of our knowledge, this is among the first theoretical results of this kind in the literature. 
\fi

\paragraph{Diffusion policy in RL.} One line of work directly represents the policy by a conditional diffusion
model and trains or optimizes the parameters of the diffusion model \cite{
 ding2024diffusion, fang2025diffusion,
ma2025efficient, psenka2024learning,ren2025diffusion,wang2024diffusion,wang2023diffusion, yang2023policy}. 
Recent extensions further develop likelihood-based on-policy diffusion policy optimization, continuous-time \(Q\)-score matching, and maximum-entropy diffusion-policy training via adjoint matching
\cite{ding2025genpo,hua2025continuous,thilges2026scalable}.
A second line uses diffusion models primarily as samplers, generating actions from a given intermediate policy while training a classical RL algorithm \cite{chen2024score,chen2023offline,chen2024diffusion,   hansen2023idql, lu2023contrastive,
mao2024diffusion}. 
Both lines of work are largely empirical, but they consistently demonstrate the strong expressive power of diffusion policies on large-scale RL tasks. In contrast, we provide theoretical foundations for this empirical observation by studying how the diffusion Lipschitz budget controls both expressivity and finite-sample statistical error in diffusion policy learning. To the best of our knowledge, this is among the first theoretical results of this kind in the literature.

\paragraph{Diffusion planning, trajectory diffusion, and diffusion imitation learning.}
Diffusion models have also been used as trajectory-level planners and
sequence/action generators for offline decision making and robot imitation
learning
\cite{janner2022planning, ajay2023conditional, li2023hierarchical,
chi2025diffusion, ke20243d, scheikl2024movement,
huang2024diffuseloco, balim2025model}. These works provide important
empirical evidence that diffusion parameterizations can capture multimodal,
high-dimensional, and temporally correlated behavior distributions. Our setting
differs in that we study Markov diffusion policies through their one-step
state-conditioned action laws and prove explicit expressivity and statistical
trade-offs.

\paragraph{Theory of diffusion and score-based generative models.}
Our analysis is also related to the theory of diffusion and score-based
generative models
\cite{sohl2015deep, song2019generative, ho2020denoising,
song2021scorebased, tzen2019deep, deBortoli2022convergence,
lee2022convergence, chen2023sampling, chen2023score}. Existing theory in this
area primarily studies sampling, score estimation, and distribution recovery.
In contrast, we connect the diffusion drift budget to RL performance by showing
that $K$-Lipschitz diffusion policies achieve, and in general cannot improve
upon, a $1/K$ value-approximation rate near deterministic optima.

\paragraph{Neural-network approximation, statistical learning, and PDE/SDE tools.}
Finally, the finite-sample part of our work builds on neural-network
approximation and statistical learning theory for ReLU networks
\cite{yarotsky2017error, petersen2018optimal, schmidt2020nonparametric,
bartlett2019nearly, ou2024covering}, while the continuous-time analysis uses tools from stochastic calculus, stochastic control, and parabolic PDE
theory, including Girsanov transformations, Fokker--Planck smoothing, and
density/lower-estimate arguments for uniformly elliptic diffusions
\cite{stroockvaradhan1979multidimensional, karatzas1991brownian,
oksendal2003stochastic,
Krylov2021Ldplus1, 
bogachev2015fokker,
aronson1967bounds, kazamaki2006continuous, carmona2016lectures,
hajek1985mean}.

\paragraph{Flow-based policy parameterizations.}
A closely related recent direction replaces stochastic diffusion sampling by ODE or flow-based generative policies, motivated in part by faster sampling and more direct pathwise or likelihood-ratio training. This includes online flow policies, \(Q\)-guided flow optimization, offline-to-online flow matching, reversible likelihood-ratio methods, MeanFlow and rectified-flow policies, and reparameterization-gradient training \cite{lyu2025flowrl,doo2026qflow,shin2026fino,hu2026genpoplusplus,dong2026mfpo,kim2026som,zhou2026trfp,zhong2026rfo}. These methods are complementary to ours: pathwise training typically relies on a differentiable critic or action-value gradient, while likelihood-ratio training for generic flow policies requires inversion and Jacobian control, or special reversible constructions. In contrast, our analysis focuses on SDE terminal-law diffusion policies and proves a \(K\)-dependent approximation--estimation trade-off.

\subsection{Roadmap}
The remainder of the paper is organized as follows. Section~\ref{sec:prelim} introduces the RL setting, the diffusion-policy parameterization, and the assumptions used throughout. Section~\ref{sec:expressivity} develops the expressivity theory for diffusion policies with Lipschitz drifts, including the resulting upper and lower bounds for value-function approximation. Section~\ref{sec:statistical-estimation} studies finite-sample learning when the drift is represented by ReLU neural networks, decomposing the error into diffusion approximation, neural-network realization, and statistical estimation terms. Section~\ref{sec:experiments} presents the policy-gradient training framework and numerical experiments illustrating both finite-sample regimes: the generic \(n^{-2/(m+6)}\) trade-off and the sharper dissipative \(n^{-2/(m+4)}\) trade-off. Section~\ref{sec:conclusion} concludes with the main takeaways and limitations of the work. %Technical highlights of the proofs are given in Appendix \ref{sec:technical_highlights}.

Throughout, \(\|\cdot\|\) denotes the Euclidean norm on finite-dimensional
vector spaces. For a map \(h:E\to\mathbb{R}^q\), we define
\(\|h\|_\infty:=\sup_{x\in E}\|h(x)\|\) and
\(\operatorname{Lip}(h):=\sup_{x\neq x'}
\|h(x)-h(x')\|/\|x-x'\|\).
We use \(\|\cdot\|_{\mathrm{op}}\) and \(\|\cdot\|_{\mathrm{HS}}\) for the
operator and Hilbert--Schmidt norms, respectively, and
\(\|\cdot\|_{L^p}\) for function-space \(L^p\) norms. The absolute constant $C>0$ is generic and may vary line by line.
\section{Preliminaries on MDP and diffusion policy}
\label{sec:prelim}

\paragraph{Markov decision process.}
Consider an infinite horizon Markov decision process (MDP) $(\mathcal X,\mathcal A, P,r, \gamma)$,
where 
 the state space
$\mathcal X$ is a measurable space, 
the action space is 
$\mathcal A=\sR^d$,
$P \in   \mathcal P(\mathcal X|\mathcal X\times \mathcal A)$ is a probability transition kernel, 
$r:\mathcal X\times \mathcal A\to \mathbb R$
is a bounded and continuous
reward function, and 
$\gamma\in [0,1)$ is the discount factor.
Let $\mathcal P(\mathcal A|\mathcal X)$
be the space of Markov stochastic policies. {That is, each policy \(\pi\in\mathcal P(\mathcal A|\mathcal X)\) assigns a probability distribution
\(\pi(\cdot|x)\) over actions to every state \(x\in\mathcal X\).}
Given a policy $\pi \in \mathcal P(\mathcal A|\mathcal X)$, the initial state $x_0$
is sampled from  $\rho \in \mathcal P(\mathcal X)$.
At each time $h\ge 0$,
the agent observes $x_h$, selects an action $a_h\sim \pi(\cdot|x_h)$, and receives a reward $r(x_h, a_h)$. The next   state is then sampled as 
$x_{h+1}\sim P(\cdot|x_h, a_h)$. 
We consider the following maximization problem: 
\begin{equation}\label{eq:value-function-mdp}
\sup_{\pi\in \mathcal{P}(\mathcal A|\mathcal X)}V^\pi(\rho)\,,
\quad 
\textnormal{with}
\quad 
V^\pi(\rho)
:=
\mathbb E_\rho^\pi\!\left[
\sum_{h=0}^\infty \gamma^h r(x_h,a_h)
\right],
\end{equation}
where 
$\mathbb E_\rho^\pi$
denotes the expectation with respect to the law of the state-action process induced by $\pi$. 
For each $x\in \mathcal X$, 
we write $V^\pi(x)=V^\pi(\delta_x)$, where  $\delta_x$  denotes the Dirac measure at $x$.

%Working with stochastic policies enables the use of various policy-gradient algorithms to solve \eqref{eq:value-function-mdp},  even though the optimal policy may, in principle, be deterministic  (e.g.,\cite{sutton1999policy,schulman2015trust, schulman2017proximal, haarnoja2018soft}). These methods introduce a differentiable policy parameterization to approximate the optimal policy and then search for optimal parameters using gradient-based optimization.

{A deterministic Markov policy is a special case of a stochastic policy, with
\(\pi(\cdot|x)=\delta_{g(x)}\) for some measurable map \(g:\mathcal X\to\mathcal A\), often referred to as an action selector.
Under mild continuity conditions \cite[Theorem 4.2.3]{hernandez2012discrete},
optimal policies can be chosen to be deterministic. In the sequel, we assume the optimal value function $V^{\star}:\mathcal X\to \mathbb R$ for the MDP \eqref{eq:value-function-mdp} is well-defined and  measurable. Define  
the corresponding  optimal  $Q$-function by $Q^{\star}(x,a):=r(x,a)+\gamma\int_\cX V^{\star}(x')P(\d x'|x,a)$.
We assume  that there exists a measurable function $g^\star:\mathcal X\to\mathcal A$ such that 
\begin{equation}\label{eq:g_star}
g^\star(x)\in A^\star(x),
\quad 
\textnormal{with}
\quad 
A^{\star}(x):=\argmax_{a\in \mathbb{R}^{d}}Q^{\star}(x, a), \quad \forall x\in \mathcal X.
\end{equation}
The function $g^\star$ is   an optimal deterministic policy, where $g^\star (x)$  is an optimal   action at 
state $x$.  

\paragraph{Diffusion policy parametrization.}
%An efficient implementation of stochastic policy-gradient algorithms requires (i) a policy parameterization that is sufficiently expressive to approximate the optimal policy, (ii) the ability to efficiently sample actions from the stochastic policy to interact with the environment, and (iii) the ability to efficiently evaluate the policy’s log-density in order to compute the policy-gradient estimator \cite{sutton1999policy}. 

Although an optimal deterministic policy exists, 
it is useful to employ 
stochastic policies in RL because they regularize the objective and enable gradient-based policy optimization 
\cite{haarnoja2018soft,schulman2015trust,schulman2017proximal,sutton1999policy}.}
 Motivated by the need for expressive stochastic policies that remain compatible with sampling and gradient-based optimization, we consider diffusion-model-based policies,
 which define $\pi(\cdot|x)$ as the terminal law of a diffusion process. 
Specifically, 
let  $T>0$ be a fixed time horizon,  let   $(\Omega,\mathcal F,\mathbb P_B)$ be a probability space   supporting a
$d$-dimensional Brownian motion, 
and let $\mathbb F$ be
the natural filtration of $B$ and the initial action $a_0$.
For  
each $x\in\mathcal X$,
consider  the following    dynamics in $\mathcal A=\mathbb R^d$:
\begin{equation}\label{eq:sde-policy}
d\bar a_t^{x}
=
f(t,\bar a_t^{x},x)\,dt
+
\sigma(t,\bar a^x_t,x)\,dB_t,
\qquad
\bar a_0^{x}\sim \mu_0.
\end{equation}
%\vv{I think it would be important, if possible, that we allow general initializations. I try to do this in Section 3. I think literature in diffusion models really cares about initialization and we do not want to give the impression that we can only do Gaussian initialization.} \renyuan{agree and modified}
where  
$ 
f:[0,T]\times \mathcal A\times \mathcal X\to \mathbb R^d$ and 
$\sigma:[0,T]\times \mathcal A\times \mathcal X\to \mathbb R^{d\times d}$  
are some measurable functions such that
\eqref{eq:sde-policy}
admits a unique weak solution 
 $(\bar a^x_t)_{t\ge 0}$. The initial distribution $\mu_0$
 is assumed to {be absolutely continuous w.r.t.  the Lebesgue measure} and have finite second moment, denoted by $m_2(\mu_0):=\int \|a\|^2\mu_0(da)<\infty$. In practice, a Gaussian initial distribution $\mu_0$ is a common choice. Throughout we assume that
\begin{asmp}[Volatility assumption]\label{ass:vol_bdd}
 There exist absolute constants $0<\kappa\leq \Lambda<\infty$ such that 
 
 \begin{align}\label{sigmabound}\kappa I_{d}\preceq \sigma(t, a,x)\sigma(t, a,x)^{\top}\preceq \Lambda I_{d}
 \end{align}
 
 for all $t\in[0,T]$, $a\in \mathcal A$ and $x\in \mathcal{X}$.
\end{asmp}

Define the corresponding policy  by  
\begin{equation}\label{eq:diffusion-policy-law}
\pi(\cdot\mid x)
:=
\Law_{\mathbb P_B}\bigl(\bar a_T^{x}\bigr).
\end{equation}
The collection of all such policies,
obtained by varying 
$f$, defines the diffusion policy class, which arises as the natural continuous-time limit of the diffusion-based policies used in
\cite{balim2025model,ding2024diffusion,
  ma2025efficient,psenka2024learning, 
ren2025diffusion,wang2024diffusion}.  In the sequel, we measure the complexity of a diffusion policy through the Lipschitz size of its drift in the action variable. Unless otherwise stated, \(K\) denotes a uniform Lipschitz bound in \(a\), namely
\[
\|f(t,a,x)-f(t,a',x)\|\le K\|a-a'\|,
\qquad t\in[0,T],\ x\in\mathcal X .
\]
We refer to \(K\) as the drift Lipschitz budget.
%For simplicity, and in line with this literature, we restrict attention to state-independent volatility coefficients.

\section{Expressivity of diffusion policy}\label{sec:expressivity}
We now turn to the approximation power of diffusion policies. 
The goal of this section is to show how the drift Lipschitz budget \(K\) controls their ability to concentrate near deterministic optimal actions, and how this translates into value-function approximation. 

\subsection{Upper bound on value function approximation with $K$-Lipschitz drifts}
In this section, we show that diffusion policies can approximate deterministic policies arbitrarily well as the Lipschitz constant of the drift increases. This is relevant because a larger Lipschitz constant allows the drift to represent more flexible neural networks. Under Assumption~\ref{ass:vol_bdd} and a fixed initialization, we construct a $K$-Lipschitz drift whose expressivity gap vanishes as $K\rightarrow\infty$. The key step is the following lemma. Although the volatility $\sigma$ depends on the state 
$\bar a_t^x$, so that the terminal law is generally not Gaussian, we can still obtain the following concentration estimate.
\begin{lem}[Concentration with mean-reverting drift]\label{lem:mean-reversion-concentration}
\label{lem:mean_rev} Assume Assumption \ref{ass:vol_bdd} holds.
Fix \(\tau>0\), \(K>0\), \(c\in\mathbb R^d\), and \(x\in\mathcal X\). Consider a linear drift $ f(s, a,x) = K(c/(1-e^{-K\tau})- a)$ in \eqref{eq:sde-policy}, and the following diffusion process
\begin{align}\label{eq:simpleOU}
    d\bar a^x_s
    =
    K\left(\frac{c}{1-e^{-K\tau}}-\bar a_s^x\right)ds
    +
    \sigma(s,\bar a^x_s,x)\,dB_s, \quad s\in[0,\tau],
\end{align}
where \(\bar a^x_0\sim \mu_{0}\) is independent of \(B\) .
%Assume that
%\[
%    \sup_{0\le s\le t}
%    \sup_{a\in\mathbb R^d,\;x\in\mathcal X}
%    \|\sigma(s,a,x)\|_{\mathrm{op}}
%    \le \Lambda
%\] for some \(\Lambda<\infty\).  \renyuan{we can show this result hold under Assumption 2.1}
Then
\[
    \mathbb E\|\bar a^x_\tau-c\|^2
    \le e^{-2K\tau}m_{2}(\mu_{0})
    +
    \dfrac{d\Lambda}{2K}
    \bigl(1-e^{-2K\tau}\bigr),
\]
where $m_2(\mu_0)$ is the second moment of $\mu_0$. 
\end{lem}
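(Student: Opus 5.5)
The plan is to solve the linear-drift SDE \eqref{eq:simpleOU} explicitly by variation of constants, treating the state-dependent noise term as a generic bounded Itô integrand. Write \(m:=c/(1-e^{-K\tau})\) and apply Itô's formula (product rule) to \(e^{Ks}\bar a^x_s\) on \([0,\tau]\). The \(dt\)-terms \(Ke^{Ks}\bar a^x_s\,ds\) and \(-Ke^{Ks}\bar a^x_s\,ds\) cancel, leaving
\[
e^{K\tau}\bar a^x_\tau=\bar a^x_0+m\bigl(e^{K\tau}-1\bigr)+\int_0^\tau e^{Ks}\sigma(s,\bar a^x_s,x)\,dB_s .
\]
Multiplying by \(e^{-K\tau}\) and using \(m(1-e^{-K\tau})=c\) gives the key identity
\[
\bar a^x_\tau-c=e^{-K\tau}\bar a^x_0+M_\tau,\qquad M_\tau:=\int_0^\tau e^{-K(\tau-s)}\sigma(s,\bar a^x_s,x)\,dB_s .
\]
This is exactly where the choice of the shifted target \(m\) enters: it makes the deterministic part of \(\bar a^x_\tau\) equal to \(c\). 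The identity holds on whatever filtered space carries the weak solution, so only its law matters.

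Next I expand \(\mathbb E\|\bar a^x_\tau-c\|^2\) into three pieces.

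\emph{Initial-condition term.} This is \(e^{-2K\tau}\mathbb E\|\bar a^x_0\|^2=e^{-2K\tau}m_2(\mu_0)\).

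\emph{Cross term.} This is \(2e^{-K\tau}\mathbb E\langle \bar a^x_0,M_\tau\rangle\), and I claim it vanishes. Assumption~\ref{ass:vol_bdd} gives \(\|\sigma\|_{\mathrm{HS}}^2=\operatorname{tr}(\sigma\sigma^\top)\le d\Lambda\), so the integrand is bounded. Hence \(M\) is a square-integrable martingale with \(M_0=0\). Because \(\bar a^x_0\) is \(\mathcal F_0\)-measurable with finite second moment, conditioning on \(\mathcal F_0\) gives \(\mathbb E\langle \bar a^x_0,M_\tau\rangle=\mathbb E\langle \bar a^x_0,\mathbb E[M_\tau\mid\mathcal F_0]\rangle=0\).

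\emph{Martingale term.} By the Itô isometry,
\[
\mathbb E\|M_\tau\|^2=\mathbb E\int_0^\tau e^{-2K(\tau-s)}\operatorname{tr}\bigl(\sigma\sigma^\top\bigr)(s,\bar a^x_s,x)\,ds\le d\Lambda\int_0^\tau e^{-2K(\tau-s)}ds=\frac{d\Lambda}{2K}\bigl(1-e^{-2K\tau}\bigr).
\]

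Summing the three pieces gives the stated bound. Note that no Gaussianity of the terminal law is needed, since only second moments and the trace bound \(\operatorname{tr}(\sigma\sigma^\top)\le d\Lambda\) from \eqref{sigmabound} are used.

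The only delicate step is the justification of the cross term and the isometry. These require \(M\) to be a true martingale (not merely a local one) and require \(\bar a^x_0\) to be independent of \(B\), or at least \(\mathcal F_0\)-measurable. Both follow from the boundedness of \(\sigma\) and the stated independence of the initial condition. If one wishes to avoid relying on the product rule on a weak-solution space, I would instead apply Itô's formula directly to \(\|e^{Ks}(\bar a^x_s-c_s)\|^2\) for a suitable deterministic path \(c_s\), and take expectations after localization; the same bound results.
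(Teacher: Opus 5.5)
Your argument is correct. It differs from the paper's mainly in how the second moment of the centered process is computed.

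The paper subtracts the moving center $m_s=(1-e^{-Ks})\,c/(1-e^{-K\tau})$. It notes that $Y_s=\bar a^x_s-m_s$ solves $dY_s=-KY_s\,ds+\sigma\,dB_s$, applies It\^o's formula to $\|Y_s\|^2$, and closes with the differential inequality $\frac{d}{ds}\mathbb E\|Y_s\|^2\le -2K\,\mathbb E\|Y_s\|^2+d\Lambda$ and Gronwall. You instead solve that same linear equation explicitly by variation of constants; your $e^{-K\tau}\bar a^x_0+M_\tau$ is exactly the paper's $Y_\tau$. You then use the vanishing cross term and the It\^o isometry. The martingale you need there is $t\mapsto\int_0^t e^{-K(\tau-s)}\sigma\,dB_s$ on $[0,\tau]$, rather than the $\tau$-dependent random variable $M_\tau$ itself.

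Your route buys two things. First, the stochastic integrand is bounded, so you only ever use a genuine $L^2$ martingale and need no localization. By contrast, the paper's step of taking expectations of $\int 2\langle Y_s,\sigma\,dB_s\rangle$ tacitly uses $\mathbb E\int_0^\tau\|Y_s\|^2\,ds<\infty$. Second, you obtain the exact identity $\mathbb E\|\bar a^x_\tau-c\|^2=e^{-2K\tau}m_2(\mu_0)+\mathbb E\int_0^\tau e^{-2K(\tau-s)}\operatorname{tr}(\sigma\sigma^\top)(s,\bar a^x_s,x)\,ds$. The inequality then enters only through $\operatorname{tr}(\sigma\sigma^\top)\le d\Lambda$.

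The paper's route buys generality. Its energy estimate uses only $\langle Y,-KY\rangle=-K\|Y\|^2$. It therefore carries over verbatim to nonlinear drifts that are one-sided dissipative at rate $K$ in the action variable, by centering at the noiseless flow instead of $m_s$. No explicit solution formula exists in that setting, whereas your variation-of-constants step relies on the drift being exactly affine in $a$.
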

Consequently, for fixed \(\tau>0\), \(c\), and \(\Lambda\) independent of \(K\), we have 
$$\mathbb E\|\bar a_\tau-c\|^2
    \lesssim 1/K,$$
 with the omitted constant only depending on $\tau, d, m_{2}(\mu_{0}), \Lambda$. The proof is given in Appendix \ref{sec:proof_mean_rev}. This lemma shows that $\bar{a}^x_{\tau}$ can approximate $c$ at any time $\tau$, as long as $K$ is sufficiently large. By designing the drift to be $f(s,a,x) = K(c/(1-e^{-K\tau})-a)$,  we can approximate the optimal action $g^\star(x)$  arbitrarily well. An important property of $f$ is that it is $K$-Lipschitz in $a$.

\iffalse
To rigorously show this phenomenon, we have to show that $f(a; x)$ satisfies the assumptions. To this end, we have
\begin{align*}
\|f(a; x)\|\leq& K(\|g^\star(x)\|+\|a\|)\\
\|f(a; x)-f(a'; x)\|\leq& K\|a-a'\|
\end{align*}
Hence, to satisfy the assumptions, we need $\sup_{x}\|g^\star(x)\|\leq L<\infty$, or that optimal actions are bounded in norm. We provide a simple assumption that realizes this result.
\begin{asmp}\label{ass:bdd_opt_act}
Suppose there exists a compact convex set $S\subseteq \mathcal{A}$ such that
\begin{align*}
r(x, \Pi_{S}(a))\geq& r(x, a)\\
p(\cdot|x, a)=&p(\cdot|x, \Pi_{S}(a))
\end{align*}
for all $t, x, a$. 
\end{asmp}
This means that the environment only "sees" the projected action, and using large-norm actions is not beneficial after a certain threshold. Assumption \ref{ass:bdd_opt_act} ascertains that the optimal action $g^\star(x)\in S$ for every $x$.
\fi
We next translate this localization estimate into value-function approximation.  This requires the reward and transition kernel to be continuous near the optimal action \(g^\star(x)\), as formalized below.
\begin{asmp}[Moduli of continuity]\label{ass:cont_env_soft}
%\item Reward function $r(\cdot,\cdot)$ is continuous, bounded in both arguments; \renyuan{I moved this to section 2.}
 For every state $x$, there exists $R_{x}>0$ and moduli of continuity $\omega_{r, x}(z), \omega_{P, x}(z)\downarrow 0$ as $z\downarrow 0$ such that for all actions $a$ satisfying $\|a-g^\star(x)\|\leq R_{x}$,
\begin{align*}
|r(x, a)-r(x, g^\star(x))|\,\leq&\, \omega_{r,x}(\|a-g^\star(x)\|)\\
\TV\left(P(\cdot|x, a), P(\cdot|x, g^\star(x))\right)\,\leq&\, \omega_{P,x}(\|a-g^\star(x)\|)
\end{align*}
\end{asmp}
{This is a mild local condition: it requires continuity only near \(g^\star(x)\), rather than global %Lipschitz or
smoothness assumptions on the MDP.} With this assumption, we can achieve value function convergence. 
\begin{thm}[Value approximation by mean-reverting diffusion policies] 
\label{thm:converge}
Assume Assumptions \ref{ass:vol_bdd} and \ref{ass:cont_env_soft} hold.
\label{thm:ub_soft}
For each \(K\ge1\), define
\begin{eqnarray}\label{eq:cK}
    c_K(x):=\frac{g^\star(x)}{1-\exp(-KT)}.
\end{eqnarray}
Suppose that, for each \(K\ge1\) and \(x\in\mathcal X\), the mean-reverting diffusion
\begin{eqnarray}
\label{eq:mean-reversion}
    d\bar a_t^{K,x}
    =
    K\bigl(c_K(x)-\bar a_t^{K,x}\bigr)\,dt
    +
    \sigma(t,\bar a_t^{K,x},x)\,dB_t,
    \qquad
    \bar a_0^{K,x}\sim\mu_{0},
\end{eqnarray}
admits a weak solution on \([0,T]\), where \(\bar a_0^{K,x}\) is independent of \(B\) and \(\mu_0\) has finite second moment. 
Assume moreover that the terminal laws define a measurable stochastic policy
$\pi_K(\cdot\mid x):=\Law(\bar a_T^{K,x}).$  Then, for every
initial state law \(\rho\),
\[
    \lim_{K\to\infty}
    \left|
        V^{\pi_K}(\rho)-V^{\star}(\rho)
    \right|
    =
    0.
\]
\end{thm}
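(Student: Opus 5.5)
The plan is to use the performance-difference lemma to turn the value gap into a per-state action-localization error, and then control that error with Lemma~\ref{lem:mean_rev}. Let $d^{\pi_K}_\rho := (1-\gamma)\sum_{h\ge0}\gamma^h \sP^{\pi_K}_\rho(x_h\in\cdot)$ be the discounted occupancy measure. The performance-difference identity gives
\[
V^{\star}(\rho)-V^{\pi_K}(\rho)=\frac{1}{1-\gamma}\int_{\cX}\Bigl(V^\star(x)-\int_{\mathcal A}Q^\star(x,a)\,\pi_K(\d a\mid x)\Bigr)\,d^{\pi_K}_\rho(\d x)\;\ge 0 .
\]
Since $g^\star(x)\in A^\star(x)$, we have $V^\star(x)=Q^\star(x,g^\star(x))$. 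So it suffices to show that the integrand $\Delta_K(x):=\int\bigl(Q^\star(x,g^\star(x))-Q^\star(x,a)\bigr)\pi_K(\d a|x)$ tends to zero in a way that survives integration against $d^{\pi_K}_\rho$.

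\textbf{Per-state bound.} Fix $x$. Apply Lemma~\ref{lem:mean_rev} with $\tau=T$ and $c=g^\star(x)$; the drift in \eqref{eq:mean-reversion} is exactly of the form in the lemma. This gives
\[
\E\|\bar a_T^{K,x}-g^\star(x)\|^2\le e^{-2KT}m_2(\mu_0)+\frac{d\Lambda}{2K}=:\eps_K ,
\]
a bound that is uniform in $x$. Now split on the event $\{\|a-g^\star(x)\|\le \delta\}$ with $\delta\le R_x$. On this event, Assumption~\ref{ass:cont_env_soft} yields
\[
|Q^\star(x,a)-Q^\star(x,g^\star(x))|\le \omega_{r,x}(\delta)+2\gamma\|V^\star\|_\infty\,\omega_{P,x}(\delta),
\]
using that $\|V^\star\|_\infty\le \|r\|_\infty/(1-\gamma)$ and the total-variation bound. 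On the complement, $Q^\star$ is bounded by $\|r\|_\infty/(1-\gamma)$, and Chebyshev's inequality gives probability at most $\eps_K/\delta^2$. Choosing $\delta=\delta_K\downarrow0$ slowly, for instance $\delta_K=\eps_K^{1/4}$, yields $0\le\Delta_K(x)\le \beta_K(x)$, where
\[
\beta_K(x):=\omega_{r,x}(\delta_K)+C\omega_{P,x}(\delta_K)+C\eps_K^{1/2}\to0\quad\text{for every }x .
\]
Also $\beta_K$ can be truncated so that $\Delta_K\le \min(\beta_K, 2\|r\|_\infty/(1-\gamma))$.

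\textbf{Main obstacle.} The measure $d^{\pi_K}_\rho$ depends on $K$, and the moduli $\omega_{\cdot,x}$ and radii $R_x$ are not uniform in $x$. Pointwise convergence of $\Delta_K$ therefore does not immediately give convergence of the integral, and dominated convergence alone does not apply. I would avoid the changing occupancy measure by telescoping along the horizon instead. Write $V^\star-V^{\pi_K}=\sum_h \gamma^h\,\E^{\pi_K}_\rho[\Delta_K(x_h)]$, truncate at a finite $H$ (the tail costs $O(\gamma^H)$), and prove by induction on $h$ that the law of $x_h$ under $\pi_K$ converges in total variation to the law of $x_h$ under $g^\star$.

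The induction step runs as follows. The difference in one-step kernels is $\int \TV(P(\cdot|x,a),P(\cdot|x,g^\star(x)))\,\pi_K(\d a|x)$. This is bounded by the same truncation-plus-Chebyshev argument, giving a quantity $\le \min(1,\tilde\beta_K(x))\to0$ pointwise. Dominated convergence against the fixed law of $x_h$ under $g^\star$ makes this term vanish, and the remaining error is controlled by the induction hypothesis.

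With TV convergence in hand, we can write
\[
\E^{\pi_K}[\Delta_K(x_h)]\le \E^{g^\star}[\min(\beta_K,C)(x_h)]+C\,\TV\bigl(\Law^{\pi_K}(x_h),\Law^{g^\star}(x_h)\bigr)\to0 ,
\]
again by dominated convergence against a $K$-independent measure. Letting $K\to\infty$ and then $H\to\infty$ concludes the proof. The steps that need care are measurability of $x\mapsto\beta_K(x)$ (one can take measurable versions of the moduli or use outer integrals) and tracking the uniform bounds on $Q^\star$.
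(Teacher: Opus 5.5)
Your proof is correct, but the key induction runs in the opposite direction from the paper's. You go through the performance-difference identity and push the state marginals forward, showing $\TV\bigl(\Law^{\pi_K}(x_h),\Law^{g^\star}(x_h)\bigr)\to0$ for each fixed $h$.

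The paper instead treats $\pi_K$ as inducing a Markov reward process with $r_K(x)=\E[r(x,\bar a_T^{K,x})]$ and $P_K(\cdot\mid x)=\E[P(\cdot\mid x,\bar a_T^{K,x})]$. It shows $r_K(x)\to r^\star(x):=r(x,g^\star(x))$ and $\TV\bigl(P_K(\cdot\mid x),P^\star(\cdot\mid x)\bigr)\to0$ at each fixed $x$, where $P^\star(\cdot\mid x):=P(\cdot\mid x,g^\star(x))$. It then proves $V_n^K(x)\to V_n^\star(x)$ for the finite-horizon values by backward induction. The step splits $\int V_n^K\,dP_K-\int V_n^\star\,dP^\star$ into a dominated-convergence term against the fixed measure $P^\star(\cdot\mid x)$ plus $2\|V_n^K\|_\infty\TV(P_K(\cdot\mid x),P^\star(\cdot\mid x))$. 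Optimality of $g^\star$ is used only at the end.

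Both arguments handle your ``main obstacle'' the same way: dominated convergence is only ever applied against $K$-independent measures.

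What your route buys is that it works with the one-step Bellman gap $\Delta_K$, which vanishes to second order at the maximizer. Under a quadratic bound like \eqref{eq:quadratic-upper-gap-section4} it gives at once $V^\star(\rho)-V^{\pi_K}(\rho)\le(1-\gamma)^{-1}\sup_x\Delta_K(x)\le L_Q\varepsilon_K/(1-\gamma)=O(1/K)$. This is exactly how Section~\ref{sec:statistical-estimation} uses \eqref{eq:perf-section4}. Bounding $|r_K-r^\star|$ and $\TV(P_K,P^\star)$ separately, as the paper does, gives bounds first order in $\E\|\bar a_T^{K,x}-g^\star(x)\|$, hence only $O(K^{-1/2})$ from Lipschitz data. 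What the paper's route buys is that it never integrates a total-variation distance over states.

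That last point is where the measurability issue you flagged actually sits.

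For the $\Delta_K$ term you do not need $\beta_K$ at all. $\Delta_K$ is itself measurable, satisfies $0\le\Delta_K\le 2r_{\max}/(1-\gamma)$, and $\Delta_K(x)\to0$ for every $x$. So $\int\Delta_K\,d\Law^{g^\star}(x_h)\to0$ by dominated convergence directly.

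The genuine requirement is in your forward TV step. There you need a measurable majorant of $x\mapsto\TV\bigl(P_K(\cdot\mid x),P^\star(\cdot\mid x)\bigr)$ that tends to zero pointwise.
\begin{itemize}
\item The TV map itself serves when the $\sigma$-algebra of $\mathcal X$ is countably generated (e.g.\ standard Borel, as for $[0,1]^m$).
\item On the general measurable space of Section~\ref{sec:prelim}, nothing makes it, or $x\mapsto\omega_{P,x}(\delta_K)$, measurable.
\item Outer integrals do not obey dominated convergence in general, so that fix does not work.
\end{itemize}
For full generality, replace the forward induction with a backward one: show $P_K^h\Delta_K(x)\to0$ pointwise by induction on $h$, so that $\int\Delta_K\,d\Law^{\pi_K}(x_h)=\int P_K^h\Delta_K\,d\rho\to0$. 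This uses $\TV$ only at fixed states, which is precisely the paper's mechanism.
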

The proof is in Appendix \ref{sec:proof_thm_ub_soft}. We remark that the same argument can yield uniform convergence of the value function when the radius and moduli of continuity are uniform over $x$, by replacing the pointwise convergence step in the proof with uniform convergence over states $x$. %(c.f. Theorem \ref{thm:ub_hard} in Appendix \ref{app:expressivity}).

\subsection{Lower bound on value function approximation for $K$-Lipschitz drifts}\label{subsec:lb_vfa}
%We consider the large-$K$ setting, and examine what the \textit{best} value function approximation possible for $K$-Lipschitz drifts can be. We state our main result - a lower bound on the expected squared distance of a diffusion $\bar{a}_{t}$ to any finite set $\mathcal{Z}$. This finite set is intended to be the optimal action set in RL, as in the statement of Theorem \ref{thm:lb_diffusion_initial}.
The upper-bound result shows that a \(K\)-Lipschitz mean-reverting drift can concentrate a diffusion policy near an optimal deterministic action with squared error of order \(1/K\). 
We now show that this rate is intrinsic: under nondegenerate diffusion noise, no \(K\)-Lipschitz drift can localize the terminal action around a finite set of target actions at a faster rate.
\begin{prop}[Matching lower bound for finite-set localization]
\label{prop:lb_diffusion}
Let \(\mathcal Z=\{z_1,\ldots,z_J\}\subset\mathbb R^d\) be a finite set and fix \(R>0\).  Let
$(\bar a^x_t)_{t\ge0}$ follow \eqref{eq:sde-policy}, with $\bar a^x_0$ independent of $B$. Suppose that
$f(t,\cdot,x)$ is $K$-Lipschitz uniformly in $t,x$ and $f(\cdot, 0,x)\in L^{1}_{\text{loc}}$. Additionally, assume either Assumption \ref{ass:vol_bdd} or
\begin{enumerate}
\item[(A2)]\makeatletter \def\@currentlabel{A2} \makeatother \label{ass:state-indep} (State-independent volatility) $\sigma(t, a,x)\equiv \sigma(t)$ is
  deterministic and locally square-integrable, and
$\sigma(t)\sigma(t)^\top\succeq \kappa I_d$ for almost all $t\ge0$, where $\kappa>0$ is independent of $K$.
\end{enumerate}

Then there exist $c_R>0$ and $K_R<\infty$, depending on
$J,d,\kappa,\mu_0,R$, and additionally on \(\Lambda\) in the state-dependent
case, such that, for every \(K\ge K_R\),
$$
\inf_{f,\sigma,x,T\ge0}
\mathbb E\left[\dist(\bar a^x_T,\mathcal{Z})^2\wedge R^2\right]
\ge
\frac{c_R}{K}.
$$
Here $\dist(a,\mathcal Z):=\inf_{z\in\mathcal Z}\|a-z\|$. The constant $c_R$ does not depend on the locations of the points in $\mathcal{Z}$, nor
on the minimum distance between distinct points of $\mathcal{Z}$.
\end{prop}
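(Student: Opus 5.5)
The plan is to show that on the last time window of length \(\delta:=\min(T,1/K)\), a \(K\)-Lipschitz drift cannot counteract the nondegenerate noise. The terminal law, conditional on the past, is then spread over a region of diameter of order \(\sqrt{\delta}\). This makes the probability of landing within \(\varepsilon\sqrt\delta\) of any of the \(J\) points of \(\mathcal Z\) at most \(1/2\) for a small absolute \(\varepsilon\), which would give
\[
\mathbb E\bigl[\dist(\bar a^x_T,\mathcal Z)^2\wedge R^2\bigr]\ge \tfrac12\varepsilon^2\delta .
\]
Once \(K\ge K_R\) (chosen so that \(\varepsilon^2/K\le R^2\)), the truncation at \(R^2\) is harmless, and the bound is \(\gtrsim 1/K\) as soon as \(T\ge 1/K\). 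The regime \(T<1/K\) is handled separately at the end. All estimates will be translation-invariant in the targets, so \(c_R\) will not depend on the locations or separation of the \(z_j\).

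\textbf{Step 1 (reference process).} Condition on \(\mathcal F_{T-\delta}\) and set \(y:=\bar a^x_{T-\delta}\). On \([T-\delta,T]\) I compare \(\bar a^x\) with the frozen-drift process
\[
dY_t=f(t,y,x)\,dt+\sigma(t,Y_t,x)\,dB_t,\qquad Y_{T-\delta}=y .
\]
\begin{itemize}
\item Under (A2), \(Y_T\) is Gaussian with covariance \(\succeq\kappa\delta I_d\), so its density is at most \((2\pi\kappa\delta)^{-d/2}\).
\item Under Assumption~\ref{ass:vol_bdd}, I would invoke Aronson-type upper bounds for uniformly elliptic diffusions with bounded measurable coefficients, after removing the deterministic shift \(\int f(t,y,x)\,dt\). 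This gives a density at most \(C(d,\kappa,\Lambda)\,\delta^{-d/2}\).
\end{itemize}
In either case,
\[
\mathbb P\bigl(\dist(Y_T,\mathcal Z)<\varepsilon\sqrt\delta\bigr)\le J\,C\,\varepsilon^d=:\eta(\varepsilon),
\]
uniformly in \(y\), \(f\), and \(\mathcal Z\). The hypothesis \(f(\cdot,0,x)\in L^1_{\mathrm{loc}}\) makes \(\int f(t,y,x)\,dt\) well defined.

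\textbf{Step 2 (Girsanov comparison, the main obstacle).} The true drift differs from the frozen one by \(f(t,\bar a_t,x)-f(t,y,x)\), whose norm is at most \(K\|\bar a_t-y\|\). This difference is unbounded, so I localize at the stopping time \(\theta\) at which \(\|\bar a_t-y\|\) first exceeds \(M\sqrt\delta\).
\begin{itemize}
\item Up to \(\theta\), the Girsanov relative entropy is at most \(\tfrac{1}{2\kappa}K^2M^2\delta\cdot\delta\le M^2/(2\kappa)\), using \(\delta\le 1/K\). This bound is independent of \(K\).
\item The exit event \(\{\theta<T\}\) has small probability for large \(M\). This follows from a Gronwall/BDG estimate on the window: the drift contributes at most \(K\delta\sup\|\bar a_t-y\|+\|\int f(t,y,x)\,dt\|\). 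Rather than controlling the shift term directly, I would run the whole comparison against \(Y\), which already carries that shift, so only the increment \(\bar a-Y\) needs bounding; BDG with \(\Lambda\) then gives \(\mathbb P(\theta<T)\le C\Lambda/M^2\).
\end{itemize}
Given a bounded KL divergence, the data-processing inequality turns \(\mathbb P(Y_T\in A)\le\eta\) into
\[
\mathbb P(\bar a_T\in A,\ \theta\ge T)\le \frac{M^2/(2\kappa)+\log 2}{\log(1/\eta)} .
\]
Choosing \(M\) large first and then \(\varepsilon\) small makes the total probability at most \(1/2\).

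\textbf{Step 3 (short horizons \(T<1/K\)).} Here the noise window is short, so I also use the initial randomness. The drift flow over time \(T\le1/K\) is bi-Lipschitz with constant at most \(e\), so the noiseless part of \(\bar a_T\) is an \(e\)-bi-Lipschitz image of \(\bar a_0\). Combining this with uniform absolute continuity of \(\mu_0\) (small Lebesgue measure implies small \(\mu_0\)-mass), the preimage of \(J\) balls of radius \(r\) has \(\mu_0\)-mass at most \(1/2\) once \(r\le r_0(\mu_0,J,d)\). This yields a lower bound \(\gtrsim r_0^2\wedge R^2\ge c_R/K\) for \(K\ge K_R\). I expect the technical care to lie in making the noise perturbation in this regime compatible with the flow argument, again via the Girsanov comparison of Step 2 with frozen drift, and in checking that all constants depend only on \(J,d,\kappa,\mu_0,R\) and \(\Lambda\).
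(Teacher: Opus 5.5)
Your two-regime structure matches the paper's: the bi-Lipschitz flow plus absolute continuity of $\mu_0$ for $T\le 1/K$, and otherwise an unavoidable $K^{-1/2}$ fluctuation from the last window of length $1/K$. However, the long-horizon step has a genuine gap under Assumption~\ref{ass:vol_bdd}.

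Your Step~1 needs a bound $C(d,\kappa,\Lambda)\delta^{-d/2}$ on the density of $Y_T$ for $dY_t=f(t,y,x)\,dt+\sigma(t,Y_t,x)\,dB_t$, and you appeal to Aronson. Aronson's Gaussian bounds are for divergence-form operators. Here the generator is $\tfrac12\operatorname{tr}(\sigma\sigma^\top\nabla^2)$, in non-divergence form, and Proposition~\ref{prop:lb_diffusion} assumes nothing about $\sigma$ beyond $\kappa I_d\preceq\sigma\sigma^\top\preceq\Lambda I_d$. For such operators no pointwise density bound holds in general: Fabes and Kenig give continuous, uniformly parabolic examples whose transition probabilities are singular with respect to Lebesgue measure. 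So the estimate your whole comparison rests on is unavailable. Regularity of $\sigma$ in $a$ would restore it, but only with constants depending on that modulus, which the statement does not allow.

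The missing idea is to replace anti-concentration through a density \emph{upper} bound by a \emph{lower} bound on hitting probabilities, which does hold for measurable coefficients. The paper rescales the last window, $Y_\ell=\sqrt K(\bar a_{s+\ell/K}-\phi_{s+\ell/K})$, with $\phi$ the deterministic flow of $f$ started at $\bar a_s$. Then $Y$ has a $1$-Lipschitz drift vanishing at $0$ (hence bounded by $2$ on $B_2$) and covariance between $\kappa I_d$ and $\Lambda I_d$. Krylov's local estimate (Lemma~\ref{lem:krylov-lower-local}) gives $\mathbb P(Y_1\in B_{2\rho}(q))\ge N^{-1}\rho^\nu$ for every $q\in B_1$. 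A pigeonhole argument over $J+1$ fixed, well-separated centres in $B_1$ then yields one ball disjoint from the $J$ rescaled targets. Hence $\dist(\bar a_T,\mathcal Z)\ge r_0/\sqrt K$ with probability at least $c_0(d,\kappa,\Lambda,J)$, uniformly in the target locations. The case $d=1$ is handled by adjoining an independent Brownian coordinate.

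There are also three fixable problems.

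\textbf{Freezing at the point $y$.} The Girsanov integrand $f(t,\bar a_t,x)-f(t,y,x)$ has size up to $K\|\bar a_t-y\|$. This includes the deterministic displacement $\int f(u,y,x)\,du$, and the Lipschitz budget does not bound $\|f(t,y,x)\|$. Localizing $\bar a-Y$ instead of $\bar a-y$ does not change this integrand; take $f(t,a)=Ka$ with $\|y\|$ large. Freeze along the flow $\dot\phi_t=f(t,\phi_t,x)$, $\phi_{T-\delta}=y$, as the paper does. Gronwall on a window of length at most $1/K$ then gives $\sup\|\bar a-\phi\|\le e\sup\|\int\sigma\,dB\|$.

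\textbf{No $\Lambda$ under (A2).} There $\sigma$ is only locally square-integrable, so no upper bound $\Lambda$ is available, yet your exit and KL bounds both use one. The KL can genuinely be unbounded when $\sigma$ is large in some directions and the drift feeds those directions into ones where $\sigma$ is small. The paper's Nash-inequality argument (Lemma~\ref{lem:additive-anti-concentration}, Step~4(b)) uses only the lower ellipticity bound. For bounded $\sigma$, and after re-centring on $\phi$, your Gaussian-density-plus-KL route is a valid and more elementary substitute for that lemma.

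\textbf{Noise in Step~3.} Step~3 leaves the noise perturbation unhandled, and no Girsanov comparison is needed there. Under (A2), condition on the Brownian path: subtracting $\int\sigma\,dB$ gives a random ODE whose solution map $a\mapsto\bar a_{T,a}$ has an $e^{KT}$-Lipschitz inverse, so the noise is absorbed exactly. Under Assumption~\ref{ass:vol_bdd}, use $\mathbb E\|\bar a_t-\phi_t(\bar a_0)\|^2\le (e^2-1)d\Lambda/(2K)$, which is small once $K$ is large.
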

The proofs are given in Appendix \ref{sec:proof_prop_34_first} for Assumption \ref{ass:state-indep}, and Appendix \ref{sec:proof_prop_34_second} for Assumption \ref{ass:vol_bdd}. The key observation, which appears to be new, is to split at the intrinsic time \(K^{-1}\). On time scales shorter than \(K^{-1}\), a \(K\)-Lipschitz drift cannot move an absolutely continuous initial law into arbitrarily small neighborhoods of finitely many target points. 
On longer horizons, one can look only at the final window of length \(K^{-1}\). 
After rescaling this window by
\[
    \bar a_{T-1/K+\ell/K}
    =
    \phi_\ell + K^{-1/2}Y_\ell,
    \qquad 0\le \ell\le 1,
\]
the residual process \(Y_\ell\) has \(\mathcal{O}(1)\)-Lipschitz drift and uniformly nondegenerate volatility.
\iffalse
Regarding \(T\le K^{-1}\), over such a short window, the diffusion cannot transform an absolutely continuous initial law
into arbitrarily small neighborhoods of finitely many target points.
Consequently, a small neighborhood of \(\mathcal Z\) can capture only a
fixed fraction of the terminal mass, giving actually a constant lower bound on the expected squared distance. If \(T>K^{-1}\), set \(s=T-K^{-1}\) and condition on \(\mathcal F_s\).
On the final window, write
$
\bar a_{s+\ell/K}=\phi_\ell+K^{-1/2}Y_\ell$ ($0\le \ell\le 1$) where
\[
\phi_0=\bar a_s,\qquad
\phi_\ell
=
\bar a_s+\frac1K\int_0^\ell b(s+r/K,\phi_r,x)\,dr .
\]
Conditional on \(\mathcal F_s\), the path \(\phi_\ell\) is nonrandom, while \(Y_\ell\) records the residual
fluctuation \(K^{-1/2}\). In these coordinates,
\(Y_\ell\) satisfies
\[
dY_\ell=\beta_K(\ell,Y_\ell)\,d\ell
+\widetilde\sigma_K(\ell,Y_\ell)\,dW_\ell,
\]
where the residual drift \(\beta_K\) is \(\mathcal{O}(1)\)-Lipschitz and the volatility $\tilde{\sigma}_{K}$
remains uniformly nondegenerate and bounded.  
\fi
As a result, \(Y_1\) has a nondegenerate density uniformly in $K$, hence having distance at least $c$ from any finite set with constant probability. Scaling back, \(\bar a_T\) is at distance at
least \(cK^{-1/2}\) from \(\mathcal Z\) with constant probability, yielding
the \(c/K\) lower bound after squaring.

\begin{rmk}
We give a few remarks about Proposition \ref{prop:lb_diffusion}.

\begin{itemize}
\item[(i)]  \textbf{Role of bounded volatility.} 
For state-independent volatility, uniform ellipticity is enough. 
For state-dependent volatility, the upper bound in Assumption~\ref{ass:vol_bdd} is essential: Proposition~\ref{prop:unbounded_sigma_counterexample} shows that, if this upper bound is removed, the \(1/K\) lower bound can fail even when the volatility remains uniformly elliptic from below.

\item[(ii)] \textbf{Sharpness.} 
The lower bound matches the \(\mathcal{O}(1/K)\) upper bound from the mean-reverting construction in Lemma~\ref{lem:mean_rev}. 
Indeed, with finite second moment initialization and uniformly bounded volatility, a linear mean-reverting drift can achieve
$\mathbb E\|\bar a_T-z\|^2 \lesssim K^{-1}$
for any fixed \(z\in\mathbb R^d\), and hence also
$ \mathbb E[\dist(\bar a_T,\mathcal Z)^2]\lesssim K^{-1}$
for any finite set \(\mathcal Z\). 
Proposition~\ref{prop:lb_diffusion} shows that this rate cannot be improved in general.

\item[(iii)] \textbf{Relevance for RL.}
In RL, \(\mathcal Z\) should be viewed as the Bellman-optimal action set \(A^\star(x)\). 
Although the upper bound only needs one selector \(g^\star(x)\in A^\star(x)\), the lower bound covers the case where \(A^\star(x)\) contains several isolated optimal actions. 
Thus, near deterministic optima, the drift Lipschitz budget \(K\) is the expressivity bottleneck for diffusion policies.
\end{itemize}
\end{rmk}

Proposition \ref{prop:lb_diffusion} immediately yields a value-function lower bound. The only additional assumption is that the Bellman gap grows at least quadratically locally near the finite set of optimal actions.

\begin{thm}[Finite-well diffusion lower bound]\label{thm:lb_diffusion_initial}
%Consider the infinite-horizon discounted RL setting with $\gamma\in(0,1)$ and initial state law $\rho$. 
For each $K\geq 1$ and $T>0$, let
$\pi_{K,T}$ be the Markov policy whose action law at state $x$ is the
terminal law at time $T$ of \eqref{eq:sde-policy}. Assume that the conditions in
Proposition~\ref{prop:lb_diffusion} hold. %, with ellipticity constants$(\kappa,\Lambda)$ independent of $(t,x)$.

Fix $M\geq 1$ and $R>0$. Suppose that, for every $x\in\mathcal X$, %$A^{\star}(x)$is nonempty and satisfies 
$|A^\star(x)|\leq M$, and that 
there exists a constant $\lambda_{M,R}>0$, independent of $x$, such that for all $(x, a)\in \mathcal X \times \mathcal A$, 
\begin{align}\label{cond:lower_quad}
V^\star(x)-Q^\star(x,a)
\geq
\lambda_{M,R}
\left(
\dist(a,A^\star(x))^2\wedge R^2
\right).
\end{align}
Then there exist constants $c_{M,R}>0$ and $K_0<\infty$, depending only on
$M,R,d,\kappa,\Lambda,\mu_0$, such that  
\[
\inf_{T>0}
\left(
V^\star(\rho)-V^{\pi_{K,T}}(\rho)
\right)
\geq
\frac{c_{M,R}\lambda_{M,R}}{K},
\quad \forall K\geq K_0.
\]
\end{thm}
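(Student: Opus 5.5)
The plan is to combine the performance difference lemma with Proposition~\ref{prop:lb_diffusion}, applied state by state. For any Markov policy $\pi$, the standard performance difference identity gives
\[
V^\star(\rho)-V^{\pi}(\rho)
=
\frac{1}{1-\gamma}\,\E_{x\sim d^{\pi}_\rho}\E_{a\sim\pi(\cdot|x)}\bigl[V^\star(x)-Q^\star(x,a)\bigr],
\]
where $d^\pi_\rho:=(1-\gamma)\sum_{h\ge0}\gamma^h\,\mathbb P^\pi_\rho(x_h\in\cdot)$ is the discounted state-occupancy measure. I will derive it by telescoping: write $V^\star(\rho)-V^\pi(\rho)=\E^\pi_\rho\sum_h\gamma^h\bigl[V^\star(x_h)-r(x_h,a_h)-\gamma V^\star(x_{h+1})\bigr]$ and note that the conditional expectation of each summand given $(x_h,a_h)$ equals $V^\star(x_h)-Q^\star(x_h,a_h)$. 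Boundedness of $r$ (hence of $V^\star$) justifies the interchange of sums and expectations. It is actually enough to keep only the $h=0$ term: every summand is nonnegative because $Q^\star\le V^\star$, so
\[
V^\star(\rho)-V^\pi(\rho)\ge \E_{x\sim\rho}\E_{a\sim\pi(\cdot|x)}[V^\star(x)-Q^\star(x,a)].
\]
This avoids any dependence on the occupancy measure and leaves a constant that is at least $1$.

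Next I take $\pi=\pi_{K,T}$ and apply the quadratic growth condition \eqref{cond:lower_quad}. This gives
\[
V^\star(\rho)-V^{\pi_{K,T}}(\rho)\ge \lambda_{M,R}\,\E_{x\sim\rho}\,\E\bigl[\dist(\bar a^x_T,A^\star(x))^2\wedge R^2\bigr].
\]
For each fixed $x$, the set $A^\star(x)$ has at most $M$ points. If it has fewer, I pad it by repeating points, which leaves $\dist$ unchanged. Proposition~\ref{prop:lb_diffusion} with $J=M$ then yields $\E[\dist(\bar a^x_T,A^\star(x))^2\wedge R^2]\ge c_R/K$ for $K\ge K_R$. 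The key point is that $c_R$ and $K_R$ depend only on $J,d,\kappa,\Lambda,\mu_0,R$ and not on the locations of the points or on $x$ or $T$. The bound is therefore uniform in $x$ and $T$, so integrating against $\rho$ and taking the infimum over $T$ finishes the proof with $c_{M,R}=c_R$ and $K_0=\max(K_R,1)$.

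The main obstacle is uniformity together with measurability. Proposition~\ref{prop:lb_diffusion} is stated for a fixed finite set $\mathcal Z$, whereas here $\mathcal Z=A^\star(x)$ varies with $x$. I will rely on the explicit statement that $c_R$ is independent of the locations and separations of the points, and I will need the map $x\mapsto\E[\dist(\bar a^x_T,A^\star(x))^2\wedge R^2]$ to be measurable. I would obtain the latter from measurability of $\pi_{K,T}$ together with the inequality $\dist(a,A^\star(x))^2\wedge R^2\le(V^\star(x)-Q^\star(x,a))/\lambda_{M,R}$. Alternatively, I can bypass the issue entirely by working directly with the measurable integrand $V^\star-Q^\star$ and applying the pointwise lower bound inside a lower integral. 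A minor point to check is the case $T<1/K$, but that case is already covered inside Proposition~\ref{prop:lb_diffusion}, since its infimum is taken over all $T\ge0$.
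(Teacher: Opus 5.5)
Your proposal is correct and is essentially the paper's proof: reduce to the first-step Bellman gap $\int[V^\star(x)-Q^\star(x,a)]\,\pi_{K,T}(da\mid x)$, apply the clipped quadratic-growth condition, and invoke Proposition~\ref{prop:lb_diffusion} with $\mathcal Z=A^\star(x)$, whose constants do not depend on the point locations and are therefore uniform in $x$ and $T$. The differences are cosmetic: you discard the $h\ge1$ terms of the full performance-difference sum using $Q^\star\le V^\star$ where the paper uses the one-step recursion with $V^\star\ge V^{\pi}$, and you pad $A^\star(x)$ by repetition where the paper minimizes the constants over $J\le M$. Of your two measurability fixes, the second one---bounding the measurable integrand $x\mapsto\int[V^\star(x)-Q^\star(x,a)]\,\pi_{K,T}(da\mid x)$ from below pointwise by $\lambda_{M,R}c_R/K$---is the one that works, and it settles a point the paper leaves implicit.
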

The proof is given in Appendix \ref{sec:proof_thm_34}, which applies Proposition \ref{prop:lb_diffusion} with $\mathcal{Z}=A^{\star}(x)$.

\begin{rmk}[Bellman gap assumption in \eqref{cond:lower_quad}]
Condition~\eqref{cond:lower_quad} is a mild finite-well condition on the Bellman gap.  The clipping makes the assumption local in nature: near \(A^\star(x)\), it requires the Bellman gap to grow at least quadratically with the distance to the optimal action set, while away from \(A^\star(x)\), it only requires a positive suboptimality margin. It does not require concavity, uniqueness of the
optimizer, or growth at infinity. The quadratic exponent is chosen to
match the squared-distance lower bound in Proposition~\ref{prop:lb_diffusion}.
More generally, a clipped \(p\)-th order gap would yield the corresponding \(K^{-p/2}\) rate; for example, quadratic wells give \(K^{-1}\), while quartic wells give \(K^{-2}\).
\end{rmk}

\begin{rmk}[A note on policy initialization]
 We note that the lower bound over \textit{all} $t>0$ in Theorem \ref{thm:lb_diffusion_initial} cannot hold under arbitrary action initialization $\mu_0$. A counterexample is given in Appendix \ref{sec:statement_proof_cor_35}. However, if we allow for burn-in time $t_{0}\gtrsim 1/K$, the conclusion of Theorem \ref{thm:lb_diffusion_initial} would still hold regardless of initialization. %This assumption is reasonable because in diffusion literature we often consider large time horizons. 
We state this extension in Corollary ~\ref{cor:arb_init} at Appendix~\ref{sec:statement_proof_cor_35}. 
%{\color{brown} (Jiacheng: We can specify this extension as Corollary \ref{cor:arb_init})}
\end{rmk}

\section{Approximation--estimation trade-off in \(K\)}
\label{sec:statistical-estimation}

Section~\ref{sec:expressivity} shows that a \(K\)-Lipschitz diffusion policy can approximate a deterministic optimal selector with value error of order \(K^{-1}\). We now develop the finite-sample theory of learning such policies from data when
the drift is represented by $K$-Lipschitz ReLU networks.
% The network class is chosen so that
% each admissible drift has Lipschitz constant at most \(K\) in the action
% variable. 

For the sake of exposition,  we take the state space $\mathcal X=[0,1]^m$, and impose the following technical assumptions on the 
MDPs. 

\begin{asmp}[Bounded and Lipschitz optimal deterministic policy]\label{ass:lipschitsz_g}
 The optimal selector $g^\star$, defined in \eqref{eq:g_star}, satisfies $\|g^\star\|_\infty\le R_0$ and $\mathrm{Lip}(g^\star)\le L_{g^\star}$.
\end{asmp}

\begin{asmp}[MDP environment and diffusion policy volatility]
\label{ass:lipschitsz}
% Assume the environment is Lipschitz in state and action. 
 
There exist constants $L_r^x,L_r^a<\infty$ such that, for all $x,x'\in\mathcal X$ and $a,a'\in\mathbb R^d$,
\[
|r(x,a)-r(x',a')|
\le L_r^x\|x-x'\|+L_r^a\|a-a'\|.
\]
 The   state transition is given by  a Lipschitz generative model, i.e.,  
there exists a measurable map $\Phi_P:\mathcal X\times \cA\times[0,1]\mapsto\mathcal X$ such that  for $U\sim{\rm Unif}[0,1]$ and every $(x,a)\in \mathcal X \times \cA$,
\(
\Phi_P(x,a,U)\sim P(\cdot\mid x,a),
\)
and there exist constants $L_P^x,L_P^a<\infty$ such that
\begin{equation}
\label{eq:transition-sampler-l2-section4}
\left(
\mathbb E_U
\left[
\left\|
\Phi_P(x,a,U)-\Phi_P(x',a',U)
\right\|^2
\right]
\right)^{1/2}
\le
L_P^x\|x-x'\|
+
L_P^a\|a-a'\|.
\end{equation}
Moreover, the diffusion volatility $\sigma$ is Lipschitz in $(x,a)$: there exists $L_\sigma<\infty$ such that
\begin{align}\label{sigmalipschitz}
\|\sigma(t,a,x)-\sigma(t,a',x')\|_{\mathrm{HS}}
\le L_\sigma\big(\|x-x'\|+\|a-a'\|\big),
\end{align}
where $\|\cdot\|_{\rm HS}$ is the Hilbert--Schmidt norm.
\end{asmp}
Assumption~\ref{ass:lipschitsz} is stated before the empirical construction because the variables below use the fixed transition generator \(\Phi_P\). The map \(\Phi_P\) gives the synchronous $L^2$ coupling used in the stability proof.

\paragraph{ReLU drift classes.}
We first introduce some constants that will be used to construct the policy class. 
Fix a sufficiently large  constant \(C_0\ge2\), independent of \((n,K,s,m,d,T,\delta)\). Choose constants \(\ell_1,\ell_2<\infty\), independent of \((K,s)\), satisfying
\begin{equation}
\label{eq:ell-lower-section4}
 \ell_1\ge \frac{\sqrt d\,R_0}{1-e^{-T}},
 \qquad
 \ell_2\ge \frac{C_0^m\sqrt d\,L_{g^\star}}{1-e^{-T}},
\end{equation}
where  $R_0$ is given in Assumption 
\ref{ass:lipschitsz_g}.
For each \(s\ge1\), define
\begin{eqnarray}
 p_{s,1}&:= &C_0^m d(m+d+1)s\log(e s),\label{eq:p-s-1-section4}\\
 p_{s,2}&:=&C_0^m\log(e s),\label{eq:p-s-2-section4}\\
 p_{s,3}&:=&C_0^m d(m+d+1)s,\label{eq:p-s-3-section4}\\
 p_{s,4}&:=&C_0^m\{1+\sqrt d(R_0+L_{g^\star})\}(1+s)^{C_0^m}(1-e^{-T})^{-1}.
\label{eq:p-s-4-section4}
\end{eqnarray}

Let \(\mathcal N_s\) be the set of vector-valued multi-layer ReLU networks
\(\Psi_\theta:[0,T]\times\mathbb R^d\times[0,1]^m\to\mathbb R^d\) satisfying the following conditions: $\Psi_\theta$ has at most \(p_{s,1}\) nonzero scalar parameters, depth at most \(p_{s,2}\), width at most \(p_{s,3}\), and parameter bound \(|\theta_i|\le p_{s,4}\). For every affine region \(S\) intersecting \([0,T]\times\mathbb R^d\times[0,1]^m\),
\begin{equation}
\label{eq:affine-piece-section4}
\Psi_\theta(t,a,x)=M^1_{\theta,S}t+M^2_{\theta,S}a+M^3_{\theta,S}x+b_{\theta,S},
\end{equation}
for some  \(M^2_{\theta,S}\in\mathbb R^{d\times d}\) and \(M^3_{\theta,S}\in\mathbb R^{d\times m}\) satisfying  
\[
\|M^2_{\theta,S}\|_{\mathrm{op}}\le1,
\qquad
\|M^3_{\theta,S}\|_{\mathrm{op}}\le\ell_2,
\]
and
\[
\sup_{t\in[0,T],\,x\in[0,1]^m}\|\Psi_\theta(t,0,x)\|\le \ell_1.
\]
Define the drift class
\begin{equation}
\label{eq:relu-drift-class-section4}
 \mathcal F_{K,s}:=\Big\{f_\theta(t,a,x)=K\Psi_\theta(t,a,x):\Psi_\theta\in\mathcal N_s\Big\}.
\end{equation}

The architecture bounds, especially $C^m_0$, in the definition of \(\mathcal F_{K,s}\) are chosen so that the class contains the mean-reverting drift of the form \eqref{eq:mean-reversion}.  %Section~\ref{sec:expressivity}. 
Specifically, Lemma~\ref{lem:relu-selector-approx}   shows that
for all $K\ge 1$,
\(\mathcal F_{K,s}\) contains 
\begin{eqnarray}\label{eq:comp-section4}
    f_{K,s}(t,a,x)
=
K\left(\frac{h_s(x)}{1-e^{-KT}}-a\right),
\end{eqnarray}
where  $h_s$ is  a suitable ReLU network of size $s$
 approximating the optimal selector \(g^\star\). Thus the finite-sample analysis can use the approximation property from Section~\ref{sec:expressivity} for mean-reverting drifts. 
 
 % The tuning parameter controlling the network size is \(s\).

\begin{lem}[ReLU approximation and mean-reverting drift]
\label{lem:relu-selector-approx}
Under Assumption~\ref{ass:lipschitsz_g},  there exists a constant $C_0\ge 0$ such that for every \(s\ge1\) there is a vector-valued ReLU network \(h_s:[0,1]^m\to\mathbb R^d\) satisfying 
\begin{align}
 \sup_{x\in[0,1]^m}\|h_s(x)-g^\star(x)\|^2&\le C_0^m m d L_{g^\star}^2s^{-2/m},\label{eq:h-error-section4}\\
 \sup_{x\in[0,1]^m}\|h_s(x)\| &\le \sqrt{d}\,R_0,\label{eq:h-bound-section4}\\
 \mathrm{Lip}(h_s)&\le C_0^m\sqrt d\,L_{g^\star}.
\label{eq:h-lip-section4}
\end{align}
Moreover, 
% \(h_s\) is exactly representable by a ReLU network with 
$h_s$ has
at most (i) \(C_0^m d s\log(e s)\) nonzero scalar parameters, (ii) depth at most \(C_0^m\log(e s)\), (iii) width at most \(C_0^mds\), and (iv)weights bounded by $ C_0^m(1+R_0+L_{g^\star})(1+s)^{C_0^m}$.

It holds that $ f_{K,s}\in\mathcal F_{K,s} $, 
where $f_{K,s}$ is defined in \eqref{eq:comp-section4}.
\end{lem}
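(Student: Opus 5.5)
Our plan is to build \(h_s\) coordinatewise from a standard ReLU approximation of Lipschitz functions on \([0,1]^m\), then apply a clipping layer to enforce the sup bound, and finally realize \(f_{K,s}\) by adding the linear map \(a\mapsto -a\) in parallel.

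\textbf{Approximation.} For each coordinate \(j\le d\), the map \(g^\star_j\) is \(L_{g^\star}\)-Lipschitz with \(|g^\star_j|\le R_0\). We use the Yarotsky-type piecewise-linear interpolation on a uniform grid of mesh \(N^{-1}\) with \(N\asymp s^{1/m}\). The interpolant is built from a partition of unity of hat functions, which are exactly representable by ReLU networks. The product/multiplication gadgets add only \(\log(es)\) depth and a logarithmic parameter overhead.

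Rather than Taylor-based constructions, we take the continuous piecewise-linear interpolant on a Freudenthal (Kuhn) triangulation of the grid. It has three properties we need:
\begin{itemize}
\item it is exactly a ReLU network of size \(C_0^m s\log(es)\);
\item its sup error is at most \(\sqrt m\,L_{g^\star}N^{-1}\), hence the squared error is \(\lesssim m L_{g^\star}^2 s^{-2/m}\) per coordinate;
\item its Lipschitz constant is at most \(C_0^m L_{g^\star}\), since on each simplex the gradient is determined by differences of \(g^\star_j\) at adjacent vertices.
\end{itemize}
Summing over the \(d\) coordinates gives \eqref{eq:h-error-section4} and \eqref{eq:h-lip-section4} with the factor \(d\) (respectively \(\sqrt d\)). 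Next, we compose each coordinate with the clip \(u\mapsto \max(-R_0,\min(R_0,u))=\mathrm{ReLU}(u+R_0)-\mathrm{ReLU}(u-R_0)-R_0\). This clip is 1-Lipschitz and does not increase the error, because \(|g^\star_j|\le R_0\). It yields \(\|h_s\|\le \sqrt d R_0\), which is \eqref{eq:h-bound-section4}. The weight bound (iv) follows by tracking grid values (bounded by \(R_0\)), slopes (\(\lesssim N\le s\)), and the \(\log s\)-depth gadgets, which give polynomial growth \((1+s)^{C_0^m}\).

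\textbf{Membership \(f_{K,s}\in\mathcal F_{K,s}\).} Write \(f_{K,s}=K\Psi\) with \(\Psi(t,a,x)=h_s(x)/(1-e^{-KT})-a\). The identity on \(a\) is carried through the layers via \(a=\mathrm{ReLU}(a)-\mathrm{ReLU}(-a)\), costing width \(2d\) per layer and \(O(d\,\text{depth})\) parameters. The factor \((1-e^{-KT})^{-1}\) is absorbed into the last layer; it lies in \([1,(1-e^{-T})^{-1}]\) because \(K\ge1\). This explains the factor \(d(m+d+1)\) in \(p_{s,1},p_{s,3}\) and the factor \((1-e^{-T})^{-1}\) in \(p_{s,4}\).

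We then check the constraints on each affine region:
\begin{itemize}
\item \(M^2=-I\), so \(\|M^2\|_{\mathrm{op}}=1\);
\item \(M^3\) is the Jacobian of \(h_s\) divided by \(1-e^{-KT}\), so \(\|M^3\|_{\mathrm{op}}\le C_0^m\sqrt d L_{g^\star}/(1-e^{-T})\le \ell_2\);
\item \(\Psi(t,0,x)=h_s(x)/(1-e^{-KT})\) has norm \(\le \sqrt d R_0/(1-e^{-T})\le\ell_1\), by \eqref{eq:ell-lower-section4}.
\end{itemize}

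\textbf{Main obstacle.} The delicate point is obtaining simultaneously the \(s^{-1/m}\) sup rate, a Lipschitz bound uniform in \(s\), and the parameter count \(s\log s\). Generic sparse constructions can have Lipschitz constants that blow up. Exact simplicial interpolation avoids this, but representing a \(C_0^m s\)-piece simplicial function needs a careful local-basis construction: hat functions formed as min/max compositions of \(m+1\) affine maps, with depth \(O(\log m)\). We would cite such a construction, absorbing all \(m\)-dependence into \(C_0^m\), and verify the Lipschitz bound directly from the piecewise-linear structure.
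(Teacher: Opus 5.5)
Your proposal is correct and follows essentially the same route as the paper. Both use nodal interpolation on a Kuhn triangulation with exact ReLU realization (citing Yarotsky/Petersen), coordinatewise assembly with clipping for the error, sup and Lipschitz bounds, and then a parallel $a\mapsto -a$ branch with readout factor $(1-e^{-KT})^{-1}\le(1-e^{-T})^{-1}$ to check $\|M^2_{\theta,S}\|_{\mathrm{op}}=1$, $\|M^3_{\theta,S}\|_{\mathrm{op}}\le\ell_2$ and $\|\Psi(t,0,x)\|\le\ell_1$; the only stray element is your mention of multiplication gadgets, which belong to Yarotsky's Taylor-type construction and are not needed for the exact piecewise-linear interpolant you ultimately use.
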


%{\color{blue} Yufei: Is the definition   $\|h_s\|_\infty=\sup_{x\in \cX}\|h_s(x)\| $? Is it $\sup_{x}\|h_s(x)\|?$ Just to make  sure there is no $\sqrt d$ missing when applying \eqref{eq:h-bound-section4} to  get $\ell_1$ in the norm $\|\cdot\|$.}

The proof is given in Appendix \ref{app:relu}.

\paragraph{Induced diffusion policies and statistical samples.}
For a drift \(f\in\mathcal F_{K,s}\), let \(\pi_f(\cdot\mid x)\) denote the terminal law at time \(T\) of the following SDE: for each fixed \(x\), write \(\bar a^x\) for the solution of
\[
d\bar a_t^x=f(t,\bar a_t^x,x)\,dt+
\sigma(t,\bar a_t^x,x)\,dB_t,
\qquad \bar a_0^x\sim\mu_0 .
\]
The dependence on \(f\) is only through the displayed drift. Write \(V(f):=V^{\pi_f}(\rho)\). For the empirical value function, fix i.i.d. random variables
\[
\xi_i:=\bigl(X_{i,0},\{\zeta_{i,j},B_{i,j},U_{i,j}\}_{j\ge0}\bigr),
\qquad i=1,\ldots,n,
\]
where \(X_{i,0}\sim\rho\), \(\zeta_{i,j}\sim\mu_0\), the \(B_{i,j}\)'s are independent Brownian motions on \([0,T]\), and \(U_{i,j}\sim{\rm Unif}[0,1]\), all mutually independent. Given \(f\), set \(X_{i,0}^f:=X_{i,0}\). Recursively, given \(X_{i,j}^f\), let \(\bar a_{i,j}\) solve
\[
d\bar a_{i,j,t}=f(t,\bar a_{i,j,t},X_{i,j}^f)\,dt+
\sigma(t,\bar a_{i,j,t},X_{i,j}^f)\,dB_{i,j,t},
\qquad \bar a_{i,j,0}=\zeta_{i,j},
\]
and define
\[
A_{i,j}^f:=\bar a_{i,j,T},
\qquad
X_{i,j+1}^f:=\Phi_P\bigl(X_{i,j}^f,A_{i,j}^f,U_{i,j}\bigr).
\]
Then, for each fixed \(f\), \(\{(X_{i,j}^f,A_{i,j}^f)\}_{j\ge0}\) follows the MDP trajectory law induced by \(\pi_f\). Finally, define the empirical value function:
\begin{equation}
\label{eq:empirical_value}
\widehat V_n(f)
:=
\frac1n\sum_{i=1}^n\sum_{j=0}^{\infty}
\gamma^j r(X_{i,j}^f,A_{i,j}^f).
\end{equation}
For each fixed \(f\), \(\widehat V_n(f)\) is an unbiased estimator of \(V(f)\). 
Using the same fixed exogenous variables \(\xi_1,\ldots,\xi_n\) for all 
\(f\in\mathcal F_{K,s}\) makes \(f\mapsto\widehat V_n(f)\) a well-defined 
 random function, which is used in the proof to control the uniform deviation 
over \(f\in\mathcal F_{K,s}\).

Let
\begin{equation}
\label{eq:empirical-general-drift-optimizer}
\widehat f_{n,K,s}
\in
\argmax_{f\in\mathcal F_{K,s}}\widehat V_n(f).
\end{equation}
If an exact maximizer does not exist, take any measurable \(n^{-1/2}\)-maximizer; this changes the final bound only by an absorbed \(\mathcal{O}(n^{-1/2})\) term.

Define \(r_{\max}:=\|r\|_\infty\),
set 
 \(H_n=1\)
 if \(\gamma=0\),  and      $H_n
:=
\max\left\{
1,
\left\lceil
\frac{\log(8\sqrt n)}{\log(1/\gamma)}
\right\rceil
\right\}$
if \(0<\gamma<1\).
Define the logarithmic factor
\begin{align}
\label{eq:Gamma-section4}
\Gamma_n(K,s,m,d,T)
:=
\log\Big(&1+2n^2
\Big\{
1+T+\sqrt m
+\frac{r_{\max}}{1-\gamma}
\sqrt{H_n}
\left(m_2(\mu_0)+K^2\ell_1^2T^2+d\Lambda T\right)^{1/2}
e^{C_3KT}
\Big\}
\notag\\
&\times
(p_{s,1}p_{s,4})^{p_{s,2}+2}
C_3H_n
\Big\{
C_3\big[
1+(1+\ell_2)\sqrt{KT}\,e^{C_3KT}
\big]
\Big\}^{H_n}
\Big),
\end{align}
where \(C_3\ge1\) is a sufficiently large   constant
depending only on \((L_\sigma,L_r^x,L_r^a,L_P^x,L_P^a,\gamma)\).
Then we have the following main result.
\begin{thm}[Finite-sample oracle inequality for Lipschitz drifts]
\label{thm:lipschitz-selector-complete-tradeoff}
Assume Assumptions~\ref{ass:vol_bdd}, \ref{ass:lipschitsz_g}, and \ref{ass:lipschitsz} hold. Suppose also that the Bellman gap has the quadratic upper bound
\begin{equation}
\label{eq:quadratic-upper-gap-section4}
0\le V^\star(x)-Q^\star(x,a)
\le L_Q\|a-g^\star(x)\|^2,
\qquad x\in[0,1]^m,
\ a\in\mathbb R^d .
\end{equation}
Then there exists a constant \(C_1<\infty\), independent of \((n,K,s,m,d,T,\delta)\), such that, for all \(K\ge1\), \(s\ge1\), and \(\delta\in(0,1)\), with probability at least \(1-\delta\),
\begin{align}
\label{eq:ks-tradeoff-main}
 V^\star(\rho)-V(\widehat f_{n,K,s})
 \le C_1\Bigg[&
 e^{-2KT}m_2(\mu_0)+\frac{d\Lambda}{K}+C_0^m m d L_{g^\star}^2s^{-2/m}
\notag\\
&+\frac{r_{\max}}{1-\gamma}\sqrt{\frac{p_{s,1}\Gamma_n(K,s,m,d,T)+\log(2/\delta)}{n}}
\notag\\
&+\frac{r_{\max}}{1-\gamma}\frac{p_{s,1}\Gamma_n(K,s,m,d,T)+\log(2/\delta)}{n}
\Bigg].
\end{align}
The constant \(C_1\) depends only on \((L_Q,L_\sigma,L_r^x,L_r^a,L_P^x,L_P^a,\Lambda,\gamma)\) and numerical constants.
\end{thm}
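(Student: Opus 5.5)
The proof will follow the standard oracle decomposition for empirical maximizers. Let $f^\circ:=f_{K,s}\in\mathcal F_{K,s}$ be the mean-reverting comparator from Lemma~\ref{lem:relu-selector-approx}. Since $\widehat V_n(\widehat f_{n,K,s})\ge \widehat V_n(f^\circ)$ (up to $n^{-1/2}$), we write
\[
V^\star(\rho)-V(\widehat f_{n,K,s})\le \bigl[V^\star(\rho)-V(f^\circ)\bigr]+2\sup_{f\in\mathcal F_{K,s}}\bigl|\widehat V_n(f)-V(f)\bigr| + n^{-1/2}.
\]
The first bracket is the approximation term and the second is the uniform deviation term.

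\emph{Approximation term.} We apply the performance difference lemma with respect to the optimal policy:
\[
V^\star(\rho)-V^{\pi_{f^\circ}}(\rho)=\frac1{1-\gamma}\,\mathbb E_{x\sim d^{\pi_{f^\circ}}_\rho}\mathbb E_{a\sim\pi_{f^\circ}(\cdot|x)}\bigl[V^\star(x)-Q^\star(x,a)\bigr].
\]
By \eqref{eq:quadratic-upper-gap-section4}, the integrand is bounded by $L_Q\|a-g^\star(x)\|^2\le 2L_Q(\|a-h_s(x)\|^2+\|h_s(x)-g^\star(x)\|^2)$. Lemma~\ref{lem:mean_rev} with $c=h_s(x)$ and $\tau=T$ gives $\mathbb E\|\bar a_T^x-h_s(x)\|^2\le e^{-2KT}m_2(\mu_0)+d\Lambda/(2K)$ uniformly in $x$. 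Together with \eqref{eq:h-error-section4}, this yields the first three terms of \eqref{eq:ks-tradeoff-main}, with the factor $(1-\gamma)^{-1}$ absorbed into $C_1$.

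\emph{Uniform deviation.} First truncate the infinite sum at horizon $H_n$; the tail contributes at most $r_{\max}\gamma^{H_n}/(1-\gamma)\lesssim r_{\max}/((1-\gamma)\sqrt n)$. For the truncated functional $\widehat V_n^{H}(f)$, which is an average of i.i.d.\ bounded variables $G_f(\xi_i)$ with $|G_f|\le r_{\max}/(1-\gamma)$, we use a covering argument over the parameters $\theta$. The key is a pathwise stability estimate in the parameter sup-norm. Using the synchronous coupling (same $\zeta,B,U$), we compare two drifts $f,f'$ along the same state sequence. By It\^o's formula / Gronwall for the action SDE, with $f$ being $K$-Lipschitz in $a$, $K\ell_2$-Lipschitz in $x$, and $\sigma$ Lipschitz by \eqref{sigmalipschitz},
\[
\bigl(\mathbb E\|\bar a_T-\bar a_T'\|^2\bigr)^{1/2}\lesssim e^{C_3KT}\Bigl(\sqrt{T}\,\|f-f'\|_{\infty,\text{loc}}+(1+\ell_2)\sqrt{KT}\,\|X-X'\|\Bigr).
\]
Propagating this through $\Phi_P$ via \eqref{eq:transition-sampler-l2-section4} for $H_n$ steps produces the factor $\{C_3[1+(1+\ell_2)\sqrt{KT}e^{C_3KT}]\}^{H_n}$. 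Since actions are unbounded, the sup over $a$ must be localized: the second-moment bound $(m_2(\mu_0)+K^2\ell_1^2T^2+d\Lambda T)^{1/2}e^{C_3KT}$ controls $\|\bar a\|$, and ReLU networks with bounded parameters are Lipschitz in $\theta$ on bounded sets with constant $(p_{s,1}p_{s,4})^{p_{s,2}+1}$ times the input radius. This yields an $L^2(\xi)$-Lipschitz map $\theta\mapsto G_f$, whose constant is exactly what appears inside $\Gamma_n$. The $\epsilon$-net of the $p_{s,1}$-dimensional parameter box at scale $\sim n^{-2}$ then has log-cardinality $\lesssim p_{s,1}\Gamma_n$.

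\emph{Concentration.} Because the stability bound is in $L^2$ rather than pathwise sup, the net approximation cannot be used directly inside the empirical average. We therefore control the discretization via the expectation bound plus a Bernstein/Markov step for the empirical $L^1$ distance, at a cost absorbed into the $1/n$ term. A Bernstein inequality with a union bound over the net then gives
\[
\frac{r_{\max}}{1-\gamma}\Bigl(\sqrt{\tfrac{p_{s,1}\Gamma_n+\log(2/\delta)}{n}}+\tfrac{p_{s,1}\Gamma_n+\log(2/\delta)}{n}\Bigr).
\]

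\emph{Main obstacle.} The hardest step will be the stability/covering step: making the parameter-to-value Lipschitz estimate rigorous despite unbounded actions and non-global Lipschitzness in $\theta$, and passing from $L^2$ coupling bounds to a high-probability uniform bound on the empirical process. I would first try a moment bound on $\sup_{t}\|\bar a_t\|$ via BDG inequalities, and then handle the net approximation on the event where all $n H_n$ action paths lie in a ball of polynomial-in-$n$ radius.
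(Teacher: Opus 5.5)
\textbf{Where the proposal stands.} Your skeleton follows the paper's Steps 1--4. The shared pieces are: the basic inequality with the mean-reverting comparator $f_{K,s}$; the performance-difference bound via \eqref{eq:quadratic-upper-gap-section4}, Lemma~\ref{lem:mean_rev} and \eqref{eq:h-error-section4}; truncation at $H_n$; synchronous coupling through $\Phi_P$; second-moment localization; and the parameter-Lipschitz ReLU covering. The approximation part is correct. The paper localizes by projecting onto $B_R^d$, and it keeps cover elements inside $\mathcal F_{K,s}$ by a packing argument. You need the packing step too, because points of the raw parameter box need not be $K$-Lipschitz drifts.

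\textbf{The gap.} It sits in the concentration step, exactly where you placed the difficulty. Because $\widehat f_{n,K,s}$ depends on the data, you must control $\sup_{f\in\mathcal F_{K,s}}\min_j P_n|G_{H_n}(f)-G_{H_n}(f_j)|$. A Markov or Bernstein step bounds $P_n|G_{H_n}(f)-G_{H_n}(f_j)|$ only for one fixed $f$, and each net cell is a continuum, so no union bound applies. What you need is a bracketing-type estimate with the supremum inside the expectation, $\mathbb E\sup_{\theta'\in\mathrm{cell}_j}|G_{H_n}(f_{\theta'})-G_{H_n}(f_{\theta_j})|\lesssim\varepsilon$. Your ingredients do not give it:
\begin{itemize}
\item The coupling bound is an $L^2$ bound for each fixed pair of drifts.
\item With $(a,x)$-dependent $\sigma$, the paths for different $\theta$ are linked only through It\^o integrals defined almost surely for each $\theta$ separately. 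You would need a version jointly continuous in $\theta\in\mathbb R^{p_{s,1}}$, with a moment bound on its modulus.
\item \eqref{eq:transition-sampler-l2-section4} bounds $\mathbb E_U\|\Phi_P(x,a,U)-\Phi_P(x',a',U)\|^2$ for fixed pairs only. It says nothing about a supremum over nearby $(x',a')$ at a fixed $U$.
\end{itemize}
Your localization event ``all $nH_n$ action paths lie in a ball'' has the same defect. It must hold simultaneously for all $f$, but Lemma~\ref{lem:loc-section4}-type bounds hold for one $f$ at a time, since the martingale part $\int\sigma(t,\bar a^f_t,x)\,dB_t$ depends on $f$. If $\sigma$ depended on $t$ only and $\Phi_P(\cdot,\cdot,U)$ were Lipschitz for every $U$, pathwise Gronwall would make localization and stability uniform in $f$, and your net argument would close.

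\textbf{Why the paper's Step~5 does not fill it.} The paper never discretizes. It bounds $L^2(P)$ covering numbers of $\{G_{H_n}(f)\}$ through Lemmas~\ref{lem:proj-section4}--\ref{lem:orig-section4}, then cites a symmetrization/Dudley maximal inequality stated in $L^2(P)$ entropy. After symmetrization, however, Dudley's bound involves $L^2(P_n)$ entropy, and the Bernstein-chaining alternative needs $L^2(P)$ bracketing entropy. $L^2(P)$ covering numbers alone are not enough. For example, take $P$ uniform on $[0,1]$ and the countable class of indicators of finite unions of rational intervals of total length at most $n^{-2}$. This class lies in one $L^2(P)$-ball of radius $1/n$, so it meets the paper's entropy hypothesis on $[1/n,1]$ with $\log N=0$. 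Yet its empirical supremum is at least $1-n^{-2}$. So the paper's route tacitly needs the same sup-inside control you are missing. Completing either argument requires a genuine bracketing stability estimate, or extra pathwise regularity assumptions on $\sigma$ and $\Phi_P$.
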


The proof combines localization, covering-number estimates, and a
finite-horizon uniform-convergence argument. The details are deferred to Appendix~\ref{app:proof-lipschitz-selector-complete-tradeoff}. Note that the dependence on \(d\) and \(m\) through \(\Gamma_n(K,s,m,d,T)\) does not affect the leading-order terms in \eqref{eq:ks-tradeoff-main}.

\begin{rmk}[Balancing the leading terms]
\label{rem:tradeoff-section4}
Assume \(m_2(\mu_0)\le C_2d\) and treat all problem parameters other than
\((m,d,K,s,n)\) as fixed. Since
\(p_{s,1}=\widetilde O\!\left(C_0^m d(m+d+1)s\right)\) and
\(\Gamma_n(K,s,m,d,T)=\widetilde O(KT)\), the leading terms in
\eqref{eq:ks-tradeoff-main}, after suppressing logarithmic and fixed
problem-dependent factors, are
\[
\frac{d}{K},
\qquad
C_0^mmd\,s^{-2/m},
\qquad
\sqrt{\frac{C_0^m d(m+d+1)sK}{n}}.
\]
Balancing the first two terms gives
\(s\asymp(C_0^m mK)^{m/2}\). Substituting this choice into the statistical
term and balancing with \(d/K\) yields $K\asymp
\left(
\frac{dn}
{C_0^m(m+d+1)(C_0^m m)^{m/2}}
\right)^{2/(m+6)}$ and 
\begin{eqnarray}
 \eqref{eq:ks-tradeoff-main} = 
\widetilde {\mathcal O}\!\left(
d^{(m+4)/(m+6)}
\left(
\frac{C_0^m(m+d+1)(C_0^m m)^{m/2}}{n}
\right)^{2/(m+6)}
\right).   \label{eq:leading_order}
\end{eqnarray}
In particular, the exponent of \(n\) is \(-2/(m+6)\).
\end{rmk}

\begin{rmk}[Dissipative drift classes and better rate]
\label{rem:dissipative-comparison-section4}
Imposing a one-sided dissipativity condition on the drift class
$\mathcal F_{K,s}$ can improve the error bound
\eqref{eq:leading_order}. Specifically, assume that
for all $f\in\mathcal F_{K,s}$,
\begin{equation}
\label{eq:optional-dissipativity-section4}
\big\langle a-a',
f(t,a,x)-f(t,a',x)
\big\rangle
\le
-\lambda K\|a-a'\|^2 ,
\end{equation}
for some $\lambda>0$ independent of $(K,s)$.
This condition is satisfied by the mean-reverting
drift $f_{K,s}$ defined in \eqref{eq:comp-section4}, for which
$\lambda=1$.

In this case, the approximation error terms in
\eqref{eq:ks-tradeoff-main} are unchanged, but the statistical
error term reduces to
$\sqrt{\frac{C_0^m d(m+d+1)s}{n}}$, compared with
$\sqrt{\frac{C_0^m d(m+d+1)sK}{n}}$ in the generic
$K$-Lipschitz case. Thus the leading term in
\eqref{eq:ks-tradeoff-main} becomes
\[
\frac{d}{K}
+
C_0^mmds^{-2/m}
+
\sqrt{\frac{C_0^md(m+d+1)s}{n}} .
\]
Since the last two terms do not depend on $K$, balancing them gives $s
\asymp
\left(
\frac{C_0^m m^2dn}{m+d+1}
\right)^{m/(m+4)}.$
For this choice, it suffices to take
\[
K
\gtrsim
\left(
\frac{dn}
{C_0^m(m+d+1)(C_0^m m)^{m/2}}
\right)^{2/(m+4)},
\]
so that $d/K$ is no larger than the other two terms. Thus, up to
logarithmic factors and fixed problem constants, the exponent of
$n$ improves from $-2/(m+6)$ to $-2/(m+4)$. Here, the displayed
order of $K$ is a minimum sufficient scale rather than an optimizer:
any larger $K$ achieves the same leading rate. This sharper rate
comes from the stronger condition
\eqref{eq:optional-dissipativity-section4}; without it,
Theorem~\ref{thm:lipschitz-selector-complete-tradeoff} applies to the
larger generic ReLU drift class, but retains the extra $K$-dependence
in the statistical term.
\end{rmk}

\section{Policy gradient}\label{sec:policy gradient}\label{sec:policy_gradient}
% Sections~\ref{sec:expressivity} and~\ref{sec:statistical-estimation} characterize
% how the drift Lipschitz budget \(K\) controls the expressivity and finite-sample
% behavior of diffusion policies.
% {We now turn to the complementary
% algorithmic question:
% \begin{itemize}
%     \item  once a diffusion-policy class is chosen, can it be trained
% by a tractable policy-gradient formula?
% \end{itemize}
% This is nontrivial because
% \(\pi_\theta(\cdot|x)\) is defined implicitly as the terminal law of an SDE.
% Unlike Gaussian or exponential-family policies, the terminal density of
% \(\bar a_T^x\), and hence the usual score
% \(\nabla_\theta\log \pi_\theta(a|x)\), is generally unavailable. The main point
% of this section is that the missing score can be replaced by a pathwise
% likelihood-ratio term obtained from Girsanov's theorem. In order to tackle the problem, we propose the following assumptions.}

Sections~\ref{sec:expressivity} and~\ref{sec:statistical-estimation} have characterized how the drift Lipschitz budget \(K\) governs the expressivity and finite-sample behavior of diffusion policies. We now address the complementary algorithmic question:
\begin{itemize}
    \item Given a selected diffusion-policy class, can it be trained using a tractable policy-gradient formula?
\end{itemize}
Let the drift in \eqref{eq:sde-policy} be parameterized by
\(f_\theta(t,\bar a_t^x,x)\), where \(\theta\in\sR^k\) may represent a generic
differentiable functional class, and denote the corresponding policy by
\(\pi_\theta\).  

This inquiry is nontrivial because \(\pi_\theta(\cdot|x)\) is defined implicitly as the terminal law of an SDE. Unlike Gaussian or exponential-family policies, the terminal density of \(\bar a_T^x\), and consequently the standard score 
, \(\nabla_\theta\log \pi_\theta(a|x)\), is generally unavailable. The primary contribution of this section is demonstrating that the missing score can be replaced by a pathwise likelihood-ratio term derived via Girsanov's theorem. Accordingly, the main result state in Proposition \ref{prop:policygradient} provides a policy-gradient formula expressed in terms of the drift sensitivity
\(\nabla_\theta f_\theta=(\partial_{\theta_1}f_\theta,\cdots,\partial_{\theta_k}f_\theta)^\top\). To establish this result, we introduce the following assumptions:

\medskip

%In order to identify $\nabla_\theta V^{\pi_\theta}(\rho)$. 

\begin{asmp}\label{assum_light:f_theta}
    Assume that $f$ is continuously differentiable in $\theta$ and 
    $$
    \sup_{s,\theta,x}\|f(s,0,\theta,x)\|+\|\nabla_\theta f(s,0,\theta,x)\|<\infty.
    $$
    Moreover, $f$ and $\nabla_{\theta} f$ are $\alpha$-H\"older continuous in $a$; that is, there exists a constant $C>0$  and $\alpha\in (0, 1]$ such that for all  $s\in[0,T],x\in\mathcal{X}$, $\theta\in\sR^k$ and $a,a'\in \sR^d$,  
    $$
    \|f(s,a,\theta,x)-f(s,a',\theta,x)\|+\|\nabla_\theta f(s,a,\theta,x)-\nabla_\theta f(s,a',\theta,x)\|\leq C\left(\|a-a'\|^{\alpha}\vee \|a-a'\|\right).
    $$
    Furthermore, we impose an additional condition on the initial distribution $\mu_0$:
    \begin{align}\label{initial}
    \mathbb{E}_{\mu_0}\left[\exp\left(ca^2\right)\right] < \infty,
    \qquad \text{where } a \sim \mu_0,
    \end{align}
for some sufficiently small constant $c>0$.
\end{asmp}

By Assumption \ref{assum_light:f_theta}, and holding $a'=0$ fixed, there exists constant $C>0$ such that
$$\|f(s, a, \theta, x)\| + \|\nabla_{\theta} f(s, a, \theta, x)\|\leq C(\|a\|+1).$$
% so that $f, \nabla_{\theta} f$ have at most linear growth in $a$.
{This growth bound is used to
verify the martingale and differentiability conditions required by the
Girsanov-based argument.}

\medskip

To achieve our primary objective of identifying the gradient of the value function $\nabla_\theta V^{\pi_\theta}(\rho)$, we begin with characterizing the derivative of 
$\sE^{\sP_{\theta}}\big[\varphi(\bar a_T)\big]$  with  a  bounded continuous  function $\varphi$.
\begin{lem}\label{lem:grad:fix}
Suppose Assumption \ref{assum_light:f_theta}, \eqref{sigmabound} and \eqref{sigmalipschitz} hold. Let $\varphi:\sR^d\to \sR$ be a bounded measurable  function, and define $U^\varphi_\theta(x)=\sE^{\sP_\theta}\big[\varphi(\bar a_T^x)\big]$. Then, 
$$
    \nabla_\theta U^\varphi_\theta(x)=\sE^{\sP_\theta}\bigg[\bigg(\int_0^T{\nabla_\theta f_\theta(t,\bar a_t,x)}\big(\sigma^\top(t,\bar a_t,x)\big)^{-1}\d B_t\bigg)\varphi(\bar a_T)\bigg].
    $$

% \begin{lem}
%     Suppose Assumption \ref{assum_light:f_theta} holds and let  be a continuous bounded function independent of $\theta$. Define
%     $U^\varphi(\theta)=\sE^{\sP_\theta}\big[\varphi(\bar a_T)\big]$, then, we have
%     $$
%     \nabla_\theta U^\varphi(\theta)=\sE^{\sP_\theta}\bigg[\bigg(\int_0^T{\nabla_\theta f_\theta(t,\bar a_t,x)}\sigma(t,\bar a_t,x)^{-1}\d B_t\bigg)\varphi(\bar a_T)\bigg],
%     $$
% \end{lem} 
 
\end{lem}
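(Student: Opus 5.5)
The plan is a Girsanov change of measure that moves all $\theta$-dependence from the law of the path into an explicit density, followed by differentiation under the expectation. Fix $x$ and a reference parameter $\theta_0$, and write $\bar a$ for the solution under $\sP_{\theta_0}$, with $\sigma_t:=\sigma(t,\bar a_t,x)$. By \eqref{sigmabound}, $\sigma_t$ is invertible and $\|\sigma_t^{-1}\|_{\op}\le\kappa^{-1/2}$. For $\theta$ near $\theta_0$ set
\[
\beta^\theta_t:=\sigma_t^{-1}\bigl(f_\theta(t,\bar a_t,x)-f_{\theta_0}(t,\bar a_t,x)\bigr),\qquad
Z^\theta_T:=\exp\Bigl(\int_0^T(\beta^\theta_t)^\top dB_t-\tfrac12\int_0^T\|\beta^\theta_t\|^2dt\Bigr),
\]
where $B$ is the $\sP_{\theta_0}$-Brownian motion. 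If $Z^\theta$ is a true martingale, then under $d\sQ^\theta=Z^\theta_T\,d\sP_{\theta_0}$ the process $\bar a$ solves the SDE with drift $f_\theta$. Weak uniqueness, which follows from the Lipschitz $\sigma$ in \eqref{sigmalipschitz} and the H\"older/linear-growth drift via Girsanov, then gives
\[
U^\varphi_\theta(x)=\sE^{\sP_{\theta_0}}\bigl[Z^\theta_T\varphi(\bar a_T)\bigr].
\]

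Next I differentiate in $\theta$ at $\theta_0$. Formally,
\[
\partial_{\theta_i}Z^\theta_T\big|_{\theta_0}=\int_0^T\bigl(\sigma_t^{-1}\partial_{\theta_i}f_{\theta_0}(t,\bar a_t,x)\bigr)^\top dB_t,
\]
because $\beta^{\theta_0}=0$ kills the quadratic term's derivative. This is exactly the claimed weight: written with $\nabla_\theta f$ as rows, it is $\int_0^T\nabla_\theta f_\theta(\sigma^\top)^{-1}dB_t$. To make this rigorous I write the difference quotient
\[
\frac{Z^{\theta_0+h e_i}_T-1}{h}=\int_0^1\partial_{\theta_i}Z^{\theta_0+uhe_i}_T\,du,
\]
with
\[
\partial_{\theta_i}Z^\theta_T=Z^\theta_T\Bigl(\int_0^T(\sigma_t^{-1}\partial_{\theta_i}f_\theta)^\top dB_t-\int_0^T(\beta^\theta_t)^\top\sigma_t^{-1}\partial_{\theta_i}f_\theta\,dt\Bigr).
\]
Since $\varphi$ is bounded, it suffices to show that these quotients converge in $L^1(\sP_{\theta_0})$. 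For that it is enough to have uniform $L^p$ bounds, for some $p>1$ and $\theta$ in a neighbourhood, together with pointwise continuity in $\theta$. Continuity comes from $C^1$ dependence of $f$ on $\theta$ and dominated convergence for the stochastic integrals, using the It\^o isometry and BDG.

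The main obstacle is the moment control, because $f$ and $\nabla_\theta f$ are only of linear growth, $\|f\|+\|\nabla_\theta f\|\le C(1+\|a\|)$, so Novikov's condition is not immediate. I would first bound Gaussian moments of the path: $\sE\exp\bigl(c'\sup_{t\le T}\|\bar a_t\|^2\bigr)<\infty$ for small $c'$. This follows from the initial condition \eqref{initial}, the linear growth of the drift, bounded $\sigma$, Gronwall applied pathwise to $\|\bar a_t\|\le C(\|\bar a_0\|+1+\sup_{s\le t}\|M_s\|)$ with $M_s=\int_0^s\sigma\,dB$, and exponential martingale estimates for $M$. Then $\|\beta^\theta_t\|^2\le C|\theta-\theta_0|^2(1+\|\bar a_t\|^2)$, using the mean value theorem in $\theta$ and the linear growth of $\nabla_\theta f$ (assuming the bound is uniform on a neighbourhood of $\theta_0$). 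For $\theta$ close enough to $\theta_0$, Novikov therefore holds for $p\beta^\theta$. If instead the coefficient is not small, I would split $[0,T]$ into short subintervals and apply Novikov on each, which is the standard Bene\v{s}-type argument. This also gives $\sE(Z^\theta_T)^p<\infty$ uniformly near $\theta_0$. Finally, H\"older's inequality combined with BDG moments of $\int\sigma^{-1}\partial_\theta f\,dB$ (polynomial in $\sup\|\bar a\|$) supplies the uniform integrability. Evaluating at $\theta=\theta_0$, where $Z\equiv1$ and $\sP_{\theta_0}$ is the original measure, yields the formula.
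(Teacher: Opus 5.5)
Your proof is correct and takes essentially the same route as the paper: write $U^\varphi_\theta(x)$ as an expectation involving a Girsanov density, differentiate that density in $\theta$ under the expectation, and get integrability from Gaussian path moments, which come from \eqref{initial} and the linear growth of $f$ and $\nabla_\theta f$. The one difference is that you use $\sP_{\theta_0}$ instead of the driftless measure $\sQ$ as reference, so your $Z^\theta_T$ is exactly the ratio $M_T(\theta,x)/M_T(\theta_0,x)$ handled in Lemma~\ref{lem:34_rescaled}; because its exponent is $O(|\theta-\theta_0|)$, you can skip the global $L^{1+c}$ bound of Lemma~\ref{lem:Mbdd_weak} and the H\"older step of Lemma~\ref{lem:34_weak} (you should still justify the pathwise mean-value formula in $\theta$, e.g.\ by stochastic Fubini, or replace it with the expansion $|e^z-1-z|\le z^2e^{|z|}$ as the paper does).
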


Hence, despite the unavailability of the terminal density of \(\pi_\theta(\cdot|x)\),
the likelihood-ratio derivative admits a representation via the martingale score
\[
S_\theta(x):=
\int_0^T
\nabla_\theta f_\theta(t,\bar a_t^x,x)
\big(\sigma^\top(t,\bar a_t^x,x)\big)^{-1}\,dB_t .
\]

\medskip

Second, applying the preceding identity with \(\phi(a)=Q^{\pi_\theta}(x,a)\) inside the discounted performance-difference identity yields the following main result.

\begin{prop}[Policy gradient theorem]\label{prop:policygradient}
    Suppose Assumption \ref{assum_light:f_theta},\eqref{sigmabound} and \eqref{sigmalipschitz} hold. Then the gradient of the value function with respect to the parameter $\theta$ has the following representation 
    \begin{align}
    \nabla_\theta V^{\pi_\theta}(\rho)=\dfrac{1}{1-\gamma}\int_\cX\sE\bigg[\bigg(\int_0^T{\nabla_\theta f_\theta(t,\bar a_t^x,x)}\big(\sigma^\top(t,\bar{a}^x_t,x)\big)^{-1}\d B_t\bigg)Q^{\pi_{\theta}} (x,\bar a_T^x)
   \bigg]d_\rho^{\pi_\theta} (\d x),
    \end{align}
    where we define $$Q^{\pi}(x,a):=r(x,a)+\gamma\int_\cX V^{\pi}(x')P(\d x'|x,a),$$ 
    and 
    $$d_\rho^{\pi}(\d x):=\int_\cX d^{\pi}(\d x|x')\rho(\d x'), \,\,\textit{ and }\,\, d^{\pi}(\d x|x'):=(1-\gamma)\sum_{n=0}^\infty\gamma^n P_{\pi}^n(\d x|x'),$$
    with $P^n_\pi$ being the $n$-times product of the kernel $P_\pi$ under policy $\pi$ with $P^0_\pi(\d x'|x):= \delta_{x'}(\d x)$. 
\end{prop}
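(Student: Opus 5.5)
The plan is to combine the one-step identity of Lemma~\ref{lem:grad:fix} with the standard performance-difference argument. Differentiating a Bellman fixed point directly is awkward, so I will work with a two-parameter comparison. Fix $\theta$ and a perturbation $\theta'$. The performance-difference identity gives
\[
V^{\pi_{\theta'}}(\rho)-V^{\pi_\theta}(\rho)=\frac{1}{1-\gamma}\int_\cX\Big(\int Q^{\pi_\theta}(x,a)\,\pi_{\theta'}(\d a|x)-\int Q^{\pi_\theta}(x,a)\,\pi_\theta(\d a|x)\Big)d_\rho^{\pi_{\theta'}}(\d x).
\]
It follows by telescoping the Bellman equation $V^{\pi_\theta}(x)=\int Q^{\pi_\theta}(x,a)\pi_\theta(\d a|x)$ along the trajectory of $\pi_{\theta'}$. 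The ingredients are that $r$ is bounded and $\gamma<1$, so $Q^{\pi_\theta}$ is bounded and measurable and all series converge absolutely.

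For each $x$, the inner bracket equals $U^{\varphi}_{\theta'}(x)-U^{\varphi}_\theta(x)$ with the fixed bounded function $\varphi=Q^{\pi_\theta}(x,\cdot)$, which does not depend on $\theta'$. Lemma~\ref{lem:grad:fix} then shows that $\theta'\mapsto U^\varphi_{\theta'}(x)$ is differentiable. By the mean value theorem, the bracket equals $(\theta'-\theta)\cdot\nabla U^\varphi_{\tilde\theta}(x)$, where the gradient is given by the Girsanov score formula.

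Next I divide by $\|\theta'-\theta\|$ and let $\theta'\to\theta$. This needs two facts.

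\begin{enumerate}
\item[(a)] A bound on $\nabla_{\theta'}U^\varphi_{\theta'}(x)$ that is uniform in $x$ and continuous in $\theta'$. Boundedness follows from Cauchy--Schwarz, the It\^o isometry, the growth bound $\|\nabla_\theta f\|\le C(1+\|a\|)$, ellipticity $\sigma\sigma^\top\succeq\kappa I$, and uniform second-moment bounds on $\bar a_t^x$ coming from the linear growth of $f$. This gives $|\nabla U|\le \|\varphi\|_\infty C_T$, uniformly in $x$ and in $\theta'$ near $\theta$. Continuity in $\theta'$ follows from continuity of $\nabla_\theta f$ together with stability of the SDE solution in $\theta'$. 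Here I would again represent everything under a reference measure via Girsanov and use the Gaussian-moment condition \eqref{initial} to get uniform integrability of the exponential densities.

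\item[(b)] Weak convergence, or better setwise convergence, of $d_\rho^{\pi_{\theta'}}$ to $d_\rho^{\pi_\theta}$. It suffices to show $\TV(\pi_{\theta'}(\cdot|x),\pi_\theta(\cdot|x))\to0$ uniformly in $x$. Pinsker plus Girsanov give
\[
\mathrm{KL}(\pi_{\theta'}(\cdot|x)\,\|\,\pi_\theta(\cdot|x))\le \tfrac12\,\E\int_0^T\|\sigma^{-1}(f_{\theta'}-f_\theta)\|^2\,dt\lesssim\|\theta'-\theta\|^2,
\]
using the linear-growth bound on $\nabla_\theta f$. The one-step kernel $P_{\pi_{\theta'}}$ then converges in total variation uniformly, and the geometric series $d^{\pi}=(1-\gamma)\sum_n\gamma^nP_\pi^n$ inherits convergence in total variation.
\end{enumerate}

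Combining (a) and (b) with dominated convergence, since the integrand is bounded uniformly in $x$, yields the stated formula for the directional derivatives. Continuity of the resulting expression in $\theta$ then upgrades this to Fr\'echet differentiability, with gradient
\[
\frac1{1-\gamma}\int_\cX\E\Big[S_\theta(x)\,Q^{\pi_\theta}(x,\bar a_T^x)\Big]\,d_\rho^{\pi_\theta}(\d x).
\]

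I expect the main obstacle to be (a). Lemma~\ref{lem:grad:fix} is stated for a fixed $x$ and fixed $\varphi$, but here the test function $\varphi=Q^{\pi_\theta}(x,\cdot)$ varies with $x$. We therefore need the derivative to exist uniformly over the family $\{\varphi:\|\varphi\|_\infty\le r_{\max}/(1-\gamma)\}$ and over $x$. My approach is to revisit the proof of Lemma~\ref{lem:grad:fix}: write $U^\varphi_{\theta'}(x)=\E^{\sP_\theta}[Z_{\theta'}\varphi(\bar a_T)]$ with the Girsanov density $Z_{\theta'}=d\sP_{\theta'}/d\sP_\theta$. Then show that $(Z_{\theta'}-Z_\theta-(\theta'-\theta)\cdot S_\theta)/\|\theta'-\theta\|\to0$ in $L^1(\sP_\theta)$, uniformly in $x$, which controls all bounded $\varphi$ simultaneously. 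The exponential moment condition on $\mu_0$ and the linear growth of $\nabla_\theta f$ are what make the Novikov/Kazamaki-type bounds uniform.
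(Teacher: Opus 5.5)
Your proposal is correct, and its skeleton matches the paper's proof. Both use the performance-difference identity with the perturbed occupancy measure $d_\rho^{\pi_{\theta'}}$, apply Lemma~\ref{lem:grad:fix} to the bounded test function $\varphi=Q^{\pi_\theta}(x,\cdot)$, and obtain convergence of $d_\rho^{\pi_{\theta'}}$ by telescoping the one-step error through $P_\pi^n$ and summing against $\gamma^n$. The sub-tools differ in two places.

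\textbf{Occupancy measures.} You use Girsanov plus Pinsker. This gives $\sup_x\TV(\pi_{\theta'}(\cdot|x),\pi_\theta(\cdot|x))\lesssim\|\theta'-\theta\|$ directly from the linear growth of $\nabla_\theta f$, ellipticity, and uniform second moments. The paper instead extracts its $O(\epsilon)$ one-step bound from the uniform $L^p(\mathbb{Q})$ expansion of the Girsanov density in Lemma~\ref{lem:34_weak}. Your version is more self-contained and explicitly quantitative.

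\textbf{Inner term.} The paper again uses Lemma~\ref{lem:34_weak}, to get convergence of the difference quotients uniformly in $x$. That is exactly the fallback in your last paragraph. Your primary mean-value route instead asks for continuity of $\theta'\mapsto\nabla U^\varphi_{\theta'}(x)$ uniformly in $x$.

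\textbf{The step you flag as the main obstacle can be dropped.} Let $g_\epsilon(x):=\epsilon^{-1}\int Q^{\pi_\theta}(x,a)\,(\pi_{\theta+\epsilon\beta}-\pi_\theta)(\d a|x)$.
\begin{itemize}
\item Your own estimates already give $\sup_x|g_\epsilon(x)|\lesssim\|\beta\|\,\|Q^{\pi_\theta}\|_\infty$ for small $\epsilon$. This follows either from the mean value theorem with the boundedness half of (a), or from your Pinsker bound, since $|g_\epsilon(x)|\le 2\epsilon^{-1}\|Q^{\pi_\theta}\|_\infty\TV(\pi_{\theta+\epsilon\beta}(\cdot|x),\pi_\theta(\cdot|x))$.
\item The fixed-$x$ statement of Lemma~\ref{lem:grad:fix} gives pointwise convergence of $g_\epsilon(x)$ to the Girsanov-score expression.
\item Now split the integral of $g_\epsilon$ against $d_\rho^{\pi_{\theta+\epsilon\beta}}$ in two. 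The integral against $d_\rho^{\pi_{\theta+\epsilon\beta}}-d_\rho^{\pi_\theta}$ vanishes by the uniform bound and total-variation convergence. The integral against $d_\rho^{\pi_\theta}$ converges by dominated convergence.
\end{itemize}
This completes the proof with no uniformity in $x$ and no continuity of the gradient in $\theta'$.

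This matters because uniform-in-$x$ statements of this type need a uniform modulus of continuity of $\nabla_\theta f$ in $\theta$, such as \eqref{reg_cond} from Lemma~\ref{lem:34_rescaled}. That includes the paper's invocation of Lemma~\ref{lem:34_weak} and your continuity claim in (a). Such a modulus is not part of Assumption~\ref{assum_light:f_theta}. The same caveat applies to your final upgrade to Fr\'echet differentiability. That upgrade also goes beyond what the paper establishes, which is only the directional derivatives.
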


% The proof has two steps. First, for a bounded test function \(\phi\), the
% Girsanov argument in Appendix~\ref{app:policy-gradient} shows that
% \[
% \nabla_\theta \mathbb E[\phi(\bar a_T^x)]
% =
% \mathbb E\left[
% \left(
% \int_0^T
% \nabla_\theta f_\theta(t,\bar a_t^x,x)
% \sigma(t,\bar a_t^x,x)^{-1}\,dB_t
% \right)
% \phi(\bar a_T^x)
% \right].
% \]

This identity is in the same spirit as the classical policy-gradient theorem in
RL~\cite{sutton1999policy} and related policy-gradient formulas for discrete-time
diffusion policies with Gaussian transition kernels~\cite{psenka2024learning}.
In the discrete-time setting, the internal denoising trajectory has an explicit
product density, so the policy score decomposes into a sum of transition-kernel
scores. In contrast, our policy is defined as the terminal law of a
continuous-time SDE with a general drift and state-dependent volatility. The
terminal action density is generally unavailable, and the corresponding pathwise
likelihood ratio must be represented through a Girsanov change of measure. This
replaces the discrete sum of internal scores by the stochastic integral $\int_0^T
    \nabla_\theta f_\theta(t,\bar a^x_t,x)
    \sigma(t,\bar a_t^x,x)^{-1}\,dB_t$,
which accounts for the additional technical work in
Appendix~\ref{app:policy-gradient}. At the same time, the continuous-time
expression gives a discretization-independent training identity, applies
directly to general SDE-based diffusion policies, and separates the role of the
drift sensitivity \(\nabla_\theta f_\theta\) from the sampling discretization
used in implementation. This gives a compact training formula for SDE-based
diffusion policies and supports the implementation of the diffusion-policy
classes studied above.

\section{Numerical experiments}\label{sec:experiments}
Using the policy-gradient expression specified in Proposition~\ref{prop:policygradient}, we provide two numerical studies illustrating the two finite-sample regimes identified in Section~\ref{sec:statistical-estimation}. The first experiment uses a generic neural drift class in a nonlinear multi-step MDP. This corresponds to Theorem~\ref{thm:lipschitz-selector-complete-tradeoff}, where the leading statistical term contains the extra \(K\)-factor and the optimal diffusion budget scales as \(K\asymp n^{2/(m+6)}\). The second experiment uses a mean-reverting, one-sided dissipative diffusion
class. This corresponds to
Remark~\ref{rem:dissipative-comparison-section4}, where we first choose
\(s=s(n)\asymp(n/\log n)^{m/(m+4)}\) and then study the effect of varying \(K\).
%Thus, the generic case exhibits a U-shaped trade-off in \(K\), whereas the dissipative case does not: the policy regret decreases toward a sample-dependent plateau, and \(K\asymp(n/\log n)^{2/(m+4)}\) is a minimum sufficient scale rather than an optimizer.

%The second experiment uses a mean-reverting, one-sided dissipative diffusion class. This corresponds to Remark~\ref{rem:dissipative-comparison-section4}, where the stability estimate removes the extra \(K\)-factor and improves the scaling to \(K_n\asymp (n/\log n)^{2/(m+4)}\). In both cases, the qualitative message is the same: larger \(K\) improves diffusion approximation, but the statistically stable choice of \(K\) must grow with the available sample size.

\subsection{Experiment with generic drifts}
This experiment is designed to test the generic neural-drift regime of Theorem~\ref{thm:lipschitz-selector-complete-tradeoff}. The drift is represented by a generic neural network and is trained directly by policy gradient, without imposing a one-sided dissipativity structure. Therefore the relevant guide for the sample-dependent diffusion budget is \(K(n)\asymp n^{2/(m+6)}\).

%Using the policy-gradient expression specified in Proposition~\ref{prop:policygradient}, we first test the approximation--estimation trade-off on a genuine multi-step stochastic MDP. In contrast to the controlled example in the next subsection, the policy is not restricted to a mean-reverting center form. The drift is a generic neural network depending on diffusion time, the current internal action, and the current MDP state, and it is trained directly from rollout returns. The reference feedback used below defines the cost landscape and the evaluation baseline, but it is not given to the actor during training.

{\bf Setup.} We consider a two-dimensional stochastic MDP with state
\(X_h=(X_h^{(1)},X_h^{(2)})\in\mathbb R^2\) and scalar bounded action
\(A_h\in[-R_a,R_a]\). The reward is the negative stage cost,
\[
    r(x,a)
    = -\Big(
    c_a(a-g_{\rm ref}(x))^2
    +c_x\big((x^{(1)})^2+0.8(x^{(2)})^2\big)
    +c_s\sin^2(\kappa_s x^{(1)})
    +c_u a^2\Big).
\]
Here \(g_{\rm ref}\) is a smooth saturated nonlinear feedback map. It is used to
define a nontrivial reward landscape and to compute the reference cost.%, but it is not used as supervision when training the actor.

The state transition is nonlinear and action-dependent:
\[
\begin{aligned}
X_{h+1}^{(1)}
&=
\alpha_{11}X_h^{(1)}
+\alpha_{12}X_h^{(2)}
+\beta_1\tanh(A_h)
+\delta_1\sin(\omega_1X_h^{(2)})
+\varepsilon_h^{(1)},\\
X_{h+1}^{(2)}
&=
\alpha_{22}X_h^{(2)}
-\alpha_{21}\sin(X_h^{(1)})
+\beta_2\tanh(\omega_a A_h)
+\delta_2\sin(\omega_2X_h^{(1)}X_h^{(2)})
+\varepsilon_h^{(2)}.
\end{aligned}
\]

In the experiment, we set $R_a=2.4,X_0\sim{\rm Unif}([-2.2,2.2]^2)$, and define
\[
\begin{aligned}
g_{\rm ref}(x)
=
R_a\tanh\Big(1.15\big[
&1.55\sin(2.25x^{(1)}+0.85x^{(2)})
+0.90\cos(1.65x^{(1)}-2.15x^{(2)})\\
&+0.55\sin(3.2x^{(1)}x^{(2)})
-0.35x^{(1)}
+0.25x^{(2)}
\big]\Big).
\end{aligned}
\]
The cost parameters are specified as $c_a=2.0,  c_x=0.055,  c_s=0.030, c_u=0.010$, and $\kappa_s=1.4.$
The transition parameters are $\alpha_{11}=0.78,\ \alpha_{12}=0.27, 
\alpha_{22}=0.70,  \alpha_{21}=0.22,$
$\beta_1=0.26, \beta_2=0.20,
\delta_1=0.08, \delta_2=0.06,
\omega_1=1.6, \omega_2=2$, and $\omega_a=1.2.$
The noise satisfies $(\varepsilon_h^{(1)},\varepsilon_h^{(2)})\sim N(0,0.03^2I_2),$ and the next state is clipped to \([-3,3]^2\). We use horizon \(H=10\) and
discount factor \(\gamma=0.95\).

{\bf Implementation details.}
At each MDP step, the policy samples the action from the terminal value of the internal action diffusion
\[
    d\bar a_t
    =
    f_{\theta,K}(t,\bar a_t,X_h)\,dt+\sigma\,dB_t,
\]
where \(T=1\), \(\sigma=0.70\), and \(\bar a_0\sim N(0,1.10^2)\).  The drift is represented by a two-hidden-layer ReLU network,
\[
    f_{\theta,K}(t,a,x)
    =
    K\,
    \tanh\!\left(
        {\rm MLP}_{\theta}\!\left(t,a,x\right)
    \right),
\]
with the network width chosen as $ w(K)=\min\{256,\max\{16,{\rm round}(16+2K)\}\},$ so the width grows approximately as \(K^{m/2}\) on the main part of the grid.

We train the actor using the path-likelihood score estimator from the Euler-discretized action SDE, together with a learned critic baseline evaluated at the pre-action state \(X_h\). We use Adam with actor learning rate \(2.5\times10^{-3}\), critic learning rate \(5\times10^{-3}\), gradient clipping at \(8\), weight clipping at \(4\), and \(50\) policy-gradient updates. We evaluate
$
K\in\{2,4,8,12,16,24,32,48,64,96,128,256,512\}$,
use $n\in\{64,256,2048\}$ trajectories per policy-gradient update, average over \(50\) independent seeds, and evaluate each trained policy using \(1024\) held-out episodes. We report held-out discounted cost relative to the deterministic reference feedback \(g_{\rm ref}\). The reference feedback is used only for constructing the cost and for evaluation, not for actor training.

\begin{figure}[H]
    \centering
    \begin{subfigure}[t]{0.32\textwidth}
        \centering
        \includegraphics[width=\textwidth]{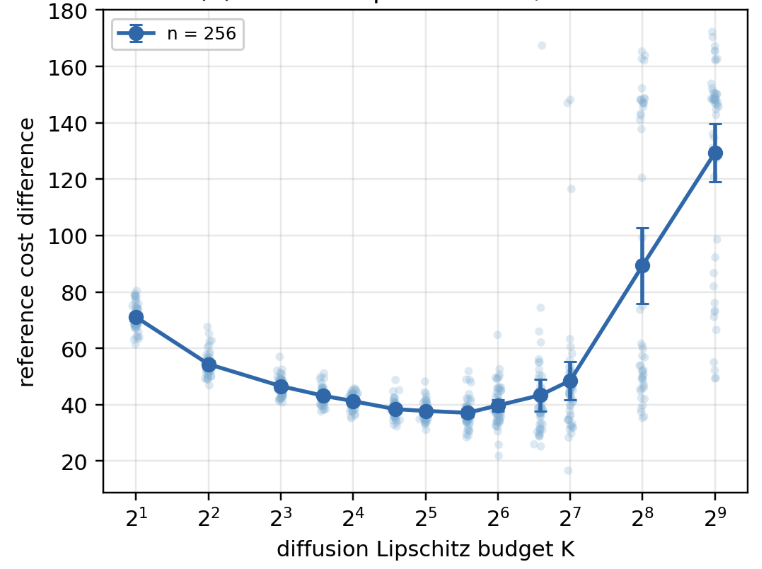}
        \caption{Decomposition under \(n=512\).}
        \label{fig:k-tradeoff-fixed-n-mdp}
    \end{subfigure}
    \begin{subfigure}[t]{0.32\textwidth}
        \centering
        \includegraphics[width=\textwidth]{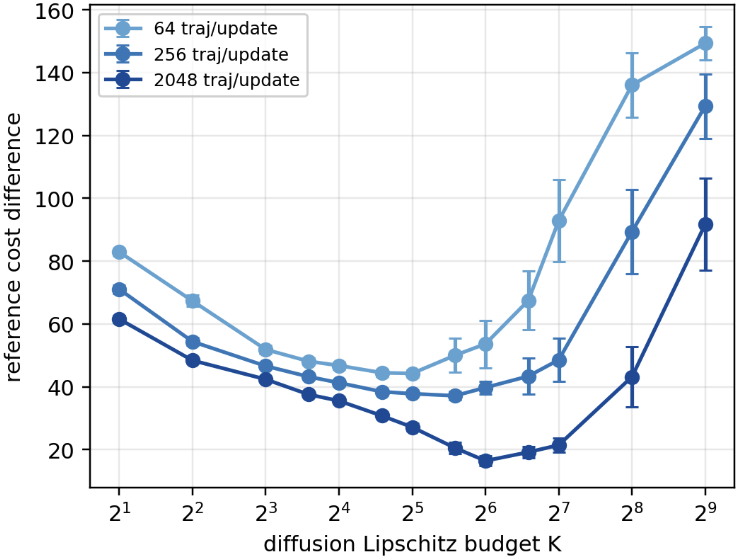}
        \caption{Value difference $V^*(\rho)-V({f}_{\widehat \theta,K})$ as a function of $K$, for different \(n\).}
        \label{fig:k-tradeoff-multi-n-mdp}
    \end{subfigure}
    \begin{subfigure}[t]{0.32\textwidth}
        \centering
        \includegraphics[width=\textwidth]{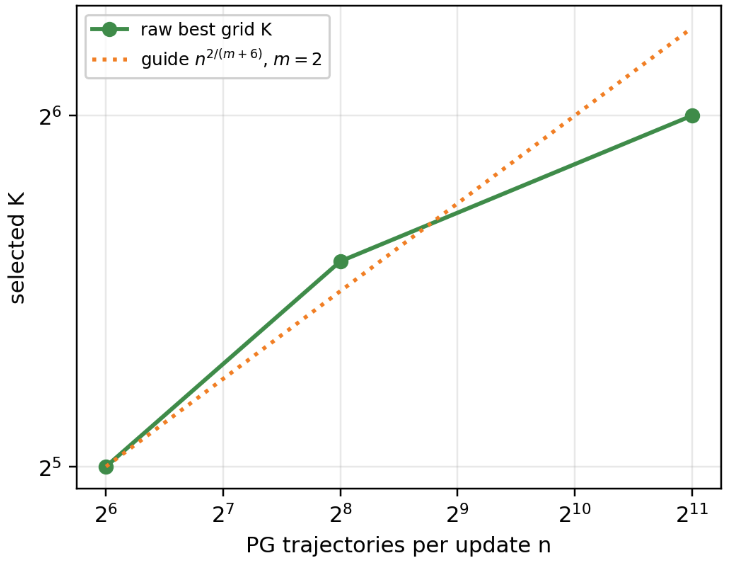}
        \caption{Empirical optimal \(K(n)\) versus \(n\).}
        \label{fig:k-optimal-scaling-mdp}
    \end{subfigure}
    \caption{
       Empirical approximation–estimation trade-off in $K$.
    Panel~(a) fixes \(n=256\) trajectories per policy-gradient update and shows a U-shaped dependence on the diffusion Lipschitz budget \(K\).
    Panel~(b) compares different values of \(n\), showing that larger policy-gradient batches tolerate larger values of \(K\).
    Panel~(c) reports the best grid value of \(K\), together with the generic guide \(K(n)\asymp n^{2/(m+6)}\) for \(m=2\).
    Markers show averages over \(50\) independent seeds and error bars indicate \(95\%\) confidence intervals.
    }
    \label{fig:k-tradeoff-mdp}
\end{figure}

{\bf Results.}
Figure~\ref{fig:k-tradeoff-mdp}(a) shows the fixed-sample trade-off at \(n=256\). The reference cost difference decreases from about \(71.06\) at \(K=2\) to \(37.14\) at \(K=48\). It then increases for larger budgets, reaching \(89.36\) at \(K=256\) and \(129.36\) at \(K=512\). Thus, increasing \(K\) is initially beneficial, but overly large \(K\) makes policy-gradient learning unstable under a fixed sample budget. Figure~\ref{fig:k-tradeoff-mdp}(b) shows that increasing the number of trajectories per update shifts the stable range of \(K\) to the right. With \(n=64\), the best grid value is \(K=32\), with mean reference cost difference \(44.24\). With \(n=256\), the best grid value is \(K=48\), with mean reference cost difference \(37.14\). With \(n=2048\), the best grid value is \(K=64\), with mean reference cost difference \(16.42\). The best-\(K\) summary in Figure~\ref{fig:k-tradeoff-mdp}(c) gives an increasing sequence \(K=32,48,64\) over \(n=64,256,2048\). Since \(K\) is selected from a coarse grid, the empirical curve is stepwise, but it follows the predicted monotone growth of the sample-dependent diffusion budget.

Overall, this MDP experiment supports the generic finite-sample trade-off in Theorem~\ref{thm:lipschitz-selector-complete-tradeoff}. Increasing \(K\) improves the expressive power of the neural diffusion policy in the low-to-moderate \(K\) regime, but a generic neural drift also leads to larger policy-gradient variability and stronger optimization instability at large \(K\). The empirically selected diffusion budget increases with \(n\) and is consistent with the guide \(K(n)\asymp n^{2/(m+6)}\).

\subsection{Experiment with dissipative drifts}
\label{sec:simple_exp}
In this section, we consider a simpler setting of one-step contextual bandit and parameterie the drift in the dissipative regime described in
Remark~\ref{rem:dissipative-comparison-section4}. Here the policy uses a
mean-reverting diffusion around a learned center, so the drift is
one-sided dissipative in the action variable. We first choose $s(n)\asymp \big({n}/{\log n}\big)^{m/(m+4)}$
and hold this center-class size fixed while varying \(K\). Consequently,
there is no U-shaped trade-off in \(K\): increasing \(K\) reduces the
diffusion error until the policy regret reaches the sample-dependent center
estimation error.

{\bf Setup.}
We consider a simpler setting of one-step contextual bandit with
\(X\sim {\rm Unif}([0,1]^m)\), action space
\(\mathcal A=\mathbb R^d\), and reward
\(r(x,a)=-\|a-g^\star(x)\|^2\), where \(g^\star\) is a bounded
Lipschitz selector generated from random Fourier features. The optimal
value is zero, so the value gap is exactly the mean-squared distance from
the sampled action to \(g^\star(X)\).
%For a learnable center \(h\), we use a mean-reverting diffusion around \(h\).

{\bf Implementation details.}
We use \(m=4\), \(d=2\), \(T=1\), and simulate the action diffusion with
\(24\) time steps. The initial action has scale \(1\), and the volatility
takes values in \([0.10,1.00]\). 
We construct the oracle selector $g^\star$
 using \(2048\) random Fourier features and choose its output scale so that the target actions have magnitude at most $R_0=1.5$.

For each sample size \(n\), we parameterize the center by
\[
h_W(x)=\Phi_{s(n)}(x)^\top W,\qquad
\Phi_s(x)=(1,\phi_1(x),\ldots,\phi_s(x)),
\]
where $s(n)
=
\lfloor
16\left(\frac{n}{\log n}\right)^{m/(m+4)}
\rfloor$. Given noisy samples
\(Y_i=g^\star(X_i)+\xi_i\), with
\(\xi_i\sim N(0,0.10^2I_d)\), we fit \(W\) by ridge regression with
regularization $\lambda
=
10^{-6}\max\left\{1,\frac{s(n)+1}{n}\right\}$.
Note that for each sample size \(n\), we choose \(s=s(n)\), fit the center
\(\widehat h_n\), and keep it fixed while varying \(K\). Thus, changing
\(K\) affects the diffusion approximation error but not the center
estimation error.

We evaluate $K\in
\{1,2,4,6,8,12,16,24,32,48,64,96,128,192,256,512\}$ and
\(n\in\{128,256,512,1024,2048\}\), using five independent training
seeds. Evaluation uses \(1024\) contexts and four terminal-action samples
per context.

We report {\bf (i)} the policy regret $R_n(K)
:=
\mathbb E\big[
\big\|
\bar a_T^{\widehat h_n,X}-g^\star(X)
\big\|^2
\big],$
{\bf (ii)} the diffusion error around the fitted center $\mathbb E\!\big[
\big\|
\bar a_T^{\widehat h_n,X}-\widehat h_n(X)
\big\|^2
\big],$
and {\bf (iii)} the direct center MSE $\mathbb E\!\big[
\big\|
\widehat h_n(X)-g^\star(X)
\big\|^2
\big]$. We mark \(\underline K\) as the smallest value of \(K\) for which the
diffusion error is comparable to the center estimation error. Beyond this
point, increasing \(K\) gives little further improvement in policy regret.

{\bf Results.} Figure~\ref{fig:k-tradeoff-main}(a) shows that, for each fixed sample
size, the learned policy gap decreases monotonically with \(K\) and
approaches the corresponding direct center-MSE plateau. The plateaus
decrease as \(n\) increases because a larger sample size supports a more
accurate fitted center. In particular, unlike the generic neural-drift
experiment, there is no U-shaped dependence on \(K\) in this dissipative
experiment. Figure~\ref{fig:k-tradeoff-main}(b) shows that the direct center MSE
decreases consistently with the theoretical guide
\((n/\log n)^{-2/(m+4)}\). This is the approximation--estimation
trade-off that determines \(s(n)\), and it is independent of \(K\). Figure~\ref{fig:k-tradeoff-main}(c) shows the behavior at \(n=512\).
The diffusion error around the fitted center decreases with \(K\), while
the direct center MSE remains constant because the same center
\(\widehat h_n\) is used for every \(K\). Consequently, the policy regret
decreases toward the center-estimation floor. The marked
\(\underline{K}\) is the smallest diffusion scale for which the diffusion
term is comparable to this floor; taking a larger \(K\) does not improve
the leading statistical rate.
\begin{figure}[H]
    \centering
    \begin{subfigure}[t]{0.32\textwidth}
        \centering
         \includegraphics[width=\textwidth]{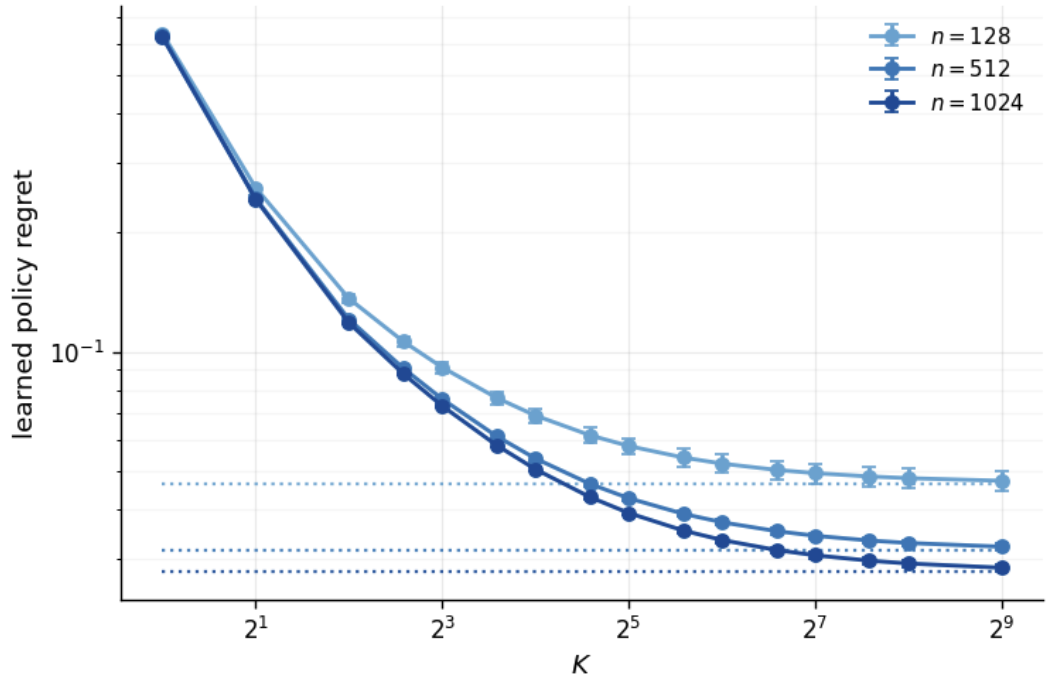}
        \caption{Policy regret for different \(n\).}
        \label{fig:k-tradeoff-multi-n}
    \end{subfigure}
    \begin{subfigure}[t]{0.32\textwidth}
        \centering
        
         \includegraphics[width=\textwidth]{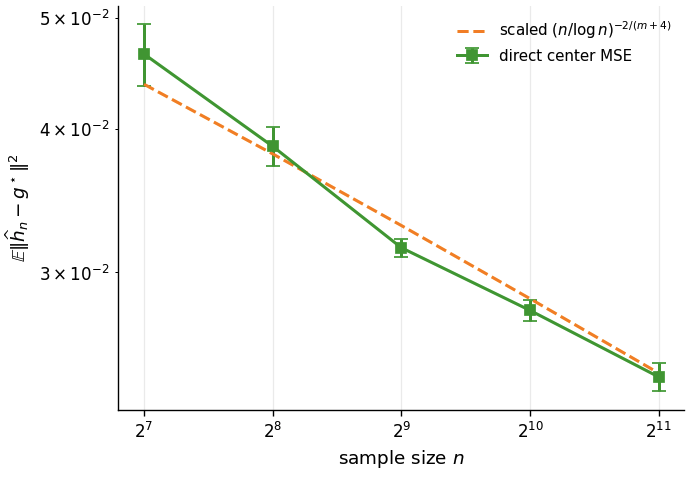}
        \caption{Direct center MSE versus \(n\).}
        \label{fig:center-mse-rate}
    \end{subfigure}
    \begin{subfigure}[t]{0.32\textwidth}
        \centering
         \includegraphics[width=\textwidth]{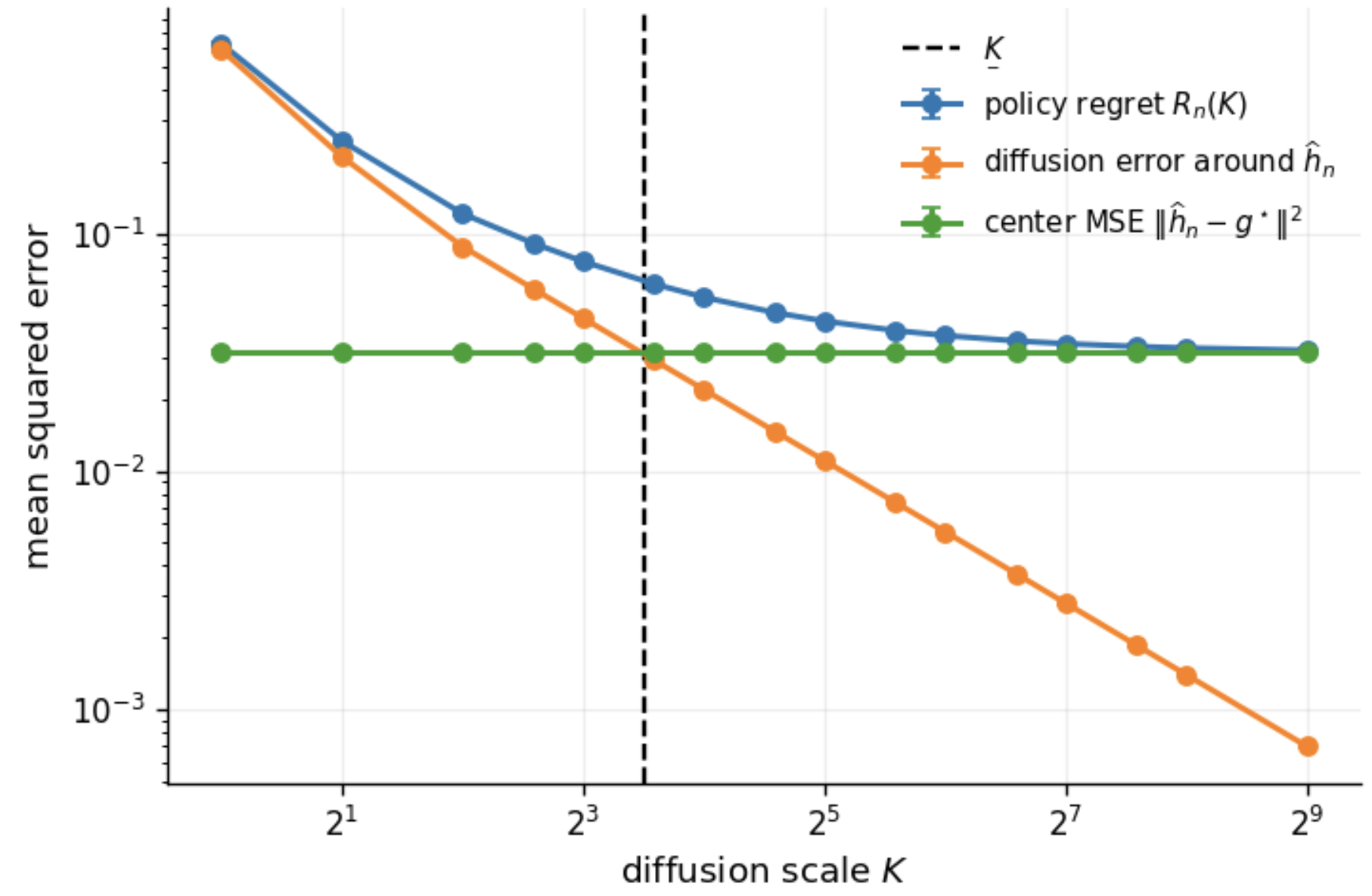}
        \caption{Error components at \(n=512\).}
        \label{fig:k-tradeoff-fixed-n}
    \end{subfigure}
    \caption{
    Dissipative regime with \(s=s(n)\) fixed across \(K\).
    Panel~(a) shows that the learned gaps decrease with \(K\) toward the
    sample-dependent direct center-MSE plateaus, indicated by the dotted
    lines. Panel~(b) compares the direct center MSE with the scaled rate
    \((n/\log n)^{-2/(m+4)}\). Panel~(c) compares the policy regret,
    diffusion error around the fitted center, and direct center MSE at
    \(n=512\); the dashed line marks \(\underline{K}\), the
smallest diffusion scale at which the diffusion error becomes comparable to the
center MSE. (Markers show averages over five independent repetitions, and error bars
indicate standard errors.)}  
    \label{fig:k-tradeoff-main}
\end{figure}

Overall, the two experiments illustrate distinct roles of \(K\). In the generic neural-drift regime, increasing \(K\) improves expressivity but also increases statistical and optimization difficulty, producing a U-shaped trade-off. In the dissipative regime, the additional mean-reverting structure stabilizes the policy class: once \(s=s(n)\) is fixed, increasing \(K\) mainly reduces the diffusion error until the policy regret reaches the center-estimation floor. Thus \(K\asymp (n/\log n)^{2/(m+4)}\) is a minimum sufficient scale under this additional structure, not an optimizer for a generic drift class. In general neural implementations, such dissipativity would need to be enforced through  projection of the parameters which may be computationally expensive; in the toy numerical example of Section \ref{sec:simple_exp}, it holds by construction.

\section{Conclusion}\label{sec:conclusion}
We develop a theoretical framework for understanding the expressivity and finite-sample behavior of diffusion policies in reinforcement learning. Our results show that the drift Lipschitz budget \(K\) controls a fundamental approximation--estimation trade-off: larger \(K\) improves diffusion approximation, but also increases the statistical complexity of learning. For ReLU drift classes, this yields an explicit finite-sample oracle inequality and a sample-dependent choice of \(K\), supported qualitatively by our numerical experiments.

This work opens several directions for further study. One interesting question is whether the finite-sample bound in Section~\ref{sec:statistical-estimation} can be sharpened using Bernstein-type concentration, under stronger variance or stability assumptions. It would also be valuable to  incorporate optimization error from practical policy-gradient training.

\paragraph{Acknowledgment.} The authors would like to thank Hao Liu (Stanford MS\&E) for helpful comments on an earlier version. R.X. is partially supported by an NSF CAREER Award DMS-2614933
and a gift fund from Point72. 
Y.Z.~is partially supported by the Imperial Global Connect Fund.
J.Z. is partially supported by the Hong Kong RGC Early Career Scheme (ECS) under grant no. 24203525.

\bibliographystyle{plain}
\bibliography{bibliography}

\newpage
\appendix
\section{Proofs for Section \ref{sec:expressivity}}
This appendix contains the proofs for the expressivity results in Section~\ref{sec:expressivity}. We first prove the constructive upper bound showing that mean-reverting diffusion policies concentrate near deterministic selectors, then prove the matching lower bound showing that the resulting \(K^{-1}\) rate is intrinsic under nondegenerate diffusion noise.
\label{app:expressivity}
\subsection{Proof of Lemma~\ref{lem:mean_rev}}
\label{sec:proof_mean_rev}

Let \(c\in\mathbb R^d\) denote the target action and define, as shorthand,
\[
    z_K:=z_{K}(c,\tau)=\frac{c}{1-e^{-K\tau}}.
\]
as $c, \tau$ do not change in this proof. Consider
\[
    d\bar a^x_s
    =
    K(z_K-\bar a^x_s)\,ds
    +
    \sigma(s,\bar a^x_s,x)\,dB_s,
    \qquad
    \bar a^x_0\sim \mu_0 .
\]
Define the moving center
\[
    m_s:=(1-e^{-Ks})z_K,\qquad 0\le s\le \tau .
\]
Then \(m_0=0\), \(m_\tau=c\), and
$
    \dot m_s=K(z_K-m_s).
$ Set
$
    Y_s:=\bar a^x_s-m_s$, so that
\[
\begin{aligned}
    dY_s
    &=
    d\bar a^x_s-\dot m_s\,ds  \\
    &=
    K(z_K-\bar a^x_s)\,ds
    -
    K(z_K-m_s)\,ds
    +
    \sigma(s,\bar a^x_s,x)\,dB_s  \\
    &=
    -K(\bar a^x_s-m_s)\,ds
    +
    \sigma(s,\bar a^x_s,x)\,dB_s  \\
    &=
    -K Y_s\,ds
    +
    \sigma(s,\bar a^x_s,x)\,dB_s .
\end{aligned}
\]
Moreover, \(Y_0=\bar a^x_0\) and \(Y_\tau=\bar a^x_\tau-c\). By It\^o's formula,
\[
    d\|Y_s\|^2
    =
    \left(
        -2K\|Y_s\|^2
        +
        \operatorname{tr}\bigl(
            \sigma(s,\bar a^x_s,x)\sigma(s,\bar a^x_s,x)^\top
        \bigr)
    \right)ds
    +
    2\langle Y_s,\sigma(s,\bar a^x_s,x)\,dB_s\rangle .
\]
Under Assumption~\ref{ass:vol_bdd},
\[
    \operatorname{tr}\bigl(
        \sigma(s,\bar a^x_s,x)\sigma(s,\bar a^x_s,x)^\top
    \bigr)
    \le d\Lambda .
\]
Taking expectations gives
\[
    \frac{d}{ds}\mathbb E\|Y_s\|^2
    \le
    -2K\mathbb E\|Y_s\|^2+d\Lambda .
\]
Gronwall's inequality yields
\[
\begin{aligned}
    \mathbb E\|Y_\tau\|^2
    &\le
    e^{-2K\tau}\mathbb E\|Y_0\|^2
    +
    d\Lambda\int_0^\tau e^{-2K(\tau-r)}\,dr  \\
    &=
    e^{-2K\tau}m_2(\mu_0)
    +
    \frac{d\Lambda}{2K}
    \bigl(1-e^{-2K\tau}\bigr).
\end{aligned}
\]
Since \(Y_\tau=\bar a^x_\tau-c\), we obtain
\[
    \mathbb E\|\bar a^x_\tau-c\|^2
    \le
    e^{-2K\tau}m_2(\mu_0)
    +
    \frac{d\Lambda}{2K}
    \bigl(1-e^{-2K\tau}\bigr).
\]
This proves the claim.

\subsection{Proof of Theorem \ref{thm:ub_soft}}\label{sec:proof_thm_ub_soft}

We now prove the value-approximation theorem stated in Theorem \ref{thm:ub_soft}. The proof lifts the terminal-action concentration from Lemma~\ref{lem:mean_rev} to convergence of the induced one-step reward and transition kernel, and then propagates this convergence through the discounted Bellman recursion.

\begin{proof}
Below, we allow the volatility to depend on \(K\) and write it as \(\sigma_K\);
if the volatility is fixed, simply read \(\sigma_K=\sigma\). We prove the result
by converting the controlled MDP into a Markov reward process. The main point is
that, when the volatility is state-dependent, the terminal noise is no longer
independent of \(x\). We therefore use only the concentration of the terminal
action around \(g^\star(x)\).

From Lemma \ref{lem:mean_rev}, we have
\[
    \E\|\bar a_T^{K,x}-g^\star(x)\|^2
    \le
    e^{-2KT}m_2(\mu_0)
    +
    \frac{d\Lambda}{2K}
    \bigl(1-e^{-2KT}\bigr)
    =:\varepsilon_K .
\]
Since \(\varepsilon_K\to0\), we have, for every fixed \(x\),
\[
    \Delta_K(x):=\|\bar a_T^{K,x}-g^\star(x)\|
    \longrightarrow 0
    \qquad\text{in }L^2\text{ and hence in probability.}
\]

Define the averaged reward and transition kernel induced by the diffusion policy:
\[
    r_K(x)
    :=
    \E\left[r(x,\bar a_T^{K,x})\right],
\]
and
\[
    P_K(A\mid x)
    :=
    \E\left[
        P(A\mid x,\bar a_T^{K,x})
    \right],
    \qquad A\in\mathcal B(\mathcal X).
\]
For the deterministic policy \(\pi^\star(\cdot\mid x)=\delta_{g^\star(x)}\), define
\[
    r^\star(x):=r(x,g^\star(x)),
    \qquad
    P^\star(A\mid x):=P(A\mid x,g^\star(x)).
\]

We first show pointwise convergence of the one-step reward and transition kernel.
Fix \(x\in\mathcal X\), and let
\[
    A_{K,x}:=\{\Delta_K(x)\le R_x\},
\]
where \(R_x\) is the radius from Assumption~\ref{ass:cont_env_soft}. Recall
$
    r_{\max}:=\sup_{x,a}|r(x,a)|<\infty .
$
On \(A_{K,x}\), Assumption~\ref{ass:cont_env_soft} gives
\[
    |r(x,\bar a_T^{K,x})-r(x,g^\star(x))|
    \le
    \omega_{r,x}(\Delta_K(x)).
\]
On \(A_{K,x}^c\), we use the trivial bound
$
    |r(x,\bar a_T^{K,x})-r(x,g^\star(x))|
    \le
    2r_{\max}.
$
Hence
\[
    |r_K(x)-r^\star(x)|
    \le
    \E\!\left[
        \omega_{r,x}(\Delta_K(x))\mathbf 1_{A_{K,x}}
    \right]
    +
    2r_{\max}\mathbb P(A_{K,x}^c).
\]
Since \(\Delta_K(x)\to0\) in probability,
$
    \mathbb P(A_{K,x}^c)
    =
    \mathbb P(\Delta_K(x)>R_x)
    \to0. 
$ 
Moreover, taking the modulus to be nondecreasing, for any \(0<\eta<R_x\),
\[
\begin{aligned}
    \E\!\left[
        \omega_{r,x}(\Delta_K(x))\mathbf 1_{A_{K,x}}
    \right]
    &\le
    \omega_{r,x}(\eta)
    +
    \omega_{r,x}(R_x)\mathbb P(\Delta_K(x)>\eta).
\end{aligned}
\]
Letting \(K\to\infty\) and then \(\eta\downarrow0\) gives
$
    r_K(x)\to r^\star(x).
$

Similarly, by convexity of total variation under mixtures,
\[
\begin{aligned}
    \operatorname{TV}\!\left(P_K(\cdot\mid x),P^\star(\cdot\mid x)\right)
    &\le
    \E\left[
        \operatorname{TV}\!\left(
            P(\cdot\mid x,\bar a_T^{K,x}),
            P(\cdot\mid x,g^\star(x))
        \right)
    \right]  \\
    &\le
    \E\!\left[
        \omega_{P,x}(\Delta_K(x))\mathbf 1_{A_{K,x}}
    \right]
    +
    2\mathbb P(A_{K,x}^c).
\end{aligned}
\]
The same argument as above yields
\[
    \operatorname{TV}\!\left(P_K(\cdot\mid x),P^\star(\cdot\mid x)\right)
    \to 0.
\]

Now define the finite-horizon value functions for the two Markov reward processes.
Set
$
    V_0^K=V_0^\star=0,
$ and recursively define
\[
    V_{n+1}^K(x)
    =
    r_K(x)+\gamma\int_{\mathcal X}V_n^K(y)P_K(dy\mid x),
\]
and
\[
    V_{n+1}^\star(x)
    =
    r^\star(x)+\gamma\int_{\mathcal X}V_n^\star(y)P^\star(dy\mid x).
\]
We prove by induction that, for every fixed \(n\ge0\) and every fixed
\(x\in\mathcal X\),
\[
    V_n^K(x)\to V_n^\star(x).
\]
The case \(n=0\) is immediate. Suppose the claim holds for \(n\). Since rewards are bounded,
\[
    \sup_K\|V_n^K\|_\infty
    \vee
    \|V_n^\star\|_\infty
    \le
    \frac{r_{\max}}{1-\gamma}.
\]
For fixed \(x\),
\begin{eqnarray*}
    &&\left|
        \int V_n^K(y)P_K(dy\mid x)
        -
        \int V_n^\star(y)P^\star(dy\mid x)
    \right| \\
   &\le&
    \left|
        \int
        \bigl(V_n^K(y)-V_n^\star(y)\bigr)
        P^\star(dy\mid x)
    \right|
    +
    2\|V_n^K\|_\infty
    \operatorname{TV}\!\left(P_K(\cdot\mid x),P^\star(\cdot\mid x)\right).
\end{eqnarray*}
The first term goes to zero by dominated convergence under the fixed measure
\(P^\star(\cdot\mid x)\). The second term goes to zero by the total-variation
convergence shown above. Together with \(r_K(x)\to r^\star(x)\), this gives
\[
    V_{n+1}^K(x)\to V_{n+1}^\star(x).
\]
The induction is complete.

It remains to pass from finite horizon to infinite horizon. Let
$V^K:=V^{\pi_K}$ and $V^\star:=V^{\pi^\star}$. For every \(n\ge0\),
\[
    \|V^K-V_n^K\|_\infty
    \vee
    \|V^\star-V_n^\star\|_\infty
    \le
    \frac{\gamma^nr_{\max}}{1-\gamma}.
\]
Therefore, for every fixed \(x\),
\[
\begin{aligned}
    \limsup_{K\to\infty}|V^K(x)-V^\star(x)|
    &\le
    \frac{2\gamma^nr_{\max}}{1-\gamma}
    +
    \limsup_{K\to\infty}|V_n^K(x)-V_n^\star(x)| \\
    &=
    \frac{2\gamma^nr_{\max}}{1-\gamma}.
\end{aligned}
\]
Letting \(n\to\infty\) yields
\[
    V^K(x)\to V^\star(x)
    \qquad\text{for every }x\in\mathcal X.
\]
Finally,
\[
    |V^K(x)-V^\star(x)|
    \le
    \frac{2r_{\max}}{1-\gamma},
\]
so dominated convergence with respect to the initial state law \(\rho\) gives
\[
    V^{\pi_K}(\rho)
    =
    \int_{\mathcal X}V^K(x)\rho(dx)
    \longrightarrow
    \int_{\mathcal X}V^\star(x)\rho(dx)
    =
    V^{\pi^\star}(\rho).
\]
Since \(\pi^\star(\cdot\mid x)=\delta_{g^\star(x)}\) is an optimal deterministic policy,
\(V^{\pi^\star}(\rho)=V^\star(\rho)\). This proves the theorem.
\end{proof}
\subsection{Smoothing and anti-concentration for state-independent volatility}
This subsection proves the smoothing estimate needed for the final-window lower-bound argument in Proposition \ref{prop:lb_diffusion}. The goal is to show that, after rescaling the last time interval of length \(K^{-1}\), the residual diffusion has an endpoint density uniformly bounded from above. This prevents the terminal action from concentrating too much near any finite set of target actions.

We proceed in two steps. Lemma \ref{lem:c2} first gives an a priori \(L^2\) estimate for solutions of the Fokker-Planck equation. Lemma \ref{lem:additive-anti-concentration} then combines this estimate with Nash's inequality and a duality argument to obtain a uniform \(L^\infty\) bound on the density at the terminal time. This \(L^\infty\)  bound implies the anti-concentration estimate used in Appendix \ref{sec:proof_prop_34_first}.

\begin{lem}[A priori $L^2$ finiteness for SDE marginal density under state-independent volatility]\label{lem:c2}
Fix $0 \le r < t \le 1$. Let
\[
\Gamma(s) := \sigma(s)\sigma(s)^\top, \qquad \Gamma_{ij}(s) = [\Gamma(s)]_{ij}.
\]
Assume that $\beta(s,\cdot)$ is smooth and $1$-Lipschitz uniformly in $s$,
that $\beta(\cdot,0)\in L^1([r,t])$, and that $\Gamma(s)$ is deterministic,
positive semidefinite, and bounded on $[r,t]$. Let $p_s$ be a nonnegative
probability density solving
\[
\partial_s p_s(x)
=
-\sum_{i=1}^d \partial_i\bigl(\beta_i(s,x)p_s(x)\bigr)
+
\frac12\sum_{i,j=1}^d \Gamma_{ij}(s)\partial_{ij}p_s(x),
\qquad p_r=\phi,
\]
where $\phi\in L^1(\mathbb R^d)\cap L^2(\mathbb R^d)$ is a probability density.
Then, for every $u\in[r,t]$, it holds that
\[
\|p_u\|_2^2
\le
e^{d(u-r)}\|\phi\|_2^2.
\]
In particular, $p_u\in L^2(\mathbb R^d)$ for every $u\in[r,t]$.
\end{lem}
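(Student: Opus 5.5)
The plan is a standard $L^2$ energy estimate for the Fokker--Planck equation, followed by Gr\"onwall. Formally, multiply the equation by $p_s$ and integrate over $\mathbb R^d$:
\[
\frac12\frac{d}{ds}\|p_s\|_2^2
=
-\sum_i\int \partial_i(\beta_i p_s)\,p_s\,dx
+\frac12\sum_{i,j}\Gamma_{ij}(s)\int \partial_{ij}p_s\,p_s\,dx .
\]
For the diffusion term, integrate by parts once to get $-\frac12\int \nabla p_s^\top\Gamma(s)\nabla p_s\,dx\le 0$. This uses that $\Gamma(s)$ is positive semidefinite and independent of $x$, so no derivative of $\Gamma$ appears. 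That term can therefore be discarded.

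For the drift term, write $\int \partial_i(\beta_i p)\,p = \int (\partial_i\beta_i)p^2 + \int \beta_i (\partial_i p) p$. Integrating the second piece by parts, using $\beta_i\partial_i p\,p=\frac12\beta_i\partial_i(p^2)$, gives $-\frac12\int(\partial_i\beta_i)p^2$. Hence
\[
-\sum_i\int \partial_i(\beta_i p_s)p_s
=-\frac12\int (\nabla\cdot\beta)\,p_s^2 .
\]
Since $\beta(s,\cdot)$ is smooth and $1$-Lipschitz, $\|\nabla\beta\|_{\mathrm{op}}\le1$, so $|\nabla\cdot\beta|\le d$. Combining the two terms gives $\frac{d}{ds}\|p_s\|_2^2\le d\,\|p_s\|_2^2$. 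Gr\"onwall on $[r,u]$ with $p_r=\phi$ then yields $\|p_u\|_2^2\le e^{d(u-r)}\|\phi\|_2^2$.

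The main obstacle is rigor. We do not know a priori that $p_s\in L^2$, that $p_s\in H^1$, or that boundary terms vanish, since $\beta$ grows linearly and $p$ is only a density. I would handle this in two steps.

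\textbf{Step 1: a regularized problem.} Replace $\Gamma$ by $\Gamma+\varepsilon I$, so the equation is uniformly parabolic, and truncate with a smooth cutoff $\chi_R(x)=\chi(x/R)$. Multiply by $p_s\chi_R$ instead of $p_s$.

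\textbf{Step 2: controlling the cutoff errors.} The extra terms involve $\nabla\chi_R=O(1/R)$ multiplied by $|\beta|\lesssim 1+|x|+|\beta(s,0)|$. On the support of $\nabla\chi_R$ we have $|x|\sim R$, so this product is bounded. The resulting errors are of the form $C\int_{R\le|x|\le2R}p_s^2$, plus a term $\frac{1}{R}\int|\nabla p_s|\,p_s$ that can be absorbed into the dissipation. Applying Gr\"onwall to the localized quantity and letting $R\to\infty$ (monotone convergence), then $\varepsilon\to0$ (lower semicontinuity of the $L^2$ norm under weak convergence of densities), gives the bound.

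Alternatively, the smoothing can be obtained by duality. Note $\|p_u\|_2=\sup\{\int g\,p_u : \|g\|_2\le1\}$, and write $\int g\,p_u=\int \phi\,v_r$, where $v$ solves the backward Kolmogorov equation with terminal data $g$. The same energy computation, applied to the smooth backward solution $v$ with $g\in C_c^\infty$, gives $\|v_r\|_2\le e^{d(u-r)/2}\|g\|_2$. This route avoids needing regularity of $p$ and only uses that $p$ is the weak, distributional solution. I would take this duality route first if uniqueness of the weak Fokker--Planck solution is available.
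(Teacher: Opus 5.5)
Your formal computation is the same one the paper uses. The rigorous part of your plan, however, has a genuine gap, and it sits exactly where the lemma has content: this is the \emph{a priori} $L^2$ statement that Lemma~\ref{lem:additive-anti-concentration} relies on.

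Testing the equation against $p_s\chi_R$, the drift produces the boundary term $\int p_s^2\,\beta_s\cdot\nabla\chi_R$. Because $\beta$ grows linearly, $|\beta(s,x)|\le|\beta(s,0)|+|x|$, while $|\nabla\chi_R|\le C/R$ on $A_R=\{R\le|x|\le 2R\}$. So this term is only bounded by $C(1+|\beta(s,0)|)\int_{A_R}p_s^2$, with $C$ independent of $R$. It does not decay, and for a contracting drift such as $\beta(x)=-x$ its sign is unfavorable. You can therefore let $R\to\infty$ only if you already know $\int_r^u\|p_s\|_2^2\,ds<\infty$, which is what you are trying to prove. Monotone convergence handles only $\int p_u^2\chi_R\uparrow\|p_u\|_2^2$ on the left-hand side. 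Bounding $\int_{A_R}p_s^2\le\int p_s^2\chi_{2R}$ does not close a Gr\"onwall argument either, since it moves to a larger scale.

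Adding $\varepsilon I$ to $\Gamma$ does nothing for this term, and it creates a new obligation. The given $p$ does not solve the regularized equation, so identifying the $\varepsilon\to0$ limit with $p$ requires uniqueness of probability solutions of the Fokker--Planck equation. The duality route relocates the difficulty rather than removing it:
\begin{itemize}
\item the energy estimate for the backward solution $v$ has the identical non-decaying cutoff error $\int v_s^2\,\beta\cdot\nabla\chi_R$;
\item $\Gamma$ is only positive semidefinite, so $v$ need not be smooth;
\item the identity $\int g\,p_u=\int\phi\,v_r$ again presupposes the uniqueness you leave as a hypothesis.
\end{itemize}

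The missing idea, which is how the paper proceeds, is to run the estimate on a truncated convex functional of linear growth instead of on $p^2$: $\Psi_M(z)=z^2$ for $z\le M$ and $\Psi_M(z)=2Mz-M^2$ for $z>M$. This works as follows.
\begin{itemize}
\item Since $\Psi_M(p_s)\le 2Mp_s$, the quantity $\int\Psi_M(p_s)\chi_R$ is finite using only that $p_s$ has unit mass.
\item Both cutoff errors are bounded, for $R\ge1$, by $CM(1+|\beta(s,0)|+\|\Gamma(s)\|_{\mathrm{op}})\int_{A_R}p_s$. Their time integral vanishes as $R\to\infty$ by dominated convergence in $s$, using $\beta(\cdot,0)\in L^1$ and boundedness of $\Gamma$.
\item The drift contributes $-\int(\nabla\cdot\beta_s)H_M(p_s)\chi_R$ with $H_M(z)=z\Psi_M'(z)-\Psi_M(z)=z^2\wedge M^2\le\Psi_M(z)$. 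So it is still at most $d\int\Psi_M(p_s)\chi_R$.
\item The diffusion term is nonpositive by convexity. Its cutoff error integrates by parts to $\tfrac12\int\Psi_M(p_s)\sum_{i,j}\Gamma_{ij}(s)\partial_{ij}\chi_R$, so no $\varepsilon$-regularization or absorption is needed.
\end{itemize}
Gr\"onwall then gives $\int\Psi_M(p_u)\le e^{d(u-r)}\|\phi\|_2^2$ uniformly in $M$, using $\Psi_M(\phi)\le\phi^2$. Letting $M\uparrow\infty$ by monotone convergence yields the lemma, with no uniqueness theory and no integrability assumption beyond unit mass.
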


\begin{proof}
For $M>0$, define the truncated quadratic function
\[
\Psi_M(z)
=
\begin{cases}
z^2, & 0\le z\le M,\\
2Mz-M^2, & z>M.
\end{cases}
\]
Then $\Psi_M$ is convex, $C^1$, and satisfies
$
0\le \Psi_M(z)\le 2Mz$,
$\Psi_M(z)\le z^2,
$
and $\Psi_M(z)\uparrow z^2$ as $M\to\infty$.

Also define
\[
H_M(z):=z\Psi_M'(z)-\Psi_M(z).
\]
A direct computation gives
\[
H_M(z)=z^2\wedge M^2,
\qquad
0\le H_M(z)\le \Psi_M(z).
\]

Let $\chi_R\in C_c^\infty(\mathbb R^d)$ be a standard cutoff such that
\[
0\le \chi_R\le 1,\qquad
\chi_R=1 \text{ on } B_R,\qquad
\operatorname{supp}\chi_R\subset B_{2R},
\]
and
\[
|\nabla \chi_R|\le \frac{C}{R},
\qquad
|D^2\chi_R|\le \frac{C}{R^2}.
\]
Set
\[
F_{M,R}(s):=\int_{\mathbb R^d}\Psi_M(p_s(x))\chi_R(x)\,dx.
\]
This quantity is finite because $\Psi_M(p_s)\le 2Mp_s$ and $p_s$ has total
mass one.

We differentiate $F_{M,R}(s)$ and use the Fokker--Planck equation:
\[
\frac{d}{ds}F_{M,R}(s)
=
\int \Psi_M'(p_s)\chi_R\,\partial_s p_s.
\]
For the drift term,
\[
-\int \Psi_M'(p_s)\chi_R
\sum_{i=1}^d \partial_i(\beta_i p_s)
=
-\int (\nabla\cdot \beta_s)H_M(p_s)\chi_R
+
\int \Psi_M(p_s)\beta_s\cdot\nabla\chi_R.
\]
For the diffusion term, since $\Gamma_{ij}(s)$ does not depend on $x$,
\[
\frac12
\int \Psi_M'(p_s)\chi_R
\sum_{i,j=1}^d \Gamma_{ij}(s)\partial_{ij}p_s
=
-\frac12
\int \Psi_M''(p_s)\chi_R
\sum_{i,j=1}^d \Gamma_{ij}(s)\partial_i p_s\,\partial_j p_s
+
\frac12
\int \Psi_M(p_s)
\sum_{i,j=1}^d \Gamma_{ij}(s)\partial_{ij}\chi_R.
\]
The first diffusion term is nonpositive because $\Psi_M$ is convex and
$\Gamma(s)$ is positive semidefinite. Since $\beta_s$ is $1$-Lipschitz,
\[
-\nabla\cdot\beta_s\le d.
\]
Therefore,
\[
\frac{d}{ds}F_{M,R}(s)
\le
dF_{M,R}(s)+E_{M,R}(s),
\]
where
\[
E_{M,R}(s)
:=
\int \Psi_M(p_s)\beta_s\cdot\nabla\chi_R
+
\frac12
\int \Psi_M(p_s)
\sum_{i,j=1}^d \Gamma_{ij}(s)\partial_{ij}\chi_R.
\]

We now send $R\to\infty$. Let
\[
A_R:=\{x\in\mathbb R^d:R\le |x|\le 2R\}.
\]
The support of $\nabla\chi_R$ and $D^2\chi_R$ is contained in $A_R$. Since
$\beta_s$ is $1$-Lipschitz,
\[
|\beta(s,x)|\le |\beta(s,0)|+|x|.
\]
Using $\Psi_M(p_s)\le 2Mp_s$, the cutoff bounds, and boundedness of $\Gamma(s)$,
we get
\[
|E_{M,R}(s)|
\le
C M\bigl(1+|\beta(s,0)|+\|\Gamma(s)\|_{\mathrm{op}}\bigr)
\int_{A_R}p_s(x)\,dx.
\]
For each fixed $s$, $\int_{A_R}p_s(x)\,dx\to0$ as $R\to\infty$, because
$p_s$ is a probability density. The right-hand side is dominated by an
integrable function of $s$ on $[r,t]$. Hence, by dominated convergence,
\[
\int_r^u |E_{M,R}(s)|\,ds\to0
\qquad\text{as }R\to\infty.
\]

Integrating the differential inequality from $r$ to $u$ gives
\[
F_{M,R}(u)
\le
F_{M,R}(r)
+
d\int_r^u F_{M,R}(s)\,ds
+
\int_r^u E_{M,R}(s)\,ds.
\]
Letting $R\to\infty$ yields
\[
F_M(u)
\le
F_M(r)
+
d\int_r^u F_M(s)\,ds,
\]
where
\[
F_M(s):=\int_{\mathbb R^d}\Psi_M(p_s(x))\,dx.
\]
By Gronwall's inequality,
\[
F_M(u)\le e^{d(u-r)}F_M(r).
\]
At the initial time, $p_r=\phi$, and $\Psi_M(\phi)\le \phi^2$. Therefore,
\[
F_M(u)
\le
e^{d(u-r)}\|\phi\|_2^2.
\]
Finally, since $\Psi_M(z)\uparrow z^2$, monotone convergence gives
\[
\int_{\mathbb R^d}p_u(x)^2\,dx
=
\lim_{M\to\infty}
\int_{\mathbb R^d}\Psi_M(p_u(x))\,dx
\le
e^{d(u-r)}\|\phi\|_2^2.
\]
This proves the claim.

The computation with $\Psi_M$ can be made fully classical by first replacing
$\Psi_M$ by smooth convex approximations with the same bounds and then
letting the approximation parameter tend to zero.
\end{proof}
\begin{lem}
\label{lem:additive-anti-concentration}
Let $(Y_t)_{0\le t\le 1}$ solve
$$
dY_t=\beta(t,Y_t)\,dt+\sigma(t)\,dW_t,
\qquad
Y_0=y\in\mathbb R^d .
$$
Assume that $\beta(t,\cdot)$ is $1$-Lipschitz uniformly in $t$, $\beta(\cdot, 0)\in L^{1}([0, 1])$, and that
$\sigma(t)$ is deterministic, state-independent, and square-integrable on
$[0,1]$. Assume also that there is $\kappa>0$ such that
$$
\sigma(t)\sigma(t)^\top\succeq \kappa I_d
$$
for a.e. $t\in[0,1]$.

Then the law of $Y_1$ has a density $p_1^y$ with respect to Lebesgue measure,
and there is a constant $C_{d,\kappa}<\infty$, depending only on $d$ and
$\kappa$, such that
$$
\sup_{y\in\mathbb R^d}\|p_1^y\|_\infty\le C_{d,\kappa}.
$$
Consequently, for any $z_1,\ldots,z_m\in\mathbb R^d$ and any $\rho>0$,
$$
\mathbb P_y\left(
Y_1\in \bigcup_{j=1}^m B(z_j,\rho)
\right)
\le
C_{d,\kappa}m v_d\rho^d,
$$
where $v_d=\lambda_d(B(0,1))$, with $\lambda_{d}$ the Lebesgue measure in $\mathbb{R}^{d}$.
\end{lem}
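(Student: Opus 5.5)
The plan is to prove the uniform density bound by an $L^1\to L^2\to L^\infty$ ultracontractivity argument for the Fokker--Planck equation, split at $t=1/2$. The consequence then follows immediately, since $\mathbb P_y(Y_1\in\bigcup_j B(z_j,\rho))\le \|p_1^y\|_\infty\sum_j\lambda_d(B(z_j,\rho))\le C_{d,\kappa}m v_d\rho^d$.

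\emph{Reductions.} First I reduce to smooth $\beta$ and a smooth initial density. I mollify $\beta$ in space, which keeps the $1$-Lipschitz property and the $L^1$ bound on $\beta(\cdot,0)$. I replace $\delta_y$ by a Gaussian $\phi_\epsilon$ of variance $\epsilon$. By strong well-posedness under Lipschitz drift and additive noise, the corresponding $Y_1$ converge in law to the original one. A uniform bound $\|p\|_\infty\le C$ is preserved under weak limits, because $\int \psi\,dp\le C\|\psi\|_{L^1}$ for nonnegative continuous compactly supported $\psi$. Hence it suffices to prove the bound with constants independent of $\epsilon$ and of the mollification. Lemma~\ref{lem:c2} guarantees that $p_s\in L^2$ for all $s$, which justifies the energy computations below.

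\emph{Step 1: $L^1\to L^2$ on $[0,1/2]$.} Since $\beta$ is $1$-Lipschitz, $-\nabla\cdot\beta\le d$. Repeating the computation in Lemma~\ref{lem:c2}, but now keeping the dissipation term and using $\Gamma\succeq\kappa I_d$, gives
\[
\tfrac{d}{ds}\|p_s\|_2^2\le d\|p_s\|_2^2-\kappa\|\nabla p_s\|_2^2 .
\]
Mass is conserved, so $\|p_s\|_1=1$. Nash's inequality $\|p\|_2^{2+4/d}\le C_d\|\nabla p\|_2^2\|p\|_1^{4/d}$ then turns this into the differential inequality
\[
u'\le d u-(\kappa/C_d)u^{1+2/d},\qquad u=\|p_s\|_2^2 .
\]
Solving this (after multiplying by $e^{-ds}$) yields $\|p_s\|_2^2\le C_{d,\kappa}s^{-d/2}$ for $s\le1$, independently of the initial datum. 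In particular $\|p_{1/2}\|_2\le C_{d,\kappa}$.

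\emph{Step 2: $L^2\to L^\infty$ on $[1/2,1]$ by duality.} Let $P_{1/2,1}$ denote the forward transition operator on densities. Its adjoint acts on test functions through the backward Kolmogorov equation. After time reversal, the adjoint is again a Fokker--Planck-type evolution with drift $-\beta$, which is also $1$-Lipschitz, the same diffusion matrix, and an extra zeroth-order term $\nabla\cdot\beta$, which is bounded by $d$ in absolute value. The Nash argument of Step~1 applies verbatim to give $\|P^*_{1/2,1}\|_{L^1\to L^2}\le C_{d,\kappa}$; the $L^1$ norm now grows by at most $e^{d/2}$ instead of being conserved. By duality, $\|P_{1/2,1}\|_{L^2\to L^\infty}\le C_{d,\kappa}$. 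Composing the two steps via Chapman--Kolmogorov gives
\[
\|p_1^y\|_\infty\le C_{d,\kappa}\|p_{1/2}^y\|_2\le C_{d,\kappa},
\]
uniformly in $y$ and in $\beta$.

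\emph{Main obstacle.} I expect the hardest part to be justifying the energy identities and the adjoint step rigorously: integrability at infinity (handled by the cutoff from Lemma~\ref{lem:c2}), the fact that $\sigma$ is only measurable in time, and the zeroth-order term in the adjoint. To avoid these difficulties I would first try taking $\sigma$ piecewise constant or smooth in $t$ by approximation, which keeps $\kappa$ fixed. I would also make sure the ODE bound for $u$ depends only on $(d,\kappa)$, so that no dependence on $\beta(\cdot,0)$ enters.
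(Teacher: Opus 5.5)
Your proposal is correct and follows essentially the same route as the paper. Both arguments use forward Nash $L^1\to L^2$ smoothing for the Fokker--Planck density, the same Nash bound for the backward (adjoint) evolution with $\|w_s\|_1\le e^{ds}\|g\|_1$, duality to get $L^2\to L^\infty$, and composition at $t=1/2$. Both then remove the smooth-drift, smooth-initial-density and bounded-$\sigma$ restrictions by synchronous coupling and weak limits.

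One caution on your $\sigma$-approximation remark. Time-regularity of $\sigma$ is not needed, since the energy inequalities hold for a.e.\ $t$ with $\Gamma$ merely bounded and measurable. To make $\sigma$ bounded without losing $\kappa$, you should truncate or average $\Gamma=\sigma\sigma^\top$ rather than average $\sigma$ itself. The paper does this via eigenvalue truncation, $\sigma_N=\Gamma_N^{1/2}\Gamma^{-1/2}\sigma$. Averaging $\sigma$ directly fails because $\{\sigma:\sigma\sigma^\top\succeq\kappa I_d\}$ is not convex.
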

\begin{proof}
We first prove the result under the additional assumptions that $\beta$ is
smooth in the spatial variable, $\sigma$ is bounded, and the initial law has
a smooth probability density. We then remove these assumptions by approximation.
All the integration by parts in Steps 1-2 can be justified by inserting standard cut-offs $\chi_R$ as in Lemma \ref{lem:c2} and then let $R\rightarrow\infty$.

{\bf Step 1: forward $L^1\to L^2$ smoothing for smooth probability densities.}

Fix $0\le r<t\le 1$. Suppose the law of $Y_r$ has smooth (compactly supported) probability density
$\varphi$. Let $p_t$ be the density of $Y_t$. We write
$$
p_t=P_{r,t}^\ast \varphi,
$$
where $P_{r,t}^\ast$ denotes the forward Fokker--Planck evolution of densities.
Since $\varphi$ is a probability density, $p_t$ is also a probability density:
$$
p_t\ge 0,
\qquad
\int_{\mathbb R^d}p_t(x)\,dx=1.
$$
Furthermore, Lemma \ref{lem:c2} ensures that $p_{t}\in L^{2}(\mathbb{R}^{d})$ for $t>r$. The Fokker--Planck equation is
$$
\partial_t p_t
=
-\nabla\cdot(\beta_t p_t)
+
\frac12\sum_{i,j=1}^d
[\sigma(t)\sigma(t)^\top]_{ij}\partial_{ij}p_t,
$$
where $\beta_t(x)=\beta(t,x)$.

We compute the evolution of
$$
\|p_t\|_2^2=\int_{\mathbb R^d}p_t(x)^2\,dx.
$$
Multiplying the Fokker--Planck equation by $2p_t$ and integrating gives
$$
\frac{d}{dt}\|p_t\|_2^2
=
-2\int p_t\nabla\cdot(\beta_t p_t)
+
\int p_t
\sum_{i,j=1}^d
[\sigma(t)\sigma(t)^\top]_{ij}\partial_{ij}p_t .
$$

For the drift term, use
$$
\nabla\cdot(\beta_t p_t)
=
(\nabla\cdot\beta_t)p_t+\beta_t\cdot\nabla p_t .
$$
Then
$$
-2\int p_t\nabla\cdot(\beta_t p_t)
=
-2\int(\nabla\cdot\beta_t)p_t^2
-
2\int p_t\beta_t\cdot\nabla p_t .
$$
Since $2p_t\beta_t\cdot\nabla p_t=\beta_t\cdot\nabla(p_t^2)$,
integration by parts gives
$$
-2\int p_t\beta_t\cdot\nabla p_t
=
-\int \beta_t\cdot\nabla(p_t^2)
=
\int(\nabla\cdot\beta_t)p_t^2 .
$$
Therefore
$$
-2\int p_t\nabla\cdot(\beta_t p_t)
=
-\int(\nabla\cdot\beta_t)p_t^2 .
$$

For the diffusion term, $\sigma(t)\sigma(t)^\top$ is independent of $x$, so
integration by parts gives
$$
\int p_t
\sum_{i,j=1}^d
[\sigma(t)\sigma(t)^\top]_{ij}\partial_{ij}p_t
=
-\int
\nabla p_t^\top\sigma(t)\sigma(t)^\top\nabla p_t .
$$

Combining the two identities,
$$
\frac{d}{dt}\|p_t\|_2^2
=
-\int(\nabla\cdot\beta_t)p_t^2
-
\int\nabla p_t^\top\sigma(t)\sigma(t)^\top\nabla p_t .
$$

Since $\beta_t$ is $1$-Lipschitz, we have $-\nabla\cdot\beta_t\le d$. And since
$\sigma(t)\sigma(t)^\top\succeq\kappa I_d$, we have
$$
\nabla p_t^\top\sigma(t)\sigma(t)^\top\nabla p_t
\ge
\kappa \|\nabla p_t\|^2 .
$$
Hence
$$
\frac{d}{dt}\|p_t\|_2^2
\le
d\|p_t\|_2^2-\kappa\|\nabla p_t\|_2^2 .
$$

Define $q_t:=e^{-d(t-r)/2}p_t$. Then $\|q_t\|_2^2=e^{-d(t-r)}\|p_t\|_2^2$. Differentiating it gives
$$
\frac{d}{dt}\|q_t\|_2^2
=
-d e^{-d(t-r)}\|p_t\|_2^2
+
e^{-d(t-r)}\frac{d}{dt}\|p_t\|_2^2 .
$$
Using the preceding inequality,
$$
\frac{d}{dt}\|q_t\|_2^2
\le
-d e^{-d(t-r)}\|p_t\|_2^2
+
e^{-d(t-r)}
\left(
d\|p_t\|_2^2-\kappa\|\nabla p_t\|_2^2
\right).
$$
The two terms involving $d\|p_t\|_2^2$ cancel, so
$$
\frac{d}{dt}\|q_t\|_2^2
\le
-\kappa e^{-d(t-r)}\|\nabla p_t\|_2^2 .
$$
Because $q_t=e^{-d(t-r)/2}p_t$ and the exponential factor is independent of
$x$, we have $\nabla q_t=e^{-d(t-r)/2}\nabla p_t$, and therefore
$$
\|\nabla q_t\|_2^2=e^{-d(t-r)}\|\nabla p_t\|_2^2 .
$$
Thus
$$
\frac{d}{dt}\|q_t\|_2^2
\le
-\kappa\|\nabla q_t\|_2^2 .
$$

Since $p_t$ is a probability density,
$$
\|q_t\|_1
=
e^{-d(t-r)/2}\|p_t\|_1
=
e^{-d(t-r)/2}
\le 1 .
$$

We now apply Nash's inequality on $\mathbb R^d$: there exists
$N_d<\infty$, depending only on $d$, such that for every
$u\in L^1(\mathbb R^d)\cap H^1(\mathbb R^d)$,
$$
\|u\|_2^{2+4/d}
\le
N_d\|\nabla u\|_2^2\|u\|_1^{4/d}.
$$
Applying this to $u=q_t$ gives
$$
\|q_t\|_2^{2+4/d}
\le
N_d\|\nabla q_t\|_2^2\|q_t\|_1^{4/d}
\le
N_d\|\nabla q_t\|_2^2 .
$$
Therefore
$$
\|\nabla q_t\|_2^2
\ge
\frac1{N_d}\|q_t\|_2^{2+4/d}.
$$
Combining with the energy inequality,
$$
\frac{d}{dt}\|q_t\|_2^2
\le
-\frac{\kappa}{N_d}\|q_t\|_2^{2+4/d}.
$$

Set $F(t):=\|q_t\|_2^2$. Then $\|q_t\|_2^{2+4/d}=F(t)^{1+2/d}$. Thus
$$
F'(t)\le -\frac{\kappa}{N_d}F(t)^{1+2/d}.
$$
Let $\theta=2/d$ and $c_{d,\kappa}=\kappa/N_d$, where $N_d$ is the coefficient in Nash's inequality. Then
$$
F'(t)\le -c_{d,\kappa}F(t)^{1+\theta}.
$$
If $F(t)=0$, the desired bound is trivial. Otherwise,
$$
\frac{d}{dt}F(t)^{-\theta}
=
-\theta F(t)^{-\theta-1}F'(t)
\ge
\theta c_{d,\kappa} .
$$
Integrating from $r$ to $t$ gives
$$
F(t)^{-\theta}-F(r)^{-\theta}
\ge
\theta c_{d,\kappa} (t-r).
$$
Since $F(r)^{-\theta}\ge 0$, we have $F(t)^{-\theta}\ge \theta c_{d,\kappa}(t-r)$. Therefore
$$
F(t)
\le
(\theta c_{d,\kappa}(t-r))^{-1/\theta}.
$$
Substituting $\theta=2/d$ and $c_{d,\kappa}=\kappa/N_d$, we have $F(t)
\le
\left(\frac{2\kappa}{d\,N_d}(t-r)\right)^{-d/2}.$
Hence
$$
\|q_t\|_2
\le
\left(\frac{d\,N_d}{2\kappa}\right)^{d/4}(t-r)^{-d/4}.
$$
Since $p_t=e^{d(t-r)/2}q_t$ and $t-r\le 1$,
$$
\|p_t\|_2
\le
C_{d,\kappa}(t-r)^{-d/4}.
$$
This proves the forward smoothing estimate.

\medskip

\textbf{Step 2: a backward estimate and the $L^\infty$ bound.}

We now obtain an $L^2\to L^\infty$ estimate for the forward evolution.
Let $P_{r,t}$ denote the backward Markov semigroup:
$$
(P_{r,t}g)(x)=\mathbb E[g(Y_t)\mid Y_r=x].
$$
For this step, $g$ is a test function. It need not be a density.

Fix $0\le r<t\le 1$ and first take $g\ge 0$. Define
$$
w_s(x)=P_{t-s,t}g(x),
\qquad
0\le s\le t-r .
$$
Then $w_0=g$, and $w_s$ satisfies the backward equation, written forward in
the variable $s$:
$$
\partial_s w_s
=
\beta(t-s,x)\cdot\nabla w_s
+
\frac12\sum_{i,j=1}^d
[\sigma(t-s)\sigma(t-s)^\top]_{ij}\partial_{ij}w_s .
$$
Here
$$
\beta(t-s,x)\cdot\nabla w_s(x)
=
\sum_{i=1}^d \beta_i(t-s,x)\partial_{x_i}w_s(x).
$$

Repeating the same $L^2$ integration by parts as in Step 1 gives
$$
\frac{d}{ds}\|w_s\|_2^2
\le
d\|w_s\|_2^2-\kappa\|\nabla w_s\|_2^2 .
$$

We also need an $L^1$ bound on $w_s$. Since $g\ge0$ and $P_{t-s,t}$ preserves
nonnegativity, we have $w_s\ge0$. Hence $\|w_s\|_1=\int w_s$. Integrating the
backward equation in $x$ gives
$$
\frac{d}{ds}\|w_s\|_1
=
\int \beta(t-s,x)\cdot\nabla w_s(x)\,dx .
$$
The second-order term integrates to zero. By integration by parts,
$$
\int \beta(t-s,x)\cdot\nabla w_s(x)\,dx
=
-\int(\nabla\cdot\beta)(t-s,x)w_s(x)\,dx .
$$
Since $-\nabla\cdot\beta\le d$ and $w_s\ge0$,
$$
\frac{d}{ds}\|w_s\|_1
\le
d\|w_s\|_1 .
$$
By Gronwall's inequality,
$$
\|w_s\|_1\le e^{ds}\|w_0\|_1=e^{ds}\|g\|_1 .
$$

Define $v_s=e^{-ds/2}w_s$. Exactly as in Step 1,
$$
\frac{d}{ds}\|v_s\|_2^2
\le
-\kappa\|\nabla v_s\|_2^2 .
$$
Moreover,
$$
\|v_s\|_1
=
e^{-ds/2}\|w_s\|_1
\le
e^{-ds/2}e^{ds}\|g\|_1
=
e^{ds/2}\|g\|_1
\le
e^{d/2}\|g\|_1 .
$$
Applying Nash's inequality to $v_s$ gives
$$
\|v_s\|_2^{2+4/d}
\le
N_d\|\nabla v_s\|_2^2\|v_s\|_1^{4/d}
\le
N_d e^2 \|\nabla v_s\|_2^2\|g\|_1^{4/d}.
$$
Therefore,
$$
\|\nabla v_s\|_2^2
\ge
\frac{1}{N_d e^2\|g\|_1^{4/d}}\|v_s\|_2^{2+4/d}.
$$
Repeating the differential inequality argument from Step 1 yields
$$
\|v_s\|_2
\le
C_{d,\kappa}s^{-d/4}\|g\|_1 .
$$
Since $w_s=e^{ds/2}v_s$ and $s\le1$,
$$
\|w_s\|_2
\le
C_{d,\kappa}s^{-d/4}\|g\|_1 .
$$
Taking $s=t-r$ gives
$$
\|P_{r,t}g\|_2
\le
C_{d,\kappa}(t-r)^{-d/4}\|g\|_1 .
$$
For signed $g$, apply this estimate to $g_+$ and $g_-$ and use linearity.
Thus the same estimate holds for all $g\in L^1$.

Now use duality. For \(h\in L^2(\mathbb R^d)\) and
\(g\in C_c^\infty(\mathbb R^d)\) with \(\|g\|_1\le 1\), adjointness gives
\[
\left|
\int_{\mathbb R^d} P_{r,t}^\ast h(x)g(x)\,dx
\right|
=
\left|
\int_{\mathbb R^d} h(x)P_{r,t}g(x)\,dx
\right|.
\]
Hence, by Cauchy--Schwarz and the backward estimate,
$$
\left|
\int P_{r,t}^\ast h\,g
\right|
\le
\|h\|_2\|P_{r,t}g\|_2
\le
C_{d,\kappa}(t-r)^{-d/4}\|h\|_2\|g\|_1 .
$$
Taking the supremum over $\|g\|_1\le1$ gives
$$
\|P_{r,t}^\ast h\|_\infty
\le
C_{d,\kappa}(t-r)^{-d/4}\|h\|_2 .
$$

Apply this over two half intervals. If the initial law at time $0$ has smooth
probability density $\varphi$, then
$$
p_1=P_{1/2,1}^\ast P_{0,1/2}^\ast\varphi .
$$
By Step 1,
$$
\|P_{0,1/2}^\ast\varphi\|_2\le C_{d,\kappa}.
$$
By the $L^2\to L^\infty$ estimate,
$$
\|p_1\|_\infty
=
\|P_{1/2,1}^\ast(P_{0,1/2}^\ast\varphi)\|_\infty
\le
C_{d,\kappa}\|P_{0,1/2}^\ast\varphi\|_2
\le
C_{d,\kappa}.
$$

\medskip

\textbf{Step 3: point-mass initial condition.}

We now keep the smooth drift and bounded volatility assumptions, but start
from $Y_0=y$.

Let $\eta\in C_c^\infty(\mathbb R^d)$ be such that
$\eta\ge0$ and $\int \eta=1$. Define
$$
\varphi_n(x)=n^d\eta(n(x-y)).
$$
Then each $\varphi_n$ is a smooth probability density and
$\varphi_n(x)\,dx\Rightarrow \delta_y$ weakly.

Let $X_n$ have density $\varphi_n$. We may write $X_n=y+Z/n$, where $Z$ has
density $\eta$, so $X_n\to y$ almost surely. Let $Y_t^{X_n}$ and $Y_t^y$ be
solutions driven by the same Brownian motion, started from $X_n$ and $y$.
Because the volatility is state-independent,
$$
d(Y_t^{X_n}-Y_t^y)
=
\left[\beta(t,Y_t^{X_n})-\beta(t,Y_t^y)\right]dt .
$$
Since $\beta(t,\cdot)$ is $1$-Lipschitz,
$$
|Y_1^{X_n}-Y_1^y|\le e|X_n-y|.
$$
Hence $Y_1^{X_n}\to Y_1^y$ almost surely, and therefore in distribution.

For each $n$, Step 2 gives
$$
\mathbb P(Y_1^{X_n}\in E)\le C_{d,\kappa}\lambda_d(E)
$$
for every Borel set $E$. Let $O$ be open. By the portmanteau theorem,
$$
\mathbb P(Y_1^y\in O)
\le
\liminf_{n\to\infty}\mathbb P(Y_1^{X_n}\in O)
\le
C_{d,\kappa}\lambda_d(O).
$$
For a Borel set $E$, choose open $O\supset E$ with
$\lambda_d(O)\le \lambda_d(E)+\varepsilon$. Then
$$
\mathbb P(Y_1^y\in E)
\le
\mathbb P(Y_1^y\in O)
\le
C_{d,\kappa}(\lambda_d(E)+\varepsilon).
$$
Letting $\varepsilon\downarrow0$ gives
$$
\mathbb P(Y_1^y\in E)\le C_{d,\kappa}\lambda_d(E).
$$
Thus the law of $Y_1^y$ has density bounded by $C_{d,\kappa}$.

\medskip

\textbf{Step 4: remove smoothness of $\beta$ and boundedness of $\sigma$.}

We now remove the two auxiliary assumptions used in Steps 1--3. The final
statement only assumes that $\beta(t,\cdot)$ is $1$-Lipschitz and that
$\sigma(t)\sigma(t)^\top\succeq \kappa I_d$.

\medskip

\textbf{Step 4 (a): remove smoothness of $\beta$, keeping $\sigma$ bounded.}

Let $\eta\in C_c^\infty(\mathbb R^d)$ be a standard mollifier:
$\eta\ge0$, $\int_{\mathbb R^d}\eta(z)\,dz=1$, and
$\operatorname{supp}\eta\subset B(0,1)$. Define
$$
\eta_n(z)=n^d\eta(nz).
$$
For each $n$, define the spatial mollification of $\beta$ by
$$
\beta_n(t,x)
=
(\eta_n\ast_x \beta(t,\cdot))(x)
=
\int_{\mathbb R^d}\eta_n(z)\beta(t,x-z)\,dz.
$$
The convolution is only in the spatial variable $x$, not in time.

For each fixed $t$, the function $\beta_n(t,\cdot)$ is smooth. Moreover,
$\beta_n(t,\cdot)$ is still $1$-Lipschitz. Indeed, for $x,x'\in\mathbb R^d$,
$$
\begin{aligned}
|\beta_n(t,x)-\beta_n(t,x')|
&=
\left|
\int_{\mathbb R^d}\eta_n(z)
\left[
\beta(t,x-z)-\beta(t,x'-z)
\right]dz
\right|  \\
&\le
\int_{\mathbb R^d}\eta_n(z)
|\beta(t,x-z)-\beta(t,x'-z)|\,dz  \\
&\le
\int_{\mathbb R^d}\eta_n(z)|x-x'|\,dz  \\
&=
|x-x'|.
\end{aligned}
$$
Thus the Lipschitz constant is not enlarged by mollification.

Also, $\beta_n$ converges uniformly to $\beta$. Since $\beta(t,\cdot)$ is
$1$-Lipschitz uniformly in $t$,
$$
\begin{aligned}
|\beta_n(t,x)-\beta(t,x)|
&=
\left|
\int_{\mathbb R^d}\eta_n(z)
\left[
\beta(t,x-z)-\beta(t,x)
\right]dz
\right| \\
&\le
\int_{\mathbb R^d}\eta_n(z)|z|\,dz.
\end{aligned}
$$
Changing variables $u=nz$ gives
$$
\int_{\mathbb R^d}\eta_n(z)|z|\,dz
=
\frac1n\int_{\mathbb R^d}\eta(u)|u|\,du.
$$
Hence, with
$$
C_\eta:=\int_{\mathbb R^d}\eta(u)|u|\,du,
$$
we have
$$
\sup_{t\in[0,1]}\sup_{x\in\mathbb R^d}
|\beta_n(t,x)-\beta(t,x)|
\le
\frac{C_\eta}{n}.
$$

Let $Y^{(n)}$ solve
$$
dY_t^{(n)}
=
\beta_n(t,Y_t^{(n)})\,dt+\sigma(t)\,dW_t,
\qquad
Y_0^{(n)}=y,
$$
and let $Y$ solve
$$
dY_t
=
\beta(t,Y_t)\,dt+\sigma(t)\,dW_t,
\qquad
Y_0=y.
$$
Drive both equations by the same Brownian motion. Since the volatility is the
same and is state-independent,
$$
Y_t^{(n)}-Y_t
=
\int_0^t
\left[
\beta_n(s,Y_s^{(n)})-\beta(s,Y_s)
\right]ds.
$$
Therefore
$$
\begin{aligned}
|Y_t^{(n)}-Y_t|
&\le
\int_0^t
|\beta_n(s,Y_s^{(n)})-\beta_n(s,Y_s)|\,ds  
+
\int_0^t
|\beta_n(s,Y_s)-\beta(s,Y_s)|\,ds  \\
&\le
\int_0^t |Y_s^{(n)}-Y_s|\,ds
+
\frac{C_\eta t}{n}.
\end{aligned}
$$
By Gronwall's inequality,
$$
\sup_{0\le t\le1}|Y_t^{(n)}-Y_t|
\le
\frac{C_\eta e}{n}.
$$
In particular, $Y_1^{(n)}\to Y_1$ almost surely, hence also in distribution.

By Steps 1--3, applied to the smooth drift $\beta_n$, the endpoint law of
$Y_1^{(n)}$ satisfies
$$
\mathbb P(Y_1^{(n)}\in E)
\le
C_{d,\kappa}\lambda_d(E)
$$
for every Borel set $E$, with the same constant $C_{d,\kappa}$ for all $n$.
Let $O\subset\mathbb R^d$ be open. Since $Y_1^{(n)}\Rightarrow Y_1$, the
portmanteau theorem gives
$$
\mathbb P(Y_1\in O)
\le
\liminf_{n\to\infty}\mathbb P(Y_1^{(n)}\in O)
\le
C_{d,\kappa}\lambda_d(O).
$$
For a Borel set $E$, choose open $O\supset E$ with
$\lambda_d(O)\le\lambda_d(E)+\varepsilon$. Then
$$
\mathbb P(Y_1\in E)
\le
\mathbb P(Y_1\in O)
\le
C_{d,\kappa}(\lambda_d(E)+\varepsilon).
$$
Letting $\varepsilon\downarrow0$ yields
$$
\mathbb P(Y_1\in E)
\le
C_{d,\kappa}\lambda_d(E).
$$
Thus the density bound holds for Lipschitz $\beta$, as long as $\sigma$ is
bounded.

\medskip

\textbf{Step 4 (b): remove boundedness of $\sigma$.}

Now assume only that $\sigma$ is square-integrable on $[0,1]$ and $\sigma(t)\sigma(t)^\top\succeq \kappa I_d$ for a.e. $t\in[0,1]$.

Set
$$
\Gamma(t):=\sigma(t)\sigma(t)^\top.
$$
Then $\Gamma(t)$ is symmetric positive definite for a.e. $t$, and
$\Gamma(t)\succeq \kappa I_d$. Define
$$
O(t):=\Gamma(t)^{-1/2}\sigma(t).
$$
Then $O(t)O(t)^\top
=
\Gamma(t)^{-1/2}\sigma(t)\sigma(t)^\top \Gamma(t)^{-1/2}
=
I_d$, so $O(t)$ is orthogonal for a.e. $t$.

For $N\ge\kappa$, define $\Gamma_N(t)$ by truncating the eigenvalues of $\Gamma(t)$
above at $N$. More explicitly, if
$$
\Gamma(t)=U(t)\operatorname{diag}(\lambda_1(t),\ldots,\lambda_d(t))U(t)^\top,
$$
then
$$
\Gamma_N(t)
=
U(t)\operatorname{diag}(\lambda_1(t)\wedge N,\ldots,\lambda_d(t)\wedge N)
U(t)^\top.
$$
Because each $\lambda_i(t)\ge\kappa$, we have $\Gamma_N(t)\succeq \kappa I_d$. Define
$$
\sigma_N(t):=\Gamma_N(t)^{1/2}O(t).
$$
Then $\sigma_N(t)\sigma_N(t)^\top
=
\Gamma_N(t)^{1/2}O(t)O(t)^\top \Gamma_N(t)^{1/2}
=
\Gamma_N(t)$. Therefore
$$
\sigma_N(t)\sigma_N(t)^\top\succeq \kappa I_d.
$$
Also, $\sigma_N$ is bounded, since all eigenvalues of $\Gamma_N(t)$ are at most
$N$.

\medskip

We next show that $\sigma_N\to\sigma$ in $L^2([0,1])$. Since
$\sigma(t)=\Gamma(t)^{1/2}O(t)$ and $\sigma_N(t)=\Gamma_N(t)^{1/2}O(t)$,
orthogonality of $O(t)$ gives
$$
\|\sigma_N(t)-\sigma(t)\|_{\mathrm{HS}}
=
\|\Gamma_N(t)^{1/2}-\Gamma(t)^{1/2}\|_{\mathrm{HS}}.
$$
If $\lambda_1(t),\ldots,\lambda_d(t)$ are the eigenvalues of $\Gamma(t)$, then
$$
\|\Gamma_N(t)^{1/2}-\Gamma(t)^{1/2}\|_{\mathrm{HS}}^2
=
\sum_{i=1}^d
\left(\sqrt{\lambda_i(t)\wedge N}-\sqrt{\lambda_i(t)}\right)^2.
$$
For each fixed $t$, this converges to $0$ as $N\to\infty$. Moreover,
$$
\left(\sqrt{\lambda_i(t)\wedge N}-\sqrt{\lambda_i(t)}\right)^2
\le
\lambda_i(t).
$$
Hence
$$
\|\Gamma_N(t)^{1/2}-\Gamma(t)^{1/2}\|_{\mathrm{HS}}^2
\le
\sum_{i=1}^d\lambda_i(t)
=
\operatorname{tr}(\Gamma(t))
=
\|\sigma(t)\|_{\mathrm{HS}}^2.
$$
Since $\sigma$ is square-integrable on $[0,1]$, dominated convergence gives
$$
\int_0^1
\|\sigma_N(t)-\sigma(t)\|_{\mathrm{HS}}^2\,dt
\to0.
$$

Let $Y^{(N)}$ solve
$$
dY_t^{(N)}
=
\beta(t,Y_t^{(N)})\,dt+\sigma_N(t)\,dW_t,
\qquad
Y_0^{(N)}=y,
$$
and let $Y$ solve the original equation
$$
dY_t
=
\beta(t,Y_t)\,dt+\sigma(t)\,dW_t,
\qquad
Y_0=y.
$$
Drive both equations by the same Brownian motion. Define
$$
M_t^{(N)}
:=
\int_0^t(\sigma_N(s)-\sigma(s))\,dW_s.
$$
Then
$$
Y_t^{(N)}-Y_t
=
\int_0^t
\left[
\beta(s,Y_s^{(N)})-\beta(s,Y_s)
\right]ds
+
M_t^{(N)}.
$$
Using that $\beta(t,\cdot)$ is $1$-Lipschitz,
$$
|Y_t^{(N)}-Y_t|
\le
\int_0^t |Y_s^{(N)}-Y_s|\,ds
+
\sup_{0\le u\le t}|M_u^{(N)}|.
$$
By Gronwall's inequality,
$$
\sup_{0\le t\le1}|Y_t^{(N)}-Y_t|
\le
e\sup_{0\le t\le1}|M_t^{(N)}|.
$$
By the Burkholder--Davis--Gundy inequality, or by Doob's inequality and
Itô's isometry,
$$
\mathbb E\left[\sup_{0\le t\le1}|M_t^{(N)}|^2\right]
\le
C
\int_0^1
\|\sigma_N(t)-\sigma(t)\|_{\mathrm{HS}}^2\,dt
\to0.
$$
Therefore
$
Y_1^{(N)}\to Y_1
$ in probability, hence in distribution.

By Step 4a, applied to the bounded volatility $\sigma_N$, the endpoint law of
$Y_1^{(N)}$ satisfies
$$
\mathbb P(Y_1^{(N)}\in E)
\le
C_{d,\kappa}\lambda_d(E)
$$
for every Borel set $E$, with a constant independent of $N$. Passing to the
weak limit exactly as above gives
$$
\mathbb P(Y_1\in E)
\le
C_{d,\kappa}\lambda_d(E)
$$
for every Borel set $E$. Thus the endpoint law of $Y_1$ has a density bounded by $C_{d,\kappa}$.
This completes the removal of the auxiliary assumptions.
\end{proof}

\subsection{Proof of Proposition \ref{prop:lb_diffusion}, Assumption (A2): state-independent volatility}\label{sec:proof_prop_34_first}

We first prove the matching localization lower bound in the simpler case where the volatility is state-independent. The proof separates two regimes: short horizon, where the \(K\)-Lipschitz drift cannot concentrate an absolutely continuous initialization too quickly, and longer horizons, where the final \(K^{-1}\)-length time window yields an inevitable \(K^{-1/2}\)-scale fluctuation.

\begin{proof}
All constants below may depend on $m,d,\kappa,\mu_0,R$, but not on
$K,f,\sigma,T$, nor on the locations or pairwise separations of the points
in $\mathcal{Z}$.

\paragraph{Notational note.} Throughout this proof, we will suppress the state argument in the drift $f(t, a, x)$, so that it becomes $f(t, a)$. This is because we do not use the state $x$ in our analysis. Therefore, we clarify any indication of variables $x, z,\cdots \in \mathbb{R}^{d}$ in this section are actions, not states ($d$ is the action dimension).

\medskip

\textbf{Step 1: the case \(T\le K^{-1}\).}

Under (A2), subtracting the additive noise
\(M_t:=\int_0^t \sigma(s)\,dB_s\) reduces the SDE to a random ODE with
globally \(K\)-Lipschitz drift; hence the SDE admits a pathwise unique
strong realization, and for each deterministic initial action \(x\) we may
write \(\bar a_{t,x}\) for the solution driven by the same Brownian path. For two
initial actions \(x,z\), state-independence of \(\sigma(t)\) gives
\[
d(\bar a_{t,x}-\bar a_{t,z})
=
\left[f(t,\bar a_{t,x})-f(t,\bar a_{t,z})\right]dt.
\]
Thus \(t\mapsto \bar a_{t,x}-\bar a_{t,z}\) is absolutely continuous, and
for a.e. \(t\),
\[
\frac{d}{dt}|\bar a_{t,x}-\bar a_{t,z}|^2
=
2\left\langle
\bar a_{t,x}-\bar a_{t,z},
f(t,\bar a_{t,x})-f(t,\bar a_{t,z})
\right\rangle.
\]
Since \(f(t,\cdot)\) is \(K\)-Lipschitz,
\[
\frac{d}{dt}|\bar a_{t,x}-\bar a_{t,z}|^2
\ge
-2K|\bar a_{t,x}-\bar a_{t,z}|^2 .
\]
By Gronwall's inequality,
$
|\bar a_{T,x}-\bar a_{T,z}|
\ge
e^{-KT}|x-z|.
$ Therefore, conditional on the Brownian path, the map
\(x\mapsto \bar a_{T,x}\) is injective, and its inverse on its image is
\(e^{KT}\)-Lipschitz.

Hence, for every Borel set \(E\subset\mathbb R^d\),
$$
\lambda_d\left(\{x:\bar a_{T,x}\in E\}\right)
\le
e^{dKT}\lambda_d(E).
$$
If \(T\le K^{-1}\), then \(e^{dKT}\le e^d\), and therefore
$
\lambda_d\left(\{x:\bar a_{T,x}\in E\}\right)
\le
e^d\lambda_d(E).
$
Since \(\mu_0\ll\lambda_d\), there exists \(\alpha>0\) such that
\[
\lambda_d(E)\le \alpha
\quad\Longrightarrow\quad
\mu_0(E)\le \frac12 .
\]
For \(\delta>0\), define
\[
U_\delta=\bigcup_{j=1}^J B(z_j,\delta).
\]
Since \(\lambda_d(U_\delta)\le J v_d\delta^d\), choose \(\delta>0\),
depending only on \(J,d,\mu_0\), such that
$
e^d J v_d\delta^d\le \alpha .
$
Then, conditional on the Brownian path,
$
\lambda_d\left(\{x:\bar a_{T,x}\in U_\delta\}\right)
\le
e^d\lambda_d(U_\delta)
\le
\alpha.
$
Therefore
\[
\mu_0\left(\{x:\bar a_{T,x}\in U_\delta\}\right)\le\frac12.
\]
Since \(\bar a_0\sim\mu_0\) is independent of \(B\),
$
\mathbb P(\bar a_T\in U_\delta\mid B)\le 1/2.
$
Taking expectations,
$
\mathbb P(\bar a_T\notin U_\delta)\ge1/2.
$
On the event \(\{\bar a_T\notin U_\delta\}\), we have
\(\dist(\bar a_T,\mathcal Z)\ge\delta\), and hence
\[
\dist(\bar a_T,\mathcal Z)^2\wedge R^2
\ge
\delta^2\wedge R^2.
\]
Consequently, as $K\geq 1$,
\[
\mathbb E\left[\dist(\bar a_T,\mathcal Z)^2\wedge R^2\right]
\ge
\frac{\delta^2\wedge R^2}{2}
\ge
\frac{\delta^2\wedge R^2}{2K}.
\]
\medskip

\textbf{Step 2: the case \(T>K^{-1}\).}

Assume \(T>K^{-1}\) and set
$
s=T-1/K.
$
Condition on \(\mathcal F_s\), and write
$
y=\bar a_s.
$
Let \(\phi_t\), for \(t\in[s,T]\), solve the deterministic equation
\[
\frac{d}{dt}\phi_t=f(t,\phi_t),
\qquad
\phi_s=y.
\]
The existence of \((\phi_t)\) follows from the condition that
\(f(\cdot,0)\in L^1_{\mathrm{loc}}\). For \(0\le t\le1\), define
\[
Y_t=\sqrt K\left(\bar a_{s+t/K}-\phi_{s+t/K}\right),
\]
and
\[
W_t=\sqrt K\left(B_{s+t/K}-B_s\right).
\]
Conditionally on \(\mathcal F_s\), \(W\) is a standard Brownian motion. We
compute the equation for \(Y_t\). First,
\[
\bar a_{s+t/K}
=
y+\int_s^{s+t/K}f(r,\bar a_r)\,dr
+
\int_s^{s+t/K}\sigma(r)\,dB_r,
\]
while
\[
\phi_{s+t/K}
=
y+\int_s^{s+t/K}f(r,\phi_r)\,dr.
\]
Subtracting and multiplying by \(\sqrt K\) gives
\[
Y_t
=
\sqrt K
\int_s^{s+t/K}
\left[f(r,\bar a_r)-f(r,\phi_r)\right]dr
+
\sqrt K
\int_s^{s+t/K}\sigma(r)\,dB_r.
\]
Using the change of variables \(r=s+u/K\) and the identity
\[
\bar a_{s+u/K}
=
\phi_{s+u/K}+\frac{Y_u}{\sqrt K},
\]
we obtain
\[
dY_t
=
\beta_K(t,Y_t)\,dt+\sigma(s+t/K)\,dW_t,
\qquad
Y_0=0,
\]
where
\[
\beta_K(t,u)
=
\frac{
f(s+t/K,\phi_{s+t/K}+u/\sqrt K)
-
f(s+t/K,\phi_{s+t/K})
}{\sqrt K}.
\]
The drift \(\beta_K\) is \(1\)-Lipschitz in \(u\). Indeed,
\[
\|\beta_K(t,u)-\beta_K(t,v)\|
\le
\frac1{\sqrt K}
K\left\|\frac{u-v}{\sqrt K}\right\|
=\|u-v\|.
\]
Moreover,
$
\sigma(s+t/K)\sigma(s+t/K)^\top\succeq \kappa I_d.
$ By Lemma~\ref{lem:additive-anti-concentration}, conditionally on
\(\mathcal F_s\), the random variable \(Y_1\) has density bounded by
\(C_{d,\kappa}\). Therefore, for arbitrary \(y_1,\ldots,y_J\in\mathbb R^d\)
and any \(\rho>0\),
\[
\mathbb P\left(
Y_1\in\bigcup_{j=1}^J B(y_j,\rho)
\,\middle|\,
\mathcal F_s
\right)
\le
C_{d,\kappa}J v_d\rho^d.
\]
Choose
$
\rho_0=\left(2C_{d,\kappa}J v_d\right)^{-1/d}\wedge 1.
$
Then
$
C_{d,\kappa}J v_d\rho_0^d\le 1/2.
$
Now set
$
y_j=\sqrt K(z_j-\phi_T)
$ for $j=1,\ldots,J$. Since
$
\bar a_T=\phi_T+Y_{1}/\sqrt{K},
$
we have
\[
\dist(\bar a_T,\mathcal Z)\le\frac{\rho_0}{\sqrt K}
\quad\Longleftrightarrow\quad
Y_1\in\bigcup_{j=1}^J B(y_j,\rho_0).
\]
Therefore
\[
\mathbb P\left(
\dist(\bar a_T,\mathcal Z)\le\frac{\rho_0}{\sqrt K}
\,\middle|\,
\mathcal F_s
\right)
\le
\frac12,
\]
or equivalently
\[
\mathbb P\left(
\dist(\bar a_T,\mathcal Z)>\frac{\rho_0}{\sqrt K}
\,\middle|\,
\mathcal F_s
\right)
\ge
\frac12.
\]

If \(K\ge \rho_0^2/R^2\), then \(\rho_0/\sqrt K\le R\). On the event
$
\dist(\bar a_T,\mathcal Z)>\rho_{0}/\sqrt{K},
$
we have
\[
\dist(\bar a_T,\mathcal Z)^2\wedge R^2
\ge
\frac{\rho_0^2}{K}.
\]
Thus
\[
\mathbb E\left[
\dist(\bar a_T,\mathcal Z)^2\wedge R^2
\,\middle|\,
\mathcal F_s
\right]
\ge
\frac{\rho_0^2}{K}
\mathbb P\left(
\dist(\bar a_T,\mathcal Z)>\frac{\rho_0}{\sqrt K}
\,\middle|\,
\mathcal F_s
\right)
\ge
\frac{\rho_0^2}{2K}.
\]
Taking expectations gives
\[
\mathbb E\left[\dist(\bar a_T,\mathcal Z)^2\wedge R^2\right]
\ge
\frac{\rho_0^2}{2K}.
\]

\medskip

\textbf{Step 3: combine the two regimes.}

Let
\[
K_R=1\vee\frac{\rho_0^2}{R^2},
\]
and define
\[
c_R=
\frac12
\min\left\{
\delta^2\wedge R^2,
\rho_0^2
\right\}.
\]
If \(K\ge K_R\) and \(T\le K^{-1}\), Step 1 gives
\[
\mathbb E\left[\dist(\bar a_T,\mathcal Z)^2\wedge R^2\right]
\ge
\frac{\delta^2\wedge R^2}{2K}
\ge
\frac{c_R}{K}.
\]
If \(K\ge K_R\) and \(T>K^{-1}\), Step 2 gives
\[
\mathbb E\left[\dist(\bar a_T,\mathcal Z)^2\wedge R^2\right]
\ge
\frac{\rho_0^2}{2K}
\ge
\frac{c_R}{K}.
\]
Thus, for every \(T\ge0\),
\[
\mathbb E\left[\dist(\bar a_T,\mathcal Z)^2\wedge R^2\right]
\ge
\frac{c_R}{K}.
\]
Taking the infimum over all admissible \(f,\sigma,T\) proves
\[
\inf_{f,\sigma,T\ge0}
\mathbb E\left[\dist(\bar a_T,\mathcal Z)^2\wedge R^2\right]
\ge
\frac{c_R}{K}.
\]
The proof is complete.
\end{proof}
\subsection{Proof of Theorem 
\ref{thm:lb_diffusion_initial}}\label{sec:proof_thm_34}

We next translate the finite-set localization lower bound into an RL value lower bound. The only additional ingredient is the quadratic finite-well Bellman-gap condition, which converts squared distance from optimal action set into loss in value.

\begin{proof}
By Proposition~\ref{prop:lb_diffusion}, applied for each cardinality
\(J=1,\ldots,M\), there exist constants \(c_{J,R}>0\) and
\(K_{J,R}<\infty\) such that the clipped finite-set localization lower
bound holds for every finite set \(A\subset\mathbb R^d\) with
\(|A|=m\). Set
\[
    c_{M,R}:=\min_{1\le m\le M} c_{m,R},
    \qquad
    K_0:=\max_{1\le m\le M} K_{m,R}.
\]
Then, for every finite set \(A\subset\mathbb R^d\) with
\(1\le |A|\le M\), every admissible \(K\)-Lipschitz drift, every admissible
volatility satisfying the ellipticity bounds, and every terminal time
\(T>0\),
\begin{equation}
\label{eq:clipped-localization-uniform-M}
    \mathbb E\left[
        \operatorname{dist}(\bar a_T,A)^2\wedge R^2
    \right]
    \ge
    \frac{c_{M,R}}{K},
    \qquad K\ge K_0 .
\end{equation}
The constants \(c_{M,R}\) and \(K_0\) depend only on
\((M,R,d,\kappa,\Lambda,\mu_0)\), and not on the locations or pairwise
separations of the points in \(A\).

Fix \(K\ge K_0\) and \(T>0\), and write
\(\pi=\pi_{K,T}\). For each state \(x\),
\[
\begin{aligned}
V^\star(x)-V^\pi(x)
&=
V^\star(x)
-
\int_{\mathbb R^d}
\left[
r(x,a)+\gamma\int_{\mathcal X} V^\pi(y)P(dy\mid x,a)
\right]\pi(da\mid x)  \\
&=
\int_{\mathbb R^d}
\left[
V^\star(x)-Q^\star(x,a)
\right]\pi(da\mid x)  \\
&\quad+
\gamma
\int_{\mathbb R^d}
\int_{\mathcal X}
\left[
V^\star(y)-V^\pi(y)
\right]
P(dy\mid x,a)\pi(da\mid x).
\end{aligned}
\]
Since \(V^\star\ge V^\pi\), the second term is nonnegative. Hence
\begin{equation}
\label{eq:value-gap-one-step-lower}
V^\star(x)-V^\pi(x)
\ge
\int_{\mathbb R^d}
\left[
V^\star(x)-Q^\star(x,a)
\right]\pi(da\mid x).
\end{equation}
Integrating over \(x\sim\rho\) gives
\[
V^\star(\rho)-V^\pi(\rho)
\ge
\int_{\mathcal X}
\int_{\mathbb R^d}
\left[
V^\star(x)-Q^\star(x,a)
\right]\pi(da\mid x)\rho(dx).
\]

By the finite-well Bellman gap assumption in Theorem~\ref{thm:lb_diffusion_initial},
for all \((x,a)\),
\[
    V^\star(x)-Q^\star(x,a)
    \ge
    \lambda_{M,R}
    \left(
        \operatorname{dist}(a,A^\star(x))^2\wedge R^2
    \right).
\]
Therefore,
\begin{equation}
\label{eq:value-gap-to-localization}
V^\star(\rho)-V^\pi(\rho)
\ge
\lambda_{M,R}
\int_{\mathcal X}
\int_{\mathbb R^d}
\left(
    \operatorname{dist}(a,A^\star(x))^2\wedge R^2
\right)
\pi(da\mid x)\rho(dx).
\end{equation}

For each \(x\), the set \(A^\star(x)\) is nonempty and satisfies
\(|A^\star(x)|\le M\). Moreover, under \(\pi_{K,T}\), the action at state
\(x\) is the terminal value of the state-conditioned diffusion. Thus, by
\eqref{eq:clipped-localization-uniform-M} applied with
\(A=A^\star(x)\),
\[
\int_{\mathbb R^d}
\left(
    \operatorname{dist}(a,A^\star(x))^2\wedge R^2
\right)
\pi_{K,T}(da\mid x)
=
\mathbb E\left[
    \operatorname{dist}(\bar a_T^x,A^\star(x))^2\wedge R^2
\right]
\ge
\frac{c_{M,R}}{K},
\]
uniformly in \(x\) and \(T>0\). Substituting this into
\eqref{eq:value-gap-to-localization} yields
\[
V^\star(\rho)-V^{\pi_{K,T}}(\rho)
\ge
\lambda_{M,R}
\int_{\mathcal X}
\frac{c_{M,R}}{K}\rho(dx)
=
\frac{c_{M,R}\lambda_{M,R}}{K}.
\]
The bound is uniform over \(T>0\). Hence
\[
    \inf_{T>0}
    \left(
        V^\star(\rho)-V^{\pi_{K,T}}(\rho)
    \right)
    \ge
    \frac{c_{M,R}\lambda_{M,R}}{K},
\]
which proves the claim.
\end{proof}

\subsection{Analog of Theorem \ref{thm:lb_diffusion_initial} with arbitrary initialization}\label{sec:statement_proof_cor_35}

The previous lower bound uses the absolute continuity of the initial distribution to control very short horizons. This subsection explains why the statement fails for arbitrary, possibly discrete, initialization. And then we show that the lower bound is restored once a burn-in time of order \(K^{-1}\) is imposed.

\subsubsection{Counterexample with discrete initialization}
Indeed, consider a diffusion from source $\bar{a}_{0}=0$ to the target distribution $\delta_{1}$ (point mass at $1$). Let $f(t, a, x)=M$ constant for all $a,t$ and $x$, so that
$$\bar{a}_{t}=Mt+B_{t}$$
%At time $t=1/M$, we have $\bar{a}_{t}\sim \mathcal{N}(1, 1/M)$, so that $\bar{a}_{t}$ can be arbitrary close to $1$ (in expected norm) when $M\to\infty$, with $f$ being $K$-Lipschitz for any $K$. The issues are that (i) $t$ is arbitrary close to $0$ for fixed $K$, so there is degenerate volatility, and (ii) the initial law is discrete.
At time \(t=1/M\), we have \(\bar a_t\sim N(1,1/M)\), and hence
\(\mathbb E|\bar a_t-1|^2=1/M\to0\). The drift is constant in \(a\), hence
\(K\)-Lipschitz for every \(K\). Thus no lower bound uniform over all
\(t>0\) can hold for arbitrary initialization. The obstruction is that the
time horizon can be arbitrarily small, so the accumulated diffusion variance
vanishes, while the drift magnitude is not controlled by the Lipschitz budget.
\subsubsection{Arbitrary action initialization after burn-in}
\begin{cor}[Arbitrary action initialization after burn-in]
\label{cor:arb_init}
For each \(K\geq 1\) and \(T>0\), let \(\pi_{K,T}\) be the Markov
policy whose action law at state \(x\) is the terminal law at time \(T\)
of Eq.~\eqref{eq:sde-policy}, with \(\bar a_0^x\sim P_x\), where
\(P_x\) is an arbitrary probability law on \(\mathbb R^d\), possibly
depending on \(x\), and \(\bar a_0^x\) is independent of the Brownian
motion. Assume that, apart from the initialization law, the conditions in
Proposition~\ref{prop:lb_diffusion} hold for every \(x\), with
\((\kappa,\Lambda)\) independent of \((t,x)\).

Fix \(M\geq 1\) and \(R>0\). Suppose that, for every \(x\in\mathcal X\),
$
    |A^\star(x)|\leq M.
$
Suppose moreover that there exists a constant
\(\lambda_{M,R}>0\), independent of \(x\), such that
\[
    V^\star(x)-Q^\star(x,a)
    \geq
    \lambda_{M,R}
    \left(
        \dist(a,A^\star(x))^2\wedge R^2
    \right)
\]
for all \((x,a)\). Then there exist constants
\(c_{M,R}>0\) and \(K_0<\infty\), depending only on %\renyuan{ shall we remove ``burn'' from the superscript? it looks very chatgpt. change other similar notations as well.}{\color{brown} (Jiacheng: It does not bother me too much, but we can definitely change to a more delicate notation with no such confusion)}
\(M,R,d,\kappa,\Lambda\), such that for every \(K\geq K_0\),
\[
    \inf_{T\geq K^{-1}}
    \left(
        V^\star(\rho)-V^{\pi_{K,T}}(\rho)
    \right)
    \geq
    \frac{c_{M,R}\lambda_{M,R}}{K}.
\]
Consequently, for every burn-in sequence \(t_0(K)\) satisfying
\(t_0(K)\geq K^{-1}\) for all sufficiently large \(K\),
$$
    \inf_{T\geq t_0(K)}
    \left(
        V^\star(\rho)-V^{\pi_{K,T}}(\rho)
    \right)
    \geq
    \frac{c_{M,R}\lambda_{M,R}}{K}.
$$
\end{cor}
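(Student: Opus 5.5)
The plan is to reduce the corollary to the final-window step of the proof of Proposition~\ref{prop:lb_diffusion}, which never uses the initial law. Only the short-horizon step does. After that, the value-gap argument of Theorem~\ref{thm:lb_diffusion_initial} applies verbatim.

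\textbf{Localization bound.} Fix $x$ and $T\ge K^{-1}$, and set $s=T-1/K\ge 0$. Condition on $\mathcal F_s$, which now contains the arbitrary initial action $\bar a_0^x\sim P_x$. Let $\phi$ be the deterministic flow started from $y=\bar a^x_s$, and define the rescaled residual $Y_t=\sqrt K(\bar a_{s+t/K}-\phi_{s+t/K})$ for $t\in[0,1]$, exactly as in Step~2 of Appendix~\ref{sec:proof_prop_34_first}.

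\begin{itemize}
\item Under (A2), Lemma~\ref{lem:additive-anti-concentration} applies conditionally on $\mathcal F_s$. The conditional law of $Y_1$ then has density bounded by $C_{d,\kappa}$, uniformly in $y$.
\item Under Assumption~\ref{ass:vol_bdd}, I would invoke the corresponding uniform density bound for the rescaled state-dependent window used in Appendix~\ref{sec:proof_prop_34_second}. This bound depends on $d,\kappa,\Lambda$ and again holds pointwise in the starting point.
\end{itemize}

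Since these bounds hold for every deterministic start $Y_0=0$ with arbitrary $y$, nothing about $P_x$ enters. Choosing $\rho_0$ with $C\,M v_d\rho_0^d\le 1/2$ gives
\[
\mathbb P\bigl(\dist(\bar a^x_T,A^\star(x))>\rho_0K^{-1/2}\,\big|\,\mathcal F_s\bigr)\ge \tfrac12 .
\]
Hence $\mathbb E[\dist(\bar a^x_T,A^\star(x))^2\wedge R^2]\ge \rho_0^2/(2K)$ whenever $K\ge \rho_0^2/R^2$. Taking the minimum over cardinalities $1\le J\le M$ makes $c_{M,R}$ and $K_0$ depend only on $M,R,d,\kappa,\Lambda$. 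They no longer depend on $\mu_0$, because the $\delta$ from Step~1, the only $\mu_0$-dependent constant, is no longer used.

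\textbf{Value gap.} I would repeat the performance-difference argument of Appendix~\ref{sec:proof_thm_34}. Because $V^\star\ge V^\pi$, the one-step inequality \eqref{eq:value-gap-one-step-lower} holds. The finite-well condition converts it into $\lambda_{M,R}$ times the clipped localization error. The previous paragraph bounds that error below by $c_{M,R}/K$, uniformly in $x$ and $T\ge K^{-1}$. Integrating against $\rho$ gives the first claim. The second claim follows immediately: when $t_0(K)\ge K^{-1}$, the set $\{T\ge t_0(K)\}$ is contained in $\{T\ge K^{-1}\}$, so the infimum over the smaller set is at least as large.

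\textbf{Main obstacle.} The delicate step is making sure the state-dependent density bound from Appendix~\ref{sec:proof_prop_34_second} is uniform over deterministic starting points, and not obtained through smoothing of an absolutely continuous initial law. If it instead relies on a smooth initial density, I would follow Step~3 of Lemma~\ref{lem:additive-anti-concentration}. That means approximating the point mass $\delta_y$ by mollified densities and passing to the limit via portmanteau. This requires stability of the SDE in the initial condition, which holds under the Lipschitz volatility assumption or by weak uniqueness.

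A secondary point is measurability of conditioning on $\mathcal F_s$ together with a random $\bar a_0^x$. This is handled by independence of $\bar a_0^x$ and $B$, which makes the shifted increments a Brownian motion independent of $\mathcal F_s$.
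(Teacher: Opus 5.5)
Your overall plan matches the paper's proof. You drop the short-horizon step (the only place $\mu_0$ enters), condition on $\mathcal F_{T-1/K}$, rescale the final window, and then rerun the one-step performance-difference argument of Theorem~\ref{thm:lb_diffusion_initial}. Your (A2) branch is fine, because Lemma~\ref{lem:additive-anti-concentration} is already stated for a deterministic start.

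\textbf{The gap is in the branch under Assumption~\ref{ass:vol_bdd}.} There is no ``uniform density bound for the rescaled state-dependent window'' in Appendix~\ref{sec:proof_prop_34_second}. Your choice of $\rho_0$ with $C\,Mv_d\rho_0^d\le 1/2$ needs exactly such an $L^\infty$ bound on the conditional density of $Y_1$. The Nash/energy argument behind Lemma~\ref{lem:additive-anti-concentration} uses that $\sigma\sigma^\top$ does not depend on the space variable, both in the integration by parts of the Fokker--Planck energy estimate and in the noise cancellation of the coupling in Steps 3--4. For merely measurable, state-dependent, uniformly elliptic volatility, Krylov-type estimates control space--time $L^{d+1}$ norms, not fixed-time densities. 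No endpoint density bound depending only on $(d,\kappa,\Lambda)$ is available in that generality. So this case does not close as you describe. The mollification/portmanteau step you flag as the main obstacle does not address the issue, and it also leans on a Lipschitz volatility that is not among the hypotheses.

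\textbf{The missing idea is the one the paper uses: replace the anti-concentration upper bound by a hitting lower bound.}
\begin{enumerate}
\item Apply Lemma~\ref{lem:krylov-lower-local} conditionally on $\mathcal F_s$ to $Y$. Its hypotheses hold: $\beta_K(\ell,\cdot)$ is $1$-Lipschitz with $\beta_K(\ell,0)=0$, so $\|\beta_K\|\le 2$ on $B_2$, and the covariance lies between $\kappa I_d$ and $\Lambda I_d$.
\item Fix $M+1$ points $q_0,\dots,q_M\in B_1$ with separation $\Delta$, and set $r_0=\Delta/8\wedge 1/4$.
\item By pigeonhole, pick $q_i$ such that $B(q_i,r_0)$ is disjoint from $\bigcup_j B(y_j,r_0)$, where $y_j=\sqrt K(z_j-\phi_T)$.
\item This gives $\mathbb P\bigl(\dist(\bar a_T,A^\star(x))>r_0K^{-1/2}\,\big|\,\mathcal F_s\bigr)\ge c_0:=N^{-1}(r_0/2)^\nu$, with $c_0$ depending only on $(M,d,\kappa,\Lambda)$. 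This is a small constant rather than $1/2$, which is all you need.
\item The case $d=1$ is handled by adjoining an independent Brownian coordinate and working in $\mathbb R^2$.
\end{enumerate}
Since Lemma~\ref{lem:krylov-lower-local} is stated for the deterministic start $Y_0=0$, uniformity over $\bar a_s$, and hence over $P_x$, is automatic. With this substitution, the rest of your argument goes through: the minimum over cardinalities $J\le M$, the absence of $\mu_0$ in the constants, the value-gap step, and the $t_0(K)$ consequence.
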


\begin{proof}
The only difference from Theorem~\ref{thm:lb_diffusion_initial} is that the
action diffusion may start from an arbitrary law \(P_x\), so the short-time
part of Proposition~\ref{prop:lb_diffusion} cannot be used. Instead, we use
only the final-window part of the proof of Proposition~\ref{prop:lb_diffusion}.

We first record the resulting burn-in localization bound. There exist
constants \(c_{M,R}>0\) and \(K_0<\infty\), depending only on
\(M,R,d,\kappa,\Lambda\), such that the following holds. For every finite set
\(A\subset\mathbb R^d\) with \(1\leq |A|\leq M\), every admissible
\(K\)-Lipschitz drift, every admissible volatility satisfying the ellipticity
bounds, every initial law, and every terminal time \(T\geq K^{-1}\),
\[
    \mathbb E\left[
        \dist(\bar a_T,A)^2\wedge R^2
    \right]
    \geq
    \frac{c_{M,R}}{K}.
\]
Indeed, if \(T\geq K^{-1}\), set \(s=T-K^{-1}\) and condition on
\(\mathcal F_s\). The rescaling argument in the final window
\([s,s+K^{-1}]\) gives, uniformly over the random value \(\bar a_s\) and
hence uniformly over the initial law, a probability bounded below of being at
distance at least order \(K^{-1/2}\) from any \(M\)-point target set. This is
exactly the conditional argument in the final-window step of
Proposition~\ref{prop:lb_diffusion}. The pigeonhole/Krylov step supplies a
constant probability \(c_0>0\) uniformly over the target locations. Enlarging
\(K_0\) so that \(r_0/\sqrt K\leq R\) gives the clipped lower bound.

Fix \(K\geq K_0\) and \(T\geq K^{-1}\). Write \(\pi=\pi_{K,T}\). For every
state \(x\),
\[
\begin{aligned}
V^\star(x)-V^\pi(x)
&=
V^\star(x)
-
\int_{\mathbb R^d}
\left[
r(x,a)+\gamma\int_{\mathcal X} V^\pi(y)P(dy\mid x,a)
\right]\pi(da\mid x) \\
&=
\int_{\mathbb R^d}
\left[
V^\star(x)-Q^\star(x,a)
\right]\pi(da\mid x) \\
&\quad+
\gamma
\int_{\mathbb R^d}
\int_{\mathcal X}
\left[
V^\star(y)-V^\pi(y)
\right]
P(dy\mid x,a)\pi(da\mid x).
\end{aligned}
\]
Since \(V^\star\geq V^\pi\), the second term is nonnegative. Hence
\[
V^\star(x)-V^\pi(x)
\geq
\int_{\mathbb R^d}
\left[
V^\star(x)-Q^\star(x,a)
\right]\pi(da\mid x).
\]
Integrating over \(x\sim\rho\) gives
\[
V^\star(\rho)-V^\pi(\rho)
\geq
\int_{\mathcal X}
\int_{\mathbb R^d}
\left[
V^\star(x)-Q^\star(x,a)
\right]\pi(da\mid x)\rho(dx).
\]

By the clipped quadratic-gap assumption, there exists
\(\lambda_{M,R}>0\), independent of \(x\), such that
\[
    V^\star(x)-Q^\star(x,a)
    \geq
    \lambda_{M,R}
    \left(
        \dist(a,A^\star(x))^2\wedge R^2
    \right)
\]
for all \((x,a)\). Therefore,
\[
V^\star(\rho)-V^\pi(\rho)
\geq
\lambda_{M,R}
\int_{\mathcal X}
\int_{\mathbb R^d}
\left(
    \dist(a,A^\star(x))^2\wedge R^2
\right)
\pi(da\mid x)\rho(dx).
\]

For every \(x\), the set \(A^\star(x)\) is nonempty and satisfies
\(|A^\star(x)|\leq M\). Moreover, the action law \(\pi(\cdot\mid x)\) is
the terminal law at time \(T\geq K^{-1}\) of an admissible diffusion started
from the arbitrary law \(P_x\). Therefore the burn-in localization bound gives
\[
\int_{\mathbb R^d}
\left(
    \dist(a,A^\star(x))^2\wedge R^2
\right)
\pi(da\mid x)
\geq
\frac{c_{M,R}}{K}
\]
uniformly in \(x\). Substituting,
\[
V^\star(\rho)-V^\pi(\rho)
\geq
\lambda_{M,R}
\int_{\mathcal X}
\frac{c_{M,R}}{K}\rho(dx)
=
\frac{c_{M,R}\lambda_{M,R}}{K}.
\]
The bound is uniform over \(T\geq K^{-1}\), so
\[
    \inf_{T\geq K^{-1}}
    \left(
        V^\star(\rho)-V^{\pi_{K,T}}(\rho)
    \right)
    \geq
    \frac{c_{M,R}\lambda_{M,R}}{K}.
\]
Consequently, if \(t_0(K)\geq K^{-1}\) for all sufficiently large \(K\), then
for all sufficiently large \(K\),
\[
    \inf_{T\geq t_0(K)}
    \left(
        V^\star(\rho)-V^{\pi_{K,T}}(\rho)
    \right)
    \geq
    \frac{c_{M,R}\lambda_{M,R}}{K},
\]
which proves the claim.
\end{proof}

\subsection{Lower bound with state-dependent volatility}

We now prepare the lower-bound proof for state-dependent volatility. Since the density argument used in the state-independent case no longer applies here directly, we replace it with a local Krylov estimate that gives a uniform probability of reaching small balls while remaining in a controlled region.

\begin{lem}[Local Krylov lower estimate]\label{lem:krylov-lower-local}
Let $d\ge 2$ and fix $0<\lambda\le \Lambda<\infty$. Let
\[
X_u=\int_0^u \alpha_r\,dW_r+\int_0^u \beta_r\,dr,
\qquad 0\le u\le 1,
\]
where $X_0=0$, the coefficients are progressively measurable, and
\[
\lambda I_d \preceq \alpha_r\alpha_r^\top \preceq \Lambda I_d,
\qquad 0\le r\le 1.
\]
Define the finite-horizon exit time
$
\tau_2^X:=\inf\{u\in[0,1]:X_u\notin B_2\}$, with $\inf\varnothing:=\infty,
$
and the no-exit event
\[
\mathcal E_2^X:=\{X_u\in B_2\text{ for all }0\le u\le 1\}
=\{\tau_2^X>1\}.
\]
Assume that
$
\|\beta_r\|\le 2$ for $0\le r\le1$ on $\{r<\tau_2^X\}$.
Then there exist constants $N<\infty$ and $\nu>0$, depending only on
$d,\lambda,\Lambda$, such that for every $q\in B_{1}(0)$ and every
$0<\rho\le1$,
\[
\mathbb P\bigl(X_1\in B_{2\rho}(q),\ \mathcal E_2^X\bigr)
\ge N^{-1}\rho^\nu .
\]
\end{lem}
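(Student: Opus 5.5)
The plan is to prove a Krylov--Safonov-type lower bound for the hitting probability of a small ball. Two classical ingredients carry it: a Krylov--Safonov "growth lemma", and a chaining argument along a chain of geometrically shrinking balls. The constraints $\|\beta\|\le 2$ (valid only before exit from $B_2$) and the uniform ellipticity of $\alpha\alpha^\top$ are exactly the inputs needed in the Krylov--Safonov theory for nondivergence-form operators with measurable coefficients. We will not use the earlier Fokker--Planck lemmas (Lemma~\ref{lem:c2}, Lemma~\ref{lem:additive-anti-concentration}), since here $\alpha$ is merely progressively measurable.

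\emph{Step 1 (basic one-scale estimate).} We first show that there is $p_0>0$, depending only on $d,\lambda,\Lambda$, with the following property. For every starting point $y$, every ball $B_r(z)$ with $B_{4r}(z)\subset B_2$ and $y\in B_{2r}(z)$, and every time budget $t\in[r^2,\,c r^2]$, the process started at $y$ enters $B_{r/2}(z)$ before time $t$ without leaving $B_{4r}(z)$, with probability at least $p_0$. By Brownian scaling $u\mapsto r^{-1}X_{r^2u}$, the drift bound becomes $\|\beta\|\le 2r\le 4$ and ellipticity is unchanged, so it suffices to treat $r=1$.

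At $r=1$ we use a barrier function of the form $\psi(x)=\|x-z\|^{-p}$ with $p$ large. Choosing $p$ large relative to $\Lambda/\lambda$ and the drift bound makes $\mathcal L\psi:=\tfrac12\operatorname{tr}(\alpha\alpha^\top D^2\psi)+\beta\cdot\nabla\psi\ge0$ on the annulus $B_4\setminus B_{1/2}$; this is the standard Pucci extremal computation. Optional stopping for $\psi(X)$ then shows that the probability of hitting $B_{1/2}$ before exiting $B_4$ is bounded below. A separate bound is needed to show that this hitting happens before the time budget expires. For that we use the exit-time estimate $\mathbb E[\tau]\le C$, obtained from the supersolution $\|x\|^2$, together with Markov's inequality.

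To stay within the allowed time $[0,1]$ and inside $B_2$, we fix $c$ and assemble everything with suitable constants. At the final stage we also need the process to \emph{remain} in a small ball for the leftover time. This follows from a similar supermartingale/exit-time bound: for time of order $\rho^2$, the probability of staying in $B_{2\rho}$ is $\ge p_0$.

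\emph{Step 2 (chaining).} Given $q\in B_1$ and $\rho\le1$, we build a chain from $0$ to $q$. We use $O(1)$ steps at scale $1/4$ to move from $0$ to $B_{1/8}(q)$ while staying in $B_2$. Then we use $k\approx\log_2(1/\rho)$ dyadic steps with radii $r_j=2^{-j}/8$ centered at $q$. Each step costs at most time $cr_j^2$, so the total time is $\sum_j cr_j^2\le 1/2$, and any leftover time is absorbed by the "stay" estimate at the last scale. The strong Markov property applies to the pair formed by $X$ and the filtration (the coefficients are adapted, but the estimates of Step 1 hold uniformly over all admissible coefficient processes, so conditioning at stopping times is legitimate). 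This gives
\[
\mathbb P\bigl(X_1\in B_{2\rho}(q),\ \mathcal E_2^X\bigr)\ge p_0^{C+k}\ge N^{-1}\rho^{\nu},\qquad \nu=\log_2(1/p_0).
\]

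\emph{Main obstacle.} The delicate point is the timing: we need to land exactly at time $1$ rather than merely hit the ball at some earlier time. We will handle this by letting each step take at most its allotted time and then, at each scale, using the "stay in $B_{r_j}$ for time $\le r_j^2$" estimate to waste time as needed. The $d\ge2$ restriction is what allows the barrier $\|x-z\|^{-p}$ to be a subsolution; otherwise we would use a one-dimensional exponential barrier, so this is not essential.
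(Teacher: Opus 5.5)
Your route is genuinely different from the paper's. The paper does not prove this estimate by hand. It modifies the coefficients after exit from $B_2$ and after time $1$ (zero drift, volatility $\sqrt{\lambda}I_d$), and passes to the symmetric root $(\widehat\alpha\widehat\alpha^\top)^{1/2}$ through the Brownian motion $\int S_r^{-1}\widehat\alpha_r\,dW_r$. It then checks that $h(t,z)=2\,\mathbf 1_{[0,1]}(t)\mathbf 1_{B_2}(z)$ satisfies Krylov's $L_{d+1}$ drift-domination assumptions, and quotes Theorem~2.1 of \cite{Krylov2021Ldplus1} with radius $2$ and time $1$. That is short, but it rests on a deep result designed for unbounded drifts. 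It also inherits that result's $d\ge2$ hypothesis, which is why the paper later has to embed the scalar case into $\mathbb R^2$. Your argument uses only what is actually present here: a drift bounded before exit, and a ball rather than a general set as target. So elliptic barriers, exit-time bounds and chaining suffice, and no ABP-type measure estimate is needed. Replacing the strong Markov property by conditional optional stopping at stopping times is legitimate for It\^o processes with progressively measurable coefficients. Your argument also covers $d=1$ directly, since there $\mathcal L|x-z|^{-p}\ge p|x-z|^{-p-2}\bigl[\tfrac{\lambda(p+1)}{2}-|x-z|\,\|\beta\|\bigr]$. So your remark that $d\ge2$ is what makes the barrier a subsolution is inaccurate; the restriction comes only from Krylov's theorem.

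The price is the timing bookkeeping, and three points of your sketch need repair.

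(i) Barrier plus Markov's inequality gives hitting of $B_{r/2}(z)$ before a budget $cr^2$ only for $c$ \emph{large}, so that $\mathbb E[\tau]/c$ falls below the barrier probability. It does not give this for every $t\in[r^2,cr^2]$. Also, $\|x-z\|^2$ controls $\mathbb E[\tau]$ as a \emph{sub}solution, and only when the rescaled drift $2r$ is small compared with $d\lambda$.

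(ii) With $c$ large, $O(1)$ steps at scale $1/4$ plus a dyadic tail of cost $c/48$ need not fit in $[0,1]$. Do the coarse transport at a scale $\eta\asymp c^{-1}$ with $O(1/\eta)$ steps instead.

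(iii) The slack to be wasted at scale $r_j$ can be as large as $cr_j^2$, whereas BDG or exit-time bounds only give survival for time $\epsilon r_j^2$ with $\epsilon$ small. A clean fix uses $g=(r^2-\|x-q\|^2)^k$ with $k$ large (depending on $d,\lambda,\Lambda$). It satisfies $\mathcal L g+\mu g\ge0$ on $B_r(q)$ with $\mu\asymp r^{-2}$, so $e^{\mu t}g(X_t)$ is a submartingale up to exit. Hence, from $B_{r/2}(q)$, the process does not leave $B_r(q)$ during a time interval of length $cr^2$ with probability at least $e^{-\mu cr^2}(3/4)^k$, a constant.

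With deadlines fixed backwards from $T_k=1$ and the residual slack absorbed at the coarse scale, the product of $O(1)+\log_2(1/\rho)$ constant-probability steps gives $N^{-1}\rho^\nu$.
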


\begin{proof}
We reduce the claim to Theorem~2.1 of Krylov~\cite{Krylov2021Ldplus1}. First define an auxiliary process on $[0,\infty)$ by stopping and modifying
the coefficients after exit from $B_2$ and after time $1$. Set
\[
\widehat\beta_r
:=
\begin{cases}
\beta_r\,\mathbf 1_{\{r<\tau_2^X\}}, & 0\le r\le 1,\\
0, & r>1,
\end{cases}
\]
and
\[
\widehat\alpha_r
:=
\begin{cases}
\alpha_r, & 0\le r\le 1 \text{ and } r<\tau_2^X,\\
\sqrt{\lambda}\,I_d, & \text{otherwise}.
\end{cases}
\]
Define
\[
\widehat X_u
:=
\int_0^u \widehat\alpha_r\,dW_r
+
\int_0^u \widehat\beta_r\,dr,
\qquad u\ge0.
\]
Then $\widehat X$ agrees with $X$ up to the first exit of $X$ from $B_2$
on the time interval $[0,1]$. If
\[
\widehat\tau_2:=\inf\{u\ge0:\widehat X_u\notin B_2\},
\]
then, for every Borel set $A\subseteq\mathbb R^d$,
$
\{\widehat X_1\in A,\ \widehat\tau_2>1\}
=
\{X_1\in A,\ \mathcal E_2^X\}.
$ Moreover,
$
\lambda I_d
\preceq
\widehat\alpha_r\widehat\alpha_r^\top
\preceq
\Lambda I_d$, $r\ge0,
$
and
$
\|\widehat\beta_r\|
\le
2\,\mathbf 1_{\{0\le r\le1\}}\mathbf 1_{\{\widehat X_r\in B_2\}} .
$

Next reduce to symmetric volatility. Set
$
a_r:=\widehat\alpha_r\widehat\alpha_r^\top$ and $S_r:=a_r^{1/2}$.
Then $S_r$ satisfies
$
\delta_{0} I_{d}\preceq S_{r}\preceq \delta_{0}^{-1}I_{d}$ with $
\delta_0:=\min\{\sqrt{\lambda},\Lambda^{-1/2},1/2\}.
$
Furthermore, the process
\[
\widetilde W_u
:=
\int_0^u S_r^{-1}\widehat\alpha_r\,dW_r
\]
is a Brownian motion. By associativity of Ito integrals, $\widehat X$ is thus in Krylov's diffusion-type form with symmetric
volatility with eigenvalues in $[\delta_{0}, \delta_{0}^{-1}]$.

It remains to verify Krylov's drift domination condition for the particular
function
\[
h(t,z):=
2\,\mathbf 1_{\{0\le t\le1\}}\mathbf 1_{\{z\in B_2\}} .
\]
We choose the admissible exponents
$
p_0=q_0=d+1,
$
so that Krylov's mixed norm $L_{p_0,q_0}$ coincides with the usual
space-time $L^{d+1}$ norm. Then
$
\|\widehat\beta_t\|\le h(t,\widehat X_t).
$ For a parabolic cylinder
$
C_\ell(t_0,z_0):=[t_0,t_0+\ell^2)\times B_\ell(z_0),
$
we have
\[
\begin{aligned}
\|h\|_{L^{d+1}(C_\ell(t_0,z_0))}^{d+1}
&=
\int_{C_\ell(t_0,z_0)}
2^{d+1}
\mathbf 1_{\{0\le t\le1\}}\mathbf 1_{\{z\in B_2\}}
\,dz\,dt  \\
&=
2^{d+1}
\left|[t_0,t_0+\ell^2)\cap[0,1]\right|
\left|B_\ell(z_0)\cap B_2\right|  \\
&\le
2^{d+1}v_d(\ell^2\wedge1)(\ell^d\wedge 2^d).
\end{aligned}
\]
where $v_d:=\lambda_d(B_1(0))=\pi^{d/2}/\Gamma(d/2+1)$ denotes the
Lebesgue volume of the unit ball in $\mathbb R^d$. If $0<\ell\le1$, this is bounded by $C_d\ell^{d+2}$; if
$\ell\ge1$, it is bounded by $C_d\le C_d\ell$. Hence, for all
$\ell>0$,
\[
\|h\|_{L^{d+1}(C_\ell(t_0,z_0))}^{d+1}
\le
C_d(\ell^{d+1}\wedge1)\ell .
\]
Thus Krylov's Assumption~2.1 holds with
$
\bar b_\ell:=C_d(\ell^{d+1}\wedge1).
$ The function $\bar b_\ell$ is bounded, nondecreasing, and satisfies
$\bar b_\ell\downarrow0$ as $\ell\downarrow0$. Therefore Krylov's
Assumption~2.2 holds at some sufficiently small scale
$R_\ast>0$ such that
$
\bar N\,\bar b_{R_\ast}<1.
$ depending only on $d$ and $\delta_0$.
In Krylov's notation, fix $\overline R=2$.

Now apply Krylov's Theorem 2.1 with application radius
$
\mathsf R=2$, $\kappa=\frac12$,
$\eta=\frac14$, and $t=1$.
The time condition is
$
\eta\mathsf R^2=1\le t=1\le \eta^{-1}\mathsf R^2=16.
$
Since $0,q\in B_1(0)$ with $\kappa {\mathsf R}=1$, Krylov's estimate gives
\[
\mathbb P\bigl(
\widehat X_1\in B_{\rho\mathsf R}(q),\
\widehat\tau_2>1
\bigr)
\ge N^{-1}\rho^\nu .
\]
Because $\rho\mathsf R=2\rho$ and
$
\{\widehat X_1\in B_{2\rho}(q),\widehat\tau_2>1\}
=
\{X_1\in B_{2\rho}(q),\mathcal E_2^X\},
$ we obtain
\[
\mathbb P\bigl(X_1\in B_{2\rho}(q),\ \mathcal E_2^X\bigr)
\ge N^{-1}\rho^\nu .
\]
All constants depend only on $d,\lambda,\Lambda$. Hence we are done.
\end{proof}
\subsection{Proof of Proposition \ref{prop:lb_diffusion}, assumption (a): state-dependent volatility}\label{sec:proof_prop_34_second}

We now complete the proof of Proposition \ref{prop:lb_diffusion} under the full state-dependent volatility assumption. The proof follows the same short-time vs. final-window decomposition as in the state-independent case, but the final concentration step is obtained from the local Krylov of Lemma \ref{lem:krylov-lower-local}.

\begin{proof}
All constants below may depend on $m,d,\kappa,\Lambda,\mu_0,R$, but not on
$K,f,\sigma,t$, nor on the locations or pairwise separations of the points
in $\mathcal{Z}$.
\paragraph{Notational note.} Throughout this proof, we will suppress the state argument in the drift $f(t, a, x)$ and $\sigma(t, a, x)$, so that they become $f(t, a)$ and $\sigma(t, a)$, respectively. This is because we do not use the state $x$ in our analysis. Therefore, we clarify any indication of variables $x, z,\cdots \in \mathbb{R}^{d}$ in this section are actions, not states ($d$ is the action dimension).\\

\textbf{Step 1: the case $t\le K^{-1}$.}
For $\delta>0$, define
$$
U_\delta:=\bigcup_{j=1}^m B(z_j,\delta).
$$
Let $\phi_t(x)$ be the deterministic flow generated by $f$:
$$
\frac{d}{dt}\phi_t(x)=f(t,\phi_t(x)),
\qquad
\phi_0(x)=x.
$$
Since $f(t,\cdot)$ is $K$-Lipschitz, for any $x,z\in\mathbb R^d$,
$$
\frac{d}{dt}|\phi_t(x)-\phi_t(z)|^2
\ge
-2K|\phi_t(x)-\phi_t(z)|^2.
$$
Therefore
$$
|\phi_t(x)-\phi_t(z)|\ge e^{-Kt}|x-z|.
$$
Thus $\phi_t$ is injective, and $\phi_t^{-1}$ is $e^{Kt}$-Lipschitz on
$\phi_t(\mathbb R^d)$. Hence, for every Borel set $E\subset\mathbb R^d$
and every $t\le K^{-1}$,
$$
\lambda_d(\phi_t^{-1}(E))
\le
e^{dKt}\lambda_d(E)
\le
e^d\lambda_d(E).
$$

Because $\mu_0\ll\lambda_d$, there exists $\alpha>0$ such that
$$
\lambda_d(E)\le \alpha
\quad\Longrightarrow\quad
\mu_0(E)\le \frac14.
$$
Choose $\delta>0$, depending only on $m,d,\mu_0$, such that
$
e^d\lambda_d(U_{2\delta})\le \alpha .
$
This is possible because
$$
\lambda_d(U_{2\delta})
\le
m v_d(2\delta)^d .
$$
Then, for every $t\le K^{-1}$,
$$
\mathbb P\{\phi_t(\bar a_0)\in U_{2\delta}\}
=
\mu_0(\phi_t^{-1}(U_{2\delta}))
\le
\frac14.
$$

Couple the diffusion and deterministic flow from the same initial point and set
$$
D_t:=\bar a_t-\phi_t(\bar a_0).
$$
Then
$$
dD_t
=
\{f(t,\bar a_t)-f(t,\phi_t(\bar a_0))\}\,dt
+
\sigma(t,\bar a_t)\,dB_t.
$$
By Itô's formula, the Lipschitz bound on $f$, and the upper volatility bound,
$$
\frac{d}{dt}\mathbb E\|D_t\|^2
\le
2K\mathbb E\|D_t\|^2+d\Lambda.
$$
Since $D_0=0$, Gronwall's inequality gives, for $t\le K^{-1}$,
$$
\mathbb E\|D_t\|^2
\le
\frac{d\Lambda}{2K}(e^{2Kt}-1)
\le
\frac{d\Lambda}{2K}(e^2-1).
$$
Thus, for all sufficiently large $K$,
$$
\mathbb P\{\|D_t\|\ge \delta\}
\le
\frac{\mathbb E\|D_t\|^2}{\delta^2}
\le
\frac14.
$$
If $\bar a_t\in U_\delta$ and $\|D_t\|<\delta$, then
$
\phi_t(\bar a_0)\in U_{2\delta}.
$
Hence
$
\mathbb P\{\bar a_t\in U_\delta\}
\le 1/4+1/4
=
1/2.
$
On $U_\delta^c$, we have
$$
\dist(x,\mathcal{Z})^2\wedge R^2
\ge
\delta^2\wedge R^2.
$$
Therefore, for every $t\le K^{-1}$ and all sufficiently large $K$,
$$
\mathbb E\left[\dist(\bar a_t, \mathcal{Z})^2\wedge R^2\right]
\ge
(\delta^2\wedge R^2)\,
\mathbb P\{\bar a_t\notin U_\delta\}
\ge
\frac{\delta^2\wedge R^2}{2}.
$$
In particular, for $K\ge 1$,
$$
\mathbb E\left[\dist(\bar a_t,\mathcal{Z})^2\wedge R^2\right]
\ge
\frac{\delta^2\wedge R^2}{2K}.
$$

\textbf{Step 2: the case $t>K^{-1}$.}
Fix $t>K^{-1}$ and set
$
s:=t-1/K.
$
We prove a lower bound conditionally on $\mathcal F_s$. Set
$
y:=\bar a_s.
$
Let $\phi_u$ solve
$$
\frac{d}{du}\phi_u=f(u,\phi_u),
\qquad
\phi_s=y.
$$
For $0\le \ell\le 1$, define
$
Y_\ell
:=
\sqrt K
\left(
\bar a_{s+\ell/K}-\phi_{s+\ell/K}
\right)
$ and 
$
W_\ell
:=
\sqrt K
\left(
B_{s+\ell/K}-B_s
\right)
$.
Conditionally on $\mathcal F_s$, $W$ is a standard Brownian motion. The
process $Y$ satisfies
$$
dY_\ell
=
\beta_K(\ell,Y_\ell)\,d\ell
+
\widetilde\sigma_K(\ell,Y_\ell)\,dW_\ell,
\qquad
Y_0=0,
$$
where
$$
\beta_K(\ell,u)
=
\frac{
f(s+\ell/K,\phi_{s+\ell/K}+u/\sqrt K)
-
f(s+\ell/K,\phi_{s+\ell/K})
}{\sqrt K}
$$
and
$$
\widetilde\sigma_K(\ell,u)
=
\sigma(s+\ell/K,\phi_{s+\ell/K}+u/\sqrt K).
$$
The $K$-Lipschitz property gives
$$
\|\beta_K(\ell,u)-\beta_K(\ell,v)\|
\le
\|u-v\|.
$$
Also $\beta_K(\ell,0)=0$. Hence, on $B_2$,
$
\|\beta_K(\ell,u)\|\le 2.
$
The rescaled covariance matrix satisfies, by Assumption \ref{ass:vol_bdd},
$$
\kappa I_d
\preceq
\widetilde\sigma_K(\ell,u)\widetilde\sigma_K(\ell,u)^\top
\preceq
\Lambda I_d .
$$

We first treat $d\ge 2$. Let
$$
\mathcal{E}_{2}^{Y}:=\{Y_{u}\in B_{2}\text{ for all }0\leq u\leq 1\}
$$
Construct a concatenated process $\widehat Y$ as follows. Up to time
$\tau_2$, set $\widehat Y_\ell=Y_\ell$. After $\tau_2$, continue with drift
zero and covariance matrix $\kappa I_d$. Then $\widehat Y$ agrees with $Y$ on
$\{\tau_2>1\}$, its covariance matrix remains between $\kappa I_d$ and
$\Lambda I_d$, and its drift satisfies
$$
\|\widehat \beta_\ell\|
\le
2\,\mathbf 1_{\{0\le \ell\le 1\}}\mathbf 1_{\{\widehat Y_\ell\in B_2\}} .
$$
Thus Lemma~\ref{lem:krylov-lower-local} applies conditionally on
$\mathcal F_s$, with ellipticity constants $\kappa,\Lambda$.

Consequently, there exist constants $N<\infty$ and $\nu>0$, depending only
on $d,\kappa,\Lambda$, such that for every $q\in B_1$ and every
$\rho\in(0,1]$,
$$
\mathbb P\left(
Y_1\in B_{2\rho}(q),
\,
\mathcal{E}_{2}^{Y}
\,\middle|\,
\mathcal F_s
\right)
\ge
N^{-1}\rho^\nu .
$$

Choose $m+1$ fixed points
$
q_0,\ldots,q_m\in B_1
$
with positive minimum separation
$
\Delta:=\min_{i\ne j}\|q_i-q_j\|>0.
$
Set
$
r_0:=\frac{\Delta}{8}.
$
Since $q_i\in B_1$, we may choose the points so that $r_0\le 1/4$.

Let $y_1,\ldots,y_m\in\mathbb R^d$ be arbitrary. We claim that there exists
$i\in\{0,\ldots,m\}$ such that
$$
\operatorname{dist}(q_i,\{y_1,\ldots,y_m\})>2r_0.
$$
If not, then each $q_i$ is within distance $2r_0$ of at least one $z_j$.
Since there are $m+1$ points $q_i$ and only $m$ points $y_j$, two distinct
$q_i,q_{i'}$ must be within distance $2r_0$ of the same $y_j$. Then
$$
\|q_i-q_{i'}\|
\le
\|q_i-y_j\|+\|y_j-q_{i'}\|
\le
4r_0
=
\frac{\Delta}{2},
$$
which contradicts the definition of $\Delta$. Thus such an $i$ exists.

For this $i$,
$$
B(q_i,r_0)
\cap
\bigcup_{j=1}^m B(y_j,r_0)
=
\varnothing .
$$
We apply the lower estimate with
$
q=q_i,
\rho=r_{0}/2.
$
Since $r_0\le 1/4$, $\rho\in(0,1]$, and
$$
B_{2\rho}(q_i)=B(q_i,r_0).
$$
Therefore
$$
\mathbb P\left(
Y_1\in B(q_i,r_0),
\,
\mathcal{E}_{2}^{Y}
\,\middle|\,
\mathcal F_s
\right)
\ge
N^{-1}\left(\frac{r_0}{2}\right)^\nu .
$$
Define
$$
c_0:=N^{-1}\left(\frac{r_0}{2}\right)^\nu>0.
$$
Since $B(q_i,r_0)$ is disjoint from the target union,
$$
\mathbb P\left(
Y_1\notin \bigcup_{j=1}^m B(y_j,r_0)
\,\middle|\,
\mathcal F_s
\right)
\ge
c_0 .
$$

Now set
$
y_j
:=
\sqrt K\left(z_j-\phi_{s+1/K}\right),
\qquad
j=1,\ldots,m.
$
The event
$$
\dist(\bar a_{s+1/K},\mathcal{Z})\le \frac{r_0}{\sqrt K}
$$
is exactly
$$
Y_1\in \bigcup_{j=1}^m B(y_j,r_0).
$$
Therefore
$$
\mathbb P\left(
\dist(\bar a_{s+1/K},\mathcal{Z})> \frac{r_0}{\sqrt K}
\,\middle|\,
\mathcal F_s
\right)
\ge
c_0.
$$
For all $K$ sufficiently large so that $r_0/\sqrt K\le R$, we have on this
event
$$
\dist(\bar a_{s+1/K}, \mathcal{Z})^2\wedge R^2
\ge
\frac{r_0^2}{K}.
$$
Consequently,
$$
\begin{aligned}
\mathbb E\left[
\dist(\bar a_{s+1/K},\mathcal{Z})^2\wedge R^2
\,\middle|\,
\mathcal F_s
\right]
&\ge
\frac{r_0^2}{K}
\mathbb P\left(
\dist(\bar a_{s+1/K},\mathcal{Z})> \frac{r_0}{\sqrt K}
\,\middle|\,
\mathcal F_s
\right) \\
&\ge
\frac{c_0r_0^2}{K}.
\end{aligned}
$$
Taking expectations gives, for every $t>K^{-1}$ and all sufficiently large
$K$,
$$
\mathbb E\left[\dist(\bar a_t, \mathcal{Z})^2\wedge R^2\right]
\ge
\frac{c_0r_0^2}{K}.
$$

It remains to handle the one-dimensional action case $d=1$. The local
Krylov estimate above was stated for dimensions at least two, so we embed
the scalar rescaled process into a two-dimensional one.

Work conditionally on $\mathcal F_s$. Enlarge the probability space, if
necessary, by adding a one-dimensional Brownian motion $G$, independent of
$\mathcal F_s$ and of the Brownian motion driving $Y$. Define the augmented
process
\[
\widehat Y_\ell := (Y_\ell,G_\ell)\in\mathbb R^2,
\qquad 0\le \ell\le 1.
\]
If
\[
dY_\ell=\beta_K(\ell,Y_\ell)d\ell
+\widetilde\sigma_K(\ell,Y_\ell)dW_\ell,
\]
then $\widehat Y$ satisfies
\[
d\widehat Y_\ell
=
\widehat\beta_K(\ell,\widehat Y_\ell)d\ell
+
\widehat\Sigma_K(\ell,\widehat Y_\ell)d\widehat W_\ell,
\]
where
\[
\widehat W_\ell:=(W_\ell,G_\ell),
\qquad
\widehat\beta_K(\ell,u,v):=(\beta_K(\ell,u),0),
\]
and
\[
\widehat\Sigma_K(\ell,u,v)
:=
\begin{pmatrix}
\widetilde\sigma_K(\ell,u) & 0\\
0 & 1
\end{pmatrix}.
\]
Therefore
\[
\widehat\Sigma_K\widehat\Sigma_K^\top
=
\begin{pmatrix}
\widetilde\sigma_K(\ell,u)^2 & 0\\
0 & 1
\end{pmatrix},
\]
and Assumption \ref{ass:vol_bdd} implies
\[
(\kappa\wedge 1)I_2
\preceq
\widehat\Sigma_K\widehat\Sigma_K^\top
\preceq
(\Lambda\vee 1)I_2 .
\]
Moreover, on the two-dimensional ball $B_2^{(2)}:=\{(u,v):u^2+v^2<4\}$,
we have $|u|\le 2$. Since $\beta_K(\ell,\cdot)$ is $1$-Lipschitz and
$\beta_K(\ell,0)=0$,
\[
\|\widehat\beta_K(\ell,u,v)\|
=
|\beta_K(\ell,u)|
\le |u|
\le 2
\qquad\text{on }B_2^{(2)}.
\]
Thus the two-dimensional local Krylov lower estimate applies to
$\widehat Y$, with constants depending only on $\kappa$ and $\Lambda$.

We now perform the pigeonhole argument in the first coordinate. Choose
$m+1$ deterministic points $q_0,\ldots,q_m\in(-1,1)$ with pairwise distance
at least $\Delta_m>0$, and set
\[
r_0:=\Delta_m/8\wedge 1/4.
\]
Let $\zeta_1,\ldots,\zeta_m\in\mathbb R$ be arbitrary. Similar to before, since there are only $m$ target points but $m+1$ candidate
points, there exists an index $i$ such that
\[
|q_i-\zeta_j|\ge 2r_0
\qquad\text{for all }j=1,\ldots,m.
\]
Equivalently,
\[
(q_i-r_0,q_i+r_0)
\cap
\bigcup_{j=1}^m(\zeta_j-r_0,\zeta_j+r_0)
=
\varnothing .
\]

Apply the two-dimensional Krylov estimate to the ball in $\mathbb R^2$
centered at $(q_i,0)$ with radius $r_0$. Since the estimate gives a ball of
radius $2\rho$, take $\rho=r_0/2$. Then, conditionally on $\mathcal F_s$,
\[
\mathbb P\!\left(
\widehat Y_1\in B_{r_0}^{(2)}((q_i,0)),\
\widehat{\mathcal E}_2
\,\middle|\,\mathcal F_s
\right)
\ge
N^{-1}(r_0/2)^\nu
=:c_0,
\]
where
\[
\widehat{\mathcal E}_2
:=
\{\widehat Y_\ell\in B_2^{(2)}\text{ for all }0\le \ell\le 1\}.
\]
Here the index $i$ may be chosen $\mathcal F_s$-measurably; the bound is
uniform over the finitely many possible choices of $i$.

On the event
$
\widehat Y_1\in B_{r_0}^{(2)}((q_i,0)),
$
the first coordinate satisfies
$
Y_1\in(q_i-r_0,q_i+r_0).
$
By the choice of $q_i$, this interval is disjoint from
\[
\bigcup_{j=1}^m(\zeta_j-r_0,\zeta_j+r_0).
\]
Hence
$
\operatorname{dist}(Y_1,\{\zeta_1,\ldots,\zeta_m\})\ge r_0
$
on this event. Therefore
\[
\mathbb P\!\left(
\operatorname{dist}(Y_1,\{\zeta_1,\ldots,\zeta_m\})\ge r_0
\,\middle|\,\mathcal F_s
\right)
\ge c_0 .
\]

Finally, take
$
\zeta_j:=\sqrt K\,(z_j-\phi_T), j=1,\ldots,m.
$
Since
\[
\bar a_T=\phi_T+\frac{Y_1}{\sqrt K},
\]
we have the exact identity
\[
\operatorname{dist}(\bar a_T,\mathcal{Z})
=
\frac{1}{\sqrt K}
\operatorname{dist}(Y_1,\{\zeta_1,\ldots,\zeta_m\}).
\]
Thus
\[
\mathbb P\!\left(
\operatorname{dist}(\bar a_T, \mathcal{Z})\ge \frac{r_0}{\sqrt K}
\,\middle|\,\mathcal F_s
\right)
\ge c_0 .
\]
For all sufficiently large $K$ such that $r_0/\sqrt K\le R$, it follows that
\[
\mathbb E\!\left[
\operatorname{dist}(\bar a_T, \mathcal{Z})^2\wedge R^2
\,\middle|\,\mathcal F_s
\right]
\ge
\frac{r_0^2}{K}
\mathbb P\!\left(
\operatorname{dist}(\bar a_T,\mathcal{Z})\ge \frac{r_0}{\sqrt K}
\,\middle|\,\mathcal F_s
\right)
\ge
\frac{c_0r_0^2}{K}.
\]
This proves the same clipped lower bound in dimension one. The constants
depend only on $m,\kappa,\Lambda,R$ and, when combined with the short-time
case, on $\mu_0$, but not on the locations or separations of the target
points.

Combining the two regimes, for all sufficiently large $K$,
$$
\mathbb E\left[\dist(\bar a_t, \mathcal{Z})^2\wedge R^2\right]
\ge
\frac{1}{K}
\min\left\{
\frac{\delta^2\wedge R^2}{2},
c_0r_0^2
\right\}.
$$
Taking the infimum over all admissible $f,\sigma,t$ proves the result.
\end{proof}

\subsection{Lower bound failure without upper volatility control}

Finally, we show that the upper bound $\sigma(t,x)\sigma(t,x)^\top \preceq \Lambda I_d$ in Assumption \ref{ass:vol_bdd} is not merely technical for the lower-bound argument. Without this bound, state-dependent volatility can accelerate concentration near the target set and violate the \(K^{-1}\) lower bound.

\begin{prop}
\label{prop:unbounded_sigma_counterexample}
The upper bound $\sigma(t,x)\sigma(t,x)^\top\preceq \Lambda I_d$ in Proposition~\ref{prop:lb_diffusion} cannot be dropped. In dimension $d=1$, let
$$
A=\{-1,1\}.
$$
For every $K\ge 1$, there exists a conservative weak solution of
$$
dX_t=\sigma_K(X_t)dB_t,
\qquad
X_0\sim \mathcal N(0,1),
$$
such that the drift is identically zero, hence $K$-Lipschitz, and
$$
\sigma_K(x)^2\ge 1
\qquad
\text{for all }x\in\mathbb R,
$$
but
$$
\inf_{t\ge 0}\E[d(X_t,A)^2]\lesssim \frac{1}{K^2}.
$$
Consequently, lower ellipticity alone does not imply the lower bound
$$
\inf_t \E[d(X_t,A)^2]\gtrsim \frac1K.
$$
\end{prop}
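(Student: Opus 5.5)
The plan is an explicit one-dimensional construction with zero drift. A zero drift is $K$-Lipschitz for every $K$, so the only freedom is the choice of $\sigma_K$. I would build $\sigma_K$ so that the process moves fast wherever it is far from $A=\{-1,1\}$ and slowly inside the windows $I_K:=\{x: d(x,A)\le K^{-1}\}$.

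\textbf{Step 1 (construction and well-posedness).} I would take $\sigma_K^2=1$ on $I_K$, a large constant $M=M(K)$ (tentatively $M=K^3$) on a bounded region away from $I_K$, and glue continuously with $\sigma_K^2\ge 1$ everywhere. The diffusion $dX=\sigma_K(X)\,dB$ is in natural scale with speed density $m(dx)=dx/\sigma_K^2(x)$. I would realise it as a time-changed Brownian motion, $X_t=W_{\langle X\rangle_t}$ with $\langle X\rangle_t=\int_0^t\sigma_K^2(X_s)\,ds$, and invert the additive functional $\int 1/\sigma_K^2(W_r)\,dr$. This gives a weak solution (Engelbert--Schmidt). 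I would check conservativeness with Feller's test: in natural scale, $\pm\infty$ is inaccessible as long as $\int^\infty x\,m(dx)=\infty$, which holds if $\sigma_K^2$ grows at most quadratically.

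\textbf{Step 2 (choice of time, and the main obstacle).} Because $X$ is a martingale with $\sigma_K^2\ge 1$, It\^o's isometry gives $\E X_t^2\ge 1+t$. Since $d(x,A)^2\gtrsim x^2$ for large $|x|$, $\E[d(X_t,A)^2]$ must eventually grow. Hence the infimum can only be small at a well-chosen finite time $t_*=t_*(K)$, not in a stationary or ergodic limit. I would therefore take $t_*$ to be a few multiples of the time the fast region needs to push mass into $I_K$. Heuristically this is $O(1/M)$ in the clock of $W$ run at speed $M$. Meanwhile the windows, where the speed is only $1$, retain their mass over time intervals of length $\asymp K^{-2}$.

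The key estimate is
\[
\E[d(X_{t_*},A)^2]\le K^{-2}\,\mathbb P(X_{t_*}\in I_K)+\E\bigl[d(X_{t_*},A)^2\,\mathbf 1_{\{X_{t_*}\notin I_K\}}\bigr],
\]
and one needs the second term to be $O(K^{-2})$. This is the step I expect to fail or need the most care. The obstruction is the convex-order constraint: a martingale started from $\mathcal N(0,1)$ cannot transport the Gaussian tails beyond $\pm1$ back toward $\pm1$, since $\E(X_t-c)^+$ is nondecreasing in $t$. The tail mass $\mathbb P(|X_0|>1+\epsilon)$ therefore seems to leave an $O(1)$ contribution. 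My first attempt would be to exploit the freedom on $\{|x|>1\}$: make $\sigma_K$ small-range there, with $\sigma_K^2=1$ on the outer side of each window, so that outer mass essentially does not move. I would then check whether the bound survives. If it does not, the construction must be changed, for instance to a different target set or a different initial law; I would flag that, since for this specific $A$ and $X_0\sim\mathcal N(0,1)$ the claimed $K^{-2}$ rate appears to be in tension with the convex-order obstruction.

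\textbf{Step 3 (conclusion).} Given the bound at $t_*$, $\inf_t\E[d(X_t,A)^2]\lesssim K^{-2}\ll K^{-1}$. This contradicts the conclusion of Proposition~\ref{prop:lb_diffusion} once the upper bound $\Lambda$ is removed, which shows that lower ellipticity alone is insufficient.
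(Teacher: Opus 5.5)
Your proposal does not reach a proof, and the obstruction you hit comes from your own Step~1 restriction, not from the statement. The step that fails is the premise of Step~2 that $X$ is a true martingale. From it you derive $\E X_t^2\ge 1+t$ and the monotonicity of $\E(X_t-c)^+$.

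A zero-drift solution of $dX=\sigma_K(X)\,dB$ is in general only a local martingale. Fatou gives $\E X_t^2\le \E X_0^2+\E\langle X\rangle_t$, not the reverse. Likewise, $\E(X_t-c)^+$ is nondecreasing only if $\int_0^\cdot\mathbf 1_{\{X_s>c\}}\,dX_s$ is a true martingale.

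Your convex-order computation is correct for true martingales. It forces $\E[d(X_t,A)^2]\ge(\E(Z-1)^+)^2$ with $Z\sim\mathcal N(0,1)$, so any true-martingale construction fails. Your growth restriction is exactly what puts you in that case. In natural scale, $\pm\infty$ are never reached in finite time, because $s(x)=x$ is unbounded. So conservativeness needs no Feller-type condition: the time change is finite as soon as $\int_0^u\sigma_K^{-2}(W_r)\,dr\to\infty$, which holds since $\sigma_K^{-2}$ is bounded below near $A$. The condition $\int^\infty x\,m(dx)=\infty$ you impose instead separates natural from entrance boundaries. By Delbaen--Shirakawa and Kotani, it is precisely the criterion for $X$ to be a true martingale, so at most quadratic growth of $\sigma_K^2$ can never work.

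The missing idea is to let $\sigma_K^2$ grow superquadratically, as the paper does with $\sigma_K^2(x)=1+(K\,d(x,A))^6$. Then $\pm\infty$ are entrance boundaries, so $X$ is a strict local martingale and the convex-order constraint disappears. Moreover, the speed density $q_K=\sigma_K^{-2}$ is integrable, so $X$ is positive recurrent, reversible and ergodic, with invariant law $\pi_K\propto q_K$.

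Since $q_K\approx 1$ on $\{d(x,A)\lesssim K^{-1}\}$ and decays like $(K\,d(x,A))^{-6}$ outside, you get $\int q_K\asymp K^{-1}$ and $\int d(x,A)^2q_K(x)\,dx\asymp K^{-3}$. Hence $\int d(\cdot,A)^2\,d\pi_K\asymp K^{-2}$.

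Contrary to your Step~2, the small value is obtained exactly in the ergodic limit. Both $h_K=d\mathcal N(0,1)/d\pi_K$ and $d(\cdot,A)^2$ lie in $L^2(\pi_K)$; the sixth power guarantees the latter. Self-adjointness of $P_t$ and $L^2(\pi_K)$-ergodicity then give $\E_{\mathcal N(0,1)}[d(X_t,A)^2]=\langle P_th_K,d(\cdot,A)^2\rangle_{L^2(\pi_K)}\to\int d(\cdot,A)^2\,d\pi_K$. So some finite $t_K$ achieves $O(K^{-2})$.

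Your intuition of fast motion away from $A$ and slow motion inside the windows is right. What it needs is a fast region extending to infinity with enough speed to pull the Gaussian tails back in finite time. A bounded region with $\sigma_K^2=M$ plus at most quadratic growth cannot do this.
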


The proof follows the classical
speed-measure theory for one-dimensional diffusions. We use the standard
Brownian time-change construction; see 
\cite[Ch.~5, \S5.5A]{KaratzasShreve1991}. We also use the fact that a
regular one-dimensional diffusion in natural scale with finite speed measure
has invariant probability measure given by the normalized speed measure and is
ergodic; see  \cite[Ch.~V, \S\S47,53]{RogersWilliams2000}.

\begin{proof}
Let
$$
r(x):=d(x,A),
\qquad
\varepsilon_K:=K^{-1},
$$
and define
$$
q_K(x):=\frac{1}{1+(r(x)/\varepsilon_K)^6}.
$$
Set
$$
\sigma_K(x)^2:=\frac{1}{q_K(x)}
=
1+\left(\frac{r(x)}{\varepsilon_K}\right)^6.
$$
Then $\sigma_K(x)^2\ge 1$ for all $x$, but $\sup_x \sigma_K(x)=\infty$.

We construct the diffusion by a time change of Brownian motion. This is a standard one-dimensional time-change construction; see
\cite[Ch.~5, \S5.5A]{KaratzasShreve1991} for the
method of time change and see \cite[Ch.~V, \S47]{RogersWilliams2000}
for the speed-measure/time-substitution formulation. Specifically, let $W$ be a one-dimensional Brownian motion with $W_0\sim \mathcal N(0,1)$, and define
$$
S_u:=\int_0^u q_K(W_s)\,ds.
$$
Since $q_K$ is strictly positive on neighborhoods of $-1$ and $1$, and Brownian motion spends infinite occupation time in every nonempty open interval,
$$
S_u\to\infty
\qquad
\text{a.s. as }u\to\infty.
$$
Hence the inverse clock
$
\tau_t:=\inf\{u:S_u>t\}
$
is finite for every finite $t$. Define
$
X_t:=W_{\tau_t}.
$
Then $X_t$ is finite for every finite $t$, so the process is conservative. Moreover,
$$
d\langle X\rangle_t
=
d\tau_t
=
\frac{1}{q_K(X_t)}dt
=
\sigma_K(X_t)^2dt.
$$
Therefore, by the martingale representation theorem, $X$ is a weak solution of $dX_t=\sigma_K(X_t)dB_t$. 

Now define \(Z_K:=\int_{\mathbb R}q_K(x)\,dx\) and
\(\pi_K(dx):=Z_K^{-1}q_K(x)\,dx\). Since
\(q_K(x)\asymp \varepsilon_K^6|x|^{-6}\) as \(|x|\to\infty\), \(Z_K<\infty\).
The time-changed process is a regular diffusion in natural scale with speed
measure \(m_K(dx)=2q_K(x)\,dx\). Thus \(\pi_K\) is the normalized speed
measure. Since \(m_K(\mathbb R)<\infty\), the diffusion is positive recurrent
and ergodic; see Rogers--Williams~\cite[Ch.~V, \S\S47,53]{RogersWilliams2000}. Let \(P_t h(x):=\mathbb E_x[h(X_t)]\) denote the Markov semigroup of \(X\).
By ergodicity with invariant law \(\pi_K\),
\[
P_t h\to \int h\,d\pi_K
\qquad\text{in }L^2(\pi_K),\qquad h\in L^2(\pi_K).
\]

Let $\gamma$ be the $\mathcal N(0,1)$ law and let $\varphi$ be its density. Since $\pi_K$ has positive density everywhere,
$$
h_K:=\frac{d\gamma}{d\pi_K}
=
\frac{Z_K\varphi}{q_K}.
$$
Because $q_K^{-1}$ grows only polynomially while $\varphi$ decays Gaussianly, we have $h_K\in L^2(\pi_K)$. Also
$$
f(x):=d(x,A)^2=r(x)^2
$$
belongs to $L^2(\pi_K)$, since in the tails
$$
r(x)^4q_K(x)\asymp \varepsilon_K^6 |x|^{-2},
$$
which is integrable. Therefore, by symmetry and $L^2(\pi_K)$ ergodicity,
$$
\E_{\gamma}[f(X_t)]
=
\langle h_K,P_tf\rangle_{L^2(\pi_K)}
=
\langle P_th_K,f\rangle_{L^2(\pi_K)}
\to
\int f\,d\pi_K.
$$
Thus the invariant expectation is the actual limit of finite-time expectations from the required Gaussian initialization.

It remains to estimate $\int f\,d\pi_K$. Since $r(x)=d(x,A)$ has level sets of uniformly bounded multiplicity on $\mathbb R$,
$$
Z_K
=
\int_{\mathbb R}\frac{dx}{1+(r(x)/\varepsilon_K)^6}
\asymp
\varepsilon_K,
$$
and
$$
\int_{\mathbb R}r(x)^2q_K(x)\,dx
\lesssim
\int_0^\infty \frac{s^2}{1+(s/\varepsilon_K)^6}\,ds
\asymp
\varepsilon_K^3.
$$
Therefore
$$
\int d(x,A)^2\,\pi_K(dx)
=
\frac{\int r(x)^2q_K(x)\,dx}{Z_K}
\lesssim
\varepsilon_K^2
=
\frac1{K^2}.
$$
Since $\E_{\gamma}[d(X_t,A)^2]\to \int d(x,A)^2\,\pi_K(dx)$, there exists a finite deterministic time $t_K$ such that $\E_{\gamma}[d(X_{t_K},A)^2]\lesssim \frac1{K^2}$.
Hence
$$
\inf_{t\ge 0}\E[d(X_t,A)^2]\lesssim \frac1{K^2},
$$
which proves the claim.
\end{proof}

\section{Proofs for Section~\ref{sec:statistical-estimation}}
\label{app:statistical-generalization}

\subsection{Auxiliary  facts about ReLU network class}
\label{app:relu}

Here we present some  elementary properties  of  \(\mathcal F_{K,s}\), which will be   used in the proof of Theorem~\ref{thm:lipschitz-selector-complete-tradeoff}.

\begin{lem}[Properties of the ReLU drift class]
\label{lem:basic-relu-class}
Every \(f\in\mathcal F_{K,s}\) satisfies, for all $t\in [0,T]$, 
$a,a'\in \sR^d$ and $x,x'\in [0,1]^m$,
\begin{eqnarray}
 \|f(t,0,x)\|&\le& \ell_1K,\label{eq:zero-bound-section4}\\
 \|f(t,a,x)-f(t,a',x)\|&\le& K\|a-a'\|,\label{eq:action-lip-section4}\\
 \|f(t,a,x)-f(t,a,x')\|&\le& \ell_2K\|x-x'\|.
\label{eq:state-lip-section4}
\end{eqnarray}
Consequently, \(\|f(t,a,x)\|\le K(\ell_1+\|a\|)\). Under Assumptions~\ref{ass:vol_bdd} and \ref{ass:lipschitsz}, the state-conditioned action SDE has a unique strong solution for every \(f\in\mathcal F_{K,s}\) and \(x\in[0,1]^m\).
\end{lem}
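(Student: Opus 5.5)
The plan is to read off the three pointwise bounds directly from the affine-piece representation \eqref{eq:affine-piece-section4} of \(\Psi_\theta\in\mathcal N_s\). Then we combine them into the linear growth bound. Finally we invoke standard strong well-posedness for SDEs with Lipschitz coefficients.

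First, \eqref{eq:zero-bound-section4} is immediate. Indeed, \(f(t,0,x)=K\Psi_\theta(t,0,x)\), and the class constraint \(\sup_{t\in[0,T],x\in[0,1]^m}\|\Psi_\theta(t,0,x)\|\le\ell_1\) gives \(\|f(t,0,x)\|\le\ell_1K\).

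For \eqref{eq:action-lip-section4}, fix \(t,x\) and consider the segment \(u\mapsto(t,a'+u(a-a'),x)\), \(u\in[0,1]\). Because \(\Psi_\theta\) is a ReLU network, it is continuous and piecewise affine. The segment therefore meets finitely many affine regions \(S_1,\dots,S_k\), splitting \([0,1]\) into subintervals \([u_{i-1},u_i]\). On each subinterval \(t\) and \(x\) are fixed, so the increment of \(\Psi_\theta\) equals \(M^2_{\theta,S_i}(u_i-u_{i-1})(a-a')\). Its norm is at most \((u_i-u_{i-1})\|a-a'\|\), since \(\|M^2_{\theta,S}\|_{\op}\le1\). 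Summing by telescoping and continuity gives \(\|\Psi_\theta(t,a,x)-\Psi_\theta(t,a',x)\|\le\|a-a'\|\). Multiplying by \(K\) yields the claim.

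The same segment argument in the \(x\)-variable, using \(\|M^3_{\theta,S}\|_{\op}\le\ell_2\), gives \eqref{eq:state-lip-section4}. Here the segment stays in the convex set \([0,1]^m\), so only regions meeting the domain are involved. Combining \eqref{eq:zero-bound-section4} and \eqref{eq:action-lip-section4} then gives \(\|f(t,a,x)\|\le\|f(t,0,x)\|+K\|a\|\le K(\ell_1+\|a\|)\).

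The only mildly delicate point is the finiteness of the region decomposition along a segment, together with boundary points lying in several regions. Finiteness holds because the network has finitely many neurons, so there are finitely many hyperplane-type switching sets, and the segment crosses each at most finitely often unless it lies inside one. In that degenerate case the function is still affine along the segment with the matrix of an adjacent region; by continuity these choices agree, and any of them satisfies the operator-norm bound.

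For well-posedness, fix \(x\in[0,1]^m\). The drift \(a\mapsto f(t,a,x)\) is globally \(K\)-Lipschitz with linear growth, uniformly in \(t\), and is measurable (indeed continuous) in \(t\). By Assumption~\ref{ass:lipschitsz}, \eqref{sigmalipschitz}, the volatility \(a\mapsto\sigma(t,a,x)\) is \(L_\sigma\)-Lipschitz in Hilbert--Schmidt norm. By Assumption~\ref{ass:vol_bdd} it is bounded, with \(\|\sigma\|_{\mathrm{HS}}^2\le d\Lambda\). Hence the classical Itô existence and uniqueness theorem for SDEs with globally Lipschitz, linear-growth coefficients (e.g.\ \cite{karatzas1991brownian}) gives a unique strong solution on \([0,T]\) for any \(\mathcal F_0\)-measurable initial condition \(\bar a_0\sim\mu_0\) with \(m_2(\mu_0)<\infty\). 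No step is a real obstacle; the piecewise-affine segment argument is the only point requiring care.
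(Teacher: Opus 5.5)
Your proposal is correct and follows essentially the same route as the paper: the zero bound is read off from the class definition, and the two Lipschitz bounds come from integrating the region-wise slope matrices along a line segment. Where you telescope over finitely many affine pieces, the paper writes the increment as $\int_0^1 J_a\Psi_\theta(t,a_r,x)v\,dr$ via absolute continuity, and in both cases well-posedness then follows from the standard Lipschitz/linear-growth existence theorem; your extra care about segments lying in region boundaries makes explicit a detail the paper leaves implicit.
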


\begin{proof}
Note that \eqref{eq:zero-bound-section4} is part of the definition. For the action Lipschitz bound, fix \(t,x,a,a'\), set \(v=a-a'\), and write \(a_r=a'+rv\). Since \(\Psi_\theta\) is continuous and piecewise affine, \(r\mapsto\Psi_\theta(t,a_r,x)\) is absolutely continuous and
\[
 \Psi_\theta(t,a,x)-\Psi_\theta(t,a',x)
 =\int_0^1 J_a\Psi_\theta(t,a_r,x)v\,dr,
\]
where \(J_a\Psi_\theta\) is the action-slope matrix on the relevant affine region for a.e. \(r\). The bound \(\|M^2_{\theta,S}\|_{\mathrm{op}}\le1\) gives \eqref{eq:action-lip-section4}. The proof of \eqref{eq:state-lip-section4} is identical along the line segment between \(x\) and \(x'\), using \(\|M^3_{\theta,S}\|_{\mathrm{op}}\le\ell_2\). The linear-growth bound follows from \eqref{eq:zero-bound-section4} and \eqref{eq:action-lip-section4}. Finally, the linear-growth and global Lipschitz bounds on \(f\), together with the boundedness and Lipschitz assumptions on \(\sigma\), give the stated strong well-posedness.
\end{proof}

\begin{proof}[Proof of Lemma~\ref{lem:relu-selector-approx}]
The existence and approximation properties of \(h_s\) follow from the standard finite-element/ReLU approximation result for Lipschitz maps on the cube, obtained by nodal interpolation on a Kuhn triangulation and exact ReLU realization of continuous piecewise-affine finite-element functions; see \cite{yarotsky2017error,petersen2018optimal}. Applying the scalar construction coordinate by coordinate and summing the squared coordinate errors gives \eqref{eq:h-error-section4}. The bounds \eqref{eq:h-bound-section4}--\eqref{eq:h-lip-section4} follow from the boundedness and Lipschitz properties of the clipped finite-element interpolant.

It remains to verify that the associated mean-reverting reference drift belongs to \(\mathcal F_{K,s}\). The map \(x\mapsto h_s(x)\) is a ReLU network. The affine map \(a\mapsto-a\) is exactly representable by ReLUs since
\[
 -a_j={\rm ReLU}(-a_j)-{\rm ReLU}(a_j),\qquad j=1,\ldots,d.
\]
Thus \eqref{eq:comp-section4} is a ReLU network in \((t,a,x)\), with no dependence on \(t\). The additional action-identity branch and final affine readout increase the size, depth, width, and weight bounds by numerical factors. The factor \((1-e^{-T})^{-1}\) in \eqref{eq:p-s-4-section4} covers \((1-e^{-KT})^{-1}\) because \(K\ge1\), and the constant \(C_0\) in the architecture bounds is chosen large enough to absorb these fixed realization overheads.

The action slope is \(-I_d\), so its operator norm is \(1\). By \eqref{eq:h-lip-section4},
\[
 \mathrm{Lip}\left(\frac{h_s}{1-e^{-KT}}\right)
 \le \frac{C_0^m\sqrt d\,L_{g^\star}}{1-e^{-T}}
 \le \ell_2.
\]
Also, by \eqref{eq:h-bound-section4},
\[
 \sup_{t,x}\left\|\frac{h_s(x)}{1-e^{-KT}}\right\|
 \le \frac{\sqrt d\,R_0}{1-e^{-T}}
 \le \ell_1.
\]
Hence \(f_{K,s}\in\mathcal F_{K,s}\).
\end{proof}

\subsection{Proof of Theorem~\ref{thm:lipschitz-selector-complete-tradeoff}}
\label{app:proof_stats}

\label{app:proof-lipschitz-selector-complete-tradeoff}

We organize the proof into five parts. First, we reduce empirical maximization
to a comparison inequality and bound the comparison term using the
mean-reverting reference drift. The next three parts establish the localization,
covering, and finite-horizon stability estimates. The final part converts these
estimates into a uniform-convergence bound and substitutes it into the
comparison inequality.

\medskip
\noindent\textbf{Step 1: Empirical reduction and reference-drift approximation.}
For \(f\in\mathcal F_{K,s}\), write
\[
G_f(\xi):=\sum_{j=0}^{\infty}\gamma^j r(X_j^f,A_j^f),
\qquad
PG_f:=\mathbb E G_f(\xi)=V(f),
\qquad
P_nG_f:=\frac1n\sum_{i=1}^nG_f(\xi_i)=\widehat V_n(f).
\]
For every \(f\), \(|G_f|\le r_{\max}/(1-\gamma)\). Let \(f_0\in\mathcal F_{K,s}\) be arbitrary. Since \(\widehat f_{n,K,s}\) is an exact empirical maximizer,
\[
P_nG_{\widehat f_{n,K,s}}\ge P_nG_{f_0}.
\]
Therefore,
\begin{equation}
\label{eq:erm-section4}
 V^\star(\rho)-V(\widehat f_{n,K,s})
 \le
 V^\star(\rho)-V(f_0)
 +2\sup_{f\in\mathcal F_{K,s}}\left|(P_n-P)G_f\right|.
\end{equation}
If an \(\eta\)-maximizer is used instead, the right-hand side gains \(\eta\).

\medskip

We next record the performance identity used to control the comparison term. For every Markov policy \(\pi\) and every \(x\in\mathcal X\),
\begin{equation}
\label{eq:perf-section4}
 V^\star(x)-V^\pi(x)
 =\mathbb E_x^\pi\sum_{j=0}^\infty
 \gamma^j\{V^\star(X_j)-Q^\star(X_j,A_j)\}.
\end{equation}
Indeed,
\[
 \mathbb E_x^\pi\{V^\star(X_0)-Q^\star(X_0,A_0)\}
 =V^\star(x)-\mathbb E_x^\pi r(X_0,A_0)-\gamma \mathbb E_x^\pi V^\star(X_1),
\]
and
\[
 V^\pi(x)=\mathbb E_x^\pi r(X_0,A_0)+\gamma \mathbb E_x^\pi V^\pi(X_1).
\]
Adding the two displays yields the one-step identity
\[
 V^\star(x)-V^\pi(x)
 =\mathbb E_x^\pi\{V^\star(X_0)-Q^\star(X_0,A_0)\}
 +\gamma \mathbb E_x^\pi\{V^\star(X_1)-V^\pi(X_1)\}.
\]
Iteration for \(H\) steps gives
\[
 V^\star(x)-V^\pi(x)
 =\mathbb E_x^\pi\sum_{j=0}^{H-1}
 \gamma^j\{V^\star(X_j)-Q^\star(X_j,A_j)\}
 +\gamma^H\mathbb E_x^\pi\{V^\star(X_H)-V^\pi(X_H)\}.
\]
Since \(|V^\star|\vee |V^\pi|\le r_{\max}/(1-\gamma)\), the remainder tends to zero as \(H\to\infty\), proving \eqref{eq:perf-section4}.

\medskip

We now apply the above identity to the mean-reverting reference drift. Let \(h_s\) and \(f_{K,s}\) be as in \eqref{eq:comp-section4} and Lemma~\ref{lem:relu-selector-approx}, and set \(f_0=f_{K,s}\). By Lemma~\ref{lem:relu-selector-approx}, \(f_0\in\mathcal F_{K,s}\). Fix \(x\in[0,1]^m\). Let \(\bar a_t^x\) denote the solution of the action diffusion driven by \(f_0\) at state \(x\). Define
\[
 \mathfrak m_t(x):=\frac{1-e^{-Kt}}{1-e^{-KT}}h_s(x),
 \qquad 0\le t\le T,
\]
and set \(Y_t:=\bar a_t^x-\mathfrak m_t(x)\). Since \(\mathfrak m_T(x)=h_s(x)\), direct differentiation gives
\[
 dY_t=-KY_t\,dt+\sigma(t,\bar a_t^x,x)dB_t.
\]
It\^o's formula and Assumption~\ref{ass:vol_bdd} imply
\[
 \frac{d}{dt}\mathbb E\|Y_t\|^2
 \le -2K\mathbb E\|Y_t\|^2+d\Lambda,
 \qquad
 \mathbb E\|Y_0\|^2=m_2(\mu_0).
\]
Gronwall's inequality yields
\begin{equation}
\label{eq:oracle-mse-section4}
 \sup_{x\in[0,1]^m}\mathbb E\|\bar a_T^x-h_s(x)\|^2
 \le e^{-2KT}m_2(\mu_0)+\frac{d\Lambda}{2K}(1-e^{-2KT}).
\end{equation}
Combining \eqref{eq:oracle-mse-section4} with Lemma~\ref{lem:relu-selector-approx},
\begin{equation}
\label{eq:oracle-mse-2-section4}
 \sup_{x\in[0,1]^m}\mathbb E\|\bar a_T^x-g^\star(x)\|^2
 \le C\left(e^{-2KT}m_2(\mu_0)+\frac{d\Lambda}{K}+C_0^mmdL_{g^\star}^2s^{-2/m}\right).
\end{equation}
By \eqref{eq:perf-section4} and \eqref{eq:quadratic-upper-gap-section4},
\begin{equation}
\label{eq:oracle-section4}
 V^\star(\rho)-V(f_0)
 \le
 C\left(e^{-2KT}m_2(\mu_0)+\frac{d\Lambda}{K}+C_0^mmdL_{g^\star}^2s^{-2/m}\right),
\end{equation}
where \(C\) depends only on the fixed problem constants in Theorem~\ref{thm:lipschitz-selector-complete-tradeoff}.

\medskip
\noindent\textbf{Step 2: Path localization under second moments.}
For the localization estimates, write
\[
D_{K,T}:=
\left(m_2(\mu_0)+K^2\ell_1^2T^2+d\Lambda T\right)^{1/2}.
\]
We first prove a path localization estimate.

\begin{lem}[Path localization]
\label{lem:loc-section4}
For every \(f\in\mathcal F_{K,s}\) and every \(x\in[0,1]^m\),
\begin{equation}
\label{eq:path-moment-section4}
 \mathbb E\left[\sup_{0\le t\le T}\|\bar a_t^{f,x}\|^2\right]
 \le C e^{C_3KT}D_{K,T}^2.
\end{equation}
Consequently, for every \(R\ge1\),
\begin{equation}
\label{eq:path-tail-section4}
 \mathbb P\left(\sup_{0\le t\le T}\|\bar a_t^{f,x}\|>R\right)
 \le
 \frac{C e^{C_3KT}D_{K,T}^2}{R^2}.
\end{equation}
\end{lem}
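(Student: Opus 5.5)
Our approach is a standard Gronwall argument on the running maximum, combined with Doob's (or BDG) inequality for the martingale part. Then the tail bound \eqref{eq:path-tail-section4} follows from Markov's inequality.

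Fix \(f\in\mathcal F_{K,s}\) and \(x\), and write \(\bar a_t=\bar a_t^{f,x}\). The strong solution exists by Lemma~\ref{lem:basic-relu-class}. From the integral form of the SDE and the linear-growth bound \(\|f(t,a,x)\|\le K(\ell_1+\|a\|)\) of Lemma~\ref{lem:basic-relu-class}, for every \(t\le T\) we have
\[
\sup_{u\le t}\|\bar a_u\|\le \|\bar a_0\|+K\ell_1T+K\int_0^t\sup_{v\le r}\|\bar a_v\|\,dr+\sup_{u\le t}\Big\|\int_0^u\sigma(r,\bar a_r,x)\,dB_r\Big\|.
\]
Set \(\Phi(t):=\mathbb E[\sup_{u\le t}\|\bar a_u\|^2]\). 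We square, apply \((a+b+c+d)^2\le4(a^2+b^2+c^2+d^2)\), and use Cauchy--Schwarz on the time integral, which gives a factor \(K^2t\int_0^t\). Doob's \(L^2\) maximal inequality together with It\^o's isometry and Assumption~\ref{ass:vol_bdd} (\(\operatorname{tr}\sigma\sigma^\top\le d\Lambda\)) bound the martingale term by \(4d\Lambda T\). This yields
\[
\Phi(t)\le 4\bigl(m_2(\mu_0)+K^2\ell_1^2T^2+4d\Lambda T\bigr)+4K^2T\int_0^t\Phi(r)\,dr.
\]

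Gronwall's inequality then gives \(\Phi(T)\le C D_{K,T}^2 e^{4K^2T^2}\). Here lies the main obstacle: the naive Cauchy--Schwarz step produces the exponent \(K^2T^2\), not \(C_3KT\). To get linear dependence on \(KT\), we would instead apply It\^o's formula to \(\|\bar a_t\|^2\). The drift contributes \(2\langle \bar a_t,f\rangle\le 2K\|\bar a_t\|(\ell_1+\|\bar a_t\|)\le (2K+K)\|\bar a_t\|^2+K\ell_1^2\), using \(2K\ell_1\|a\|\le K\|a\|^2+K\ell_1^2\). The It\^o correction contributes at most \(d\Lambda\). This gives
\[
\|\bar a_t\|^2\le\|\bar a_0\|^2+K\ell_1^2T+d\Lambda T+3K\int_0^t\|\bar a_r\|^2dr+2\int_0^t\langle\bar a_r,\sigma\,dB_r\rangle .
\]
Taking the supremum in \(t\) and expectations, the BDG inequality bounds the martingale term by
\[
C\,\mathbb E\Big[\Big(\int_0^T\Lambda\|\bar a_r\|^2dr\Big)^{1/2}\Big]\le \tfrac12\Phi(T)+C\Lambda\int_0^T\Phi(r)\,dr,
\]
where the second step uses \(\int_0^T\|\bar a_r\|^2dr\le \sup_r\|\bar a_r\|\int_0^T\|\bar a_r\|\,dr\) and Young's inequality. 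To make the absorption of \(\tfrac12\Phi\) rigorous, we first localize with stopping times \(\tau_R=\inf\{t:\|\bar a_t\|\ge R\}\), so that every quantity involved is finite, and then let \(R\to\infty\) by monotone convergence. Absorbing the \(\tfrac12\Phi\) term gives
\[
\Phi(t)\le 2\bigl(m_2(\mu_0)+K\ell_1^2T+d\Lambda T\bigr)+C(K+\Lambda)\int_0^t\Phi(r)\,dr .
\]
Since \(K\ge1\), Gronwall's inequality yields \(\Phi(T)\le C e^{C_3KT}\bigl(m_2(\mu_0)+K\ell_1^2T+d\Lambda T\bigr)\), with \(C_3\) depending only on \(\Lambda\) and numerical constants (\(C_3\) may be enlarged to also depend on the listed constants). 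Finally, \(K\ell_1^2T\le K^2\ell_1^2T^2+\ell_1^2\), and the leftover \(\ell_1^2\) can be absorbed into \(C\) or into \(e^{C_3KT}\) since \(\ell_1\) is a fixed constant. This gives \eqref{eq:path-moment-section4}, and Markov's inequality applied to \(\sup_t\|\bar a_t\|^2\) gives \eqref{eq:path-tail-section4}.
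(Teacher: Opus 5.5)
\textbf{The obstacle you identify is self-inflicted.} Your first attempt is exactly the paper's proof; it only fails because you took expectations before applying Gronwall. In your running-maximum inequality, the random variable $A:=\|\bar a_0\|+K\ell_1T+\sup_{u\le T}\|\int_0^u\sigma\,dB\|$ does not depend on $t$. Hence $S_t\le A+K\int_0^t S_r\,dr$ holds for each fixed path, and Gronwall applied pathwise gives $S_T\le Ae^{KT}$. Squaring and taking expectations, with Doob's inequality and It\^o's isometry, yields $\mathbb E S_T^2\le 3e^{2KT}\bigl(m_2(\mu_0)+K^2\ell_1^2T^2+4d\Lambda T\bigr)\le 12e^{2KT}D_{K,T}^2$. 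This is the claim with $C_3=2$, and it needs no It\^o formula, BDG, stopping times or absorption. The $e^{4K^2T^2}$ you got comes only from applying Cauchy--Schwarz to $(\int_0^tS_r\,dr)^2$ after taking expectations.

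\textbf{Your It\^o-formula detour can be repaired, but two steps are wrong as written.}

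First, your BDG display $C\,\mathbb E[(\int_0^T\Lambda\|\bar a_r\|^2dr)^{1/2}]\le\tfrac12\Phi(T)+C\Lambda\int_0^T\Phi(r)\,dr$ is false. The left side is homogeneous of degree one in $\bar a$ and the right side of degree two, so taking $\bar a\equiv\eta$ with $\eta\to0$ gives a counterexample. The trick you recalled is the one for linear-growth volatility, where the quadratic variation is quartic. With bounded $\sigma$ the correct and simpler bound is $C\sqrt{\Lambda T}\,\mathbb E\sup_r\|\bar a_r\|\le\tfrac12\Phi(T)+C\Lambda T$.

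Second, your Young split $2K\ell_1\|a\|\le K\|a\|^2+K\ell_1^2$ leaves an additive term $K\ell_1^2T$, and your absorption of it is not justified.
\begin{itemize}
\item It cannot go into $e^{C_3KT}$, since that factor is close to $1$ when $KT\ll1$.
\item It is not controlled by $K^2\ell_1^2T^2+d\Lambda T$ with a $K$-independent constant: the ratio tends to $\ell_1^2K/(d\Lambda)$ as $T\to0$.
\item Absorbing it into $C$ forces you to use $D_{K,T}^2\ge m_2(\mu_0)>0$, so $C$ then depends on $\ell_1^2/m_2(\mu_0)$.
\end{itemize}
The clean fix is the split $2K\ell_1\|a\|\le\|a\|^2/T+K^2\ell_1^2T$. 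This produces exactly the $K^2\ell_1^2T^2$ term in $D_{K,T}^2$, at the cost of only a factor $e^{O(1)}$ in Gronwall.
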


\begin{proof}
Fix \(f\in\mathcal F_{K,s}\) and \(x\in[0,1]^m\). By Lemma~\ref{lem:basic-relu-class},
\[
 \|f(t,a,x)\|\le K(\ell_1+\|a\|).
\]
Let
\[
 S_t:=\sup_{0\le u\le t}\|\bar a_u^{f,x}\|,
 \qquad
 M_t:=\int_0^t\sigma(u,\bar a_u^{f,x},x)dB_u.
\]
The integral form of the SDE gives
\[
 S_T
 \le
 \|\bar a_0\|+K\ell_1T+K\int_0^T S_u\,du+
 \sup_{0\le u\le T}\|M_u\|.
\]
By Gronwall's inequality,
\[
 S_T
 \le
 e^{KT}\left(\|\bar a_0\|+K\ell_1T+
 \sup_{0\le u\le T}\|M_u\|\right).
\]
Therefore,
\[
 \mathbb E S_T^2
 \le
 C e^{C_3KT}
 \left(m_2(\mu_0)+K^2\ell_1^2T^2+
 \mathbb E\sup_{0\le u\le T}\|M_u\|^2\right).
\]
Doob's \(L^2\) maximal inequality and It\^o's isometry give
\[
 \mathbb E\sup_{0\le u\le T}\|M_u\|^2
 \le
 4\mathbb E\|M_T\|^2
 \le
 4d\Lambda T.
\]
Thus \eqref{eq:path-moment-section4} follows from the definition of \(D_{K,T}\), and \eqref{eq:path-tail-section4} follows from Markov's inequality.
\end{proof}

\medskip
\noindent\textbf{Step 3: Localized covering of the drift class.}
For \(R\ge1\), define
\[
 Z_R:=[0,T]\times B_R^d\times[0,1]^m,
 \qquad
 d_{K,R}(f,g):=\sup_{z\in Z_R}\frac{\|f(z)-g(z)\|}{K},
\]
where \(B_R^d\) is the Euclidean ball in \(\mathbb R^d\). Also define
\begin{equation}
\label{eq:rho-section4}
 \rho_R:=1+T+R+\sqrt m.
\end{equation}

\begin{lem}[Localized covering number]
\label{lem:cover-section4}
For every \(R\ge1\) and every \(0<\varepsilon\le1\),
\begin{equation}
\label{eq:cover-section4}
 \log \mathcal N(\varepsilon,\mathcal F_{K,s},d_{K,R})
 \le
 p_{s,1}\log\left(
 1+\frac{2\rho_R(p_{s,1}p_{s,4})^{p_{s,2}+2}}{\varepsilon}
 \right).
\end{equation}
Moreover, after replacing \(\varepsilon\) by \(2\varepsilon\), the cover may be chosen to consist of elements of \(\mathcal F_{K,s}\).
\end{lem}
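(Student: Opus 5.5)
The plan is a parameter-counting argument. The class $\mathcal F_{K,s}$ is the image of the parameter set $\{\theta : |\theta_i|\le p_{s,4}\}$ under the map $\theta\mapsto K\Psi_\theta$, restricted to networks with at most $p_{s,1}$ nonzero weights, depth at most $p_{s,2}$ and width at most $p_{s,3}$. Since $d_{K,R}$ divides out the factor $K$, it suffices to cover $\{\Psi_\theta\}$ in the sup norm over $Z_R$. The bound will come from combining two facts: a Lipschitz estimate for the map from parameters to functions, and a count of grid points in parameter space.

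The first step is a perturbation bound. Take two parameter vectors $\theta,\theta'$ with the same sparsity pattern, whose entries are bounded by $p_{s,4}$. For every input $z\in Z_R$ I aim to show
\[
\|\Psi_\theta(z)-\Psi_{\theta'}(z)\|\le \rho_R\,(p_{s,1}p_{s,4})^{p_{s,2}+1}\,\|\theta-\theta'\|_\infty .
\]
This is the standard telescoping argument, as in Schmidt-Hieber, Ou et al., and Bartlett et al.

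1. Note that $\|z\|_\infty\le \max(T,R,1)\le\rho_R$.
2. Since ReLU is $1$-Lipschitz, each hidden layer's output is at most (number of weights feeding a unit) $\times\, p_{s,4}\times$(previous bound), plus the bias. The fan-in is at most $p_{s,1}$, so the activations at layer $\ell$ are bounded by roughly $\rho_R(p_{s,1}p_{s,4})^{\ell}$.
3. Perturbing a single layer's weights by $\eta$ changes that layer's output by at most about $p_{s,1}\eta\times$(input bound).
4. This change then propagates through the remaining layers with amplification at most $(p_{s,1}p_{s,4})$ per layer.
5. Summing over at most $p_{s,2}+1$ layers gives the displayed bound, with one extra factor of $p_{s,1}p_{s,4}$ absorbing the sum over layers and the bias terms. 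That extra factor explains the exponent $p_{s,2}+2$ in the statement.

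The second step is counting. I fix the sparsity pattern and discretize each of the at most $p_{s,1}$ active coordinates in $[-p_{s,4},p_{s,4}]$ with mesh $\eta=\varepsilon/(\rho_R(p_{s,1}p_{s,4})^{p_{s,2}+1})$. This produces at most
\[
\Bigl(1+\tfrac{2p_{s,4}}{\eta}\Bigr)^{p_{s,1}} \le \Bigl(1+\tfrac{2\rho_R(p_{s,1}p_{s,4})^{p_{s,2}+2}}{\varepsilon}\Bigr)^{p_{s,1}}
\]
grid points. The number of sparsity patterns still has to be accounted for. There are at most $\binom{W}{p_{s,1}}$ of them, where $W\le p_{s,2}(p_{s,3}+1)^2$ is the total number of weight slots. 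The logarithm of this count is at most $p_{s,1}\log(eW)$. Since $W\le (p_{s,1}p_{s,4})^{2}$ up to constants and $\varepsilon\le1$, I plan to fold this into the same logarithm by enlarging the exponent by one. This step is the main obstacle: the constants must be checked honestly so that the pattern count does not add a separate $p_{s,1}\log(\cdot)$ term. If they do not fit, I would adjust by noting $p_{s,2}\ge1$ and using the slack in $(p_{s,1}p_{s,4})^{p_{s,2}+2}$ against $(p_{s,1}p_{s,4})^{p_{s,2}+1}$.

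The third step is the internal cover. The grid points need not lie in $\mathcal F_{K,s}$, because rounding can break the slope constraints $\|M^2\|_{\rm op}\le1$ and $\|M^3\|_{\rm op}\le\ell_2$ and the bound $\ell_1$. I use the standard argument: for each $\varepsilon$-ball of the cover that meets $\mathcal F_{K,s}$, pick one element of $\mathcal F_{K,s}$ inside it. By the triangle inequality this yields a $2\varepsilon$-cover made of class members, with the same cardinality. This gives the "moreover" statement.
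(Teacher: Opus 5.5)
\textbf{Gap in your support-counting step.} Your outline is the paper's proof: the same perturbation estimate \(\rho_R(p_{s,1}p_{s,4})^{p_{s,2}+1}\|\theta-\theta'\|_\infty\), the same mesh \(\eta\) and grid count, and an internal \(2\varepsilon\)-cover. Your one-member-per-ball selection is equivalent to, and cleaner than, the paper's maximal-packing argument. The one departure is the support count you flag as the main obstacle, and there the paper gives no help. It grids \([-p_{s,4},p_{s,4}]^{p_{s,1}}\) directly, which is legitimate only for a fixed support and architecture, and it never counts supports. You are right that this needs an argument, but the one you sketch does not give the stated bound.

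Enlarging the exponent proves a weaker lemma. That would only add \(O(\log(p_{s,1}p_{s,4}))\) to \(\Gamma_n\), which is harmless for the rates, but it is not the statement. Your fallback fails too. The room between \((p_{s,1}p_{s,4})^{p_{s,2}+1}\) and \((p_{s,1}p_{s,4})^{p_{s,2}+2}\) is already spent in your own count, on the factor \(p_{s,4}\) in \(2p_{s,4}/\eta\). Contrary to your step 5, the exponent \(p_{s,2}+2\) comes from the grid, not from the layer sum. Only a factor \(p_{s,1}\) is left inside the logarithm. That absorbs \(\binom{W}{p_{s,1}}\approx (W/p_{s,1})^{p_{s,1}}\) only if \(W\lesssim p_{s,1}^2\). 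But \(W\asymp p_{s,2}p_{s,3}^2\) gives \(W/p_{s,1}^2\asymp C_0^m/\log(es)\), which is large unless \(\log(es)\gtrsim C_0^m\).

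\textbf{How to close it.} The slack that can pay for the supports lies in the perturbation constant, which is very loose. Within a fixed support, telescope in \(\ell_2\) instead of \(\ell_\infty\). Writing \(U_\ell\) for the layer weight matrices, you have \(\|U_\ell\|_{\mathrm{op}}\le\|U_\ell\|_{\mathrm{HS}}\le\sqrt{p_{s,1}}\,p_{s,4}\) and \(\|U_\ell-U_\ell'\|_{\mathrm{op}}\le\sqrt{p_{s,1}}\,\|\theta-\theta'\|_\infty\), and likewise for the biases. So each layer costs about \(2\sqrt{p_{s,1}}\,p_{s,4}\) rather than \(p_{s,1}p_{s,4}\), and the perturbed layer contributes no factor \(p_{s,4}\) at all. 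With \(L\le p_{s,2}+1\) affine layers, the Lipschitz constant is at most \(L\,2^{L}p_{s,1}^{L/2}p_{s,4}^{L-1}\rho_R\).

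Grid each support with the correspondingly coarser mesh \(\eta'\). Then \(2\rho_R(p_{s,1}p_{s,4})^{p_{s,2}+2}/\varepsilon\) exceeds the new per-coordinate grid count \(2p_{s,4}/\eta'\) by a factor of at least
\[
\frac{p_{s,1}\,p_{s,4}}{p_{s,2}+1}\Bigl(\frac{\sqrt{p_{s,1}}}{2}\Bigr)^{p_{s,2}+1}.
\]
We have \(W/p_{s,1}\lesssim C_0^m p_{s,1}\) and \(p_{s,4}\ge C_0^m(1+s)^{C_0^m}\). Hence, once \(C_0\) is moderately large, this factor dominates two costs:
\begin{itemize}
\item the per-coordinate support factor \(eW/p_{s,1}\);
\item the factor \(p_{s,2}^{1/p_{s,1}}\) from the choice of depth (narrower layers are zero-padded, which adds no nonzero parameters).
\end{itemize}
That yields the lemma with the stated exponent \(p_{s,2}+2\).
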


\begin{proof}
Consider the ambient parameter box corresponding to networks with at most \(p_{s,1}\) nonzero scalar parameters, depth at most \(p_{s,2}\), width at most \(p_{s,3}\), and parameter magnitudes bounded by \(p_{s,4}\). The standard layer-by-layer telescoping estimate for ReLU networks on bounded input sets gives, for any two parameter vectors \(\theta,\theta'\) in this ambient box,
\begin{equation}
\label{eq:param-lip-section4}
 \sup_{z\in Z_R}\|\Psi_\theta(z)-\Psi_{\theta'}(z)\|
 \le
 \rho_R(p_{s,1}p_{s,4})^{p_{s,2}+1}\|\theta-\theta'\|_\infty.
\end{equation}
This deterministic perturbation estimate is a standard ingredient in covering-number bounds for ReLU networks; see \cite{bartlett2019nearly,petersen2018optimal}.
Thus an \(\ell_\infty\) grid on \([-p_{s,4},p_{s,4}]^{p_{s,1}}\) with mesh size
\[
 \eta:=\varepsilon\{\rho_R(p_{s,1}p_{s,4})^{p_{s,2}+1}\}^{-1}
\]
 induces an external \(\varepsilon\)-cover of the ambient network class on \(Z_R\). Its cardinality is at most
\[
 \left(1+\frac{2p_{s,4}}{\eta}\right)^{p_{s,1}}
 \le
 \left(1+\frac{2\rho_R(p_{s,1}p_{s,4})^{p_{s,2}+2}}{\varepsilon}\right)^{p_{s,1}}.
\]
Since \(\mathcal F_{K,s}\) is a subclass of this ambient class, the same upper bound holds for the external covering number of \(\mathcal F_{K,s}\) under \(d_{K,R}\). Equivalently, it bounds the \(\varepsilon\)-packing number of \(\mathcal F_{K,s}\). A maximal \(2\varepsilon\)-separated subset of \(\mathcal F_{K,s}\) is a \(2\varepsilon\)-cover by elements of \(\mathcal F_{K,s}\) and has cardinality bounded by the preceding packing bound. This proves \eqref{eq:cover-section4}.
\end{proof}

\medskip
\noindent\textbf{Step 4: Finite-horizon stability.}
For the stability estimates, write
\[
L_{K,T}:=
1+(1+\ell_2)\sqrt{KT}\,e^{C_3KT},
\qquad
M_{K,H,T}:=
C_3H\{C_3L_{K,T}\}^{H}.
\]
For \(H\ge1\), define
\[
 G_H(f):=\sum_{j=0}^{H-1}\gamma^j r(X^f_j,A^f_j).
\]
Let \(\Pi_R:\mathbb R^d\to B_R^d\) denote Euclidean projection. For \(f\in\mathcal F_{K,s}\) define
\[
 f^R(t,a,x):=f(t,\Pi_R(a),x),
 \qquad
 \sigma^R(t,a,x):=\sigma(t,\Pi_R(a),x).
\]
Let \(\widetilde G_H^R(f)\) denote the \(H\)-step return generated by replacing, at every within-step action diffusion, \((f,\sigma)\) with \((f^R,\sigma^R)\).

\begin{lem}[Projected return stability]
\label{lem:proj-section4}
For all \(f,g\in\mathcal F_{K,s}\),
\begin{equation}
\label{eq:proj-section4}
 \|\widetilde G_H^R(f)-\widetilde G_H^R(g)\|_{L^2(P)}
 \le
 M_{K,H,T}\, d_{K,R}(f,g).
\end{equation}
\end{lem}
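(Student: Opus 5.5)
Our plan is a synchronous-coupling argument that propagates errors step by step along the trajectory. We drive the two projected returns $\widetilde G_H^R(f)$ and $\widetilde G_H^R(g)$ by the same exogenous variables $\xi=(X_0,\{\zeta_j,B_j,U_j\})$. We then bound the $L^2(P)$ distance of the states $\Delta^X_j:=\|X^f_j-X^g_j\|_{L^2}$ and of the actions $\Delta^A_j:=\|A^f_j-A^g_j\|_{L^2}$ by an induction on $j$. Since $X^f_0=X^g_0$, we have $\Delta^X_0=0$. The Lipschitz bound on $r$ from Assumption~\ref{ass:lipschitsz} then gives
\[
\|\widetilde G_H^R(f)-\widetilde G_H^R(g)\|_{L^2}\le \sum_{j<H}\gamma^j\bigl(L_r^x\Delta^X_j+L_r^a\Delta^A_j\bigr),
\]
and the transition-sampler bound \eqref{eq:transition-sampler-l2-section4} gives $\Delta^X_{j+1}\le L_P^x\Delta^X_j+L_P^a\Delta^A_j$. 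To apply it, we condition on $(X_j,A_j)$ and use that $U_j$ is independent of them.

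The core step is a one-step action stability estimate. Fix two states $x,x'$ and run the projected diffusions
\[
da_t=f^R(t,a_t,x)\,dt+\sigma^R(t,a_t,x)\,dB_t,\qquad da'_t=g^R(t,a'_t,x')\,dt+\sigma^R(t,a'_t,x')\,dB_t,
\]
with common initial value $\zeta$ and common Brownian motion $B$. Since $\Pi_R$ is 1-Lipschitz and maps into $B_R^d$, the drift difference decomposes as
\[
\|f^R(t,a,x)-g^R(t,a',x')\|\le \|f(t,\Pi_R a,x)-g(t,\Pi_R a,x)\|+\|g(t,\Pi_R a,x)-g(t,\Pi_R a',x')\|.
\]
The first term is at most $K\,d_{K,R}(f,g)$ because $(t,\Pi_R a,x)\in Z_R$. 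By Lemma~\ref{lem:basic-relu-class}, the second term is at most $K\|a-a'\|+\ell_2K\|x-x'\|$. The volatility difference is at most $L_\sigma(\|x-x'\|+\|a-a'\|)$ by \eqref{sigmalipschitz}. Applying Itô's formula to $\|a_t-a'_t\|^2$ and Young's inequality, we obtain
\[
\tfrac{d}{dt}\mathbb E\|a_t-a'_t\|^2\le C(K+L_\sigma^2)\,\mathbb E\|a_t-a'_t\|^2+CK\bigl(K\,d_{K,R}^2+\ell_2^2K\|x-x'\|^2\bigr)+CL_\sigma^2\|x-x'\|^2 .
\]
Gronwall's inequality on $[0,T]$ then yields
\[
\|a_T-a'_T\|_{L^2}\le C\bigl[1+(1+\ell_2)\sqrt{KT}\,e^{C_3KT}\bigr]\bigl(d_{K,R}(f,g)+\|x-x'\|\bigr)\le C_3 L_{K,T}\bigl(d_{K,R}+\|x-x'\|\bigr),
\]
after enlarging $C_3$. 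The cross term $2\langle a-a',\,\cdot\,\rangle$ is split as $K\|a-a'\|^2+K^{-1}\|\cdot\|^2$, so the squared inhomogeneity is of order $K^2T$ and the factor is $\sqrt{KT}$ rather than $KT$. We randomize $x,x'$ as $X^f_j,X^g_j$ by conditioning, since $(\zeta_j,B_j)$ is independent of $\mathcal F$ of the earlier steps. This gives
\[
\Delta^A_j\le C_3L_{K,T}\bigl(d_{K,R}+\Delta^X_j\bigr).
\]

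Combining the two recursions gives $\Delta^X_{j+1}\le C L_{K,T}(\Delta^X_j+d_{K,R})$. Hence, by induction, $\Delta^X_j+\Delta^A_j\le (C_3L_{K,T})^{j+1}d_{K,R}$. Summing over $j<H$ with $\gamma^j\le1$ yields at most $C_3H\{C_3L_{K,T}\}^H d_{K,R}=M_{K,H,T}\,d_{K,R}$, with $C_3$ absorbing $L_r,L_P,L_\sigma,\gamma$. The main obstacle is the one-step SDE stability estimate, and in particular tracking the constants so that they collapse into exactly $L_{K,T}$. This requires two things. First, the projection is essential, so that the drift discrepancy is measured only on $Z_R$, where $d_{K,R}$ controls it. Second, the state-dependent volatility contributes only an $O(1)$ rate $L_\sigma^2$ in the exponent, which is harmless since $C_3\ge1$ and $K\ge1$.
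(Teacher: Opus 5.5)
Your proposal is correct and follows essentially the same route as the paper: synchronous coupling of the projected action diffusions with common $(\zeta,B)$, It\^o's formula plus Gronwall for a one-step bound $C_3L_{K,T}\{d_{K,R}(f,g)+\|x-x'\|\}$, propagation through the shared generator $\Phi_P$, and summation of the Lipschitz rewards. One slip to fix: with the Young split $2\langle a-a',v\rangle\le K\|a-a'\|^2+K^{-1}\|v\|^2$ that you describe, the inhomogeneous term in your displayed inequality should be $CK\bigl(d_{K,R}(f,g)^2+\ell_2^2\|x-x'\|^2\bigr)$, as in the paper, not $CK\bigl(K\,d_{K,R}^2+\ell_2^2K\|x-x'\|^2\bigr)$; taken literally, the latter yields a factor of order $K\sqrt{T}$ instead of $\sqrt{KT}$, and that extra $\sqrt{K}$ cannot be absorbed into $L_{K,T}$.
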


\begin{proof}
%Fix \(f,g\in\mathcal F_{K,s}\). Fix two initial states \(x,x'\in[0,1]^m\), and synchronously couple the projected action diffusions driven by \(f\) and \(g\) using the same Brownian motion and the same initial action. Let the two projected action processes be \(\widetilde a_t^f\) and \(\widetilde a_t^g\), and set \(Z_t:=\widetilde a_t^f-\widetilde a_t^g\). Then \(Z_0=0\).

Fix \(f,g\in\mathcal F_{K,s}\) and two states
\(x,x'\in[0,1]^m\). On the same probability space, let
\(\zeta\sim\mu_0\) and let \(B\) be a Brownian motion. Define
\(\widetilde a^{f,x}\) and \(\widetilde a^{g,x'}\) as the projected
within-step action diffusions
\begin{eqnarray*}
    d\widetilde a^{f,x}_t
&=&
f^R(t,\widetilde a^{f,x}_t,x)\,dt
+
\sigma^R(t,\widetilde a^{f,x}_t,x)\,dB_t,
\qquad
\widetilde a^{f,x}_0=\zeta,\\
d\widetilde a^{g,x'}_t
&=&
g^R(t,\widetilde a^{g,x'}_t,x')\,dt
+
\sigma^R(t,\widetilde a^{g,x'}_t,x')\,dB_t,
\qquad
\widetilde a^{g,x'}_0=\zeta.
\end{eqnarray*}
Set \(Z_t:=\widetilde a^{f,x}_t-\widetilde a^{g,x'}_t\). Then
\(Z_0=0\).

Using Lemma~\ref{lem:basic-relu-class}, the Lipschitz property of \(\Pi_R\), the definition of \(d_{K,R}\), and Assumption~\ref{ass:lipschitsz}, It\^o's formula gives
\[
 \frac{d}{dt}\mathbb E\|Z_t\|^2
 \le
 C_3K\mathbb E\|Z_t\|^2
 +C_3K(1+\ell_2)^2\|x-x'\|^2
 +C_3K d_{K,R}(f,g)^2.
\]
Since \(Z_0=0\), Gronwall's inequality gives
\[
 \left(\mathbb E\|Z_T\|^2\right)^{1/2}
 \le
 C_3(1+\ell_2)\sqrt{KT}\,e^{C_3KT}
 \left\{\|x-x'\|+d_{K,R}(f,g)\right\}.
\]
Let
\[
 D_j:=\left(\mathbb E\|\widetilde X_j^f-\widetilde X_j^g\|^2\right)^{1/2},
 \qquad
 E_j:=\left(\mathbb E\|\widetilde A_j^f-\widetilde A_j^g\|^2\right)^{1/2}.
\]
The preceding estimate implies
\[
 E_j\le C_3L_{K,T}\{D_j+d_{K,R}(f,g)\}.
\]
By the synchronous \(L^2\) representation of the transition kernel in Assumption~\ref{ass:lipschitsz},
\[
 D_{j+1}\le C_3L_{K,T}\{D_j+d_{K,R}(f,g)\}.
\]
Since \(D_0=0\), induction gives, for all \(0\le j\le H\),
\[
 D_j+E_j\le C_3\big(C_3L_{K,T}\big)^Hd_{K,R}(f,g).
\]
Using the Lipschitz continuity of \(r\) and summing over \(j=0,\ldots,H-1\),
\[
 \|\widetilde G_H^R(f)-\widetilde G_H^R(g)\|_{L^2(P)}
 \le C_3H\big(C_3L_{K,T}\big)^Hd_{K,R}(f,g),
\]
which is \eqref{eq:proj-section4}.
\end{proof}

\begin{lem}[Original return stability]
\label{lem:orig-section4}
For all \(H\ge1\), all \(R\ge1\), and all \(f,g\in\mathcal F_{K,s}\),
\begin{equation}
\label{eq:orig-section4}
 \|G_H(f)-G_H(g)\|_{L^2(P)}
 \le
 M_{K,H,T}d_{K,R}(f,g)
 +C\frac{r_{\max}}{1-\gamma}
 \frac{\sqrt H\,D_{K,T}e^{C_3KT}}{R}.
\end{equation}
\end{lem}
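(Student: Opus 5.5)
We will compare the original and projected returns via the triangle inequality:
\[
\|G_H(f)-G_H(g)\|_{L^2(P)}\le \|G_H(f)-\widetilde G_H^R(f)\|_{L^2(P)}+\|\widetilde G_H^R(f)-\widetilde G_H^R(g)\|_{L^2(P)}+\|\widetilde G_H^R(g)-G_H(g)\|_{L^2(P)}.
\]
The middle term is bounded by $M_{K,H,T}d_{K,R}(f,g)$ directly from Lemma~\ref{lem:proj-section4}. It therefore suffices to show that, for each fixed $f\in\mathcal F_{K,s}$,
\[
\|G_H(f)-\widetilde G_H^R(f)\|_{L^2(P)}\le C\frac{r_{\max}}{1-\gamma}\frac{\sqrt H\,D_{K,T}e^{C_3KT}}{R}.
\]

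To bound this projection error, we couple the original and projected trajectories synchronously, using the same exogenous variables $(X_0,\zeta_j,B_j,U_j)$. Define the good event $\mathcal G_H:=\bigcap_{j=0}^{H-1}\{\sup_{t\le T}\|\bar a_{j,t}\|\le R\}$, where $\bar a_j$ denotes the original within-step action diffusion at step $j$. On $\mathcal G_H$ the two trajectories coincide exactly. We argue this by induction on $j$. If $X_j=\widetilde X_j$ and the original action path stays in $B_R^d$, then $f^R=f$ and $\sigma^R=\sigma$ along that path. By pathwise uniqueness of strong solutions (Lemma~\ref{lem:basic-relu-class}, whose Lipschitz bounds also hold for the projected coefficients), the projected path equals the original one up to the exit time from $B_R^d$. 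That exit time exceeds $T$ on $\mathcal G_H$. Hence $A_j=\widetilde A_j$, and since the same $U_j$ is fed into $\Phi_P$, also $X_{j+1}=\widetilde X_{j+1}$. On $\mathcal G_H^c$ we use the trivial bound $|G_H(f)-\widetilde G_H^R(f)|\le 2r_{\max}/(1-\gamma)$, which gives
\[
\|G_H(f)-\widetilde G_H^R(f)\|_{L^2}\le \frac{2r_{\max}}{1-\gamma}\,\mathbb P(\mathcal G_H^c)^{1/2}.
\]

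It remains to bound $\mathbb P(\mathcal G_H^c)$. By a union bound over the $H$ steps, we condition on $X_j$, which is independent of $(\zeta_j,B_j)$. Applying \eqref{eq:path-tail-section4} of Lemma~\ref{lem:loc-section4}, which holds uniformly in $x\in[0,1]^m$, gives
\[
\mathbb P(\mathcal G_H^c)\le H\,C e^{C_3KT}D_{K,T}^2/R^2.
\]
Taking square roots yields $\sqrt{C H}\,e^{C_3KT/2}D_{K,T}/R\le C\sqrt H D_{K,T}e^{C_3KT}/R$, which is the claimed second term.

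The main obstacle is the rigorous identification of the two trajectories on the good event. This requires the stopped-process or pathwise-uniqueness argument to go through for the state-dependent volatility $\sigma^R$. The argument also uses that $X_j\in[0,1]^m$, so that the uniform localization bound applies at every step, and this in turn needs $\Phi_P$ to map into $\mathcal X$, which is implicit in Assumption~\ref{ass:lipschitsz}.
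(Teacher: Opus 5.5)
Your proposal is correct and follows the paper's proof: a triangle inequality through the projected returns, Lemma~\ref{lem:proj-section4} for the middle term, and for each outer term the trivial $2r_{\max}/(1-\gamma)$ bound on the exit event, whose probability comes from a union bound over the $H$ steps using the uniform path-tail estimate of Lemma~\ref{lem:loc-section4}. Your pathwise-uniqueness/stopping argument (identifying the original and projected trajectories on the no-exit event) and your conditioning on $X_j$ fill in details the paper only asserts; note also that $\Phi_P$ mapping into $\mathcal X$ is explicit in Assumption~\ref{ass:lipschitsz}, not merely implicit.
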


\begin{proof}
Fix \(f\in\mathcal F_{K,s}\). For one decision step, Lemma~\ref{lem:loc-section4} gives, uniformly over \(x\in[0,1]^m\),
\[
 \mathbb P\left(\sup_{0\le t\le T}\|\bar a_t^{f,x}\|>R\right)
 \le
 \frac{Ce^{C_3KT}D_{K,T}^2}{R^2}.
\]
For an \(H\)-step trajectory, let \(E_R^f(H)\) be the event that at least one of the \(H\) within-step action paths exits \(B_R^d\). By the union bound,
\[
 \mathbb P(E_R^f(H))\le \frac{CHe^{C_3KT}D_{K,T}^2}{R^2}.
\]
On the complement of \(E_R^f(H)\), the original trajectory and projected trajectory agree through time \(H\). Since both \(G_H(f)\) and \(\widetilde G_H^R(f)\) are bounded in absolute value by \(r_{\max}/(1-\gamma)\),
\[
 \|G_H(f)-\widetilde G_H^R(f)\|_{L^2(P)}
 \le
 C\frac{r_{\max}}{1-\gamma}
 \frac{\sqrt H\,D_{K,T}e^{C_3KT}}{R}.
\]
The same bound holds for \(g\). The triangle inequality and Lemma~\ref{lem:proj-section4} prove \eqref{eq:orig-section4}.
\end{proof}

\medskip
\noindent\textbf{Step 5: Entropy of truncated returns and uniform convergence.}
Let \(\mathcal G_H:=\{G_H(f):f\in\mathcal F_{K,s}\}\). Fix \(0<\varepsilon\le r_{\max}/(1-\gamma)\) and define
\begin{equation}
\label{eq:Reps-section4}
 R_{\varepsilon,H}:=
 \max\left\{1,\frac{C(r_{\max}/(1-\gamma))\sqrt H\,D_{K,T}e^{C_3KT}}{\varepsilon}\right\}.
\end{equation}
By Lemma~\ref{lem:orig-section4}, if
\[
 d_{K,R_{\varepsilon,H}}(f,g)
 \le
 \frac{\varepsilon}{2M_{K,H,T}},
\]
then \(\|G_H(f)-G_H(g)\|_{L^2(P)}\le\varepsilon\). Therefore, Lemma~\ref{lem:cover-section4} implies
\begin{align}
\label{eq:GHcover-section4}
 \log \mathcal N(\varepsilon,\mathcal G_H,L^2(P))
 &\le
 p_{s,1}\log\left(
 1+
 \frac{
 C\{1+T+\sqrt m+R_{\varepsilon,H}\}(p_{s,1}p_{s,4})^{p_{s,2}+2}M_{K,H,T}
 }{\varepsilon}
 \right).
\end{align}

We now take \(H=H_n\). If \(\gamma=0\), then \(G_f=G_{H_n}(f)\). If \(0<\gamma<1\), then, by $H_n
=
\max\left\{
1,
\left\lceil
\frac{\log(8\sqrt n)}{\log(1/\gamma)}
\right\rceil
\right\}$,
\[
 \gamma^{H_n}\le \frac1{8\sqrt n}.
\]
Thus, in all cases,
\begin{equation}
\label{eq:disc-trunc-section4}
 \sup_{f\in\mathcal F_{K,s}}|G_f-G_{H_n}(f)|
 \le
 \frac{r_{\max}}{1-\gamma}\frac1{8\sqrt n}.
\end{equation}

For \((r_{\max}/(1-\gamma))/n\le\varepsilon\le r_{\max}/(1-\gamma)\), \eqref{eq:GHcover-section4}, \eqref{eq:Reps-section4}, and \eqref{eq:Gamma-section4} imply
\begin{equation}
\label{eq:entropy-final-section4}
 \log \mathcal N(\varepsilon,\mathcal G_{H_n},L^2(P))
 \le
 p_{s,1}\Gamma_n(K,s,m,d,T)
 \log\left(\frac{er_{\max}}{(1-\gamma)\varepsilon}\right).
\end{equation}
The factor \(n^2\) in \eqref{eq:Gamma-section4} dominates the powers of \(\varepsilon^{-1}\) generated by \(R_{\varepsilon,H_n}\) and by the covering radius.

We use the following bounded entropy maximal inequality. If \(\mathcal G\) is separable, \(|g|\le r_0\) for all \(g\in\mathcal G\), and
\[
 \log \mathcal N(\varepsilon,\mathcal G,L^2(P))\le E\log(e\,c/\varepsilon),
 \qquad r_0/n\le\varepsilon\le r_0,
\]
then, with probability at least \(1-\delta\),
\begin{equation}
\label{eq:emp-section4}
 \sup_{g\in\mathcal G}|(P_n-P)g|
 \le
 C r_0\left\{
 \sqrt{\frac{E+\log(2/\delta)}{n}}
 +\frac{E+\log(2/\delta)}{n}
 \right\}.
\end{equation}
This follows from symmetrization and Dudley's entropy integral bound
\cite[Theorem~5.22]{wainwright2019high}, together with the standard
bounded-difference concentration inequality for suprema of bounded
empirical processes.

Applying \eqref{eq:emp-section4} to \(\mathcal G_{H_n}\) with \(r_0=r_{\max}/(1-\gamma)\) and \(E=p_{s,1}\Gamma_n(K,s,m,d,T)\), we obtain, with probability at least \(1-\delta\),
\begin{align}
\label{eq:uc-trunc-section4}
 \sup_{f\in\mathcal F_{K,s}}|(P_n-P)G_{H_n}(f)|
 &\le
 C\frac{r_{\max}}{1-\gamma}
 \left\{
 \sqrt{\frac{p_{s,1}\Gamma_n(K,s,m,d,T)+\log(2/\delta)}{n}}
 \right.
\notag\\
 &\hspace{3.2cm}\left.
 +\frac{p_{s,1}\Gamma_n(K,s,m,d,T)+\log(2/\delta)}{n}
 \right\}.
\end{align}
By \eqref{eq:disc-trunc-section4},
\[
 \sup_{f\in\mathcal F_{K,s}}
 |(P_n-P)(G_f-G_{H_n}(f))|
 \le
 \frac{r_{\max}}{1-\gamma}\frac1{4\sqrt n},
\]
which is absorbed into the right side of \eqref{eq:uc-trunc-section4}. Hence the same bound holds with \(G_f\) in place of \(G_{H_n}(f)\), after increasing the numerical constant.

\medskip

Finally, combining \eqref{eq:erm-section4}, \eqref{eq:oracle-section4}, and the uniform-convergence bound from Step 5 gives
\begin{align*}
 V^\star(\rho)-V(\widehat f_{n,K,s})
 \le C_1\Bigg[&
 e^{-2KT}m_2(\mu_0)+\frac{d\Lambda}{K}+C_0^mmdL_{g^\star}^2s^{-2/m} \\
 &+\frac{r_{\max}}{1-\gamma}
 \sqrt{\frac{p_{s,1}\Gamma_n(K,s,m,d,T)+\log(2/\delta)}{n}} \\
 &+\frac{r_{\max}}{1-\gamma}
 \frac{p_{s,1}\Gamma_n(K,s,m,d,T)+\log(2/\delta)}{n}
 \Bigg],
\end{align*}
which is \eqref{eq:ks-tradeoff-main}. This proves Theorem~\ref{thm:lipschitz-selector-complete-tradeoff}.

\section{Proof for Section \ref{sec:policy gradient}}\label{app:policy-gradient}

This appendix proves the policy-gradient formula used in Section~\ref{sec:experiments}. The argument first represents the diffusion policy through a reference driftless diffusion and a Girsanov change of measure, then differentiates expectations with respect to the drift parameter, and finally inserts this derivative into the standard performance-difference identity for discounted MDPs.

\subsection{Constructing policy from a reference probability space using Girsanov's theorem }
Consider a reference probability space $(\Omega', \sF'=(\cF'_t)_{t\in[0,T]},\mathbb{Q})$ where $\{B_s^\sQ\}_{s\in[0,T]}$ is a $d$-dimensional $\sQ$-Brownian motion and 
\begin{equation}\label{diffusion:Q}
    \d \bar a_s =\sigma(s,\bar a_s,x)\cdot \d B^{\sQ}_s,    
\end{equation}
where the existence of the strong solution is guaranteed by the Lipschitz property of $\sigma$. For fixed $\theta\in\sR^k$ and $x\in\cX$, we defined the process $\{M_t(\theta,x)\}_{t\in[0,T]}$ by 
\begin{equation}\label{def:m}
    \begin{aligned}M_t(\theta,x):=\exp\Big(\int_0^tf(&s, \bar a_s,\theta,x)^\top\big(\sigma^\top(s,\bar a_s,x)\big)^{-1}\d B^\sQ_s
    \\
    &-\frac12\int_0^t{f(s, \bar  a_s,\theta,x)^\top}\big((\sigma\sigma^\top)(s,\bar a_s,x)\big)^{-1}f(s, \bar a_s,\theta,x)\d s\Big).
    \end{aligned}
\end{equation}
The following lemma verifies that the conditions of \cite[Corollary 5.14 in Chapter 3]{karatzas1991brownian} hold with an even weaker assumption than Assumption \ref{assum_light:f_theta}.
\begin{lem}\label{lem:mtgcond}
    Assume that for all $x\in\cX$, $\theta\in\sR^k$, there exists a constant $C$ such that 
    $$
    \big\|f(s,a,\theta,x)\big\|\leq C (|a|+1).
    $$
    % and further assume that there exists a constant $C_2$ such that
    % $$
    % C_2^{-1}I_d\prec\sigma(s,a)^2\prec C_2I_d,\text{ for all }s\in[0,T],\text{ and }a\in \sR^d.
    % $$
    Then for any $x\in\cX$, $\theta\in\sR^k$, there exists a sequence of real numbers $0=t_0<t_1<...<t_n=T$  such that
    $$
        \sE^\sQ\bigg[\exp\bigg(\int_{t_i}^{t_{i+1}}f(s, \bar a_s,\theta,x)^\top\big((\sigma\sigma^\top)(s,\bar a_s,x)\big)^{-1}f(s, \bar a_s,\theta,x)\d s\bigg)\bigg]<\infty,
    $$
    for $i=0,1,..,n-1,$ $x\in\cX$, $\theta \in\sR^d$. Consequently, $\{M_t(\theta,x)\}_{t\in[0,T]}$ is a $\sQ$-martingale.
\end{lem}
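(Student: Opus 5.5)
The plan is to verify Novikov's condition on short subintervals, in the form of \cite[Corollary 5.14 in Chapter 3]{karatzas1991brownian}, by controlling Gaussian-type moments of \(\sup_{s\le T}\|\bar a_s\|\) under \(\mathbb Q\).

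First, by the ellipticity bound \eqref{sigmabound} we have \((\sigma\sigma^\top)^{-1}\preceq \kappa^{-1}I_d\). Together with the linear growth hypothesis this gives, for any \(0\le u<v\le T\),
\[
\int_u^v f^\top(\sigma\sigma^\top)^{-1}f\,ds\le \frac{2C^2}{\kappa}(v-u)\Bigl(1+\sup_{0\le s\le T}\|\bar a_s\|^2\Bigr).
\]
It therefore suffices to find \(c_0>0\), depending only on \(d,\Lambda,T\) and the constant in \eqref{initial}, such that
\[
\mathbb E^{\mathbb Q}\exp\bigl(c_0\sup_{s\le T}\|\bar a_s\|^2\bigr)<\infty .
\]
Given this, we choose a uniform mesh with \(t_{i+1}-t_i\le \kappa c_0/(2C^2)\). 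Each exponential moment in the statement is then finite. Since \(C\) and \(c_0\) do not depend on \(x,\theta\), the mesh can be chosen uniformly in \(x\) and \(\theta\).

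Second, we establish the Gaussian moment. Under \(\mathbb Q\),
\[
\bar a_s=\bar a_0+N_s,\qquad N_s=\int_0^s\sigma\,dB^{\mathbb Q}.
\]
Here \(N\) is a continuous martingale with quadratic variation \(\mathrm{tr}\langle N\rangle_T\le d\Lambda T\), which is deterministic and bounded. Using \(\|\bar a_s\|^2\le 2\|\bar a_0\|^2+2\|N_s\|^2\), we split the expectation by Cauchy--Schwarz:
\[
\mathbb E\exp\bigl(c_0\sup\|\bar a\|^2\bigr)\le \bigl(\mathbb E e^{4c_0\|\bar a_0\|^2}\bigr)^{1/2}\bigl(\mathbb E e^{4c_0\sup\|N\|^2}\bigr)^{1/2}.
\]
The first factor is finite by \eqref{initial} once \(4c_0\le c\).

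For the second factor we work coordinatewise. Each \(N^i\) is a continuous martingale with \(\langle N^i\rangle_T\le \Lambda T\). The exponential martingale inequality (Bernstein) gives
\[
\mathbb Q\bigl(\sup_{s\le T}|N^i_s|\ge r\bigr)\le 2e^{-r^2/(2\Lambda T)} .
\]
Hence \(\sup\|N\|^2\) has sub-Gaussian tails, and \(\mathbb E e^{\epsilon\sup\|N\|^2}<\infty\) for \(\epsilon<1/(2d\Lambda T)\). We shrink \(c_0\) accordingly.

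Third, we conclude the martingale property. Applying the cited corollary with the stated partition gives that \(M_t(\theta,x)\) is a true \(\mathbb Q\)-martingale; the corollary's integrand here is \(f^\top(\sigma^\top)^{-1}\). The main point needing care is that the martingale-tail step genuinely uses the upper bound \(\Lambda\) in \eqref{sigmabound}, since \(\sigma\) depends on \(\bar a\). It also requires \(c\) in \eqref{initial} to be compatible with the chosen \(c_0\). The statement's ``sufficiently small \(c\)'' should be read as allowing us to shrink \(c_0\) below \(c/4\), which is harmless because only the mesh size depends on \(c_0\).
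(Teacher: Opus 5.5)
Your proof is correct. It follows the paper's outer structure but uses a different key estimate. Both arguments use ellipticity and linear growth to bound the exponent on $[t_i,t_{i+1}]$ by a multiple of $(t_{i+1}-t_i)\bigl(1+\sup_s\|\bar a_s\|^2\bigr)$, and both finish with \cite[Corollary 5.14 in Chapter 3]{karatzas1991brownian}. The paper controls $\mathbb{E}^{\mathbb{Q}}\bigl[\exp\bigl(c(\|\bar a_s\|^2+1)\bigr)\bigr]$ by comparing it with a Gaussian expectation through Hajek's mean-comparison theorem \cite[Theorem 1.3]{hajek1985mean}. It then brings the supremum inside by noting that $\sqrt{Y_s}$ is a submartingale and applying Doob's $L^2$ inequality. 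You instead write $\bar a=\bar a_0+N$ and split off the initial law by Cauchy--Schwarz, handling it with \eqref{initial}. You then bound the running maximum of $N$ directly from the exponential martingale inequality under $\langle N^i\rangle_T\le\Lambda T$, so no separate maximal inequality is needed. What your route buys: it is elementary and uses only the quadratic-variation bound, so it does not depend on the dimension. It also treats the random initialization explicitly, whereas the paper's comparison with $Z\sim N(0,CsI_d)$ drops $\bar a_0$ and appeals to \eqref{initial} only loosely. Finally, it avoids Hajek's comparison, which is formulated for real-valued diffusions and nondecreasing convex test functions. Applying that result to the vector process with $a\mapsto\exp\bigl(c(\|a\|^2+1)\bigr)$ would need an extra reduction, for instance comparing $\|\bar a\|^2$ with a squared Bessel process. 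The paper's route, once made rigorous, gives a cleaner explicit Gaussian constant. Your Cauchy--Schwarz and H\"older (or union-bound) steps only shrink the admissible mesh, which is irrelevant for the qualitative martingale property. Two small remarks. First, it is $\sup_s\|N_s\|$, not its square, that has sub-Gaussian tails; your threshold $\epsilon<1/(2d\Lambda T)$ is nonetheless correct. Second, choosing the mesh uniformly in $(x,\theta)$ requires a uniform linear-growth constant. That holds under Assumption~\ref{assum_light:f_theta}, but the lemma as stated does not need it.
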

\begin{proof}
    By the linear growth condition of $f$, we get
    \begin{align*}
        \int_{t_i}^{t_{i+1}}&f(s, \bar a_s,\theta,x)^\top\big((\sigma\sigma^\top)(s,\bar a_s,x)\big)^{-1}f(s, \bar a_s,\theta,x)\d s\leq C\int_{t_i}^{t_{i+1}}\big|f(s, \bar a_s,\theta,x)\big|^2\d s
        \\
        &\leq C\int_{t_i}^{t_{i+1}}\big(|\bar a_s|^2+1\big)\d s\leq C(t_{i+1}-t_{i})\Big(\sup_{s\in[t_i,t_{i+1}]}|\bar a_s|^2+1\Big),
    \end{align*}
    where the second inequality holds since $(|\bar a_s|+1)^2\leq 2(|\bar a_s|^2+1)$ and constant $C$ may vary line by line. Then, we have
    \begin{align*}
    \exp\bigg(\int_{t_i}^{t_{i+1}}f(s, \bar a_s,\theta,x)^\top\big((\sigma\sigma^\top)(s,\bar a_s,x)\big)^{-1}f(s, \bar a_s,\theta,x)\d s\bigg)&\leq \exp\bigg(C(t_{i+1}-t_{i})\Big(\sup_{s\in[t_i,t_{i+1}]}|\bar a_s|^2+1\Big)\bigg)
    \\
    &=\sup_{s\in[t_i,t_{i+1}]}\exp\Big(C(t_{i+1}-t_i)\big(|\bar a_s|^2+1\big)\Big).
    \end{align*}
    Note that $\d\bar a_t=\sigma(t,\bar a_t,x) \cdot\d B_t^\sQ$ and $\sigma(s,a,x)\prec CI_d$, using \cite[Theorem 1.3]{hajek1985mean}, \eqref{initial} in Assumption \ref{assum_light:f_theta} and the fact that $\phi(a)=\exp\big(C(t_{i+1}-t_{i})(|a|^2+1)\big)$ is a non-decreasing convex function, we know 
    $$
        \sE^\sQ\Big[\exp\Big(C(t_{i+1}-t_i)\big(|\bar a_s|^2+1\big)\Big)\Big]\leq \sE\big[\exp\big(C(t_{i+1}-t_i)(|Z|^2+1)\big)\big],
    $$
    where $Z\sim N(0,CsI_d)$. Set $t_{i+1}-t_i$ small enough (for instance $t_{i+1}-t_i= \frac1{4C^2T}$), we have
    \begin{align*}
    \sE\big[\exp\big(C(t_{i+1}-t_i)(|Z|^2+1)\big)\big]<\infty.
    \end{align*}
    Define $Y_s:=\exp\big(C(t_{i+1}-t_i)(|\bar a_s|^2+1)\big)$, then,
    again by $\d\bar a_t=\sigma(t,\bar a_t,x) \cdot\d B_t^\sQ$, and using the fact that $Y_s=\phi(a_s)$ with $\phi(a)=\exp\big(C(t_{i+1}-t_{i})(|a|^2+1)\big)$ is a nondecreasing convex function, we get that $\{\sqrt{Y_s}\}_{s\in[0,T]}$ is a submartingale (\cite[Problem 3.7 in Chapter 1]{karatzas1991brownian}), then using Doob's martingale inequality (\cite[Theorem 3.8 in Chapter 1]{karatzas1991brownian}), we achieve
    $$
        \sE\bigg[\sup_{s\in[t_i,t_{i+1}]}\exp\Big(C(t_{i+1}-t_i)\big(|\bar a_s|^2+1\big)\Big)\bigg]\leq \sE\Big[\sup_{s\in[0,T]}Y_s\Big]\leq 4\sE\big[Y_T\big]<\infty,
    $$
    provided that $t_{i+1}-t_i $ small enough, and the final statements holds due to \cite[Corollary 5.14 in Chapter 3]{karatzas1991brownian}.
\end{proof}

It is not hard to see that our Assumption \ref{assum_light:f_theta} is stronger than the assumptions of Lemma \ref{lem:mtgcond}. Therefore, the Novikov's condition holds, and we can apply Girsanov theorem to define a family of new probability measures $\{\sP^{\theta,x}\}_{\theta\in\sR^k,x\in\cX}$ by
\begin{equation}\label{equ:girsanov}
    \frac{\d \sP^{\theta,x}}{\d \sQ}=M_t(\theta,x),
\end{equation}
and get that
$$
B_t^{\theta,x}:=B_t^\sQ-\int_0^t\sigma(s,\bar a_s,x)^{-1}f(s,\bar a_s,\theta,x)\d s,\text{ is a Brownian motion under }\sP^{\theta,x}.
$$
Therefore, 
$$
\d \bar a_s=\sigma(s,\bar a_s,x)\cdot\d B_s^\sQ =f(s,\bar a_s,\theta,x)\d s +\sigma(s,\bar a_s,x)\cdot \d B_s^{\theta,x}. 
$$
Finally, for any fixed $\theta$ and $x$, the SDE above satisfies the standard SDE with Lipschitz drift and diffusion coefficients, the weak uniqueness holds \cite[Proposition 2.13 in Chapter 5]{karatzas1991brownian}, and therefore, $\bar a_T$ under $\sP^{\theta,x}$ should share same distribution of $\bar a^{\theta,x}_T$ in \eqref{eq:sde-policy}. We summarize our discussion in this subsection as the following proposition.

\begin{prop}
    Suppose Assumption \ref{assum_light:f_theta}, \eqref{sigmabound} and \eqref{sigmalipschitz} hold, consider the SDE \eqref{eq:sde-policy} posed on the probability space $(\Omega,\sF=(\cF_t)_{t\in[0,T]},\sP)$ supporting the $\sP$-Brownian motion $\{B_s\}_{s\in[0,T]}$ and the SDE \eqref{diffusion:Q} posed on the reference probability space $(\Omega',\sF'=(\cF'_t)_{t\in[0,T]},\sQ)$ supporting the $\sQ$-Brownian motion $\{B_s^\sQ\}_{s\in[0,T]}$. Then, a family of probability measure $\{\sP^{\theta,x}\}_{\theta\in\sR^k,x\in\cX}$ are well-defined by \eqref{equ:girsanov}, and 
    $$
    {\sP^{\theta,x}}\circ(\bar a_T)^{-1}={\sP}\circ(\bar a^{\theta,x}_T)^{-1}.
    $$.
\end{prop}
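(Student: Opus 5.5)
The statement bundles three claims: the family $\{\sP^{\theta,x}\}$ is well defined, $\bar a$ solves the drifted SDE under $\sP^{\theta,x}$, and the terminal laws agree. I will prove them in that order on the reference space $(\Omega',\sF',\sQ)$.

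\emph{Step 1: strong solution of the driftless SDE.} I will construct the strong solution of \eqref{diffusion:Q} with $\bar a_0\sim\mu_0$ independent of $B^\sQ$. This uses the Lipschitz property \eqref{sigmalipschitz} together with the boundedness of $\sigma$ from \eqref{sigmabound}.

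\emph{Step 2: the density is a true martingale.} By \eqref{sigmabound}, $\sigma$ is invertible with $\|(\sigma^\top)^{-1}\|_{\op}\le\kappa^{-1/2}$. Assumption~\ref{assum_light:f_theta} gives linear growth $\|f(s,a,\theta,x)\|\le C(\|a\|+1)$. Hence the integrand in \eqref{def:m} is progressively measurable and square integrable on $[0,T]$, so $M_t(\theta,x)$ is a continuous positive local martingale. To upgrade it to a true martingale I invoke Lemma~\ref{lem:mtgcond}: its linear-growth hypothesis holds, and it produces a partition on whose pieces the Novikov-type exponential moments are finite. The moment bound there uses \eqref{initial} together with the comparison with a Gaussian of variance of order $\Lambda s$. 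Karatzas--Shreve Corollary~3.5.14 then yields $\E^\sQ[M_T(\theta,x)]=1$. Consequently $\d\sP^{\theta,x}:=M_T(\theta,x)\,\d\sQ$ defines a probability measure for each $(\theta,x)$.

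\emph{Step 3: Girsanov and the drifted SDE.} Girsanov's theorem, with kernel $\sigma^{-1}f$, shows that $B^{\theta,x}_t=B^\sQ_t-\int_0^t\sigma^{-1}f\,\d s$ is a $\sP^{\theta,x}$-Brownian motion. Substituting $\d B^\sQ=\d B^{\theta,x}+\sigma^{-1}f\,\d s$ into \eqref{diffusion:Q} shows that $(\bar a,B^{\theta,x})$ is a weak solution of \eqref{eq:sde-policy} with drift $f_\theta$. I also need the initial law unchanged: $\bar a_0$ is $\mathcal F'_0$-measurable and $M_0=1$, so $\bar a_0\sim\mu_0$ under $\sP^{\theta,x}$, and $\bar a_0$ remains independent of $B^{\theta,x}$ because Brownian increments under the new measure are independent of $\mathcal F'_0$.

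\emph{Step 4: identification of terminal laws.} The SDE \eqref{eq:sde-policy} has coefficients that are Lipschitz in $a$: $\sigma$ by \eqref{sigmalipschitz}, and $f$ whenever Assumption~\ref{assum_light:f_theta} is taken with $\alpha=1$. Pathwise uniqueness then holds, and Yamada--Watanabe gives uniqueness in law (Karatzas--Shreve Proposition~5.2.13). Hence the path law of $\bar a$ under $\sP^{\theta,x}$ equals that of $\bar a^{\theta,x}$ under $\sP$, and in particular $\sP^{\theta,x}\circ\bar a_T^{-1}=\sP\circ(\bar a_T^{\theta,x})^{-1}$.

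\emph{Main obstacle.} I expect the hardest point to be Step 4 in the genuinely H\"older case $\alpha<1$, where the drift is not Lipschitz. There I would not use pathwise uniqueness. Instead I would argue uniqueness in law directly via Girsanov: any weak solution of the drifted SDE can be transformed back, via the reverse density (which is a martingale by the same Lemma~\ref{lem:mtgcond} argument under that solution), into a solution of the driftless SDE \eqref{diffusion:Q}. That driftless SDE is pathwise unique by the Lipschitz property of $\sigma$. Hence all weak solutions of the drifted equation have the law $M_T\cdot\sQ$ pushed forward, which gives uniqueness in law. The remaining delicacy is checking the exponential-moment condition under an arbitrary weak solution. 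The linear-growth drift plus bounded $\sigma$ and \eqref{initial} again give Gaussian-type tails on short intervals, so the same partition argument should apply.
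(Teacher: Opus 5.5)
Your proposal is correct and follows the paper's own argument. Both proceed by a strong solution of the driftless SDE on the reference space, the true-martingale property of $M(\theta,x)$ via Lemma~\ref{lem:mtgcond} and the piecewise Novikov criterion of Karatzas--Shreve, Girsanov to recover the drifted equation, and uniqueness in law to identify the terminal laws.

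Your extra care in Step~4 is warranted: the paper simply asserts the drift is Lipschitz, which holds only for $\alpha=1$. Your reverse-Girsanov uniqueness-in-law argument closes this gap, and so would the standing hypothesis in Section~\ref{sec:prelim} that \eqref{eq:sde-policy} has a unique weak solution. In that argument, do the exponential-moment check under the drifted solution with a Gronwall bound on its running supremum, since Hajek's comparison applies only to the driftless process.
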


%\YZ{we need to assume $f$ is Lipschitz in $a$}

%\noindent\textbf{We proceed to compute the distribution of $\bar a_T$ with respect to $\theta$.}

\subsection{Derivative for  $\sE^{\sP^{\theta,x}}\big[\varphi(\bar a_T)\big]$ when $\varphi$ is a continuous bounded function independent of $\theta$.}
Let $\varphi$ be a continuous bounded function independent of $\theta$ and by \eqref{equ:girsanov}, we have 
\begin{align*}
\sE^{\sP^{\theta,x}}\big[\varphi(\bar a_T)\big] 
&= \sE^\sQ\big[M_T(\theta,x)\varphi(\bar a_T)\big],
\end{align*}
where $M_t(\theta,x)$ is defined in \eqref{def:m} and further, 
$$
\d M_t(\theta,x) = M_t(\theta,x)f(t, \bar a_t,\theta,x)^\top\sigma(t,\bar a_t,x)^{-1}\d B_t^\sQ, \quad \d \bar a_t = \sigma(t,\bar a_t,x)\cdot \d B^\sQ_t,\text{ for all $t\in[0,T]$.}
$$
We first present two technical lemmas which are crucial for the follow-up analysis.

\begin{lem}\label{lem:34_rescaled}
    Suppose that for every fixed $M>0$ and $\beta\in \mathbb{R}^{k}$,
    \begin{equation}\label{reg_cond}
    \lim_{\eps\to 0}\sup_{\theta\in \mathbb{R}^{d}, x\in \cX}\E^{\mathbb{Q}}\left[\sup_{0\leq s\leq T, 0\leq t\leq \eps}\|\nabla_{\theta} f(s, \bar a_{s}, \theta+t\beta, x)-\nabla_{\theta} f(s, \bar a_{s}, \theta, x)\|_{\op}^{M}\right]=0
    \end{equation}
    Then for any $p\geq 1$,
    \begin{equation}
        \lim_{\epsilon\to 0}\sup_{\theta\in\sR^d,x\in\cX}\sE^\sQ\left[\sup_{0\leq t\leq T}\bigg|\dfrac{1}{\eps}\left(\frac{M_T(\theta+\epsilon\beta,x)}{M_T(\theta,x)}-1\right)- Y_T({\theta,\beta},x)\bigg|^p\right]=0.
    \end{equation}   
\end{lem}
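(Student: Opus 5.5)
Throughout, I take $Y_T(\theta,\beta,x)$ to be the directional derivative of $\log M_T$ at $\theta$ in the direction $\beta$:
\[
Y_T(\theta,\beta,x)=\int_0^T\big(\nabla_\theta f(s,\bar a_s,\theta,x)\beta\big)^\top\big(\sigma^\top\big)^{-1}\,\d B^\sQ_s-\int_0^T f(s,\bar a_s,\theta,x)^\top\big(\sigma\sigma^\top\big)^{-1}\nabla_\theta f(s,\bar a_s,\theta,x)\beta\,\d s .
\]
The approach is to work entirely with the logarithm of the likelihood ratio. The quantity inside the expectation does not depend on $t$, so the $\sup_{0\le t\le T}$ is harmless.

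\textbf{Setup.} Set $\Lambda_\eps:=\log M_T(\theta+\eps\beta,x)-\log M_T(\theta,x)$ and $\Delta f:=f(\cdot,\theta+\eps\beta,\cdot)-f(\cdot,\theta,\cdot)$. By the mean value theorem, $\Delta f=\eps\int_0^1\nabla_\theta f(\cdot,\theta+u\eps\beta,\cdot)\beta\,du$. Writing $\Gamma:=\sigma\sigma^\top$ and expanding the quadratic term in \eqref{def:m} gives
\[
\Lambda_\eps=\int_0^T\Delta f^\top(\sigma^\top)^{-1}\,\d B^\sQ_s-\int_0^T f^\top\Gamma^{-1}\Delta f\,\d s-\tfrac12\int_0^T\Delta f^\top\Gamma^{-1}\Delta f\,\d s .
\]

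\textbf{Decomposition.} I split the error as
\[
\frac{e^{\Lambda_\eps}-1}{\eps}-Y_T=\frac{e^{\Lambda_\eps}-1-\Lambda_\eps}{\eps}+\Big(\frac{\Lambda_\eps}{\eps}-Y_T\Big)
\]
and treat the two pieces separately.

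\textbf{Second piece.} The difference $\Lambda_\eps/\eps-Y_T$ has three parts.
\begin{enumerate}
\item A stochastic integral with integrand $\int_0^1[\nabla_\theta f(\theta+u\eps\beta)-\nabla_\theta f(\theta)]\beta\,du\,(\sigma^\top)^{-1}$. By BDG, bounding $\|\sigma^{-1}\|$ with Assumption~\ref{ass:vol_bdd}, and Jensen in $u$, its $L^p(\sQ)$ norm is controlled by $\E^\sQ\big[\sup_{s,\,t\le\eps}\|\nabla_\theta f(\theta+t\beta)-\nabla_\theta f(\theta)\|_{\op}^{p}\big]$. This tends to $0$ uniformly in $(\theta,x)$ by \eqref{reg_cond}.
\item A $\d s$-integral of the same increment paired with $f^\top\Gamma^{-1}$. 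Here I use H\"older: linear growth gives $\|f\|\le C(1+\|\bar a_s\|)$, all moments of $\sup_s\|\bar a_s\|$ under $\sQ$ are finite uniformly (driftless SDE with bounded $\sigma$ and \eqref{initial}), and the other factor tends to $0$ by \eqref{reg_cond} with $M=2p$.
\item The remainder $-\tfrac1{2\eps}\int\Delta f^\top\Gamma^{-1}\Delta f$. This is $O(\eps)$ in every $L^p$, since $\|\Delta f\|\le C\eps(1+\|\bar a_s\|)$ by the linear growth of $\nabla_\theta f$.
\end{enumerate}

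\textbf{First piece.} I use $|e^{\Lambda}-1-\Lambda|\le\tfrac12\Lambda^2e^{|\Lambda|}$ and Cauchy--Schwarz:
\[
\E\Big|\frac{e^{\Lambda_\eps}-1-\Lambda_\eps}{\eps}\Big|^p\le C\,\eps^{-p}\,\big(\E|\Lambda_\eps|^{4p}\big)^{1/2}\,\big(\E e^{2p|\Lambda_\eps|}\big)^{1/2}.
\]
The preceding estimates give $\|\Lambda_\eps\|_{L^{4p}}=O(\eps)$ uniformly, so this is $O(\eps^p)$ provided $\sup_{\theta,x}\E^\sQ e^{2p|\Lambda_\eps|}$ stays bounded for small $\eps$.

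\textbf{Main obstacle: uniform exponential integrability of $\Lambda_\eps$.} I would bound $e^{c|\Lambda_\eps|}\le e^{c\Lambda_\eps}+e^{-c\Lambda_\eps}$ and handle each term as follows.
\begin{itemize}
\item For the stochastic integral $N$, write $e^{cN}=\mathcal E(2cN)^{1/2}\,e^{c^2\langle N\rangle}$ and apply Cauchy--Schwarz. This reduces everything to $\E e^{C'\int_0^T\|\Delta f\|^2ds}$ and $\E e^{C'\int_0^T|f^\top\Gamma^{-1}\Delta f|ds}$, plus the expectation of a stochastic exponential. The latter is at most $1$ as a supermartingale.
\item The two remaining exponents are bounded by $C\eps(1+\sup_s\|\bar a_s\|^2)$.
\item As in Lemma~\ref{lem:mtgcond}, the Hajek comparison with a Gaussian, Doob's inequality on $\exp(c\|\bar a_s\|^2)$, and \eqref{initial} show that $\E^\sQ\exp\big(\eta\sup_s\|\bar a_s\|^2\big)<\infty$ for some small $\eta>0$. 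This bound is uniform in $(\theta,x)$, because under $\sQ$ the law of $\bar a$ does not involve $\theta$ and $\sigma$ is uniformly bounded.
\item Taking $\eps$ so small that $C\eps\le\eta$ closes the bound, uniformly in $(\theta,x)$.
\end{itemize}
Combining the two pieces proves the limit.
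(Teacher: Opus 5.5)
Your proposal is correct, and its skeleton matches the paper's proof. Your $\Lambda_\eps$ is exactly the exponent the paper writes as $\eps Y_t+\int r_1\,\d B^\sQ-\frac12\int r_2\,\d s$. Both arguments use $|e^z-1-z|\lesssim z^2e^{|z|}$. Both control $\Lambda_\eps/\eps-Y_T$ by BDG together with \eqref{reg_cond}: your items 1--3 are precisely the paper's $r_1$ and $r_2$ contributions.

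The genuine difference is the step you call the main obstacle, namely uniform exponential integrability of $\Lambda_\eps$. The paper expands $\E\exp(c|\cdot|)$ for each piece as a Taylor series and bounds every moment through BDG with explicit $k$-dependent constants. You instead use that $\mathcal E(2cN)$ is a supermartingale to trade $e^{cN}$ for $e^{2c^2\langle N\rangle}$. This reduces everything to the single fact $\E^\sQ\exp(\eta\sup_s\|\bar a_s\|^2)<\infty$ for small $\eta$.

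Your route buys robustness. The Taylor-series route closes only if one tracks correctly how both the BDG constants and the moments of $\sup_s\|\bar a_s\|$ grow with $k$; the latter grow like $k^{k/2}$ and also involve $\mu_0$. The $ds$-part of $Y$ likewise needs Gaussian-type integrability of $\sup_s\|\bar a_s\|^2$. Your reduction avoids all of this bookkeeping and makes transparent that \eqref{initial} and the upper bound in \eqref{sigmabound} are exactly what is used.

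You are right that, as literally stated, the integrand does not depend on $t$. However, the paper's proof works with the running quantities $M_t$, $Y_t$ and $\sup_{0\le t\le T}$, and that is the form invoked in Lemma~\ref{lem:34_weak}. Your argument upgrades to this version with only two changes: apply Doob's $L^2$ inequality to the nonnegative submartingale $e^{cN_t/2}$ to control $\E\sup_t e^{cN_t}$, and use the maximal form of BDG.
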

\begin{proof}
We define the following quantities:
\begin{align}
r_{1}(\eps, s, \bar a_{s}, \beta;\theta, x)=\;&f(s, \bar a_{s}, \theta+\eps\beta, x)-f(s, \bar a_{s},\theta, x)-\eps\langle \nabla_{\theta} f(s, \bar a_{s},\theta, x), \beta\rangle;\\
r_{2}(\eps, s, \bar a_{s}, \beta;\theta, x)=\;&\eps^{2}\langle \nabla_{\theta} f, \beta\rangle^{\top}\big((\sigma\sigma^\top)(s, \bar a_{s},x)\big)^{-1}\langle \nabla_{\theta}f, \beta\rangle+2\eps\langle \nabla_{\theta} f, \beta\rangle^{\top}\big((\sigma\sigma^\top)(s, \bar a_{s},x)\big)^{-1}r_{1}(\eps)\nonumber\\
+&r_{1}(\eps)^{\top}\big((\sigma\sigma^\top)(s, \bar a_{s},x)\big)^{-1}r_{1}(\eps)+2r_{1}(\eps)^{\top}\big((\sigma\sigma^\top)(s, \bar a_{s},x)\big)^{-1}f(s, \bar a_{s}, \theta, x),
\end{align}
where $f$ is shorthand for $f(s, \bar a_{s}, \theta, x)$ and $r_{1}$ is shorthand for the quantity defined in the preceding equation. The reason for defining $r_{1}$ is clear: it is the error of the linear approximation of $f$ at $\theta$. Similarly, $r_{2}$ is the remainder term for approximating the quadratic term $f(\theta+\eps\beta)^{\top}((\sigma\sigma^\top)(s, \bar a_{s}, x))^{-1}f(\theta+\eps\beta)$ by $f(\theta)^{\top}\big((\sigma\sigma^\top)(s, \bar a_{s}, x)\big)^{-1}f(\theta)$. We have
$$\dfrac{M_{t}(\theta+\eps\beta, x)}{M_{t}(\theta, x)}=\exp\left(\eps Y_{t}(\theta, \beta, x)+\int_{0}^{t}r_{1}(\eps, s, \bar a_{s}, \beta;\theta, x)\d B_{s}^{Q}-\frac12\int_{0}^{t}r_{2}(\eps, s, \bar a_{s},\beta; \theta, x)\d s\right).$$
Now we use the following inequality: for every $z\in \mathbb{R}$:
$$|e^{z}-1-z|\leq z^{2}e^{|z|},$$
which gives us that
\begin{align*}
&\left|\dfrac{M_{t}(\theta+\eps\beta, x)}{M_{t}(\theta, x)}-1-\left(\eps Y_{t}(\theta, \beta, x)+\int_{0}^{t}r_{1}(\eps, s, \bar a_{s}, \beta;\theta, x)dB_{s}^{Q}-\frac12\int_{0}^{t}r_{2}(\eps, s, \bar a_{s},\beta; \theta, x)\d s\right)\right|\\
\leq&\left(\eps Y_{t}(\theta, \beta, x)+\int_{0}^{t}r_{1}(\eps, s, \bar a_{s}, \beta;\theta, x)\d B_{s}^{Q}-\frac12\int_{0}^{t}r_{2}(\eps, s, \bar a_{s},\beta; \theta, x)\d s\right)^{2}\cdot\\
\leq&\exp\left(\left|\eps Y_{t}(\theta, \beta, x)+\int_{0}^{t}r_{1}(\eps, s, \bar a_{s}, \beta;\theta, x)\d B_{s}^{Q}-\frac12\int_{0}^{t}r_{2}(\eps, s, \bar a_{s},\beta; \theta, x)\d s\right|\right).
\end{align*}

We bound the $M=M(p)$-th moment of the rightmost term, taken supremum over $0\leq t\leq T$, when $\eps$ is sufficiently small ($M$ is constant with respect to $\eps)$. Firstly, by using AM-GM, we will bound the $3M$-th moment of each exponential term. To this end,
\begin{align*}
\mathbb{E}\left[\sup_{0\leq t\leq T}\exp\left(3M\eps|Y_{t}(\theta, \beta, x)|\right)\right]\leq&\E\left[\sup_{0\leq t\leq T}\exp\left(3M\eps\left|\int_{0}^{t}\langle \nabla_{\theta}f, \beta\rangle^{\top}(\sigma^\top(s, \bar a_{s}, x))^{-1}\d B_{s}^{Q}\right|\right)\right]\\
+&\E\left[\sup_{0\leq t\leq T}\exp\left(3M\eps\left|\int_{0}^{t}\langle \nabla_{\theta} f, \beta\rangle^{\top}\big((\sigma\sigma^\top)(s, \bar a_{s}, x)\big)^{-1}f \d s\right|\right)\right].
\end{align*}
For the first term, we further use the following crude bound:
\begin{align*}
\E&\left[\sup_{0\leq t\leq T}\exp\left(3M\eps\left|\int_{0}^{t}\langle \nabla_{\theta}f, \beta\rangle^{\top}\big(\sigma^\top(s, \bar a_{s}, x)\big)^{-1}\d B_{s}^{Q}\right|\right)\right]
\\
\leq\;&\E\left[\sup_{0\leq t\leq T}\exp\left(3M\eps\int_{0}^{t}\langle \nabla_{\theta}f, \beta\rangle^{\top}\big(\sigma^\top(s, \bar a_{s}, x)\big)^{-1}\d B_{s}^{Q}\right)\right]\\
&+\E\left[\sup_{0\leq t\leq T}\exp\left(-3M\eps\int_{0}^{t}\langle \nabla_{\theta}f, \beta\rangle^{\top}\big(\sigma^\top(s, \bar a_{s}, x)\big)^{-1}\d B_{s}^{Q}\right)\right].
\end{align*}
To bound the right-hand side, we let
$$Y_{t}^{(1)}(\theta, \beta, x)=\int_{0}^{t}\langle \nabla_{\theta} f,\beta\rangle^{\top}\big(\sigma^\top(s, \bar a_{s}, x)\big)^{-1}\d B_{s}^{Q},$$
so that $\{Y_{t}^{(1)}\}_{t\geq 0}$ is a martingale. First, by $L^{2}$ martingale inequality we get that
$$\E\left[\sup_{0\leq t\leq T}\exp\left(3M\eps\cdot Y_{t}^{(1)}(\theta,\beta, x)\right)\right]\leq 4\E\left[\exp\left(3M\eps\cdot Y_{T}^{(1)}(\theta, \beta, x)\right)\right].$$
By Burkholder-Davis-Gundy inequality, for every $p\geq 1$ we have
$$\E\left[|Y_{T}^{(1)}(\theta,\beta, x)|^{p}\right]\leq (C_{0}p)^{p}\E[\langle Y^{(1)}(\theta,\beta, x)\rangle_{T}^{p/2}],$$
for an absolute constant $C_{0}$. Using linear growth on $\nabla_{\theta} f$, we get that
$$\langle Y^{(1)}(\theta,\beta, x)\rangle_{T}\leq C_{2}\|\beta\|^{2}T\sup_{0\leq s\leq T}\{C(\|\bar a_{s}\|^{2}+1)\}.$$
Taking expectations on both sides, using $L^{p}$ martingale inequality on $\bar a_{s}$ and operator norm boundedness of $\sigma$, we get that for $p\geq 2$,
$$\E\left[|Y_{T}^{(1)}(\theta,\beta, x)|^{p}\right]\leq(C_{0}p)^{p}\cdot C_{1}^{p}\|\beta\|^{p}d^{p}T^{p}.$$
Hence, by taking $\eps$ such that $M\eps$ is small enough, and noticing that our proof does not change when we have the $-$ sign, we get from Taylor series that
$$\sup_{\theta, x}\E\left[\sup_{0\leq t\leq T}\exp\left(3M\eps\left|\int_{0}^{t}\langle \nabla_{\theta}f, \beta\rangle^{\top}\big(\sigma^\top(s, \bar a_{s}, x)\big)^{-1}\d B_{s}^{Q}\right|\right)\right]<\infty.$$
Next, we deal with the same term with $r_{1}$ in place of $Y_{t}^{(1)}(\theta,\beta, x)$. Define
$$Y_{t}^{(2)}(\eps, \theta,\beta, x)=\int_{0}^{t}r_{1}(\eps, s, \bar a_{s}, \beta; \theta, x)\d B_{s}^{Q},$$
and $(Y_{t}^{(2)}(\theta, \beta, x))$ is a martingale. Proceeding similarly as before, we use the bound
\begin{align*}
\E\left[\sup_{0\leq t\leq T}\exp\left(3M\cdot |Y_{t}^{(2)}(\eps, \theta, \beta, x)|\right)\right]\leq\;&4\E\left[\exp\left(3M\cdot |Y_{T}^{(2)}(\eps, \theta, \beta, x)|\right)\right].
\end{align*}
By Taylor series and monotone convergence,
$$\E\left[\sup_{0\leq t\leq T}\exp\left(3M\cdot |Y_{t}^{(2)}(\eps, \theta, \beta, x)|\right)\right]\leq4\sum_{k=0}^{\infty}\dfrac{(3M)^{k}\E[|Y_{T}^{(2)}(\eps, \theta, \beta, x)|^{k}]}{k!}.$$
For $k\geq 2$, we have by Burkholder-Davis-Gundy inequality that
$$\E[|Y_{T}^{(2)}(\eps, \theta, \beta, x)|^{k}]\leq (C_{0}k)^{k}\E[\langle Y^{(2)}(\eps, \theta, \beta, x)\rangle_{T}^{k/2}],$$
where
$$\langle Y^{(2)}(\eps, \theta, \beta, x)\rangle_{T}=\int_{0}^{T}\|r_{1}(\eps, s, \bar a_{s}, \beta, \theta, x)\|^{2}\d s.$$
We bound $\|r_{1}\|$ as follows
\begin{align*}
\|r_{1}(\eps, s, \bar a_{s},\beta, \theta, x)\|=&\left\|\int_{0}^{\eps}\langle\nabla_{\theta} f(s, \bar a_{s}, \theta+t\beta, x)-\nabla_{\theta}f(s, \bar a_{s}, \theta, x), \beta\rangle \d t\right\|\\
\leq&\int_{0}^{\eps}\|\nabla_{\theta} f(s, \bar a_{s}, \theta+t\beta, x)-\nabla_{\theta} f(s, \bar a_{s},\theta, x)\|_{\op}\cdot \|\beta\|\d t\\
\leq&\eps\cdot \sup_{0\leq t\leq \eps}\|\nabla_{\theta} f(s, \bar a_{s}, \theta+t\beta, x)-\nabla_{\theta} f(s, \bar a_{s},\theta, x)\|_{\op}\cdot \|\beta\|\\
\leq&\eps\|\beta\|\cdot C(\|\bar a_{s}\|+1).
\end{align*} 
Raising to the $k$-th power, we get that
$$\langle Y^{(2)}(\eps, \theta, \beta, x)\rangle_{T}^{k/2}\leq \eps^{k/2}\|\beta\|^{k/2}C^{k}T^{k/2}\sup_{0\leq s\leq T}(\|\bar a_{s}\|^{k}+1).$$
From $L^{k}$ martingale and Burkholder-Davis-Gundy inequalities, we get that
$$\E\left[\sup_{0\leq s\leq T}\|\bar a_{s}\|^{k}\right]\leq 4\E\left[\|\bar a_{T}\|^{k}\right]\leq 4\cdot \E\left[\langle \bar a\rangle_{T}^{k/2}\right]\leq 4d^{k}C_{2}^{k/2}T^{k/2}.$$
Consequently,
$$\E\left[|Y_{T}^{(2)}(\eps,\theta, \beta, x)|^{k}\right]\leq (C_{0}k)^{k}\cdot \eps^{k/2}\|\beta\|^{k/2}C^{k}d^{k}C_{2}^{k/2}T^{k},$$
and so
$$\E\left[\sup_{0\leq t\leq T}\exp\left(3M\cdot |Y_{t}^{(2)}(\eps, \theta, \beta, x)|\right)\right]\leq4\sum_{k=0}^{\infty}\dfrac{(3M)^{k}(C_{0}k)^{k}\cdot \eps^{k/2}\|\beta\|^{k/2}C^{k}d^{k}C_{2}^{k/2}T^{k}}{k!},$$
and by making $\eps$ small enough, the series on the RHS converges. Lastly, we bound the term concerning $r_{2}$. We have
\begin{align*}
\E&\left[\sup_{0\leq t\leq T}\exp\left(3M\left|\int_{0}^{t}r_{2}(\eps, s, \bar a_{s}, \theta, \beta, x)ds\right|\right)\right]
\\
\leq&\;\E\left[\sup_{0\leq t\leq T}\exp\left(3M\int_{0}^{t}|r_{2}(\eps, s, \bar a_{s}, \theta, \beta, x)|ds\right)\right]\\
=&\;\E\left[\exp\left(3M\int_{0}^{T}|r_{2}(\eps, s, \bar a_{s}, \theta,\beta, x)|ds\right)\right]\\
\leq&\;\E\left[\sup_{0\leq s\leq T}\exp\left(3MT\cdot |r_{2}(\eps, s, \bar a_{s}, \theta, \beta, x)|\right)\right].
\end{align*}
By triangle inequality,
\begin{align*}
|r_{2}(\eps, s, \bar a_{s}, \theta, \beta, x)|\leq C(\eps+\eps^{2})\|\beta\|^{2}(\|\bar a_{s}\|^{2}+1)\leq C\eps\|\beta\|^{2}(\|\bar a_{s}\|^{2}+1),
\end{align*}
and hence
\begin{align*}
\E&\left[\sup_{0\leq t\leq T}\exp\left(3M\left|\int_{0}^{t}r_{2}(\eps, s, \bar a_{s}, \theta, \beta, x)ds\right|\right)\right]
\\
\leq\;&\E\left[\sup_{0\leq s\leq T}\exp\left(3MT\cdot C\eps\|\beta\|^{2}(\|\bar a_{s}\|^{2}+1)\right)\right]\\
\leq\;&4\E\left[\exp\left(3MT\cdot C\eps\|\beta\|^{2}(\|\bar a_{T}\|^{2}+1)\right)\right]\\
=\;&4\exp(3MT\cdot C\eps\|\beta\|^{2})\E\left[\exp\left(3MT\cdot C\eps\|\beta\|^{2}\cdot \|\bar a_{T}\|^{2}\right)\right]\\
\leq\;&4\exp(3MT\cdot C\eps\|\beta\|^{2})\E\left[\exp\left(3MT\cdot C\eps\|\beta\|^{2}\cdot \|\mathcal{N}(0, C_{2}TI_{d})\|^{2}\right)\right].
\end{align*}
By making $\eps$ small enough as a function of $(M, T, \beta)$, we get that the RHS is finite. We summarize that for each fixed $M$ and $\eps=\eps(M)$ small enough,
$$\sup_{\theta, x}\E\left[\exp\left(M\eps \left|Y_{t}(\theta, \beta, x)+\int_{0}^{t}r_{1}(\eps, s, \bar a_{s}, \beta;\theta, x)\d B_{s}^{Q}-\frac12\int_{0}^{t}r_{2}(\eps, s, \bar a_{s},\beta; \theta, x)\d s\right|\right)\right]<\infty.$$
Next, we show that for every fixed $M=M(p)$, 
$$\lim_{\eps\to 0}\sup_{\theta, x}\E\left[\left|\dfrac{1}{\eps}\left(\eps Y_{t}(\theta, \beta, x)+\int_{0}^{t}r_{1}(\eps, s, \bar a_{s}, \beta;\theta, x)\d B_{s}^{Q}+\int_{0}^{t}r_{2}(\eps, s, \bar a_{s},\beta; \theta, x)\d s\right)^{2}\right|^{M}\right]=0.$$
However, this is clear from the fact that upper bounds of $\|r_{1}\|, \|r_{2}\|$ are linear in $\eps$. Now it suffices to show that
$$\lim_{\eps\to 0}\sup_{\theta, x}\E\left[\sup_{0\leq t\leq T}\left|\int_{0}^{t}\dfrac{1}{\eps}r_{1}(\eps, s, \bar a_{s}, \theta, \beta, x)dB_{s}^{Q}+\int_{0}^{t}\dfrac{1}{\eps}r_{2}(\eps, s, \bar a_{s}, \beta, \theta, x)\right|^{M}\right]=0.$$
By $L_{p}$-martingale inequality, we get that
\begin{align*}
\E\left[\sup_{0\leq t\leq T}\left|\int_{0}^{t}\dfrac{1}{\eps}r_{1}(\eps, s, \bar a_{s}, \theta, \beta, x)\d B_{s}^{Q}\right|^{M}\right]\leq\;& 4\E\left[\left|\int_{0}^{T}\dfrac{1}{\eps}r_{1}(\eps, s, \bar a_{s}, \theta, \beta, x)\d B_{s}^{Q}\right|^{M}\right]\\
\leq\;&4C(M)\cdot \E\left[\left(\int_{0}^{T}\dfrac{1}{\eps^{2}}\|r_{1}(\eps, s, \bar a_{s}, \theta, \beta, x)\|^{2}\d s\right)^{M/2}\right].
\end{align*}
Using the upper bound on $\|r_{1}\|$ earlier, we get
\begin{align*}
&\E\left[\sup_{0\leq t\leq T}\left|\int_{0}^{t}\dfrac{1}{\eps}r_{1}(\eps, s, \bar a_{s}, \theta, \beta, x)\d B_{s}^{Q}\right|^{M}\right]\\
\leq\;& 4C(M)\cdot T^{M/2}\|\beta\|^{M}\cdot \E\left[\sup_{0\leq s\leq T, 0\leq t\leq \eps}\|\nabla_{\theta} f(s, \bar a_{s}, \theta+t\beta, x)-\nabla_{\theta} f(s, \bar a_{s}, \theta, x)\|_{\op}^{M}\right].
\end{align*}
Next, for $r_{2}$, note that triangle inequality gives us
$$\dfrac{1}{\eps}|r_{2}(\eps, s, \bar a_{s}, \theta, \beta, x)|\leq C\eps\|\beta\|^{2}(\|\bar a_{s}\|^{2}+1)+\dfrac{1}{\eps}\left\|r_{1}(\eps, s, \bar a_{s}, \theta, \beta, x)\right\|C(\|\bar a_{s}\|+1).$$
We get
\begin{align*}
&\E\left[\sup_{0\leq t\leq T}\left|\int_{0}^{t}\dfrac{1}{\eps}r_{2}(\eps, s, \bar a_{s}, \theta, \beta, x)\d s\right|^{M}\right]\\
\leq\;& \E\left[\left(\int_{0}^{T}\dfrac{1}{\eps}|r_{2}(\eps, s, \bar a_{s}, \theta, \beta, x)|\d s\right)^{M}\right]\\
\leq\;& C(M)\cdot \left\{\E\left[(CT\cdot \eps \|\beta\|^{2})^{M}\sup_{0\leq s\leq T}(\|\bar{a}_{s}\|^{2M}+1)\right]+\E\left[T^{M}\sup_{0\leq s\leq T}\left(\dfrac{1}{\eps}\|r_{1}(\eps, s, \bar a_{s}, \theta, \beta, x)\|\right)^{M}\cdot C(\|\bar a_{s}\|^{M}+1)\right]\right\}\\
\leq\;& O_{T, \beta}(\eps^{M})+C(M)T^{M}\E\left[\sup_{0\leq s\leq T}\left(\dfrac{1}{\eps}\|r_{1}(\eps ,s, \bar a_{s}, \theta, \beta, x)\|\right)^{2M}\right]^{1/2}\E\left[\sup_{0\leq s\leq T}C(\|\bar a_{s}\|^{2M}+1)\right]^{1/2}\\
\leq\;& O_{T,\beta}(\eps^{M})+C(M)T^{M}\|\beta\|^{M}\E\left[\sup_{0\leq s\leq T, 0\leq t\leq \eps}\|\nabla_{\theta}f(s, \bar a_{s}, \theta+t\beta, x)-\nabla_{\theta} f(s, \bar a_{s}, \theta, x)\|_{\op}^{2M}\right]^{1/2}.
\end{align*}
By Condition \eqref{reg_cond}, we are done.
\end{proof}

\begin{lem}\label{lem:Mbdd_weak} Suppose Assumption \ref{assum_light:f_theta}, \eqref{sigmabound} and \eqref{sigmalipschitz} hold. Then there exists $c=c(T)>0$ small enough so that with $p=1+c$,
$$\sup_{\theta\in \mathbb{R}^{k}, x\in \mathcal{X}}\E^{\mathbb{Q}}\left[\sup_{0\leq t\leq T}M_{t}(\theta, x)^{p}\right]<\infty.$$
\end{lem}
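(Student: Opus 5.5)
The plan is to write $M_t^p$ as a Doléans exponential times an explicit exponential of a quadratic integral, apply Hölder, and then control the quadratic term with Gaussian-type exponential moments of $\bar a$ under $\mathbb Q$, in the spirit of Lemma~\ref{lem:mtgcond}. Write $\Gamma:=\sigma\sigma^\top$, $u_s:=(\sigma^\top)^{-1}(s,\bar a_s,x)f(s,\bar a_s,\theta,x)$ and $N_t:=\int_0^t u_s^\top dB_s^{\mathbb Q}$. Then
\[
M_t^p=\exp\Big(pN_t-\tfrac{p}{2}\langle N\rangle_t\Big).
\]
For an auxiliary exponent $q>1$, Hölder with exponents $q$ and $q/(q-1)$ gives
\[
M_t^p=\mathcal E(pqN)_t^{1/q}\exp\Big(\tfrac{p(pq-1)}{2}\langle N\rangle_t\Big).
\]
Since $\langle N\rangle_t$ is nondecreasing in $t$, the supremum over $t$ of the second factor is attained at $t=T$. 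I will take $q=2$, $p=1+c$, and bound $\sup_t\mathcal E(2pN)_t^{1/2}$ by Cauchy--Schwarz together with Doob's $L^2$ inequality, since $\mathcal E(2pN)^{1/2}$ is a nonnegative supermartingale/local martingale. It is cleaner to use Doob's inequality on the (sub)martingale $M^{p}$ directly. If $M$ is a true martingale (Lemma~\ref{lem:mtgcond}), then $M^p$ is a submartingale, and Doob's $L^p$ inequality gives
\[
\mathbb E\Big[\sup_t M_t^p\Big]\le \Big(\tfrac{p}{p-1}\Big)^p\,\mathbb E[M_T^p].
\]
Because $(p/(p-1))^p$ is huge for $p$ near $1$, I will instead apply Doob with exponent $2$ to $M_t^{p/2}$, which gives $\mathbb E\sup_t M_t^p\le 4\,\mathbb E[M_T^p]$. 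This reduces the problem to bounding $\mathbb E^{\mathbb Q}[M_T^p]$ uniformly in $(\theta,x)$.

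For $\mathbb E[M_T^p]$, apply the Hölder splitting above with $q=2$ and use $\mathbb E[\mathcal E(2pN)_T]\le 1$ (a supermartingale, so no Novikov condition is needed). This yields
\[
\mathbb E[M_T^p]\le \Big(\mathbb E\exp\big(p(2p-1)\langle N\rangle_T\big)\Big)^{1/2}.
\]
The bracket satisfies
\[
\langle N\rangle_T=\int_0^T f^\top\Gamma^{-1}f\,ds\le \kappa^{-1}C^2\int_0^T(\|\bar a_s\|+1)^2ds,
\]
using the linear growth of $f$ uniform in $(\theta,x)$, which follows from Assumption~\ref{assum_light:f_theta}. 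It therefore remains to show that $\mathbb E^{\mathbb Q}\exp\big(\lambda\int_0^T\|\bar a_s\|^2ds\big)$ is finite uniformly in $x$ for some $\lambda>0$; the values of $p$ near $1$ play no further role.

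This exponential moment is the main obstacle. Jensen's inequality in time gives
\[
\exp\Big(\lambda\int_0^T\|\bar a_s\|^2ds\Big)\le \frac1T\int_0^T\exp\big(\lambda T\|\bar a_s\|^2\big)\,ds,
\]
so it suffices to bound $\sup_{s\le T}\mathbb E\exp(\lambda T\|\bar a_s\|^2)$. Under $\mathbb Q$, $\bar a$ is a martingale $\bar a_0+\int\sigma\,dB^{\mathbb Q}$ with $\sigma\sigma^\top\preceq\Lambda I_d$. Conditional on $\bar a_0$, Hajek's comparison theorem (used in Lemma~\ref{lem:mtgcond}), or alternatively the exponential martingale bound $\mathbb E e^{\langle v,\bar a_s-\bar a_0\rangle}\le e^{\Lambda s\|v\|^2/2}$ integrated against a Gaussian in $v$, gives $\mathbb E\big[e^{\lambda T\|\bar a_s-\bar a_0\|^2}\big]\le(1-2\lambda T\Lambda s)^{-d/2}$ whenever $\lambda$ is small. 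Combining this with $\|\bar a_s\|^2\le 2\|\bar a_0\|^2+2\|\bar a_s-\bar a_0\|^2$, Cauchy--Schwarz, and the Gaussian-moment assumption \eqref{initial} on $\mu_0$, both pieces are finite once $\lambda\le\lambda_0(T,\Lambda,c_{\mu_0})$, with bounds independent of $(\theta,x)$.

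Choosing $c$ small enough that $p(2p-1)C^2\kappa^{-1}\le\lambda_0$ could fail if $C^2/\kappa$ is large relative to $\lambda_0$, since $p(2p-1)\ge1$ regardless of $c$. If that happens, the fallback is the time-splitting used in Lemma~\ref{lem:mtgcond}. Partition $[0,T]$ into intervals of length $\delta$ small enough that the exponential moment on each interval is finite, and apply the argument to the ratios $M_{t_{i+1}}/M_{t_i}$ conditionally on $\mathcal F_{t_i}$. The conditional moment bound needs control of $\mathbb E\exp(\lambda\|\bar a_{t_i}\|^2)$, which the martingale structure of $\bar a$ provides with a slightly smaller $\lambda$ at each step. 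Iterating over the finitely many intervals, with $p=1+c$ and $c=c(T)$ reduced at each step, yields the uniform bound. Keeping every constant uniform in $(\theta,x)$ only requires that the growth constant in Assumption~\ref{assum_light:f_theta} is uniform.
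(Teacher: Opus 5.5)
You have spotted the real obstruction yourself, and the time-splitting fallback does not remove it.

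Every bound in your argument has the form $\E^{\mathbb Q}[M_T^p]\le(\E^{\mathbb Q}e^{\lambda_p\langle N\rangle_T})^{s}$, with $\lambda_p$ bounded away from $0$ as $p\downarrow1$. You get $\lambda_p=p(2p-1)$. Optimizing the split $M^p=\mathcal E(rN)^{p/r}e^{\frac{p(r-1)}2\langle N\rangle}$ over the H\"older exponent $r/p$ only brings it down to $\frac p2(\sqrt p+\sqrt{p-1})^2>\frac12$. That is a Novikov-type condition under $\mathbb Q$, and it fails for admissible data.

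Take $d=1$, $\sigma\equiv1$, $f=Ka$, $\bar a_0=0$. Then $\langle N\rangle_T=K^2\int_0^T\bar a_s^2\,ds$ with $\bar a$ a $\mathbb Q$-Brownian motion. By the Cameron--Martin formula, $\E^{\mathbb Q}\exp(\frac{K^2}{2}\int_0^T\bar a_s^2\,ds)=\infty$ once $KT\ge\pi/2$; symmetrizing in $\bar a_0$ gives the same for any $\mu_0$. Yet an Ornstein--Uhlenbeck computation shows $\E^{\mathbb Q}[M_T^p]<\infty$ iff $e^{-2\sqrt p\,KT}>\frac{\sqrt p-1}{\sqrt p+1}$. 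So the lemma does hold there, with $p-1$ of order $e^{-2KT}$.

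Splitting $[0,T]$ does not help.
\begin{itemize}
\item In Lemma~\ref{lem:mtgcond} splitting works because $p=1$, so each $\E[M_{t_{i+1}}/M_{t_i}\mid\mathcal F_{t_i}]$ equals $1$.
\item For $p>1$ the conditional bound is a weight $e^{\lambda\|\bar a_{t_i}\|^2}$ correlated with $M_{t_i}^p$. It cannot be handled through $\E e^{\lambda\|\bar a_{t_i}\|^2}$ alone and must be carried backward.
\item Since $\bar a$ is a $\mathbb Q$-martingale and $e^{\lambda\|a\|^2}$ is convex, $\E[e^{\lambda\|\bar a_{t_{i+1}}\|^2}\mid\mathcal F_{t_i}]\ge e^{\lambda\|\bar a_{t_i}\|^2}$. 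The coefficient can only grow, not shrink as you claim.
\item Each interval also adds a coefficient of order $C^2\delta/\kappa$ that does not shrink with $p-1$. You again need an exponential moment of $\|\bar a\|^2$ with coefficient of order $C^2T/\kappa$, and reducing $c$ buys nothing.
\end{itemize}

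What is missing is a coefficient that vanishes as $p\downarrow1$. The paper gets one by quoting a Grigelionis-type bound $\E^{\mathbb Q}[M_t^p]\le\E\exp\bigl(\frac{p(\sqrt p+\sqrt{p-1})\sqrt{p-1}}2\langle N\rangle_t\bigr)$. It then proceeds much as you do: supremum via Doob, Gaussian moments of $\bar a$ under $\mathbb Q$, and Doob's $L^p$ inequality.

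Do not import that step as stated. Its coefficient is the optimal H\"older coefficient above times the outer power $\frac{\sqrt{p-1}}{\sqrt p+\sqrt{p-1}}$, and Jensen does not let you move that power inside the expectation. In the example above, its right-hand side is finite for $p-1$ of order $(KT)^{-4}$, while $\E^{\mathbb Q}[M_T^p]=\infty$ there once $KT$ is large.

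A rigorous route is to change measure.
\begin{itemize}
\item Since $M$ is a true martingale, $\E^{\mathbb Q}[M_T^{1+c}]=\E^{\mathbb P^{\theta,x}}[M_T^{c}]$. Also $\log M_T=N^{\theta,x}_T+\frac12\langle N\rangle_T$, where $N^{\theta,x}$ is the $\mathbb P^{\theta,x}$-local-martingale part.
\item Cauchy--Schwarz against the supermartingale $\mathcal E(2cN^{\theta,x})$ gives $\E^{\mathbb Q}[M_T^{1+c}]\le\bigl(\E^{\mathbb P^{\theta,x}}e^{(c+2c^2)\langle N\rangle_T}\bigr)^{1/2}$. This coefficient does vanish as $c\to0$.
\item Under $\mathbb P^{\theta,x}$, Gronwall gives $\sup_{t\le T}\|\bar a_t\|\le e^{CT}\bigl(\|\bar a_0\|+CT+\sup_{t\le T}\|\int_0^t\sigma\,dB^{\theta,x}\|\bigr)$. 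This has Gaussian tails uniformly in $(\theta,x)$, by \eqref{initial} and $\sigma\sigma^\top\preceq\Lambda I_d$.
\item A small $c$ depending on $T,C,\kappa,\Lambda,\mu_0$ then finishes the argument.
\end{itemize}

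Finally, your Doob step is wrong. For $p<2$, $M^{p/2}$ is a concave function of a martingale, hence a supermartingale, and $\E\sup_tM_t^p\le4\E M_T^p$ fails in general. Use Doob's $L^p$ inequality with the finite constant $(p/(p-1))^p$, as the paper does.
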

\begin{proof}
By the proof of Theorem 1 from \cite{grigelionis2003finiteness}, we get that
\begin{align*}
\E^{\mathbb{Q}}\left[M_{t}(\theta, x)^{p}\right]\leq&\;\E\left[\exp\left(\dfrac{p(\sqrt{p}+\sqrt{p-1})\sqrt{p-1}}{2}\langle M(\theta, x)\rangle_{t}\right)\right]\\
=&\;\E\left[\exp\left(\dfrac{p(\sqrt{p}+\sqrt{p-1})\sqrt{p-1}}{2}\cdot \int_{0}^{t}f(s, \bar a_{s}, \theta, x)^{\top}\sigma_{s}^{-2}f(s, \bar a_{s},\theta, x)\text{d}s\right)\right]\\
\leq&\;\E\left[\exp\left(\dfrac{p(\sqrt{p}+\sqrt{p-1})\sqrt{p-1}}{2}\cdot \sup_{0\leq s\leq t}C(\|\bar a_{s}\|^{2}+1)\cdot t\right)\right]\\
\leq&\;\E\left[\sup_{0\leq s\leq t}\exp\left(\dfrac{p(\sqrt{p}+\sqrt{p-1})\sqrt{p-1}}{2}\cdot Ct\cdot (\|\bar a_{s}\|^{2}+1)\right)\right]\\
\overset{(a)}\leq&\;4\E\left[\exp\left(\dfrac{p(\sqrt{p}+\sqrt{p-1})\sqrt{p-1}}{2}\cdot Ct\cdot (\|\bar a_{t}\|^{2}+1)\right)\right],
\end{align*}
where in $(a)$ we use $L_{2}$-martingale inequality (here it does not matter if the inequality is vacuous - i.e. the last expression is $\infty$). Next, noting that $\text{d}\bar a_{t}=\sigma(t, \bar a_{t},x)\text{d} t$, either by the diffusion mean-comparison theorem or Burkholder-Davis-Gundy inequality in the proof of Lemma \ref{lem:34_rescaled}, we can choose $p>1$ so that the last expression above is finite. Hence there exists $c=c(T)$ small enough, such that for $p=1+c$,
$$\E^{Q}[M_{t}(\theta, x)^{p}]\leq C(T),$$
for a constant $C=C(T)$. By $L_{p}$-martingale inequality, we get that
$$\E^{\mathbb{Q}}\left[\sup_{0\leq t\leq T}M_{t}(\theta, x)^{p}\right]\leq \left(\dfrac{p}{p-1}\right)^{p}C(T).$$
Since our argument does not matter on $\theta, x$,
$$\sup_{\theta\in \mathbb{R}^{k}, x\in \mathcal{X}}\E^{\mathbb{Q}}\left[\sup_{0\leq t\leq T}M_{t}(\theta, x)^{p}\right]<\infty.$$
\end{proof}

To compute the gradient of 
$\sE^{\sP^{\theta,x}}\big[\varphi(\bar a_T)\big] $ in $\theta$, we would like to compute that, for any $\beta\in\sR^k$,
% \lim_{\epsilon\to0}\frac{1}{\epsilon}\bigg(\sE^\sQ\left[M^{\theta+\epsilon\beta}_T\varphi_{\theta+\epsilon\beta}(\bar a_T)\right]-\sE^\sQ\left[M^{\theta}_T\varphi_{\theta+\epsilon\beta}(\bar a_T)\right]\bigg),......
\begin{equation*}
   \lim_{\epsilon\to 0}\frac{1}{\epsilon}\Big(\sE^{\sP^{\theta+\epsilon\beta,x}}\big[\varphi(\bar a_T)\big]-\sE^{\sP^{\theta,x}}\big[\varphi(\bar a_T)\big]\Big)=\lim_{\epsilon\to0}\frac{1}{\epsilon}\sE^\sQ\left[\big(M_T({\theta+\epsilon\beta},x)-M_T({\theta},x)\big)\varphi(\bar a_T)\right].
\end{equation*}
Define $Y_t(\theta,\beta,x)$ by
\begin{equation*}
    \begin{aligned}
    Y_t(&\theta,\beta,x) := \int_0^t{\big\langle\nabla_\theta f(s,\bar a_s,\theta,x),\beta\big\rangle^\top }{\big(\sigma^\top(s,\bar a_s,x)\big)^{-1}}\d B_s^\sQ
    \\
    &-\int_0^t{\big\langle\nabla_\theta f(s,\bar a_s,\theta,x),\beta\big\rangle^\top \big((\sigma\sigma^\top)(s,\bar a_s,x)\big)^{-1} f(s,\bar a_s,\theta,x)}\d s,
    \end{aligned}
\end{equation*}
for $t\geq 0$ and $\langle\nabla_\theta f,\beta\rangle=\big(\langle\nabla_\theta f_1,\beta\rangle,..,\langle\nabla_\theta f_d,\beta\rangle\big)^\top\in\sR^d$. Recall $\bar a_t=\int_0^t\sigma(s,\bar a_s,x) \cdot \d B_s^\sQ$, $f,\nabla_\theta f$ have linear growth on $a$ uniformly in $s,\theta,x$, it is not hard to see that 
\begin{equation}\label{bddY}
\sup_{\theta\in\sR^d,x\in\cX}\sE^\sQ\Big[\sup_{0\leq t\leq T}\big|Y_t(\theta,\beta,x)\big|^p\Big]<\infty,    
\end{equation}
for any $\beta\in\sR^k$ and any finite $p\geq 1$. Set $V_t({\theta,\beta},x):=Y_t({\theta,\beta},x)M_t(\theta,x)$, and use It\^o's formula, we have
\begin{equation*}
    \begin{aligned}
    \d V_t(&{\theta,\beta},x)=M_t(\theta,x)\d Y_t({\theta,\beta},x)+Y_t({\theta,\beta},x)\d M_t(\theta,x)+\d \big\langle M({\theta},x),Y({\theta,\beta},x)\big\rangle_t
    \\
    &=\bigg({M_t(\theta,x)\big\langle\nabla_\theta f(t,\bar a_t,\theta,x),\beta\big\rangle^\top }{\big(\sigma^\top(t,\bar a_t,x)\big)^{-1}}+{V_t(\theta,\beta,x)f(t,\bar a_t,\theta,x)^\top}{\big(\sigma^\top(t,\bar a_t,x)\big)^{-1}}\bigg)\d B_t^\sQ.
    \end{aligned}
\end{equation*} 
% Set $V_t^{\theta,\beta}(x)$ being the solution of the following SDE,
% \begin{equation*}
%     \d V^{\theta,\beta}_t(x)=\bigg(\frac{\big\langle\nabla_\theta f_\theta(t,\bar a_t,x),\beta\big\rangle M_t(\theta,x)}{\sigma_t}+\frac{f_\theta(t,\bar a_t,x)V^{\theta,\beta}_t(x)}{\sigma_t}\bigg)\d B_t^\sQ.
% \end{equation*}
Next lemma proves a similar result as in \cite[Lemma 4.7]{carmona2016lectures}.
\begin{lem}\label{lem:34_weak} For any $\beta\in \mathbb{R}^{k}$, there exists $c=c(T)>0$ independent of $\beta$ such that for $p=1+c$, we have
$$\lim_{\epsilon\to 0}\sup_{\theta\in\sR^d,x\in\cX}\sE^\sQ\left[\sup_{0\leq t\leq T}\bigg|\frac{M_t(\theta+\epsilon\beta,x)-M_t(\theta,x)}{\epsilon}-V_t({\theta,\beta},x)\bigg|^p\right]=0$$
\end{lem}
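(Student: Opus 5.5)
The plan is to reduce the statement to the two preceding lemmas by factoring out $M_t(\theta,x)$. Write
\[
\frac{M_t(\theta+\epsilon\beta,x)-M_t(\theta,x)}{\epsilon}-V_t(\theta,\beta,x)
=
M_t(\theta,x)\Big[\frac1\epsilon\Big(\frac{M_t(\theta+\epsilon\beta,x)}{M_t(\theta,x)}-1\Big)-Y_t(\theta,\beta,x)\Big].
\]
This works because $V_t=Y_tM_t$. The supremum over $t$ of the absolute value is then bounded by $\sup_t M_t(\theta,x)$ times $\sup_t|\Delta^\epsilon_t|$, where $\Delta^\epsilon_t$ denotes the bracket.

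Next, I will raise this to the power $p=1+c$ and apply H\"older with exponents $p_1/p$ and $q_1$, where $p_1=1+c'$ is the exponent furnished by Lemma~\ref{lem:Mbdd_weak}. I choose $c<c'$, say $c=c'/2$, so that $p<p_1$; this is what makes $c$ depend only on $T$ and not on $\beta$. This gives
\[
\sE^\sQ\Big[\sup_t M_t^p\,\sup_t|\Delta_t^\epsilon|^p\Big]\le \Big(\sE^\sQ\sup_t M_t^{p_1}\Big)^{p/p_1}\Big(\sE^\sQ\sup_t|\Delta^\epsilon_t|^{pq}\Big)^{1/q},
\]
with $1/q=1-p/p_1$. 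The first factor is bounded uniformly in $(\theta,x)$ by Lemma~\ref{lem:Mbdd_weak}. The second factor tends to $0$ uniformly in $(\theta,x)$ by Lemma~\ref{lem:34_rescaled} applied with exponent $pq$. I read that lemma's conclusion at general times $t$ with the supremum over $t$, which is how its proof proceeds, since every estimate there was taken with $\sup_{0\le t\le T}$.

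The main obstacle is verifying the hypothesis \eqref{reg_cond} of Lemma~\ref{lem:34_rescaled} from Assumption~\ref{assum_light:f_theta}. Continuity of $\nabla_\theta f$ in $\theta$ alone gives only pointwise convergence; uniformity in $(\theta,x)$ needs an equicontinuity argument. My first attempt is dominated convergence. The difference is bounded by $C(\|\bar a_s\|+1)$, whose moments of every order are finite under $\sQ$ uniformly in $(\theta,x)$, since $\bar a$ is driftless with bounded volatility and its initial law has Gaussian tails by \eqref{initial}. Pathwise convergence to $0$ then gives the limit for each fixed $(\theta,x)$. To get uniformity, I would strengthen the reading of Assumption~\ref{assum_light:f_theta} so that $\nabla_\theta f$ is uniformly continuous in $\theta$ on sets $\{\|a\|\le R\}$, uniformly in $(s,x)$. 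I would then split on $\{\sup_s\|\bar a_s\|\le R\}$ and on its complement, controlling the complement by the uniform moment bound and Chebyshev. If that reading is not available, I would state uniform continuity as the condition being used, noting that it holds for the ReLU/tanh classes of interest.
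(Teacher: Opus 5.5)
Your proposal is correct and is the paper's argument. The paper also factors out $M_t(\theta,x)$ using $V_t=Y_tM_t$ and takes $c=c_1/2$ with $c_1$ from Lemma~\ref{lem:Mbdd_weak}. It then applies H\"older with conjugates $(1+c_1)/p$ and $(1+c_1)/c$, bounds the first factor by Lemma~\ref{lem:Mbdd_weak}, and sends the second to zero by Lemma~\ref{lem:34_rescaled}.

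Two remarks on your extra care. First, the paper invokes Lemma~\ref{lem:34_rescaled} without ever verifying \eqref{reg_cond}, so the uniform-in-$(\theta,x)$ equicontinuity of $\nabla_\theta f$ you flag is an implicit extra hypothesis in the paper as well. Second, your exponent $pq$ on the second factor is the correct one; the paper's display drops the factor $p$, which is harmless since Lemma~\ref{lem:34_rescaled} holds for every finite exponent.
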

\begin{proof}
We choose $c_{1}=c_{1}(T)$ from Lemma \ref{lem:Mbdd_weak} and set $c=c_{1}/2$. With $p=1+c$, we have
\begin{align*}
&\sE^\sQ\left[\sup_{0\leq t\leq T}\bigg|\frac{M_t(\theta+\epsilon\beta,x)-M_t(\theta,x)}{\epsilon}-V_t({\theta,\beta},x)\bigg|^p\right]\\
\leq &\E^{\sQ}\left[\sup_{0\leq t\leq T}\left|\dfrac{1}{\eps}\left(\dfrac{M_{t}(\theta+\eps\beta, x)}{M_{t}(\theta, x)}-1\right)-Y_{t}(\theta, \beta, x)\right|^{p}\cdot \sup_{0\leq t\leq T}M_{t}(\theta, x)^{p}\right]\\
\leq&\E^{\sQ}\left[\sup_{0\leq t\leq T}M_{t}(\theta, x)^{1+c_{1}}\right]^{\frac{p}{1+c_{1}}}\cdot \E^{\sQ}\left[\sup_{0\leq t\leq T}\left|\dfrac{1}{\eps}\left(\dfrac{M_{t}(\theta+\eps\beta, x)}{M_{t}(\theta, x)}-1\right)-Y_{t}(\theta, \beta, x)\right|^{\frac{1+c_{1}}{c}}\right]^{\frac{c}{1+c_{1}}}
\end{align*}
where the last line uses Holder's inequality with conjugates $(1+c_{1})/p$ and $(1+c_{1})/c$. Now, we use Lemma \ref{lem:34_rescaled} with $(1+c_{1})/c<\infty$ to obtain that
$$\lim_{\eps\to 0}\sup_{\theta\in \mathbb{R}^{k}, x\in \mathcal{X}}\E^{\sQ}\left[\sup_{0\leq t\leq T}\left|\dfrac{1}{\eps}\left(\dfrac{M_{t}(\theta+\eps\beta, x)}{M_{t}(\theta, x)}-1\right)-Y_{t}(\theta, \beta, x)\right|^{\frac{1+c_{1}}{c}}\right]^{\frac{c}{1+c_{1}}}=0,$$
and Lemma \ref{lem:Mbdd_weak} to obtain that
$$\sup_{\theta\in \mathbb{R}^{k}, x\in \mathcal{X}} \E^{\sQ}\left[\sup_{0\leq t\leq T}M_{t}(\theta, x)^{1+c_{1}}\right]^{\frac{p}{1+c_{1}}}<\infty,$$
which concludes the statement.
\end{proof}

Together with the boundedness of the $\varphi$, we have 
\begin{equation*}
    \lim_{\epsilon\to 0}\frac{1}{\epsilon}\sE^\sQ\big[\big(M_T(\theta+\epsilon\beta,x)-M_T(\theta,x)\big)\varphi(\bar a_T)\big]=\sE^\sQ\big[V_T(\theta,\beta,x)\varphi(\bar a_T)\big],
\end{equation*}
Define  $Y_t^{}(\theta,\beta,x)=\frac{V_t^{}(\theta,\beta,x)}{M_t(\theta,x)}$ and use Ito's formula, we have
\begin{equation*}
    \begin{aligned}
    \d Y_t^{}(\theta,\beta,x) &= \frac{\d V_t^{}(\theta,\beta,x)}{M_t(\theta,x)}-\frac{V_t^{}(\theta,\beta,x)}{(M_t(\theta,x))^2}\d M_t(\theta,x)-\frac{\d \big\langle V^{}(\theta,\beta,x),M(\theta,x)\big\rangle_t}{M_t(\theta,x)^2}
    \\
    &\qquad\qquad+\frac{V_t(\theta,\beta,x)}{(M_t(\theta,x))^3}\d \big\langle M(\theta,x)\big\rangle_t(x)
    \\
    &={\big\langle\nabla_\theta f_\theta(t,\bar a_t,x),\beta\big\rangle }(\sigma^\top)^{-1}\d B_t^\sQ-{\big\langle\nabla_\theta f_\theta(t,\bar a_t,x),\beta\big\rangle (\sigma\sigma^\top)^{-1}f_\theta(t,\bar a_t,x)}\d t.
    \end{aligned}
\end{equation*}
Thus,
\begin{equation*}
    \begin{aligned}
    \lim_{\epsilon\to 0}\frac{1}{\epsilon}\sE^\sQ&\left[\big(M^{}_T(\theta+\epsilon\beta,x)-M^{}_T(\theta,x)\big)\varphi(\bar a_T)\right]=\sE^\sQ\big[M_T(\theta ,x)Y_T^{}(\theta,\beta,x)\varphi(\bar a_T)\big]
    \\
    &=\sE^{\sP_\theta}\big[Y_T^{}(\theta,\beta,x)\varphi(\bar a_T)\big]
    \\
    &=\sE^{\sP_\theta}\bigg[\int_0^T\Big({\big\langle\nabla_\theta f_\theta(t,\bar a_t,x),\beta\big\rangle }(\sigma^\top)^{-1}\d B_t^\sQ-{\big\langle\nabla_\theta f_\theta(t,\bar a_t,x),\beta\big\rangle (\sigma\sigma^\top)^{-1}f_\theta(t,\bar a_t,x)}\d t\Big)\varphi(\bar a_T)\bigg]
    \\
    &=\sE^{\sP_\theta}\bigg[\bigg(\int_0^T{\big\langle\nabla_\theta f_\theta(t,\bar a_t,x),\beta\big\rangle }{(\sigma^\top)^{-1}}\d B_t\bigg)\varphi(\bar a_T)\bigg].
    \end{aligned}
\end{equation*}
To conclude, let $U^\varphi(\theta):=\sE^{\sP_\theta}\big[\varphi(\bar a_T)\big]$, using the previous calculation, we have
$$
\nabla_\theta U^\varphi(\theta)=\sE^{\sP_\theta}\bigg[\bigg(\int_0^T{\nabla_\theta f_\theta(t,\bar a_t,x)}\big(\sigma^\top(t,\bar a_t,x)\big)^{-1}\d B_t\bigg)\varphi(\bar a_T)\bigg].
$$

\subsection{Derivative for value function $V^{\pi_\theta}(\rho)$.}
% Next, we proceed to the policy gradient. In particular ,we focus on the case of infinite time horizon and no entropy. That is
% $$
% V^{\pi_\theta}(x)=\sE^{\pi_\theta}\bigg[\sum_{h=0}^\infty \gamma^h \left(c(x_h,a_h)
% +\tau \operatorname{KL}(\pi_\theta(\cdot|x_h)|\mu)\right)|x_0=x\bigg]\text{ and }V^{\pi_\theta}(\rho) = \int_SV^{\pi_\theta}(x)\rho (\d x),
% $$
Next, we proceed to the policy gradient. Recall that
$$
V^{\pi_\theta}(x)=\sE^{\pi_\theta}\bigg[\sum_{h=0}^\infty \gamma^h r(x_h,a_h)\Big|x_0=x\bigg]\text{ and }V^{\pi_\theta}(\rho) = \int_SV^{\pi_\theta}(x)\rho (\d x).
$$
Using \cite[Lemma 2.3]{kerimkulov2025fisher} , we have 
\begin{align*}
  &V^{\pi_{\theta+\epsilon\beta}}(\rho)
  -V^{\pi_{\theta}}(\rho)
  \\
  &=\frac{1}{1-\gamma}\int_\cX\bigg(\int_\cA Q^{\pi_{\theta}} (x,a) (\pi_{\theta+\epsilon \beta}-\pi_{\theta})\big(\d a\big|x\big)
 \bigg)d_\rho^{\pi_{\theta+\epsilon\beta}} (\d x).
\end{align*}
Therefore, we have 
\begin{align*}
  &\frac{1}\epsilon \Big(V^{\pi_{\theta+\epsilon\beta}}(\rho)
  -V^{\pi_{\theta}}(\rho)\Big)
  \\
  &=\underbrace{\frac{1}{1-\gamma}\int_\cX\frac1\epsilon\bigg(\int_\cA Q^{\pi_{\theta}} (x,a) (\pi_{\theta+\epsilon \beta}-\pi_{\theta})\big(\d a\big|x\big)
 \bigg)\Big(d_\rho^{\pi_{\theta+\epsilon\beta}} (\d x)-d_\rho^{\pi_{\theta}} (\d x)\Big)}_{I_{1,\epsilon}}
 \\
 &\qquad+\underbrace{\frac{1}{1-\gamma}\int_\cX\frac1\epsilon\bigg(\int_\cA Q^{\pi_{\theta}} (x,a) (\pi_{\theta+\epsilon \beta}-\pi_{\theta})\big(\d a\big|x\big)
 \bigg)d_\rho^{\pi_{\theta}} (\d x)}_{I_{2,\epsilon}}.
\end{align*}
By Lemma \ref{lem:grad:fix} whose proof only requires boundedness of the test function, we have
\begin{align*}
    \lim_{\epsilon\to 0}\sup_{x\in\mathcal{X}}\;&\Bigg|\frac1{\epsilon}\int_\cA Q^{\pi_{\theta}} (x,a)
  (\pi_{\theta+\epsilon \beta}-\pi_{\theta})\big(\d a\big|x\big)
  \\
  &-\sE^{\sP_\theta}\bigg[\bigg(\int_0^T{\langle\nabla_\theta f_\theta(t,\bar a_t,x),\beta\rangle}\big(\sigma^\top(t,\bar a_t,x)\big)^{-1}\d B_t\bigg)Q^{\pi_{\theta}} (x,\bar a_T)\bigg]\Bigg|=0.
\end{align*}
Then, by Dominated Convergence theorem, we have
$$
I_{2,\epsilon}\to \dfrac{1}{1-\gamma}\int_\cX\sE^{\sP_\theta}\bigg[\bigg(\int_0^T{\nabla_\theta f_\theta(t,\bar a_t^x,x)}\big(\sigma^\top(t,\bar{a}^x_t,x)\big)^{-1}\d B_t\bigg)Q^{\pi_{\theta}} (x,\bar a_T^x)
   \bigg]d_\rho^{\pi_\theta} (\d x),
$$
as $\epsilon\to 0$, and 
\begin{align*}
    &\Bigg|I_{1,\epsilon}-\dfrac{1}{1-\gamma}\int_\cX\sE^{\sP_\theta}\bigg[\bigg(\int_0^T{\nabla_\theta f_\theta(t,\bar a_t^x,x)}\big(\sigma^\top(t,\bar{a}^x_t,x)\big)^{-1}\d B_t\bigg)Q^{\pi_{\theta}} (x,\bar a_T^x)
   \bigg]\Big(d_\rho^{\pi_{\theta+\epsilon\beta}} (\d x)-d_\rho^{\pi_{\theta}} (\d x)\Big)\Bigg|
   \\
   &\;\leq \sup_{x\in\mathcal{X}}\;\Bigg|\frac1{\epsilon}\int_\cA Q^{\pi_{\theta}} (x,a)
  (\pi_{\theta+\epsilon \beta}-\pi_{\theta})\big(\d a\big|x\big)-\sE^{\sP_\theta}\bigg[\bigg(\int_0^T{\langle\nabla_\theta f_\theta(t,\bar a_t,x),\beta\rangle}\big(\sigma^\top(t,\bar a_t,x)\big)^{-1}\d B_t\bigg)Q^{\pi_{\theta}} (x,\bar a_T)\bigg]\Bigg|
  \\
  &\qquad\cdot\dfrac{1}{1-\gamma}\int_\cX\Big(d_\rho^{\pi_{\theta+\epsilon\beta}} (\d x)+d_\rho^{\pi_{\theta}} (\d x)\Big)\to 0\text{, when }\epsilon\to 0.
\end{align*}
Moreover, by Assumption \eqref{assum_light:f_theta}, we know that 
$$
\sE^{\sP_\theta}\bigg[\bigg(\int_0^T{\nabla_\theta f_\theta(t,\bar a_t^x,x)}\big(\sigma^\top(t,\bar{a}^x_t,x)\big)^{-1}\d B_t\bigg)Q^{\pi_{\theta}} (x,\bar a_T^x)
   \bigg]
$$
is uniformly bounded for all $x\in\mathcal X$. It suffices to show that 
$$
\int_\mathcal{X}\psi(x)\big(d_\rho^{\pi_{\theta+\epsilon\beta}} (\d x)-d_\rho^{\pi_{\theta}} (\d x)\big)\to 0.
$$
for any bounded measurable function $\psi(x)$. Recall the definition of $d_\rho^\pi(\d x)$, we have
$$
d_\rho^\pi(\d x)=\int_\mathcal{X}d^\pi(\d x|x')\rho(\d x')=(1-\gamma)\sum_{n=0}^\infty\gamma^n P_{\pi}^n(\d x|x')\rho(\d x')
$$
and 
$$
P_\pi^1(\d x|x')=\int_AP(\d x|x',a)\pi(\d a|x'),\quad P_\pi^n(\d x|x')=\int_\mathcal{X}P_\pi^{n-1}(\d x|x'')P_\pi^1(\d x''|x').
$$
By Lemma \ref{lem:34_weak}, there exist constant $C,\epsilon_0>0$ depending only on $\|\psi\|_\infty$ such that for $\epsilon<\epsilon_0$
\begin{align*}
&\sup_{x'\in\mathcal{X}}\bigg|\int_{\mathcal{X}}\psi(x)\big(P_{\pi_{\theta+\epsilon\beta}}^1(\d x|x')-P_{\pi_\theta}^1(\d x|x')\big)\bigg|
\\
&=\sup_{x'\in\mathcal{X}}\Bigg|\int_A\bigg|\int_X\psi(x)P(\d x|x',a)\bigg|\big({\pi_{\theta+\epsilon\beta}}(\d a|x')-\pi_\theta(\d a|x')\big)\Bigg|
\leq C\epsilon, 
\end{align*}
Then, if we have shown $\sup_{x'\in\mathcal{X}}\big|\int_{\mathcal{X}}\psi(x)\big(P_{\pi_{\theta+\epsilon\beta}}^{n-1}(\d x|x')-P_{\pi_\theta}^{n-1}(\d x|x')\big)\big|\leq C(n-1)\epsilon$, by induction, we get
\begin{align*}
&\sup_{x'\in\mathcal{X}}\bigg|\int_{\mathcal{X}}\psi(x)\big(P_{\pi_{\theta+\epsilon\beta}}^n(\d x|x')-P_{\pi_\theta}^n(\d x|x')\big)\bigg|
\\
&\leq \sup_{x'\in\mathcal{X}}\bigg|\int_A\bigg|\int_X\psi(x)P^{n-1}_{\pi_{\theta+\epsilon\beta}}(\d x|x'')\bigg|\big(P^1_{\pi_{\theta+\epsilon\beta}}(\d x''|x')-P^1_{\pi_{\theta}}(\d x''|x')\big)
\\
&\qquad+\sup_{x'\in\mathcal{X}}\bigg|\int_A\bigg|\int_X\psi(x)\big(P^{n-1}_{\pi_{\theta+\epsilon\beta}}(\d x|x'')-P^{n-1}_{\pi_{\theta}}(\d x|x'')\big)\bigg|P^1_{\pi_{\theta}}(\d x''|x')\big)\leq C(n-1)\epsilon+C\epsilon=Cn\epsilon, 
\end{align*}
Finally, combining the estimate
\[
\sup_{x'\in\mathcal X}
\left|
\int_{\mathcal X}
\psi(x)
\big(
P_{\pi_{\theta+\epsilon\beta}}^n
-
P_{\pi_\theta}^n
\big)(dx|x')
\right|
\le Cn\epsilon,
\]
with the definition of the discounted occupancy measure,
\[
d_\rho^\pi(dx)
=
(1-\gamma)
\sum_{n=0}^{\infty}
\gamma^n
P_\pi^n(dx|x')\rho(dx'),
\]
we obtain
\begin{align*}
&\left|
\int_{\mathcal X}
\psi(x)
\big(
d_\rho^{\pi_{\theta+\epsilon\beta}}
-
d_\rho^{\pi_\theta}
\big)(dx)
\right|
\\
&\le
(1-\gamma)
\sum_{n=0}^{\infty}
\gamma^n
\sup_{x'\in\mathcal X}
\left|
\int_{\mathcal X}
\psi(x)
\big(
P_{\pi_{\theta+\epsilon\beta}}^n
-
P_{\pi_\theta}^n
\big)(dx|x')
\right|
\\
&\le
C\epsilon(1-\gamma)
\sum_{n=0}^{\infty}
n\gamma^n
=
\frac{C\gamma}{1-\gamma}\epsilon
\longrightarrow 0,
\end{align*}
as $\epsilon\to0$. Therefore,
\[
\int_{\mathcal X}
\psi(x)
\big(
d_\rho^{\pi_{\theta+\epsilon\beta}}
-
d_\rho^{\pi_\theta}
\big)(dx)
\longrightarrow 0,
\]
which completes the proof.

\end{document}